\theoremstyle{plain}
\newtheorem{theorem}{Theorem}[section]
\newtheorem{proposition}[theorem]{Proposition}
\newtheorem{lemma}[theorem]{Lemma}
\newtheorem{corollary}[theorem]{Corollary}
\theoremstyle{definition}
\newtheorem{definition}[theorem]{Definition}
\newtheorem{assumption}[theorem]{Assumption}
\theoremstyle{remark}
\newcommand{\bigO}{\mathcal{O}}
\newcommand{\tr}{\text{Tr}}
\newcommand{\shgd}{\scalebox{0.6}{\text{SHGD}}}
\newcommand{\seg}{\scalebox{0.6}{\text{SEG}}}
\newcommand{\bgamma}{{\boldsymbol{\gamma}}}
\DeclareMathOperator{\diag}{diag}
\def \lf{\left\lfloor}   
\def \rf{\right\rfloor}
\newcommand{\E}{{\mathbb E}}
\newcommand{\R}{{\mathbb{R}}}
\newtcolorbox{mybox}[2][]{
  colframe = white, 
  colback  = gray!7,
  #1
}
\newcommand{\mycoloredbox}[2]{%
  \begin{tcolorbox}[
    colback=gray!7, 
    colframe=white, 
    boxrule=0mm, 
    left=-3pt, 
    right=4pt, 
    top=3pt, 
    bottom=3pt, 
    width=0.48\textwidth, 
    #1 
  ]
  #2
  \end{tcolorbox}
}
\begin{document}

%

%

\twocolumn[

\aistatstitle{SDEs for Minimax Optimization}

\aistatsauthor{Enea Monzio Compagnoni \And Antonio Orvieto \And Hans Kersting}

\aistatsaddress{Department of Mathematics \\ and Computer Science \\ University of Basel \And ELLIS Institute Tübingen\\ MPI for Intelligent Systems\\ Tübingen AI Center \And Yahoo! Research}

\aistatsauthor{Frank Norbert Proske \And Aurelien Lucchi }

\runningauthor{Enea Monzio Compagnoni, Antonio Orvieto, Hans Kersting, Frank Norbert Proske, Aurelien Lucchi}

\aistatsaddress{Department of Mathematics \\ University of Oslo \And  Department of Mathematics \\ and Computer Science \\ University of Basel } ]

\begin{abstract}
Minimax optimization problems have attracted a lot of attention over the past few years, with applications ranging from economics to machine learning. While advanced optimization methods exist for such problems, characterizing their dynamics in stochastic scenarios remains notably challenging. In this paper, we pioneer the use of stochastic differential equations (SDEs) to analyze and compare Minimax optimizers. Our SDE models for Stochastic Gradient Descent-Ascent, Stochastic Extragradient, and Stochastic Hamiltonian Gradient Descent are provable approximations of their algorithmic counterparts, clearly showcasing the interplay between hyperparameters, implicit regularization, and implicit curvature-induced noise. This perspective also allows for a unified and simplified analysis strategy based on the principles of Itô calculus. Finally, our approach facilitates the derivation of convergence conditions and closed-form solutions for the dynamics in simplified settings, unveiling further insights into the behavior of different optimizers.
\end{abstract}

\vspace{-0.5cm}

\section{INTRODUCTION} \label{sec:intro}
\vspace{-2mm}

Minimax optimization plays a fundamental role in decision theory, game theory, and machine learning~\citep{goodfellow2016deep}. The problem it addresses is finding the solution of the following optimization problem:
\begin{equation}
\min_{x \in \mathcal{X}} \max_{y \in \mathcal{Y}} \left[ f(x,y) := \frac{1}{N} \sum_{i=1}^N f_i(x,y) \right],
\label{eq:empirical_loss}
\end{equation}
where $f,f_i: \mathcal{X} \times \mathcal{Y} \to \R$ for $i=1,\dots,N$. In machine learning, $f$ is an empirical risk function where $f_i$ is the contribution of the $i$-th data point of the training data. In this notation, $(x,y) \in \mathcal{X} \times \mathcal{Y} $ is a vector of trainable parameters and $N$ is the size of the dataset. The goal is to find optimal saddle points $(\mathrm{x}^*, \mathrm{y}^*)$ such that
\begin{equation*}
    f\left(\mathrm{x}^*, \mathrm{y}\right) \leq f\left(\mathrm{x}^*, \mathrm{y}^*\right) \leq f\left(\mathrm{x}, \mathrm{y}^*\right) \quad \forall \mathrm{x} \in \mathcal{X}, \quad \forall \mathrm{y} \in \mathcal{Y}.
\end{equation*}
The most intuitive algorithm to solve Eq.~\eqref{eq:empirical_loss} is Gradient Descent Ascent (GDA). However, its updates are computationally expensive for large datasets. Therefore, a common choice is to use \textit{mini-batches} to approximate the gradients, which gives rise to Stochastic Gradient Descent Ascent (SGDA).
Unfortunately, it is known that both GDA and SGDA do not converge on relatively simple landscapes such as $f(x,y) = x y$ for $(x,y) \in \R$. This led to the design of alternative optimizers such as Extragradient~(EG)~\citep{korpelevich1976extragradient} and Hamiltonian GD~\citep{balduzzi2018mechanics}. While these methods exhibit more favorable convergence guarantees compared to SGDA, they are relatively complex to study and some of their properties are still not well understood, especially in a stochastic setting.

In this paper, we leverage continuous-time models in the form of stochastic differential equations~(SDEs) to study these minimax optimizers. SDEs have recently become popular in the \textit{minimization} community: They provide a unified and simplified analysis strategy rooted in Itô calculus which facilitates the derivation of novel insights about the discrete algorithms, see e.g.~\citep{Su2014nesterov, li2017stochastic}. It is worth mentioning that the interest in applying SDEs to minimax problems has been a topic of prior research discussions~ \citep{chavdarova2022continuous}. Following the framework of \cite{li2017stochastic} for minimization, our work provides the first \emph{formal} derivation --- rooted in the theory of weak approximation~\citep{mil1986weak} --- of the SDEs of SGDA,
\begin{align} \label{eq:SGDA_Discr_Update}
z_{k+1} = z_k - \eta F_{\gamma_k }(z_k),
\end{align}
SEG,
\begin{align}\label{eq:SEG_Discr_Update}
z_{k+1} = z_k - \eta F_{\gamma^1_k}(z_k - \rho F_{\gamma^2_k}(z_k)),
\end{align}
and SHGD
\begin{equation}\label{eq:SHGD_Discr_Update}
    z_{k+1} = z_k - \eta \nabla \mathcal{H}_{\gamma_k^1,\gamma_k^2}(z_k),
\end{equation}
where $F$ is the drift field and $\mathcal{H}$ the Hamiltonian:
\begin{align}
    &F_{\gamma}(z) = F_{\gamma}(x,y) := (\nabla_x f_{\gamma}(x,y),-\nabla_y f_{\gamma}(x,y)),\\
    &\mathcal{H}_{\gamma^1, \gamma^2}(z):= \frac{F_{\gamma^1}^{\top}(z)F_{\gamma^2}(z)}{2}.
\end{align}
Above, $\eta \in \R^{>0}$ is the stepsize and $\rho \in \R$ is the extra stepsize of SEG\footnote{We also support the cases where the stepsizes and extra steps depend on time, e.g. $\eta_k$ and $\rho_k$, as well as depend on the coordinates, e.g. $\eta = \left(\eta_1, \cdots, \eta_d \right)$.}. The mini-batches $\{ \gamma^j_k \}$ are modelled as i.i.d.~random variables uniformly distributed on $\{ 1, \cdots, N \}$, and of size $B\geq 1$.

Formally, these continuous-time models are weak approximations, i.e.~approximations in distribution, of their respective discrete-time algorithms.
We will exploit these models to derive novel insights into the convergence behavior, the effect of the noise and the curvature of the landscape, or the role of hyper-parameters such as the extra stepsize $\rho$ appearing in SEG. 

\paragraph{Contributions.}

\vspace{-0.3cm}

\begin{itemize}[leftmargin=*]
\setlength\itemsep{0.2em}

\vspace{-0.3cm}

\item We provide the \textit{first formal} derivation of the SDE models of popular minimax optimizers. Then, we use them to make the following additional contributions:
\begin{enumerate}[itemsep=0mm, left=0.1em]
\item \textbf{Moderate Exploration regime.} If $\rho = \mathcal{O}(\eta)$, we show that SEG essentially behaves like SGDA;
\item \textbf{Aggressive Exploration regime} \citep{hsieh2020explore}. For $\rho = \mathcal{O}(\sqrt{\eta})$, the dynamics of SEG can be interpreted as that of SGDA on an \textbf{implicitly} regularized vector field with additional \textit{implicit curvature-induced} noise;
\item SHGD uses \textbf{explicit} curvature-based information. Thus, it has an \textit{explicit curvature-induced} noise;
\item We characterize the evolution of the Hamiltonian under the dynamics of SEG and SHGD; 
    \item We use the latter to derive convergence conditions for SEG and SHGD on a wide class of functions.
\end{enumerate}
\item For Bilinear Games with different noise structures:
\begin{enumerate}[itemsep=0mm, left=0.1em]
    \item We explicitly solve the differential equation of the Hamiltonian, thus elucidating the interplay of all hyperparameters in determining the speed of convergence (or divergence) of these methods;
    \item We provide necessary and sufficient conditions for stepsize schedulers to recover convergence.
\end{enumerate}    

\item We explicitly solve the SDEs for some \textit{Quadratic} Games, meaning that we derive the \textit{first} closed-form formula for the dynamics of SEG and SHGD on these landscapes. This allows for a 1-to-1 comparison of the two optimizers, particularly of their first and second moments.
One key takeaway of this comparison is that selecting $\rho$ is a matter of trade-off between the speed of convergence and asymptotic optimality: Our formulas show how a suitable choice of $\rho$ allows SEG to match (or outperform) SHGD w.r.t. convergence speed but negatively impacts its optimality (the iterates converge to a larger neighborhood of the optimum). Interestingly, the curvature determines whether SEG or SHGD is faster at converging as well as more suboptimal. Importantly, we provide the first experimental and theoretical evidence that \textbf{negative} $\rho$ might be advisable for certain landscapes.

\item Finally, we present extensive experiments on various relevant minimax problems: these are meant to verify that each formula derived from our SDEs correctly describes the behavior of the respective discrete-time algorithms. Figure \ref{fig:SDEs} offers a preliminary glimpse at the accuracy of the SDEs approximations.

\end{itemize}

\begin{figure}%
    \centering
    \vspace{-6mm}
    \subfloat{{\includegraphics[width=0.49\linewidth]{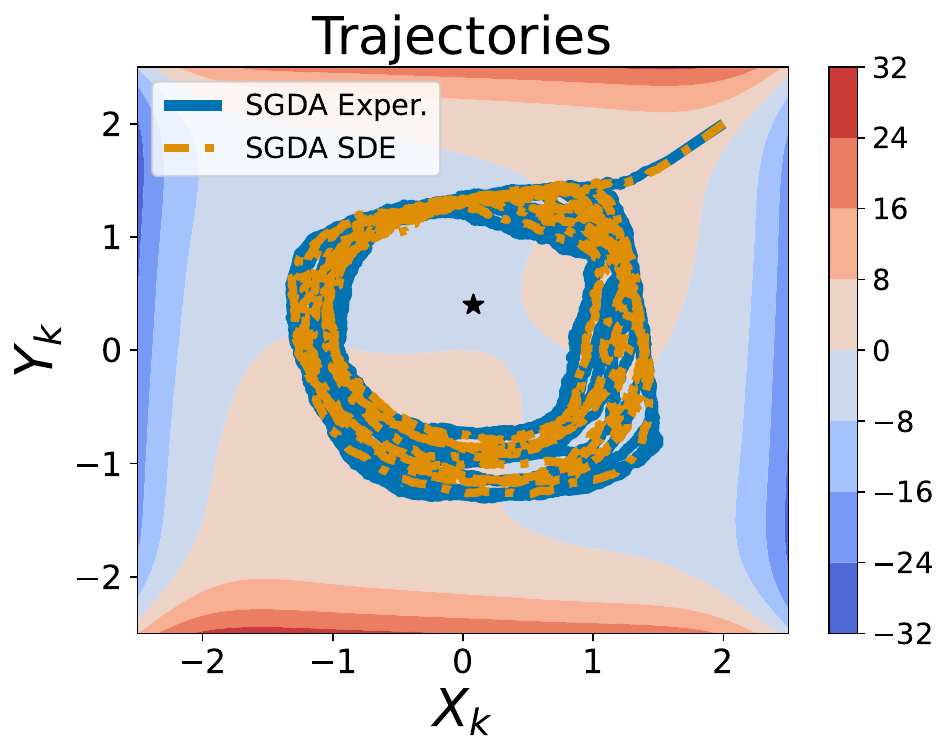} }}%
    \subfloat{{\includegraphics[width=0.49\linewidth]{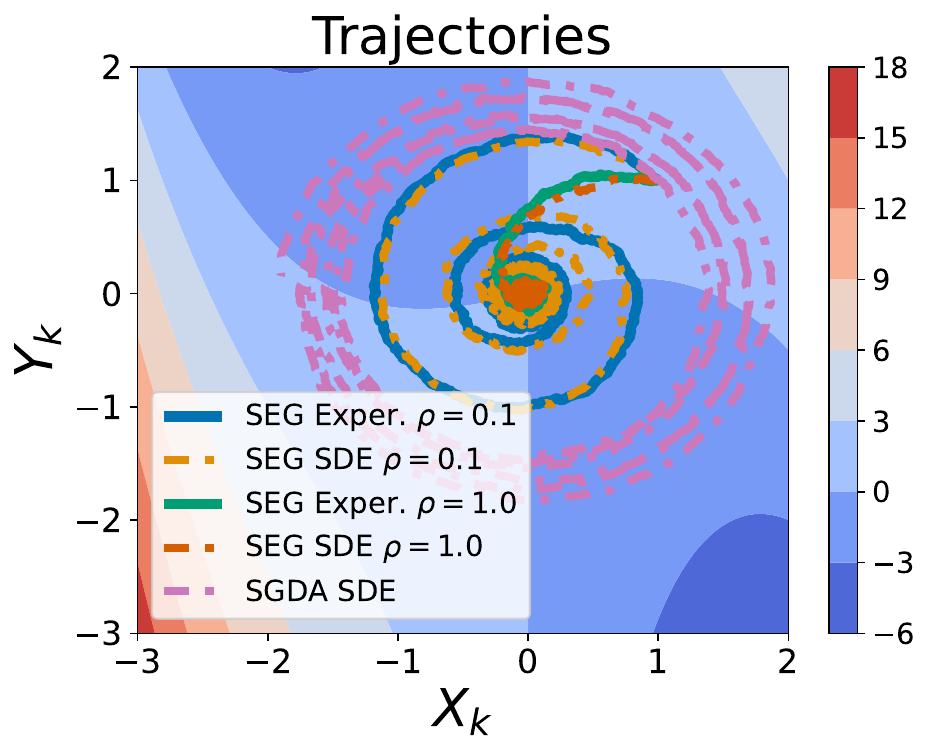} }} \\
    \subfloat{{\includegraphics[width=0.49\linewidth]{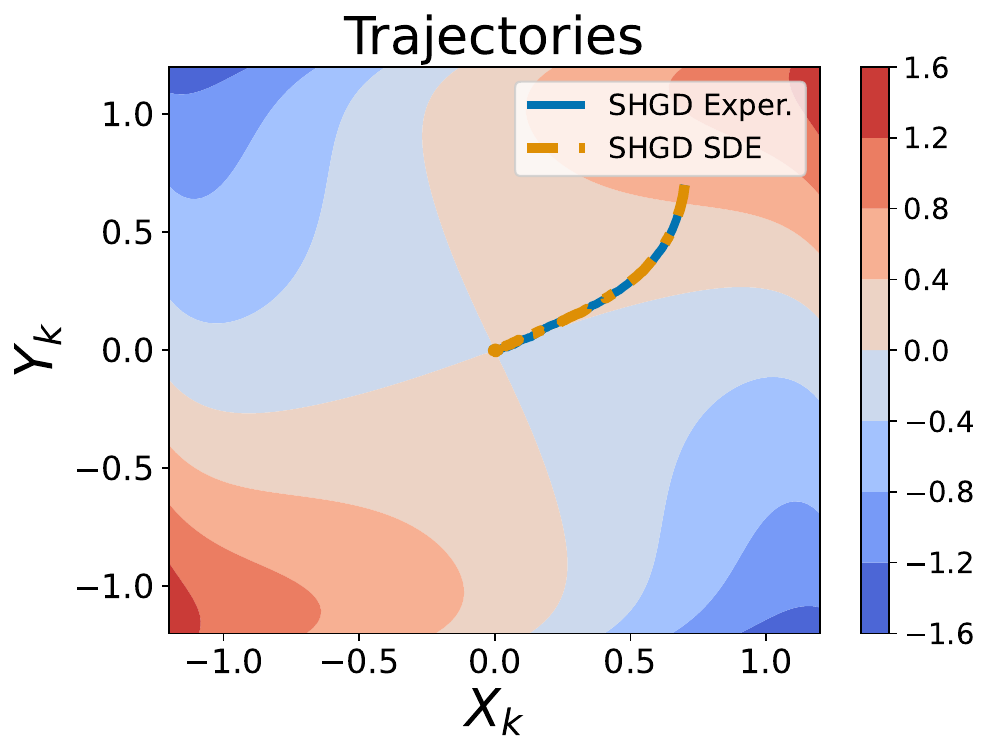} }}%
    \subfloat{{\includegraphics[width=0.49\linewidth]{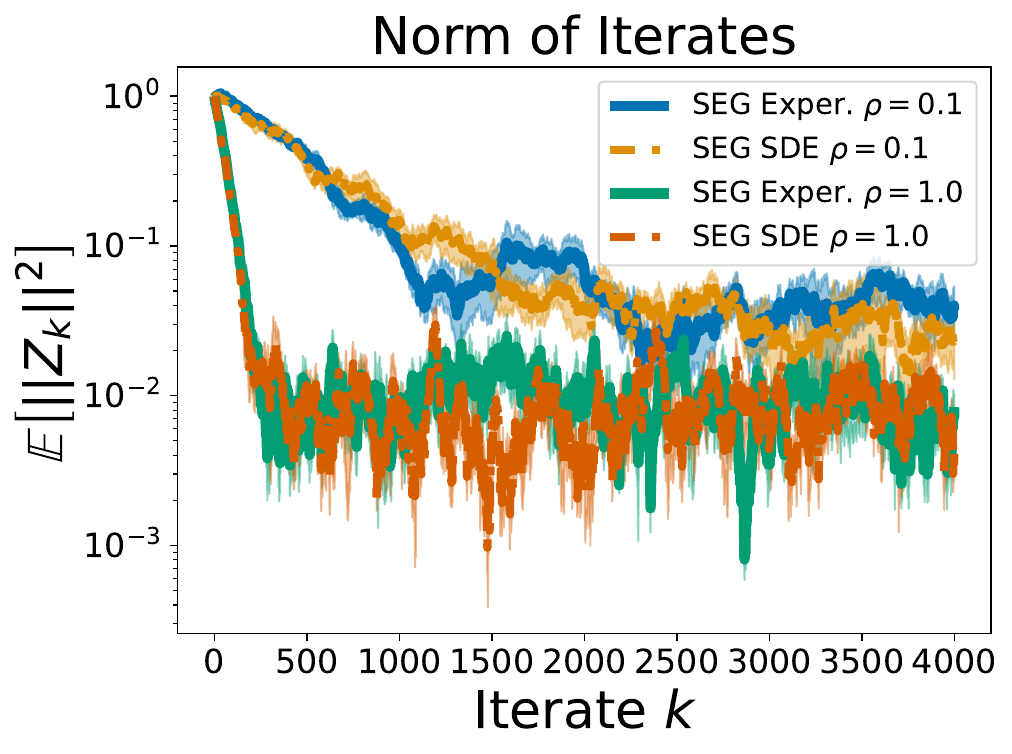} }}%
    \caption{Empirical validation of Theorem \ref{thm:SEG_SDE_Insights} and \ref{thm:SHGD_SDE_Insights}: The trajectories of the simulated SDEs match those of the respective algorithms averaged over $5$ runs - 
    That of SGDA gets trapped in limit cycles as well (Top Left); That of SHGD converges to the optimum of a highly nonlinear landscape (Bottom Left); The SDE of SGDA would not be a good model for SEG (Top Right); The SDEs and the optimizers move along the trajectory at the same speed (Bottom Right). For a description of the landscapes and of the simulation settings for the SDEs, see Appendix \ref{app:Experiments}.}%
    \label{fig:SDEs}%
\end{figure}

\vspace{-0.4cm}

\section{RELATED WORKS}\label{sec:RelWorks}

\vspace{-0.4cm}

We start by discussing existing continuous-time analysis for minimax optimization and related applications. For related works regarding SGDA, SEG, SHGD, and bilinear games, we refer the reader to Appendix \ref{app:AddRelWorks}.

\vspace{-0.35cm}

\paragraph{ODE Approximations and Applications.}
Several works use \textit{continuous-time models} to describe the dynamics of minimax optimizers. First, \cite{ryu2019ode} informally derived ODEs to study Stochastic Gradient Methods with Optimism and Anchoring. Then, \cite{lu2022sr} formally showed that different saddle-point optimizers yield the same ODE and derived High-Resolution (Ordinary) Differential Equations (HRDEs) to provide convergence conditions on a wide class of problems. Similarly, \cite{chavdarova2021last} derived HRDEs as well and established the convergence of certain methods in continuous time on bilinear games. Finally, \cite{hsieh2021limits} modeled a wide class of zeroth- and first-order minimax algorithms with ODEs and proved that they may be subject to inescapable convergence failures, meaning that they could get attracted by spurious attractors. Unfortunately, their approach based on Robbins–Monro templates cannot handle ergodic averages, second-order methods, adaptive methods, and constant stepsizes.

\vspace{-0.35cm}

\paragraph{SDE Approximations and Applications.}
\citep{li2017stochastic} first proposed a formal theoretical framework to derive SDEs to appropriately capture the intrinsic stochasticity of stochastic optimizers. These SDEs can be understood as weak approximations of stochastic gradient algorithms (See Definition \ref{def:weak_approximation}). 
SDEs open to a variety of concrete applications that include \emph{stochastic optimal control} to select the stepsize~\citep{li2017stochastic,li2019stochastic} or the batch size~\citep{zhao2022batch} and \emph{scaling rules}~\citep{Malladi2022AdamSDE} to adjust the optimization hyperparameters w.r.t.~the batch size.
Additionally, SDEs give access to the fine-grained structure of the interaction between stochasticity and curvature. For example, the study of \emph{escape times} of SGD from minima of different sharpness~\citep{xie2020diffusion}, the factors influencing the minima found by SGD~\cite{jastrzkebski2017three}, the convergence bounds for mini-batch SGD and SVRG derived in~\cite{orvieto2019continuous}, and the fundamental interplay between noise and curvature of the landscape for SAM~\citep{compagnoni2023sde}. For more references, see \citep{kushner2003stochastic,ljung2012stochastic,chen2015convergence,mandt2015continuous,chaudhari2018stochastic,zhu2018anisotropic,ijcai2018p307,an2020stochastic}. A gentle introduction to SDEs is provided in Appendix \ref{subsec:SDE}

\vspace{-0.3cm}

\section{RESULTS \& INSIGHTS: THE SDEs}\label{sec:Insights}
\vspace{-2mm}

This section provides the general formulations of the SDEs of SGDA (Theorem \ref{thm:SGDA_SDE_Insights_Full}), SEG (Theorem \ref{thm:SEG_SDE_Insights}), and SHGD (Theorem \ref{thm:SHGD_SDE_Insights}). Due to the technical nature of the analysis, we refer the reader to the appendix for the complete formal statements and proofs.
\begin{assumption}\label{ass:regularity_f_Insights}
We assume that
\begin{enumerate}
\item $\nabla f, \nabla f_i $ satisfy a Lipschitz condition: $ \exists L>0 $ s.t. $|\nabla f(u)-\nabla f(v)|+\sum_{i=1}^N\left|\nabla f_i(u)-\nabla f_i(v)\right| \leq L|u-v|$;

\vspace{-0.2cm}

\item $ f, f_i $ and their partial derivatives up to order 7 have polynomial growth;

\vspace{-0.2cm}

\item $ \nabla f, \nabla f_i $ satisfy a linear growth condition: $ \exists M>0 $ s.t. $|\nabla f(z)|+\sum_{i=1}^N\left|\nabla f_i(z)\right| \leq M(1+|z|).$
\end{enumerate}
\end{assumption}
\begin{definition}[Weak Approximation]\label{def:weak_approximation}
A continuous-time stochastic process $\{Z_t\}_{ t \in [0, T]}$ is an order $\alpha$ weak approximation (or $\alpha$-order SDE) of a discrete stochastic process $\{z_k\}_{k=0}^{\lf T/\eta \rf}$ if for every polynomial growth function $g$, there exists a positive constant $C$, independent of the stepsize $\eta$, such that $ \max _{k=0, \ldots, \lf T/\eta \rf}\left|\E g\left(z_k\right)-\E g\left(Z_{k \eta}\right)\right| \leq C \eta^\alpha.$
\end{definition}
This definition comes from the field of numerical analysis of SDEs, see \cite{mil1986weak}. When $g(z)=\lVert z \rVert^j$, the bound restricts the disparity between the $j$-th moments of the discrete and the continuous process. 
\vspace{-0.2cm}
\subsection{SGDA SDE}
\begin{theorem}[SGDA SDE - Informal Statement of Theorem \ref{thm:SGDA_SDE}] \label{thm:SGDA_SDE_Insights_Full}
Under sufficient regularity conditions, the solution of the following SDE is an order $1$ weak approximation of the discrete update of SGDA \eqref{eq:SGDA_Discr_Update}:
\begin{align}\label{eq:SGDA_SDE_Full_Insights}
& d Z_t = - F \left( Z_t \right) dt +\sqrt{\eta\Sigma\left( Z_t \right)}d W_{t},
\end{align}
where $\Sigma(z)$ is the noise covariance
\begin{align} \label{eq:SGDA_Covar_Insights}
  \Sigma(z) = \E[\xi_{\gamma}(z)\xi_{\gamma}(z)^\top],
\end{align}
and $\xi_{\gamma}(z):= F \left(z\right) - F_{\gamma}\left(z\right)$ the noise in the sample $F_\gamma$.
\end{theorem}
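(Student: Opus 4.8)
The plan is to apply the standard Milstein-type weak-approximation criterion (\cite{mil1986weak}; specialized to stochastic optimization in \cite{li2017stochastic}), which reduces the claim to matching a finite number of one-step conditional moments. Write the discrete one-step increment $\Delta z := z_{k+1}-z_k = -\eta F_{\gamma_k}(z_k)$ and the SDE one-step increment $\Delta Z := Z_{(k+1)\eta}-Z_{k\eta}$ started from $Z_{k\eta}=z$. The criterion says: if for $s=1,2$ the raw moments $\E[\prod_{i=1}^{s}\Delta z^{(j_i)}\mid z]$ and $\E[\prod_{i=1}^{s}\Delta Z^{(j_i)}\mid z]$ agree up to an error of order $\eta^{2}$ (times a polynomial in $|z|$), the third raw moments of both increments are themselves $O(\eta^{2})$, and suitable uniform moment bounds hold, then $\{Z_t\}$ is an order-$1$ weak approximation of $\{z_k\}$. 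So the bulk of the proof is a moment computation.

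On the discrete side the moments are immediate from $\E[F_{\gamma}(z)]=F(z)$: the first moment is $\E[\Delta z\mid z]=-\eta F(z)$ exactly; the second is $\E[\Delta z\,\Delta z^{\top}\mid z]=\eta^{2}\E[F_{\gamma}(z)F_{\gamma}(z)^{\top}]=\eta^{2}\big(F(z)F(z)^{\top}+\Sigma(z)\big)$ by the definitions of $\xi_{\gamma}$ and $\Sigma$; and the third moment is $-\eta^{3}\E[F_{\gamma}(z)^{\otimes 3}]=O(\eta^{3})$. On the SDE side I would Itô--Taylor expand: with drift $b=-F$ and a diffusion $\sigma$ satisfying $\sigma\sigma^{\top}=\Sigma$, one obtains $\E[\Delta Z\mid z]=\eta b(z)+O(\eta^{2})=-\eta F(z)+O(\eta^{2})$, $\E[\Delta Z\,\Delta Z^{\top}\mid z]=\eta^{2}(\sigma\sigma^{\top})(z)+\eta^{2}b(z)b(z)^{\top}+O(\eta^{3})=\eta^{2}\big(\Sigma(z)+F(z)F(z)^{\top}\big)+O(\eta^{3})$, and $\E[\Delta Z^{\otimes 3}\mid z]=O(\eta^{3})$ (the leading Gaussian term vanishes by oddness). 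Matching these against the discrete moments, all discrepancies are $O(\eta^{2})=\eta^{p+1}$ with $p=1$. Note the covariance is matched \emph{by construction}: $\Sigma$ was defined as exactly the sampling covariance of $F_{\gamma}$, and the $\sqrt{\eta}$ scaling of the diffusion is precisely what makes the discrete $O(\eta^{2})$ fluctuations show up at the right order.

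The remaining ingredients are regularity bookkeeping. Under Assumption~\ref{ass:regularity_f_Insights} the drift $-F$ is globally Lipschitz and of linear growth, so the SDE \eqref{eq:SGDA_SDE_Full_Insights} admits a unique strong solution with finite moments of all orders, uniformly bounded on $[0,T]$; the same Lipschitz and linear-growth control, together with polynomial growth of $f_i$ and its derivatives up to order $7$, guarantees that every coefficient appearing in the Itô--Taylor remainders (products of derivatives of the $f_i$, iterated up to the order required by the criterion) has polynomial growth, which upgrades the pointwise moment estimates above to the $(1+|z|^{\kappa})\eta^{2}$ bounds the criterion demands, and likewise bounds the growth of the discrete iterates' moments. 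One point to handle with care is the diffusion coefficient $\sqrt{\eta\Sigma(z)}$: since $\Sigma(z)=\E[\xi_{\gamma}\xi_{\gamma}^{\top}]$ is smooth with polynomially growing derivatives (inherited from the $f_i$) but need not be uniformly positive definite, a principal square root may lose smoothness where eigenvalues collide — this is harmless, because the one-step expansions and the criterion only ever see $\sigma\sigma^{\top}=\Sigma$, never $\sigma$ alone, so any measurable square root suffices and all estimates are phrased through the smooth object $\Sigma$.

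I expect the main obstacle to be exactly this regularity/moment-bound layer rather than the moment-matching algebra: one must verify that the iterated Itô--Taylor expansion of the SDE and the one-step expansion of SGDA can both be carried out to the order dictated by the Milstein lemma with remainders controlled uniformly in $\eta$ by a polynomial in $|z|$, and that the discrete process enjoys the requisite uniform moment bounds — this is where Assumption~\ref{ass:regularity_f_Insights}(2)'s growth hypothesis on derivatives up to order $7$ is consumed. Everything else (the two one-step moment computations, checking that $\Sigma=\sigma\sigma^{\top}$ makes the covariances agree, and the $O(\eta^{3})$ third moments) is routine.
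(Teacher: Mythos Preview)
Your proposal is correct and follows essentially the same route as the paper: both compute the one-step moments of the discrete SGDA increment (Lemma~\ref{lemma:SGDA_SDE}) and of the SDE increment (Lemma~\ref{lemma:li1}, taken from \cite{li2017stochastic}), then invoke the Milstein weak-approximation criterion (Theorem~\ref{thm:mils}) to conclude. The paper's version additionally carries a coordinate-wise stepsize scheduler $\eta_t$ through the computation, but for the informal statement with constant step the arguments coincide.
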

\subsection{SEG SDE}
A notable characteristic of SEG is the inclusion of the variable $\rho$ that controls the magnitude of the extra step. This variable plays an important role in the derivation of the SDE of SEG and one has to differentiate between two different regimes:
\mycoloredbox{}{\begin{enumerate}
    \item When $\rho \sim \eta$, the SDE of SEG is the same as SGDA, which is consistent with the literature on ODEs \citep{chavdarova2021last,lu2022sr}. The formal proof is given in Theorem \ref{thm:SEG_SDE_small_rho}.
    \item  However, if the extra stepsize $\rho$ is sizeably larger than $\eta$ \citep{hsieh2020explore} (i.e. $\rho=\mathcal{O}(\sqrt{\eta}))$, SEG enters a more exploratory regime, for which the SDE becomes distinct from the first regime.
\end{enumerate}
}

Before presenting our main result, we introduce some notation. Let $\bgamma := (\gamma^1,\gamma^2)$, $\bar F_{\bgamma}(z) := \nabla F_{\gamma^1}(z) F_{\gamma^2}(z)$, and $\bar F(z):=\E[\bar F_{\bgamma}(z)]$ be its expectation. We denote the noise in $\bar F$ as $\bar \xi_{\bgamma}(z) := \bar F_{\bgamma}(z) -\bar F(z)$ and consider the mixed (non-symmetric) covariance matrix $\bar \Sigma(z) = \E[\xi_{\gamma^1}(z)\bar \xi_{\bgamma}(z)^\top]$.

\begin{theorem}[Informal Statement of Theorem \ref{thm:SEG_SDE}] \label{thm:SEG_SDE_Insights}
Let
\begin{align}
    & F^{\seg}(z):= F(z) - \rho \bar F(z),\\
    & \Sigma^{\seg}(z) := \Sigma(z) +\rho\left[\bar \Sigma(z) + \bar \Sigma(z)^\top\right].
\end{align}
Under sufficient regularity conditions and $\rho = \mathcal{O}(\sqrt{\eta})$, the solution of the following SDE is the order 1 weak approximation of the discrete update of SEG
\begin{align}\label{eq:SEG_SDE_Full_Insights}
& d Z_t = - F^{\seg}\left( Z_t \right) dt+ \sqrt{\eta\Sigma^{\seg}\left( Z_t \right)}d W_{t}.
\end{align}
\end{theorem}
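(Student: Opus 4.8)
The plan is to invoke the weak-approximation machinery of \cite{li2017stochastic}, ultimately Mil'shtein's moment-matching theorem \cite{mil1986weak}: to show that a diffusion $dZ_t = b(Z_t)\,dt + \sigma(Z_t)\,dW_t$ is an order-$1$ weak approximation of a chain $z_{k+1}=z_k+\Delta(z_k)$ it suffices, given uniform-in-$\eta$ moment bounds for both processes and enough smoothness, to match the one-step increments in their first two moments up to $\mathcal{O}(\eta^2)$, i.e.\ $|\E\Delta(x)-\E\tilde\Delta(x)|\le K(1+|x|^\kappa)\eta^2$ and $|\E[\Delta(x)\Delta(x)^\top]-\E[\tilde\Delta(x)\tilde\Delta(x)^\top]|\le K(1+|x|^\kappa)\eta^2$, with all third-order increment moments of both processes of size $\mathcal{O}(\eta^2)$; here $\tilde\Delta(x):=Z_\eta-x$ with $Z_0=x$. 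The proof then splits into (i) a stochastic-Taylor computation of SEG's increment moments, (ii) the It\^o--Taylor computation for the candidate SDE, (iii) matching them, and (iv) the regularity bookkeeping. Since both $-F^{\seg}$ and $\sqrt{\eta\Sigma^{\seg}}$ depend on $\eta$ (the latter explicitly, the former through $\rho=\mathcal{O}(\sqrt\eta)$), one uses the variant of the theorem that allows $\eta$-dependent coefficients with constants uniform in $\eta$, exactly as for the SDEs of SGD.

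For step (i), set $x^-:=x-\rho F_{\gamma^2}(x)$ and Taylor-expand the outer field in $\rho$: $F_{\gamma^1}(x^-)=F_{\gamma^1}(x)-\rho\,\nabla F_{\gamma^1}(x)F_{\gamma^2}(x)+\mathcal{O}(\rho^2)=F_{\gamma^1}(x)-\rho\,\bar F_{\bgamma}(x)+\mathcal{O}(\rho^2)$, the remainder enjoying polynomial-growth bounds by Assumption \ref{ass:regularity_f_Insights}. Hence $\Delta(x)=-\eta F_{\gamma^1}(x)+\eta\rho\,\bar F_{\bgamma}(x)+\mathcal{O}(\eta\rho^2)$, and $\rho=\mathcal{O}(\sqrt\eta)$ makes the tail $\mathcal{O}(\eta^2)$. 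Taking expectations and using $\E\xi_\gamma(x)=0$, $\E\bar\xi_{\bgamma}(x)=0$ gives $\E\Delta(x)=-\eta F(x)+\eta\rho\,\bar F(x)+\mathcal{O}(\eta^2)=-\eta F^{\seg}(x)+\mathcal{O}(\eta^2)$. For the second moment, $\Delta(x)\Delta(x)^\top=\eta^2 F_{\gamma^1}F_{\gamma^1}^\top-\eta^2\rho\big(F_{\gamma^1}\bar F_{\bgamma}^\top+\bar F_{\bgamma}F_{\gamma^1}^\top\big)+\mathcal{O}(\eta^3)$ (the dropped terms carry at least $\eta^2\rho^2=\mathcal{O}(\eta^3)$); substituting $F_{\gamma^1}=F-\xi_{\gamma^1}$, $\bar F_{\bgamma}=\bar F+\bar\xi_{\bgamma}$ and recalling $\Sigma=\E[\xi_\gamma\xi_\gamma^\top]$, $\bar\Sigma=\E[\xi_{\gamma^1}\bar\xi_{\bgamma}^\top]$ yields $\E[\Delta(x)\Delta(x)^\top]=\eta^2(FF^\top+\Sigma)-\eta^2\rho\big(F\bar F^\top+\bar F F^\top-\bar\Sigma-\bar\Sigma^\top\big)+\mathcal{O}(\eta^3)$. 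Third-order increment moments are $\mathcal{O}(\eta^3)$ trivially since $\Delta(x)=\mathcal{O}(\eta)$.

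For steps (ii)--(iii), a routine It\^o--Taylor expansion of $dZ_t=-F^{\seg}(Z_t)\,dt+\sqrt{\eta\Sigma^{\seg}(Z_t)}\,dW_t$ gives $\E\tilde\Delta(x)=-\eta F^{\seg}(x)+\mathcal{O}(\eta^2)$ and $\E[\tilde\Delta(x)\tilde\Delta(x)^\top]=\eta^2\Sigma^{\seg}(x)+\eta^2 F^{\seg}(x)F^{\seg}(x)^\top+\mathcal{O}(\eta^3)$; expanding $F^{\seg}(F^{\seg})^\top=FF^\top-\rho(F\bar F^\top+\bar F F^\top)+\mathcal{O}(\eta)$ together with $\Sigma^{\seg}=\Sigma+\rho(\bar\Sigma+\bar\Sigma^\top)$ shows this coincides with the SEG moments above up to $\mathcal{O}(\eta^2)$. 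This is precisely where the definitions $F^{\seg}=F-\rho\bar F$ and $\Sigma^{\seg}=\Sigma+\rho(\bar\Sigma+\bar\Sigma^\top)$ originate: the extra step contributes a deterministic $\mathcal{O}(\sqrt\eta)$ drift $-\rho\bar F$ and injects the extra-step/gradient cross-covariance $\rho(\bar\Sigma+\bar\Sigma^\top)$ into the noise. Feeding (i) and (ii) into Mil'shtein's theorem closes the argument.

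The remaining work --- and the main obstacle --- is the regularity bookkeeping (iv). One needs: (a) moment bounds on the SEG iterates uniform in $\eta$, which follow from the linear-growth condition in Assumption \ref{ass:regularity_f_Insights} and the i.i.d.\ mini-batch structure by a discrete Gronwall argument, noting that the extra step only rescales the effective stepsize by $1+\mathcal{O}(\sqrt\eta)$; (b) existence, uniqueness and matching moment bounds for the SDE, from the Lipschitz and linear-growth properties of $F$ carried over to $F^{\seg}$ and to the diffusion coefficient $\sqrt{\eta\Sigma^{\seg}}$, which for $\eta$ small is the principal square root of the PSD field $\Sigma+\rho(\bar\Sigma+\bar\Sigma^\top)$ (since $\rho\to0$ and $\Sigma\succeq 0$) and inherits the needed regularity; (c) polynomial-growth control, uniform in $\eta$, of every Taylor remainder in (i)--(ii), which is what consumes the ``derivatives up to order $7$'' hypothesis, since the moment-matching theorem requires the test function and the SDE coefficients --- already built from $\nabla F$ and products of fields --- to be differentiated several more times and the estimates to survive the Gronwall step inside the theorem's proof. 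Carefully tracking the $\rho=\mathcal{O}(\sqrt\eta)$ scaling so that each unmatched term is genuinely $\mathcal{O}(\eta^2)$ --- in particular the $\mathcal{O}(\eta\rho^2)$ Hessian term in the drift and the $\mathcal{O}(\eta^2\rho^2)$ contributions to the second moment --- is where most of the care is required.
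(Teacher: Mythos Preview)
Your proposal is correct and follows essentially the same approach as the paper's proof: Taylor-expand the SEG increment in $\rho$, compute its first and second moments (recovering $-\eta F^{\seg}$ and $\eta^2(F^{\seg}(F^{\seg})^\top+\Sigma^{\seg})$ up to $\mathcal{O}(\eta^2)$ respectively), match these against the It\^o--Taylor moments of the candidate SDE via Lemma~\ref{lemma:li1}, and conclude with Mil'shtein's theorem. Your step~(iv) is more explicit than the paper about the regularity issues (PSD-ness of $\Sigma^{\seg}$ for small $\rho$, uniform-in-$\eta$ moment bounds, the need for $\eta$-dependent coefficients in the moment-matching framework), but these are handled in the paper by appeal to Assumption~\ref{ass:regularity_f} and the cited results of \cite{li2017stochastic,mil1986weak}.
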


\begin{proof}
    To prove that the SDE is a weak approximation of SEG as per Definition \ref{def:weak_approximation}, we prove that the first and second moments of its discretization match those of SEG up to an error of order $\eta$ and $\eta^2$, respectively. 
\end{proof}

For didactic reasons, we now present Corollary \ref{thm:SEG_SDE_Simplified_Insights}, a consequence of Theorem \ref{thm:SEG_SDE_Insights} that provides a more interpretable SDE for SEG which we will use to establish a comparison with SGDA (Eq.\eqref{eq:SGDA_SDE_Full_Insights}) and SHGD (Eq.\eqref{eq:SHGD_SDE_Simplified_Insights}).

\begin{figure}%
    \centering
    \subfloat{{\includegraphics[width=0.49\linewidth]{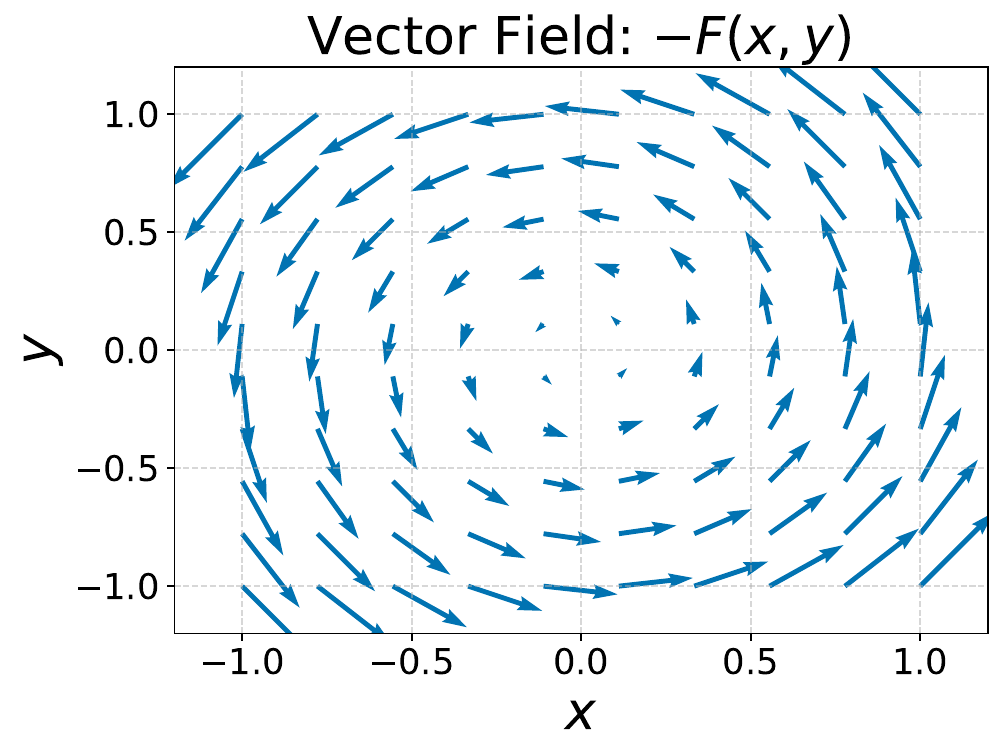} }}%
    \subfloat{{\includegraphics[width=0.49\linewidth]{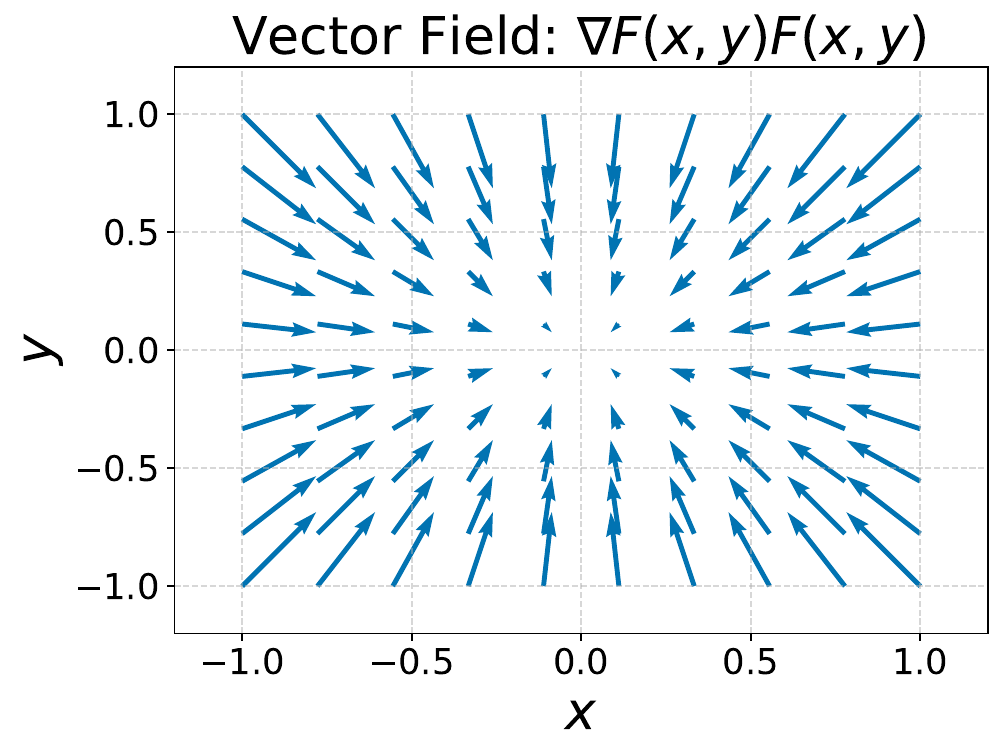} }} \\
    \subfloat{{\includegraphics[width=0.49\linewidth]{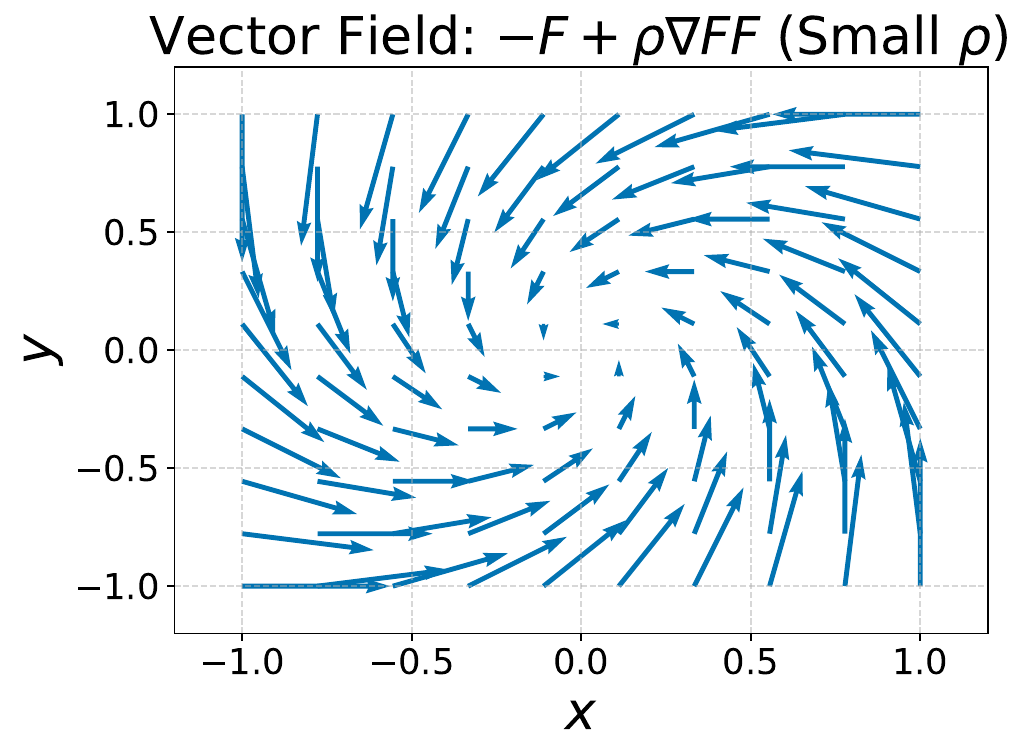} }}%
    \subfloat{{\includegraphics[width=0.49\linewidth]{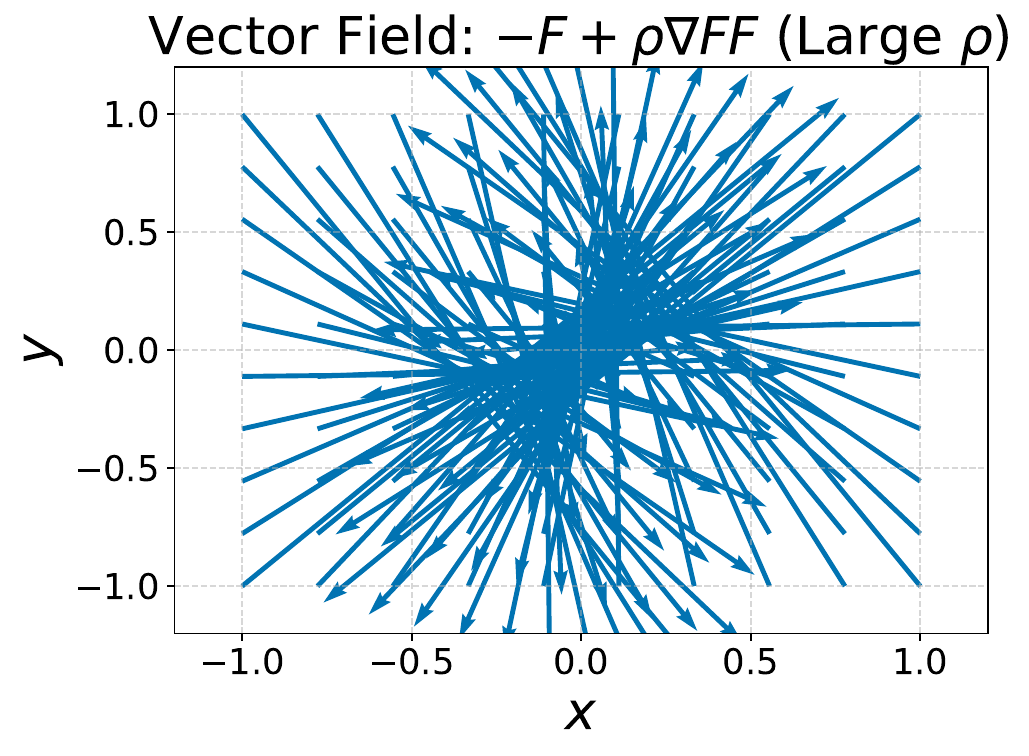} }}%
    \vspace{-2mm}
    \caption{Graphical representation of the \textit{implicit regularization} of the vector field of SEG for $f(x,y) = x y$: $-F$ spins the dynamics in a circle (Top Left); $+ \nabla F F$ pulls it towards $0$ (Top Right); If $\rho$ is small, $-F + \rho \nabla F F$ combines the two fields and spirals towards the origin (Bottom Left); If $\rho$ is large, $-F + \rho \nabla F F$ is a chaotic field that makes the dynamics diverge (Bottom Right).}%
    \label{fig:VectorFields}%
    \vspace{-2mm}
\end{figure}
\begin{corollary}[Informal Statement of Corollary \ref{thm:SEG_SDE_Simplified_SameSample}] \label{thm:SEG_SDE_Simplified_Insights}
Under the assumptions of Theorem~\ref{thm:SEG_SDE_Insights}, that $\gamma^1=\gamma^2=\gamma$, and that the stochastic gradients are $\nabla_x f_{\gamma}(z) = \nabla_x f(z) + U^x$ and $\nabla_y f_{\gamma}(z) = \nabla_y f(z) + U^y$ such that $U^x$ and $U^y$ are independent noises that do not depend on $z$, the following SDE provides a 1 weak approximation of the discrete update of SEG
\begin{align}\label{eq:SEG_SDE_Simplified_SameSample_Insights}
d Z_{t} = & - \left(F(Z_{t}) - \rho \nabla F(Z_{t}) F(Z_{t}) \right) dt \\
 & +   \left(\mathbf{I}_{2d} - \rho \nabla F\left( Z_{t} \right) \right) \sqrt{\eta \Sigma}  d W_t. \nonumber
\end{align}
If instead~(under the same assumptions) $\gamma^1$ and $\gamma^2$ are uncorrelated, the SDE has a drift regularization term but no variance regularization:
\begin{align}
d Z_{t} = - \left(F(Z_{t}) - \rho \nabla F(Z_{t}) F(Z_{t}) \right) dt + \sqrt{\eta \Sigma}  d W_t. \nonumber
\end{align}
\end{corollary}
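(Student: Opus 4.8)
The plan is to obtain the corollary directly from Theorem~\ref{thm:SEG_SDE_Insights} by specializing the drift $F^{\seg}$ and the diffusion covariance $\Sigma^{\seg}$ to the stated additive, state-independent noise model, and then recognizing the resulting diffusion in the factorized form $(\mathbf{I}_{2d}-\rho\nabla F)\sqrt{\eta\Sigma}$. The single observation that unlocks everything is this: if $\nabla_x f_\gamma(z)=\nabla_x f(z)+U^x$ and $\nabla_y f_\gamma(z)=\nabla_y f(z)+U^y$ with $U^x,U^y$ independent of $z$, then $F_\gamma(z)=F(z)+(U^x,-U^y)$, so the \emph{sample Jacobian} $\nabla F_\gamma(z)=\nabla F(z)$ is deterministic, while the noise $\xi_\gamma(z)=-(U^x,-U^y)$ is a mean-zero random vector that does not depend on $z$; in particular $\Sigma(z)\equiv\Sigma$ is constant.

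From here the drift is immediate: $\bar F_{\bgamma}(z)=\nabla F_{\gamma^1}(z)F_{\gamma^2}(z)=\nabla F(z)F_{\gamma^2}(z)$, so taking expectations and using $\E[F_{\gamma^2}(z)]=F(z)$ gives $\bar F(z)=\nabla F(z)F(z)$ and hence $F^{\seg}(z)=F(z)-\rho\nabla F(z)F(z)$, the drift appearing in \eqref{eq:SEG_SDE_Simplified_SameSample_Insights} (and it is unchanged whether or not $\gamma^1,\gamma^2$ are coupled). For the diffusion, $\bar\xi_{\bgamma}(z)=\bar F_{\bgamma}(z)-\bar F(z)=\nabla F(z)\bigl(F_{\gamma^2}(z)-F(z)\bigr)=-\nabla F(z)\xi_{\gamma^2}(z)$, so $\bar\Sigma(z)=\E[\xi_{\gamma^1}(z)\bar\xi_{\bgamma}(z)^\top]=-\E[\xi_{\gamma^1}\xi_{\gamma^2}^\top]\,\nabla F(z)^\top$. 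When $\gamma^1=\gamma^2=\gamma$ this is $-\Sigma\nabla F(z)^\top$, so $\Sigma^{\seg}(z)=\Sigma-\rho\bigl(\nabla F(z)\Sigma+\Sigma\nabla F(z)^\top\bigr)$; when $\gamma^1,\gamma^2$ are uncorrelated, $\E[\xi_{\gamma^1}\xi_{\gamma^2}^\top]=\E[\xi_{\gamma^1}]\E[\xi_{\gamma^2}]^\top=0$, so $\bar\Sigma(z)=0$ and $\Sigma^{\seg}(z)=\Sigma$ — this is precisely the second displayed SDE, with drift regularization but no variance regularization.

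To land on the factorized form in the coupled case I would expand $(\mathbf{I}_{2d}-\rho\nabla F(z))\Sigma(\mathbf{I}_{2d}-\rho\nabla F(z))^\top=\Sigma-\rho\nabla F(z)\Sigma-\rho\Sigma\nabla F(z)^\top+\rho^2\nabla F(z)\Sigma\nabla F(z)^\top=\Sigma^{\seg}(z)+\rho^2\nabla F(z)\Sigma\nabla F(z)^\top$, so the SDE of Theorem~\ref{thm:SEG_SDE_Insights} and the SDE \eqref{eq:SEG_SDE_Simplified_SameSample_Insights} share the same drift and have diffusion covariances that differ only by $\eta\rho^2\nabla F(z)\Sigma\nabla F(z)^\top$. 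Under $\rho=\mathcal{O}(\sqrt\eta)$ this is $\mathcal{O}(\eta^2)$: since the diffusion enters the one-step moment expansion carrying an extra factor of $\eta$ relative to the drift, swapping $\Sigma^{\seg}$ for $(\mathbf{I}_{2d}-\rho\nabla F)\Sigma(\mathbf{I}_{2d}-\rho\nabla F)^\top$ perturbs the one-step second moments at an order that is negligible for weak order $1$, so both SDEs remain order-$1$ weak approximations in the sense of Definition~\ref{def:weak_approximation}; the regularity needed is inherited from Assumption~\ref{ass:regularity_f_Insights}, as $\nabla F$ is globally Lipschitz and of linear growth and the modified coefficients keep the same structural form. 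A minor bonus of the factorized form is that $(\mathbf{I}_{2d}-\rho\nabla F)\Sigma(\mathbf{I}_{2d}-\rho\nabla F)^\top$ is manifestly positive semidefinite, so its square root and the SDE \eqref{eq:SEG_SDE_Simplified_SameSample_Insights} are well defined for every $\rho$, whereas $\Sigma^{\seg}$ is only guaranteed PSD for $\rho$ small.

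The only genuinely non-bookkeeping step, and the one I expect to be the main obstacle, is that last perturbation argument: rigorously justifying that replacing the diffusion matrix $\sqrt{\eta\Sigma^{\seg}}$ by $(\mathbf{I}_{2d}-\rho\nabla F)\sqrt{\eta\Sigma}$ preserves weak order $1$. The clean route is to re-run the one-step comparison at the core of the proof of Theorem~\ref{thm:SEG_SDE_Insights}: show that the conditional second moment $\E[(Z_{(k+1)\eta}-Z_{k\eta})(Z_{(k+1)\eta}-Z_{k\eta})^\top\mid Z_{k\eta}]$ changes by exactly $\eta^2\rho^2\nabla F\Sigma\nabla F^\top=\mathcal{O}(\eta^3)$ while the conditional first moment is untouched, and then invoke the same one-step-to-global estimate (together with the moment bounds on the SDE solution) already established there, so that the $\mathcal{O}(\eta^3)$ local discrepancy accumulates only to $\mathcal{O}(\eta^2)$ globally and does not degrade the order.
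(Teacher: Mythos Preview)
Your proposal is correct and follows essentially the same approach as the paper: use the additive noise structure to deduce $\nabla F_\gamma=\nabla F$, simplify $F^{\seg}$ and $\bar\Sigma$ accordingly, then observe that $\Sigma^{\seg}$ and $(\mathbf{I}_{2d}-\rho\nabla F)\Sigma(\mathbf{I}_{2d}-\rho\nabla F)^\top$ differ only by an $\mathcal{O}(\rho^2)=\mathcal{O}(\eta)$ term that is negligible for weak order~1. Your sign tracking is in fact cleaner than the paper's (whose sketch contains cancelling sign slips), and your explicit plan to re-run the one-step moment comparison is exactly what underlies the paper's terse ``terms of order $\rho^2$ have a vanishing influence.''
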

\vspace{-3mm}
\begin{proof}
The noise assumption implies $\nabla F_{\gamma^1}(z) = \nabla F(z)$.  Therefore
$\bar F(z) := \nabla F(z) F(z)$. Next, note that $\bar \xi_{\bgamma}(z) := \nabla F(z)\xi_{\gamma^2}(z)$ and therefore $\bar \Sigma(z) = \E[\xi_{\gamma^1}(z)\xi_{\gamma^2}^\top(z)\nabla F(z)^\top] = \Sigma\nabla F(z)^\top$ if $\gamma^1=\gamma^2$ and is zero otherwise. Next, note that $\Sigma^{\seg}(z) := \Sigma+\rho\Sigma \nabla F(z)^\top + \rho \nabla F(z)\Sigma= (\mathbf{I}_{2d}+\rho\nabla F(z))\Sigma(\mathbf{I}_{2d}+\rho\nabla F(z)^\top) + \mathcal{O}(\rho^2)$. Since terms of order $\rho^2$ have a vanishing influence, this proves the result. 
\end{proof}

\subsection{SHGD SDE}
\begin{theorem}[SHGD SDE - Informal Statement of Theorem \ref{thm:SHGD_SDE}] \label{thm:SHGD_SDE_Insights}
Let $\mathcal{H}_\bgamma := \mathcal{H}_{\gamma^1,\gamma^2}$ and let us define 
\begin{align}
    &F^{\shgd}(z):=\nabla \E \left[ \mathcal{H}_{\bgamma} \left( z \right)  \right],\\
    &\Sigma^{\shgd}(z):=\E \left[ \hat \xi_\bgamma(z)\hat \xi_\bgamma(z)^\top\right],
\end{align}
where $\hat \xi_\bgamma(z) = F^{\shgd}(z) - \nabla \mathcal{H}_{\bgamma} \left( z \right)$.
 Under sufficient regularity conditions, the solution of the following SDE is the order 1 weak approximation of the discrete update of SHGD \eqref{eq:SHGD_Discr_Update}:
\begin{align}\label{eq:SHGD_SDE_Full_Insights}
d Z_t = - F^{\shgd} \left( Z_t \right) dt +\sqrt{\eta\Sigma^{\shgd}\left( Z_t \right)}d W_{t}.
\end{align}
\end{theorem}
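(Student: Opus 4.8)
The plan is to follow the template of \cite{li2017stochastic} for minimization and of the SGDA/SEG proofs in this paper: to establish that the SDE \eqref{eq:SHGD_SDE_Full_Insights} is an order $1$ weak approximation of \eqref{eq:SHGD_Discr_Update}, it suffices (by Definition \ref{def:weak_approximation} together with the standard Milstein-type lemma on moment matching) to verify that the one-step conditional moments of the discrete update and of the Euler--Maruyama discretization of the SDE agree up to the appropriate order. Concretely, writing $\Delta z = z_{k+1} - z_k = -\eta \nabla \mathcal{H}_{\bgamma}(z_k)$, I would compute $\E[\Delta z \mid z_k]$, $\E[\Delta z \Delta z^\top \mid z_k]$, and bound the third moment $\E[|\Delta z|^3 \mid z_k]$; then compare with the corresponding quantities for the SDE increment over $[k\eta,(k+1)\eta]$, which are $-\eta F^{\shgd}(z_k) + \mathcal{O}(\eta^2)$, $\eta^2 \Sigma^{\shgd}(z_k) + \mathcal{O}(\eta^3)$ (note the $\eta \Sigma^{\shgd}$ inside the square root combines with the $dt$-scaling to give an $\eta^2$ second moment), and $\mathcal{O}(\eta^3)$, respectively.

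The key algebraic identity to exploit is that $\nabla \mathcal{H}_{\bgamma}(z)$, although it is the gradient of the \emph{product} $\tfrac12 F_{\gamma^1}^\top F_{\gamma^2}$, has a convenient structure: by the product rule, $\nabla \mathcal{H}_{\gamma^1,\gamma^2}(z) = \tfrac12\big(\nabla F_{\gamma^1}(z)^\top F_{\gamma^2}(z) + \nabla F_{\gamma^2}(z)^\top F_{\gamma^1}(z)\big)$, which is symmetric under swapping $\gamma^1 \leftrightarrow \gamma^2$ and whose expectation (since $\gamma^1,\gamma^2$ are i.i.d.) is exactly $\nabla \E[\mathcal{H}_\bgamma(z)] = F^{\shgd}(z)$ --- so the drift match is immediate, $\E[\Delta z \mid z_k] = -\eta F^{\shgd}(z_k)$, with no residual term at all. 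For the second moment, $\E[\Delta z \Delta z^\top \mid z_k] = \eta^2 \E[\nabla\mathcal{H}_\bgamma(z_k)\nabla\mathcal{H}_\bgamma(z_k)^\top] = \eta^2\big(F^{\shgd}(z_k)F^{\shgd}(z_k)^\top + \Sigma^{\shgd}(z_k)\big)$ by the bias-variance decomposition, which matches the SDE increment's second moment $\eta^2 F^{\shgd}F^{\shgd\top} + \eta^2\Sigma^{\shgd}$ up to $\mathcal{O}(\eta^3)$; the third-moment bound follows from the polynomial-growth and linear-growth parts of Assumption \ref{ass:regularity_f_Insights} applied to $\nabla\mathcal{H}_\bgamma$, which involves products of $\nabla f_i$ and $\nabla^2 f_i$ and hence is controlled uniformly in the batch.

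The main obstacle, and where most of the technical work in the appendix proof presumably lies, is discharging the regularity hypotheses needed to invoke the abstract weak-approximation lemma: one must check that $F^{\shgd}$ and $\Sigma^{\shgd}$ (equivalently $\nabla\mathcal{H}_\bgamma$ and its square) inherit Lipschitz-ness and linear/polynomial growth from Assumption \ref{ass:regularity_f_Insights}. This is delicate because $\nabla \mathcal{H}_\bgamma$ is quadratic in the data --- it contains $\nabla F_{\gamma^1}^\top F_{\gamma^2}$, a product of a Hessian-like term and a gradient --- so naive Lipschitz bounds fail without a growth hypothesis on $\nabla^2 f_i$ and $f_i$ jointly; this is exactly why the assumption requires control of partial derivatives up to order $7$ and polynomial growth rather than mere boundedness. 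I would therefore devote a lemma to showing that, under Assumption \ref{ass:regularity_f_Insights}, the map $z \mapsto \nabla\mathcal{H}_\bgamma(z)$ satisfies the hypotheses of the Milstein lemma (locally Lipschitz with polynomially-growing constant, linear growth of the appropriate composite fields, existence and uniqueness and finite moments of the SDE solution), and only then carry out the one-step moment comparison sketched above, which by itself is short once the structural identity for $\nabla\mathcal{H}_\bgamma$ is in hand.
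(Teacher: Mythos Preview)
Your proposal is correct and follows essentially the same approach as the paper: compute the one-step moments of the discrete SHGD update via the bias--variance decomposition of $\nabla\mathcal{H}_{\bgamma}$, compare with the SDE moments supplied by Lemma~\ref{lemma:li1}, and invoke the Milstein-type Theorem~\ref{thm:mils}. The only notable difference is that you devote substantial attention to verifying that the composite field $\nabla\mathcal{H}_{\bgamma}$ (quadratic in the data) inherits the required regularity from Assumption~\ref{ass:regularity_f_Insights}, whereas the paper's proof simply posits Assumption~\ref{ass:regularity_f} and proceeds directly to the moment comparison without a separate regularity lemma; your caution here is warranted but not elaborated in the paper itself.
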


Once again, we provide a more interpretable SDE under additional assumptions:
\begin{corollary}[Informal Statement of Corollary \ref{thm:SHGD_SDE_Simplified_SameSample}] \label{thm:SHGD_SDE_Simplified_Insights}
Under the assumptions of Theorem~\ref{thm:SHGD_SDE_Insights}, that $\gamma^1=\gamma^2=\gamma$, and that the stochastic gradients are $\nabla_x f_{\gamma}(z) = \nabla_x f(z) + U^x$ and $\nabla_y f_{\gamma}(z) = \nabla_y f(z) + U^y$ such that $U^x$ and $U^y$ are independent noises that do not depend on $z$, the SDE is
\begin{align}\label{eq:SHGD_SDE_Simplified_Insights}
    d Z_t & = - \nabla \mathcal{H} \left( Z_t \right) dt + \sqrt{\eta} \nabla^2 f \left( Z_t \right) \sqrt{\Sigma} d W_{t}.
\end{align}
If instead $\gamma^1$ and $\gamma^2$ are uncorrelated, the SDE is the same but with less variance:
\begin{equation}
    d Z_t = - \nabla \mathcal{H} \left( Z_t \right) dt + \sqrt{\frac{\eta}{2}} \nabla^2 f \left( Z_t \right) \sqrt{\Sigma} d W_{t}.
\end{equation}
\end{corollary}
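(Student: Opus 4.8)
The plan is to specialize Theorem~\ref{thm:SHGD_SDE_Insights} in exactly the way the proof of Corollary~\ref{thm:SEG_SDE_Simplified_Insights} specializes Theorem~\ref{thm:SEG_SDE_Insights}: reduce the abstract drift $F^{\shgd}$ and covariance $\Sigma^{\shgd}$ to closed forms using the additive, state-independent noise model, and then recognize the resulting diffusion coefficient as $\nabla^2 f\,\sqrt{\Sigma}$ (up to a scalar). First I would record the consequences of the noise assumption. Writing $F_{\gamma}(z)=F(z)+U$ with $U:=(U^x,-U^y)$ independent of $z$, unbiasedness of $F_\gamma$ forces $\E[U]=0$, $\E[UU^\top]=\Sigma$, $\xi_\gamma(z)=-U$, and — crucially — $\nabla F_{\gamma}(z)=\nabla F(z)$, since the noise contributes nothing to the Jacobian. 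This is the direct analogue of the opening line of the proof of Corollary~\ref{thm:SEG_SDE_Simplified_Insights}.

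Next I would compute $F^{\shgd}$ and $\Sigma^{\shgd}$. Differentiating $\mathcal{H}_{\bgamma}(z)=\tfrac12 F_{\gamma^1}(z)^\top F_{\gamma^2}(z)$ and using $\nabla F_{\gamma^j}=\nabla F$ gives $\nabla\mathcal{H}_{\bgamma}(z)=\tfrac12\nabla F(z)^\top\bigl(F_{\gamma^1}(z)+F_{\gamma^2}(z)\bigr)$. Taking expectations and using $\E[U]=0$, in both the case $\gamma^1=\gamma^2$ and the uncorrelated case one obtains $F^{\shgd}(z)=\nabla F(z)^\top F(z)=\nabla\mathcal{H}(z)$ for the full-batch Hamiltonian $\mathcal{H}(z)=\tfrac12\|F(z)\|^2$, i.e. the common drift $-\nabla\mathcal{H}(Z_t)$. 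Subtracting, $\hat\xi_{\bgamma}(z)=-\tfrac12\nabla F(z)^\top(U_1+U_2)$, where $U_j$ is the noise of sample $\gamma^j$. If $\gamma^1=\gamma^2=\gamma$ then $U_1=U_2=U$ and $\Sigma^{\shgd}(z)=\nabla F(z)^\top\Sigma\,\nabla F(z)$; if $\gamma^1,\gamma^2$ are uncorrelated then $\E[U_1U_2^\top]=0$, $\E[(U_1+U_2)(U_1+U_2)^\top]=2\Sigma$, and $\Sigma^{\shgd}(z)=\tfrac12\nabla F(z)^\top\Sigma\,\nabla F(z)$ — this factor $\tfrac12$ is precisely what produces the $\sqrt{\eta/2}$ in the second SDE.

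The only step that is not pure bookkeeping is rewriting $\nabla F^\top\Sigma\,\nabla F$ through the Hessian. From $F=(\nabla_x f,-\nabla_y f)$ one reads off $\nabla F(z)^\top=\nabla^2 f(z)\,S$ with the signature matrix $S:=\diag(\mathbf{I}_d,-\mathbf{I}_d)$: the $(x,x)$ and $(y,x)$ blocks of $\nabla F^\top$ agree with $\nabla^2 f$, the $(x,y)$ and $(y,y)$ blocks are its negative; hence also $\nabla F(z)=S\,\nabla^2 f(z)$. Because $U^x$ and $U^y$ are independent and mean-zero, $\Sigma$ is block-diagonal, so $S\Sigma S=\Sigma$, and therefore $\nabla F(z)^\top\Sigma\,\nabla F(z)=\nabla^2 f(z)\,(S\Sigma S)\,\nabla^2 f(z)=\nabla^2 f(z)\,\Sigma\,\nabla^2 f(z)$. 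Consequently $M(z):=\nabla^2 f(z)\sqrt{\Sigma}$ (resp. $\tfrac{1}{\sqrt2}\nabla^2 f(z)\sqrt{\Sigma}$) satisfies $M(z)M(z)^\top=\Sigma^{\shgd}(z)$ (resp. $\tfrac12\Sigma^{\shgd}(z)$), since $\nabla^2 f$ and $\sqrt{\Sigma}$ are symmetric. As the infinitesimal generator of an Itô diffusion — hence all the moments appearing in Definition~\ref{def:weak_approximation} — depends on the diffusion coefficient only through its product with its transpose, replacing $\sqrt{\eta\Sigma^{\shgd}(Z_t)}$ by $\sqrt{\eta}\,M(Z_t)$ in the SDE of Theorem~\ref{thm:SHGD_SDE_Insights} leaves the law of $Z_t$ unchanged, which yields the two displayed SDEs and shows they are order-$1$ weak approximations of SHGD.

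I expect this last identity, $\nabla F^\top\Sigma\,\nabla F=\nabla^2 f\,\Sigma\,\nabla^2 f$, to be the crux: it is where the Jacobian/Hessian sign conventions must be tracked carefully, and where the \emph{independence} of $U^x$ and $U^y$ genuinely enters, via $S\Sigma S=\Sigma$, allowing the asymmetric factor $\nabla F^\top\sqrt{\Sigma}$ to be traded for the cleaner $\nabla^2 f\sqrt{\Sigma}$. Everything else is a direct substitution into the definitions of $F^{\shgd}$ and $\Sigma^{\shgd}$ together with the general statement of Theorem~\ref{thm:SHGD_SDE_Insights}.
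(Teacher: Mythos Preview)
Your proposal is correct and follows essentially the same route as the paper: specialize the drift and covariance of Theorem~\ref{thm:SHGD_SDE_Insights} under the additive-noise model, then identify $\Sigma^{\shgd}=\nabla^2 f\,\Sigma\,\nabla^2 f$ (with the extra factor $\tfrac12$ in the independent-sample case). Your derivation of the key identity via the signature matrix $S=\diag(\mathbf{I}_d,-\mathbf{I}_d)$ and the block-diagonality of $\Sigma$ is in fact more explicit than the paper's, which simply records ``noticing that $F^{\top}\nabla F=\nabla^2 f\,\nabla f$'' and jumps to the result.
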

\paragraph{Empirical Validation}
Figure \ref{fig:SDEs} shows the empirical validation of Theorem \ref{thm:SEG_SDE_Insights} and Theorem \ref{thm:SHGD_SDE_Insights}: The top left shows that the SDE of SGDA matches the algorithm and is also attracted to a limit cycle. The bottom left shows that the SDE of SHGD matches the empirical optimization of a highly nonlinear landscape. The top right shows that the SDE of SEG matches its discrete-time counterpart for different values of $\rho$. Also, the SDE of SGDA is not a good model to describe the dynamics of SEG. The bottom right shows the evolution of the norm of the iterates in time: We understand that the SDE $Z_{k\eta}$ and optimizer $z_k$ move at the exact same speed along the trajectory --- This justifies their use as investigation tools.
Figure \ref{fig:SDE_Validation} shows that if $\rho = \mathcal{O}(\eta)$ or smaller, the SDE of SGDA models the dynamics of SEG accurately. However, if $\rho = \mathcal{O}(\sqrt{\eta})$ or larger, the SDE of SGDA no longer does so while the SDE of SEG does.
All experiments are averaged over 5 runs and additional details are in Appendix \ref{app:Experiments}.
\subsection{Comparisons}
\label{subsec:Interpretation}
There are three notable observations we immediately derive from the SDEs presented above:
\mycoloredbox{}{
\begin{enumerate}
    \item Let us use $\Tilde{\nabla}:= \left(\nabla_x, -\nabla_y\right)$. Then, one can see that the drift term $F$ is simply equal to $F=\Tilde{\nabla} f$ for SGDA, while SEG \textit{implicitly} introduces an additional regularizer such that
    \begin{equation*}
    F^{\seg}= \Tilde{\nabla} \left[f+ \frac{\rho}{2} \left[\lVert \nabla_y f  \rVert_2^2- \lVert \nabla_x f  \rVert_2^2 \right]\right].
    \end{equation*}
    Therefore, the dynamics of SEG is equivalent to that of SGDA on an \textbf{implicitly} regularized vector field. Figure \ref{fig:VectorFields} illustrates this phenomenon.
    \item The presence of $\rho \nabla F$ in the diffusion term of SEG shows that the extra step \textbf{implicitly} adds (on top of that of SGDA) a noise component that depends on the curvature of the landscape.
    \item SHGD is a second-order method that \textbf{explicitly} optimizes the Hamiltonian which by definition uses curvature-based information. The SDE in Eq.\eqref{eq:SHGD_SDE_Simplified_Insights} shows how this results in $\nabla^2 f$ directly affecting its noise structure.
\end{enumerate}}
\section{CONVERGENCE CONDITIONS}\label{sec:ConvConditions}
In this section, we derive the ODE that characterizes the evolution of the expected Hamiltonian $H_t$ along the dynamics of SEG and SHGD. We use it to derive convergence conditions on a wide class of functions. Then, we focus on Bilinear Games where we can explicitly solve the ODE which allows us to single out the role of each ingredient of the dynamics. Finally, we provide sufficient conditions to craft stepsize schedulers that induce convergence.
\subsection{SHGD}
We begin by introducing an auxiliary result that elucidates the evolution of the Hamiltonian. Here we denote by $Z_t$ the stochastic process that defines the evolution of SHGD. We also define $H_t := \E_\bgamma[\mathcal{H}_{\bgamma}(Z_t)]$ and $\Sigma^{\shgd}_t:= \Sigma^{\shgd}(Z_t)$. Then,
$$\E\left[\dot H_t\right] = - \E\left[\|\nabla H_t\|^2\right] + \frac{\eta}{2} \tr\left(\E\left[\Sigma^{\shgd}_t\nabla^2 H_t\right]\right).$$
We observe that:
\mycoloredbox{}{\begin{enumerate}
    \item $- \E \left[\|\nabla H_t\|^2\right]$ comes from the drift of the SDE and is pulling the dynamics towards regions with zero energy;
    \item $\frac{\eta}{2} \tr\left(\E\left[\Sigma^{\shgd}_t\nabla^2 H_t\right]\right)$ is induced by the diffusion term and has an adversarial effect;
    \item Convergence can only be achieved if the pulling force is stronger than the repulsive one, even at vanishing energies.
\end{enumerate}}
We formalize this in Theorem \ref{theorem:SHGD_GenConv_Insights} and Corollary \ref{cor:SHGD_Conv_Insights}.

\begin{theorem}[SHGD General Convergence] \label{theorem:SHGD_GenConv_Insights}Consider the solution $Z_t$ of the SHGD SDE with $\gamma^1\ne\gamma^2$.
Let $v_t := \E \left[\E_{\bgamma}  \left[ \lVert \nabla \mathcal{H}_t \left( Z_t \right) -  \nabla \mathcal{H}_{\bgamma} \left( Z_t \right) \rVert_2^2 \right]\right]$ measure the error in $ \nabla \mathcal{H}$, in expectation over the whole randomness up to time $t$. Suppose that:
\begin{enumerate}
    \item The smallest eigenvalue (in absolute value) $\mu$ of $\nabla^2 f(z)$ is non-zero;
    \item $\lVert \nabla^2 H(z)\rVert_{\text{op}} < \mathcal{L}_{\mathcal{T}}$, for all $z\in\R^{2d}$.
\end{enumerate}
Then,
\begin{equation}
    \E \left[ H_t \right] \leq  e^{-2\mu^2t} \left[ H_0 + \frac{\eta\mathcal{L}_{\mathcal{T}}}{2} \int_0^t v_s  e^{2\mu^2 s} ds \right].
\end{equation}
\end{theorem}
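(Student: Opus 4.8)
The plan is to reduce the statement to a one–dimensional linear differential inequality for $\phi(t):=\E[H_t]$ and then close it with Grönwall. The starting point is the energy identity recalled just above, $\E[\dot H_t] = -\E[\norm{\nabla H_t}^2] + \tfrac{\eta}{2}\tr\!\left(\E[\Sigma^{\shgd}_t\nabla^2 H_t]\right)$, which follows from Itô's formula applied to the process $H(Z_t)$, where $H(z):=\E_{\bgamma}[\mathcal{H}_{\bgamma}(z)]$ and $Z_t$ solves the SHGD SDE \eqref{eq:SHGD_SDE_Full_Insights}: the drift contributes $-\norm{\nabla H(Z_t)}^2$, the second-order Itô term contributes $\tfrac{\eta}{2}\tr(\Sigma^{\shgd}_t\nabla^2 H_t)$, and the stochastic integral has zero mean because, under Assumption~\ref{ass:regularity_f_Insights}, $\nabla f$ is Lipschitz and has linear growth, so $F^{\shgd}=\nabla^2 f\,\nabla f$ grows at most linearly, $Z_t$ has finite moments of every order, and $\tfrac{d}{dt}$ may be interchanged with $\E$. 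It then suffices to (i) lower bound the drift term by $2\mu^2\phi(t)$ and (ii) upper bound the diffusion term by $\tfrac{\eta\mathcal{L}_{\mathcal{T}}}{2}v_t$.

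For (i) I would exploit the Hamiltonian structure available when $\gamma^1\ne\gamma^2$. Independence of the two minibatches gives $H(z)=\tfrac12\,\E[F_{\gamma^1}(z)]^\top\E[F_{\gamma^2}(z)]=\tfrac12\norm{F(z)}^2$. Writing $E:=\diag(\Im_d,-\Im_d)$, so that $F=(\nabla_x f,-\nabla_y f)=E\,\nabla f$ and the Jacobian of $F$ is $\nabla F(z)=E\,\nabla^2 f(z)$, one gets $\nabla H(z)=\nabla F(z)^\top F(z)=\nabla^2 f(z)\,E^\top E\,\nabla f(z)=\nabla^2 f(z)\,\nabla f(z)$, using $E^\top E=\Im_{2d}$ and the symmetry of $\nabla^2 f$. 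Hence $\norm{\nabla H(z)}^2=\norm{\nabla^2 f(z)\,\nabla f(z)}^2\ge\mu^2\norm{\nabla f(z)}^2=2\mu^2 H(z)$, where the inequality uses that $\mu$ lower bounds, uniformly in $z$, the smallest singular value of the symmetric matrix $\nabla^2 f(z)$ (first hypothesis) and the last equality uses $\norm{E\,\nabla f(z)}=\norm{\nabla f(z)}$. Taking expectations yields $-\E[\norm{\nabla H_t}^2]\le -2\mu^2\phi(t)$.

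For (ii), positive semidefiniteness of $\Sigma^{\shgd}_t$ gives $\tr(\Sigma^{\shgd}_t\nabla^2 H_t)\le\norm{\nabla^2 H_t}_{\mathrm{op}}\,\tr(\Sigma^{\shgd}_t)\le\mathcal{L}_{\mathcal{T}}\,\tr(\Sigma^{\shgd}_t)$ by the second hypothesis, while $\tr(\Sigma^{\shgd}(z))=\E_{\bgamma}[\norm{\hat\xi_{\bgamma}(z)}^2]=\E_{\bgamma}[\norm{\nabla H(z)-\nabla\mathcal{H}_{\bgamma}(z)}^2]$, so $\E[\tr\Sigma^{\shgd}_t]=v_t$ by the definition of $v_t$. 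Putting (i) and (ii) together, $\phi'(t)\le-2\mu^2\phi(t)+\tfrac{\eta\mathcal{L}_{\mathcal{T}}}{2}v_t$. Multiplying through by the integrating factor $e^{2\mu^2 t}$ makes the left-hand side $\tfrac{d}{dt}\!\left(e^{2\mu^2 t}\phi(t)\right)$; integrating from $0$ to $t$ and using that $\phi(0)=H_0$ (the initial condition is deterministic) gives exactly $\E[H_t]\le e^{-2\mu^2 t}\!\left[H_0+\tfrac{\eta\mathcal{L}_{\mathcal{T}}}{2}\int_0^t v_s\,e^{2\mu^2 s}\,\diff s\right]$.

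The step I expect to be the main obstacle is (i): carefully justifying the identity $\nabla H=\nabla^2 f\,\nabla f$ — which genuinely relies on the decoupling $\gamma^1\ne\gamma^2$ (this is why that regime is singled out in the hypotheses) — and, more subtly, making sure the non-degeneracy $\sigma_{\min}(\nabla^2 f(z))\ge\mu>0$ is read uniformly in $z$, since a merely pointwise version would produce no exponential rate at the Grönwall step. The remaining ingredients — the Itô expansion together with its martingale and integrability justification, the trace inequality, the identification $\E[\tr\Sigma^{\shgd}_t]=v_t$, and the scalar Grönwall argument — are routine.
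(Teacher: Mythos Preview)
Your proposal is correct and follows essentially the same route as the paper: Itô's formula to get $\E[\dot H_t] = -\E[\norm{\nabla H_t}^2] + \tfrac{\eta}{2}\tr(\E[\Sigma^{\shgd}_t\nabla^2 H_t])$, the identity $\nabla H=\nabla^2 f\,\nabla f$ (valid because $\gamma^1\ne\gamma^2$ makes $H=\tfrac12\norm{F}^2$) to lower bound the drift by $2\mu^2 H$, the trace inequality together with $\E[\tr\Sigma^{\shgd}_t]=v_t$ to upper bound the diffusion, and then a Grönwall/integrating factor argument. Your version is, if anything, slightly more explicit than the paper's in justifying the computation $\nabla F^\top F=\nabla^2 f\,\nabla f$ via the sign matrix $E$ and in flagging that the eigenvalue bound must hold uniformly in $z$.
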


\begin{proof}
    We derive the SDE of $H_t$ via Itô's Lemma and take its expectation to obtain the ODE of $\E \left[ H_t \right]$. Then, we use the assumptions to derive a bound on it.
\end{proof}

\begin{corollary} \label{cor:SHGD_Conv_Insights}
Under the assumptions of Theorem \ref{theorem:SHGD_GenConv_Insights}, if for $\mathcal{L}_{\mathcal{V}}>0$
\begin{equation}\label{eq:Scaling_Vt}
    v_t \leq \mathcal{L}_{\mathcal{V}} \E \left[ H_t \right],
\end{equation}
the solution is more explicit:
\begin{equation}
    \E \left[H_t \right] \leq  H_0  e^{\left(-2 \mu^2 + \eta \mathcal{L}_{\mathcal{V}} \mathcal{L}_{\mathcal{T}} \right)t}.
\end{equation}
If instead
\begin{equation}\label{eq:Bounded_Vt}
    v_t \leq \mathcal{L}_{\mathcal{V}},
\end{equation}
we have
\begin{equation}
    \E \left[ H_t \right] \leq H_0 e^{-2 \mu^2 t} + \left( 1 - e^{-2 \mu^2 t}  \right) \frac{\eta \mathcal{L}_{\mathcal{V}} \mathcal{L}_{\mathcal{T}}}{2 \mu^2}.
\end{equation}
\end{corollary}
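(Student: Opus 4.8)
The plan is to take the bound established in Theorem~\ref{theorem:SHGD_GenConv_Insights},
$$\E[H_t] \le e^{-2\mu^2 t}\Big[H_0 + \tfrac{\eta\mathcal{L}_{\mathcal{T}}}{2}\int_0^t v_s\, e^{2\mu^2 s}\,ds\Big],$$
as the common starting point, and in each of the two regimes simply bound the integral term using the stated hypothesis on $v_s$. Both cases reduce to elementary manipulations, so I expect no real obstacle; the only points needing care are the Grönwall step in the first case and the legitimacy of having $\mu^2$ in a denominator in the second (which is exactly why the first assumption of Theorem~\ref{theorem:SHGD_GenConv_Insights} requires $\mu\neq 0$).

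For the self-bounding hypothesis \eqref{eq:Scaling_Vt}, I would substitute $v_s \le \mathcal{L}_{\mathcal{V}}\E[H_s]$, obtaining $\E[H_t]\le e^{-2\mu^2 t}H_0 + c\, e^{-2\mu^2 t}\int_0^t \E[H_s]e^{2\mu^2 s}\,ds$ with $c:=\tfrac{\eta\mathcal{L}_{\mathcal{V}}\mathcal{L}_{\mathcal{T}}}{2}$ (up to the $\tfrac12$ convention absorbed into $\mathcal{L}_{\mathcal{T}}$ in the appendix). Then I would clear the exponential by setting $\psi(t):=e^{2\mu^2 t}\E[H_t]$, which satisfies the linear integral inequality $\psi(t)\le H_0 + c\int_0^t\psi(s)\,ds$; Grönwall's lemma gives $\psi(t)\le H_0 e^{ct}$, i.e. $\E[H_t]\le H_0 e^{(-2\mu^2+c)t}$, which is the claim. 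I would preface this with a one-line remark that $t\mapsto\E[H_t]$ is finite and locally bounded on $[0,T]$ under Assumption~\ref{ass:regularity_f_Insights} (polynomial growth together with the uniform moment bounds accompanying the weak-approximation construction), and that $v_s\ge 0$, so the integral inequality and the hypotheses of Grönwall are indeed met.

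For the uniform bound \eqref{eq:Bounded_Vt}, the argument is purely computational: plug $v_s\le\mathcal{L}_{\mathcal{V}}$ into the integral, evaluate $\int_0^t e^{2\mu^2 s}\,ds = (e^{2\mu^2 t}-1)/(2\mu^2)$, multiply through by $e^{-2\mu^2 t}$, and collect terms to reach $\E[H_t]\le H_0 e^{-2\mu^2 t} + (1-e^{-2\mu^2 t})\tfrac{\eta\mathcal{L}_{\mathcal{V}}\mathcal{L}_{\mathcal{T}}}{2\mu^2}$. Equivalently, one could bypass Grönwall and the closed-form integration altogether by first differentiating the bound of Theorem~\ref{theorem:SHGD_GenConv_Insights} to recover the differential inequality $\tfrac{d}{dt}\E[H_t]\le -2\mu^2\E[H_t] + \tfrac{\eta\mathcal{L}_{\mathcal{T}}}{2}v_t$ and then applying the integrating factor $e^{2\mu^2 t}$ directly; both routes produce the same two estimates, and neither presents any genuine difficulty beyond bookkeeping of constants.
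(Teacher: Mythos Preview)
Your proposal is correct and essentially matches the paper's own argument: the appendix proof of Corollary~\ref{cor:SHGD_Conv} goes back to the differential inequality $d\E[H_t]\le -2\mu^2\E[H_t]\,dt+\tfrac{\eta}{2}\mathcal{L}_{\mathcal{T}}v_t\,dt$ established inside the proof of Theorem~\ref{theorem:SHGD_GenConv_Insights} and solves each case directly, which is exactly the alternative route you describe at the end, equivalent to your Gr\"onwall step on $\psi(t)=e^{2\mu^2 t}\E[H_t]$.
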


\textbf{Discussion about Assumptions} Note that:
\mycoloredbox{}{\begin{enumerate}
    \item \citep{loizou2020stochastic} which first proposed SHGD assumed independent mini-batches;
    \item Lipschitzianity on $\nabla \mathcal{H}_{\gamma^1,\gamma^2}$ and Error Bound on $F$ imply Eq. \eqref{eq:Scaling_Vt};
    \item Bounded variance on $\nabla \mathcal{H}_{\gamma^1,\gamma^2} $ implies Eq. \eqref{eq:Bounded_Vt}.
\end{enumerate}}

\paragraph{Concrete Examples}
We analyze Bilinear Games for which we can provide explicit formulas for the results presented above. We focus on:  $f(x,y) = x^{\top} \E_{\xi} \left[ \mathbf{\Lambda}_{\xi} \right] y$  and $f(x,y) = x^{\top} \mathbf{\Lambda} y$ where $\mathbf{\Lambda}$ and $\mathbf{\Lambda}_{\xi}$ are square, diagonal, and positive semidefinite matrices.

\begin{proposition}\label{prop:SHGD_Convergence_PIBG_NoM_Insights}
For $f(x,y) = x^{\top} \E_{\xi} \left[ \mathbf{\Lambda}_{\xi} \right] y$, Eq. \eqref{eq:Scaling_Vt} holds and we have
\begin{equation}
    \frac{\E \left[\lVert Z_t \rVert^2 \right]}{2} = \sum_{i=1}^{d} \frac{\lVert Z^i_0 \rVert^2}{2} e^{- \left(2 \lambda^2_i - \eta \sigma_i^2 \left(2 \lambda_i^2 + \sigma_i^2\right) \right) t}.
\end{equation}
In particular, $\frac{\E \left[\lVert Z_t \rVert^2 \right]}{2} \overset{t \rightarrow \infty}{=} 0 $ if $\eta < \frac{2 \lambda^2_i}{\sigma_i^2 \left(2 \lambda_1^2 + \sigma_i^2\right)}, \quad \forall i$.
\end{proposition}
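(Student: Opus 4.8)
The plan is to exploit the fact that, because $\mathbf{\Lambda}$ and all the (mini-batched) matrices $\mathbf{\Lambda}_\gamma$ are diagonal and positive semidefinite, the whole SHGD SDE decouples into $d$ independent two-dimensional systems in the pairs $(x_i,y_i)$, each of which has a linear contracting drift and a diffusion that is linear in $(x_i,y_i)$ with an explicitly computable coefficient. Write $F_\gamma(z)=A_\gamma z$ with $A_\gamma=\begin{pmatrix}0&\mathbf{\Lambda}_\gamma\\-\mathbf{\Lambda}_\gamma&0\end{pmatrix}$ and $A:=\E[A_\gamma]=\begin{pmatrix}0&\mathbf{\Lambda}\\-\mathbf{\Lambda}&0\end{pmatrix}$, and denote by $\lambda_i$, $\lambda_i^\gamma$ the $i$-th diagonal entries of $\mathbf{\Lambda}$, $\mathbf{\Lambda}_\gamma$. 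Under the $\gamma^1\neq\gamma^2$ (independent mini-batches) setting of Theorem~\ref{theorem:SHGD_GenConv_Insights}, $\E_\bgamma[\mathcal H_\bgamma(z)]=\tfrac12\E[F_{\gamma^1}(z)]^\top\E[F_{\gamma^2}(z)]=\tfrac12 z^\top A^\top A z$, so $F^{\shgd}(z)=A^\top A z=\diag(\mathbf{\Lambda}^2,\mathbf{\Lambda}^2)z$, i.e. the drift of block $i$ is $-\lambda_i^2(x_i,y_i)$. Since $A_{\gamma^1}^\top A_{\gamma^2}=\diag(\mathbf{\Lambda}_{\gamma^1}\mathbf{\Lambda}_{\gamma^2},\mathbf{\Lambda}_{\gamma^1}\mathbf{\Lambda}_{\gamma^2})$ is diagonal (hence symmetric), $\nabla\mathcal H_\bgamma(z)=A_{\gamma^1}^\top A_{\gamma^2}z$, so the $i$-th components of $\hat\xi_\bgamma(z)=F^{\shgd}(z)-\nabla\mathcal H_\bgamma(z)$ are $(\lambda_i^2-\lambda_i^{\gamma^1}\lambda_i^{\gamma^2})x_i$ and $(\lambda_i^2-\lambda_i^{\gamma^1}\lambda_i^{\gamma^2})y_i$. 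Consequently the two diagonal entries of $\Sigma^{\shgd}(z)$ attached to coordinate $i$ equal $\phi_i x_i^2$ and $\phi_i y_i^2$ with $\phi_i:=\E_\bgamma[(\lambda_i^2-\lambda_i^{\gamma^1}\lambda_i^{\gamma^2})^2]$; expanding this expectation and using independence of $\gamma^1,\gamma^2$ together with $\E[\lambda_i^\gamma]=\lambda_i$ gives $\phi_i=\sigma_i^2(2\lambda_i^2+\sigma_i^2)$, where $\sigma_i^2$ is the variance of $\lambda_i^\gamma$.

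Next I would close the second-moment equation coordinate-wise. Applying Itô's lemma to $g(z)=x_i^2+y_i^2=\|Z^i\|^2$ along the SHGD process $Z_t$, the drift term contributes $-2(Z_t^i)^\top F^{\shgd,i}(Z_t)=-2\lambda_i^2\|Z_t^i\|^2$, while the second-order term contributes $\eta$ times the sum of the two diagonal entries of $\Sigma^{\shgd}(Z_t)$ attached to coordinate $i$, i.e. $\eta\phi_i\|Z_t^i\|^2$; the stochastic integral is a genuine martingale with zero expectation by the polynomial-moment control of Assumption~\ref{ass:regularity_f_Insights} together with the standard a priori linear-growth bounds for the SDE. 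Taking expectations yields the closed linear ODE
\[
\tfrac{d}{dt}\,\E[\|Z_t^i\|^2]=\bigl(-2\lambda_i^2+\eta\,\sigma_i^2(2\lambda_i^2+\sigma_i^2)\bigr)\,\E[\|Z_t^i\|^2],
\]
whose solution is $\E[\|Z_t^i\|^2]=\|Z_0^i\|^2\,e^{-(2\lambda_i^2-\eta\sigma_i^2(2\lambda_i^2+\sigma_i^2))t}$. Summing over $i$ and dividing by $2$ gives the stated formula, and each exponent is negative — hence $\E[\|Z_t\|^2]\to0$ — exactly when $\eta<2\lambda_i^2/(\sigma_i^2(2\lambda_i^2+\sigma_i^2))$ for every $i$.

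Finally, to establish that Eq.~\eqref{eq:Scaling_Vt} holds, note that $v_t=\E[\tr\Sigma^{\shgd}(Z_t)]=\sum_i\phi_i\,\E[\|Z_t^i\|^2]=\sum_i\sigma_i^2(2\lambda_i^2+\sigma_i^2)\,\E[\|Z_t^i\|^2]$, whereas $H_t=\tfrac12\|F(Z_t)\|^2=\tfrac12\sum_i\lambda_i^2\|Z_t^i\|^2$. Hence $v_t\le\mathcal L_{\mathcal V}\,\E[H_t]$ with $\mathcal L_{\mathcal V}=\max_i 2\sigma_i^2(2\lambda_i^2+\sigma_i^2)/\lambda_i^2$, where coordinates with $\lambda_i=0$ can be discarded since positive semidefiniteness of $\mathbf{\Lambda}_\xi$ forces $\lambda_i^\gamma\equiv0$ there and they contribute nothing to either side. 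This is precisely Eq.~\eqref{eq:Scaling_Vt}, and one could alternatively recover a (weaker) exponential upper bound by feeding it into Corollary~\ref{cor:SHGD_Conv_Insights}; here the direct computation gives the sharp equality.

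I do not expect a genuine obstacle: the content is exact bookkeeping of the SHGD noise covariance on a linear field. The two points that require care are (i) that $\Sigma^{\shgd}(z)$ is \emph{not} block-diagonal across the coordinates $i$ — shared mini-batch randomness in $\bgamma$ correlates different blocks — so one must observe that Itô's lemma applied to $\|Z^i\|^2$ only reads off the two diagonal entries $\Sigma^{\shgd}_{x_ix_i},\Sigma^{\shgd}_{y_iy_i}$, which do decouple; and (ii) that one must separate the $\gamma^1$- and $\gamma^2$-dependence inside $\mathcal H_\bgamma$ \emph{before} taking expectations, which is what produces the factor $2\lambda_i^2+\sigma_i^2$ rather than, say, $\lambda_i^2+\sigma_i^2$.
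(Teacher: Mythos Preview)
Your proposal is correct and follows essentially the same approach as the paper's proof (Corollary~\ref{thm:SHGD_Convergence_PIBG_NoM}): write the SHGD SDE in the linear form $dZ_t=\mathbf{A}Z_t\,dt+\sqrt{\eta}\,\mathbf{B}\,dW_t$ with $\mathbf{A}=\diag(-\mathbf{\Lambda}^2,-\mathbf{\Lambda}^2)$, decouple into the coordinate pairs $Z^i=(X^i,Y^i)$, apply It\^o's lemma to $\|Z^i\|^2$, take expectations to obtain the linear ODE $\tfrac{d}{dt}\E[\|Z^i_t\|^2]=\bigl(-2\lambda_i^2+\eta\sigma_i^2(2\lambda_i^2+\sigma_i^2)\bigr)\E[\|Z^i_t\|^2]$, and solve. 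Your write-up is in fact more explicit than the paper's in two places: you derive the identity $\phi_i=\E_\bgamma[(\lambda_i^2-\lambda_i^{\gamma^1}\lambda_i^{\gamma^2})^2]=\sigma_i^2(2\lambda_i^2+\sigma_i^2)$ rather than simply asserting the form of $\mathbf{B}\mathbf{B}^\top$, and you explicitly verify the scaling condition~\eqref{eq:Scaling_Vt}, which is part of the proposition's statement but is not spelled out in the appendix proof.
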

In this case, the noise structure is such that $v_t$ scales like $\E \left[ H_t \right]$. Thus, $\E \left[ H_t\right]$ exponentially decays to $0$.

\begin{proposition}\label{prop:SHGD_Convergence_PIBG_NoG_Insights_NoSched}
For $f(x,y) = x^{\top}\mathbf{\Lambda} y$ and covariance noise $\Sigma:= \diag(\sigma_1, \cdots, \sigma_d)$,  Eq. \ref{eq:Bounded_Vt} holds and 
\begin{equation}
     \frac{\E \left[\lVert Z_t \rVert^2 \right]}{2} = \sum_{i=1}^{d}  \frac{\lVert Z^i_0 \rVert^2}{2} e^{-2 \lambda^2_i t} + \frac{\eta \sigma_i^2}{2}  \left( 1- e^{-2 \lambda^2_i t} \right),
\end{equation}
which implies that $\frac{\E \left[\lVert Z_t \rVert^2 \right]}{2} \overset{t \rightarrow \infty}{=} \frac{\eta}{2} \sum_{i=1}^{d} \sigma_i^2 >0$.
\end{proposition}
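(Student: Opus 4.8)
The plan is to specialize the SHGD SDE to the bilinear landscape $f(x,y)=x^{\top}\mathbf{\Lambda}y$, where it degenerates into a \emph{linear} (Ornstein--Uhlenbeck-type) diffusion whose second moment can be computed in closed form. Writing $z=(x,y)$ and $\mathbf{\Lambda}=\diag(\lambda_1,\dots,\lambda_d)$, one has $F(z)=(\mathbf{\Lambda}y,-\mathbf{\Lambda}x)$, hence $\mathcal{H}(z)=\tfrac12\|F(z)\|^2=\tfrac12\sum_{i=1}^d\lambda_i^2\big((x)_i^2+(y)_i^2\big)$, so $\nabla\mathcal{H}(z)$ acts coordinate-wise as $(x)_i\mapsto\lambda_i^2(x)_i$, $(y)_i\mapsto\lambda_i^2(y)_i$, and $\nabla^2 f$ is the constant matrix with off-diagonal blocks equal to $\mathbf{\Lambda}$. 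Plugging these into Theorem~\ref{thm:SHGD_SDE_Insights} / Corollary~\ref{thm:SHGD_SDE_Simplified_Insights} and using that the noise covariance $\Sigma=\diag(\sigma_1,\dots,\sigma_d)$ is diagonal, both the drift $\nabla\mathcal{H}$ and the diffusion coefficient become \emph{constant} and block-diagonal, coupling only each pair $Z_t^i:=((X_t)_i,(Y_t)_i)\in\R^2$. Thus the SDE splits into $d$ independent two-dimensional linear SDEs, one per index $i$.

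Before the main computation I would record that Eq.~\eqref{eq:Bounded_Vt} holds, so that the proposition sits under the umbrella of Theorem~\ref{theorem:SHGD_GenConv_Insights} and Corollary~\ref{cor:SHGD_Conv_Insights}. Since the per-sample noise is additive and independent of $z$, each $\nabla F_{\gamma^j}$ equals the constant matrix $\nabla F$, so $\nabla\mathcal{H}_{\bgamma}(z)-\nabla\mathcal{H}(z)=\tfrac12\nabla F^{\top}\big(\xi_{\gamma^1}(z)+\xi_{\gamma^2}(z)\big)$ does not depend on $z$; hence $v_t=\E_{\bgamma}\big\|\tfrac12\nabla F^{\top}(\xi_{\gamma^1}+\xi_{\gamma^2})\big\|^2$ is a finite constant and \eqref{eq:Bounded_Vt} holds with $\mathcal{L}_{\mathcal{V}}$ equal to it. This step is only for context: the exact formula will come from solving the relevant ODE directly rather than from the inequality of Corollary~\ref{cor:SHGD_Conv_Insights}.

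The core step is to apply It\^o's formula to $g_i(z):=(x)_i^2+(y)_i^2$ along the $i$-th subsystem. The stochastic integrals are genuine martingales (linear SDE, Gaussian moments of every order), so taking expectations removes them and leaves the scalar linear ODE $\tfrac{d}{dt}\E[g_i(Z_t)]=-2\lambda_i^2\,\E[g_i(Z_t)]+\eta\kappa_i$, where $\kappa_i$ is the trace of the $i$-th block of $(\text{diffusion})(\text{diffusion})^{\top}/\eta=\nabla^2 f\,\Sigma\,(\nabla^2 f)^{\top}$; a one-line computation with the off-diagonal-$\mathbf{\Lambda}$ structure gives $\kappa_i=2\lambda_i^2\sigma_i^2$. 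Integrating this first-order linear ODE from the initial value $\|Z_0^i\|^2$ yields $\E[g_i(Z_t)]=\|Z_0^i\|^2 e^{-2\lambda_i^2 t}+\eta\sigma_i^2\big(1-e^{-2\lambda_i^2 t}\big)$; summing over $i$ and dividing by two gives the stated formula, and letting $t\to\infty$ (each $\lambda_i>0$) gives $\E[\|Z_t\|^2]/2\to\tfrac\eta2\sum_i\sigma_i^2>0$.

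The only genuinely delicate point is the bookkeeping of the diffusion covariance: one must track precisely how $\nabla^2 f$ and $\Sigma$ combine and which power of $\eta$ (and, where relevant, the $\tfrac12$ distinguishing the $\gamma^1=\gamma^2$ and $\gamma^1\neq\gamma^2$ cases of Corollary~\ref{thm:SHGD_SDE_Simplified_Insights}) enters, so that $\kappa_i/(2\lambda_i^2)$ simplifies to exactly $\sigma_i^2$. Everything else is a routine It\^o / linear-ODE calculation, made painless by the coordinate-wise decoupling.
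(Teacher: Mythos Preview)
Your proposal is correct and follows essentially the same approach as the paper: decouple the SHGD SDE for $f(x,y)=x^{\top}\mathbf{\Lambda}y$ into $d$ independent two-dimensional linear subsystems, apply It\^o's formula to $\lVert Z_t^i\rVert^2$, take expectations to obtain the scalar linear ODE $d\big(\E[\lVert Z_t^i\rVert^2]/2\big)=-2\lambda_i^2\,\E[\lVert Z_t^i\rVert^2]/2\,dt+\eta\sigma_i^2\lambda_i^2\,dt$, and solve it. Your additional remark verifying Eq.~\eqref{eq:Bounded_Vt} (that $v_t$ is a constant because $\nabla F_{\gamma}=\nabla F$ under additive noise) is slightly more explicit than what the paper spells out, but otherwise the arguments coincide.
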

In this case, $v_t$ is bounded, meaning that $\E \left[ H_t\right]$ reaches an asymptotic suboptimality exponentially fast.

Now we provide sufficient and necessary conditions to craft stepsize schedulers that recover convergence.
\begin{proposition}\label{prop:SHGD_Convergence_PIBG_NoG_Insights_Sched}
Under the assumptions of Prop. \ref{prop:SHGD_Convergence_PIBG_NoG_Insights_NoSched}, for any stepsize scheduler $\eta_t$, $\frac{\E \left[\lVert Z_t \rVert^2 \right]}{2}$ is equal to
\begin{equation*}
     \sum_{i=1}^{d} e^{-2 \lambda^2_i \int_0^t \eta_s ds} \left(\frac{\lVert Z^i_0 \rVert^2}{2} + \eta \sigma_i^2 \lambda_i^2 \int_0^t e^{2 \lambda^2_i \int_0^s\eta_r dr} \eta^2_s ds\right).
\end{equation*}
Therefore,
\begin{equation}
    \frac{\E \left[\lVert Z_t \rVert^2 \right]}{2} \overset{t \rightarrow \infty}{\rightarrow}  0 \iff  \int_0^\infty \eta_s ds = \infty \text{ and } \lim_{t \rightarrow \infty} \eta_t = 0.
\end{equation}
Among other possible choices of $\eta_t$,
\begin{equation*}
    \eta_t = \frac{1}{(t+1)^{\gamma}} \implies \frac{\E \left[\lVert Z_t \rVert^2 \right]}{2} \rightarrow 0,  \text{ for }\gamma \in \{0.5, 1\}.
\end{equation*}
\end{proposition}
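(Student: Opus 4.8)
The plan is to exploit that on a bilinear game with diagonal $\mathbf{\Lambda}$ the SHGD SDE collapses into $d$ independent, exactly solvable two–dimensional linear systems, so the whole statement reduces to a first-order linear ODE with the time-varying coefficient $\eta_t$.

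\textbf{Decoupling.} For $f(x,y)=x^\top\mathbf{\Lambda}y$ one has $\mathcal{H}(z)=\tfrac12(\|\mathbf{\Lambda}^\top x\|^2+\|\mathbf{\Lambda}y\|^2)$, so $\nabla\mathcal{H}(z)=\mathbf{\Lambda}^2 z$ acts on each pair $Z^i_t:=(x_i,y_i)_t$ simply as $\lambda_i^2 Z^i_t$, and $\nabla^2 f$ is a constant matrix coupling $x_i$ only with $y_i$. Substituting this into the SHGD SDE of Corollary~\ref{thm:SHGD_SDE_Simplified_Insights} with the stepsize $\eta_t$ entering multiplicatively in both drift and diffusion (so that, consistently with the target formula, the drift is $\propto\eta_t$ and the diffusion is $\propto\eta_t$), the $2d$-dimensional dynamics splits into $d$ uncoupled linear SDEs $dZ^i_t=-\eta_t\lambda_i^2 Z^i_t\,dt+\eta_t\,\mathbf{\Sigma}_i^{1/2}\,dW^i_t$ with a constant, state-independent diffusion matrix $\mathbf{\Sigma}_i$ read off the $i$-th block of $\nabla^2 f\,\Sigma\,(\nabla^2 f)^\top$; each such block further reduces to a pair of independent scalar Ornstein–Uhlenbeck processes, and $\tr\mathbf{\Sigma}_i$ is an explicit positive multiple of $\lambda_i^2\sigma_i^2$.

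\textbf{Second-moment ODE and closed form.} Applying Itô's lemma to $\|Z^i_t\|^2$ and taking expectations — the martingale part vanishes thanks to Assumption~\ref{ass:regularity_f_Insights} and well-posedness of the time-inhomogeneous linear SDE — the state-independence of $\mathbf{\Sigma}_i$ kills all cross terms and yields the scalar linear ODE $u_i'(t)=-2\lambda_i^2\eta_t\,u_i(t)+c_i\,\eta_t^2$ for $u_i(t):=\E\|Z^i_t\|^2$, with $c_i=\tr\mathbf{\Sigma}_i$. Multiplying by the integrating factor $\exp\!\big(2\lambda_i^2\int_0^t\eta_s\,ds\big)$ and integrating gives $u_i(t)=e^{-2\lambda_i^2\int_0^t\eta_s\,ds}\big(u_i(0)+c_i\int_0^t e^{2\lambda_i^2\int_0^s\eta_r\,dr}\eta_s^2\,ds\big)$; summing $\tfrac12 u_i$ over $i$ and identifying $c_i$ reproduces the claimed expression for $\E\|Z_t\|^2/2$, with Proposition~\ref{prop:SHGD_Convergence_PIBG_NoG_Insights_NoSched} being the case $\eta_t\equiv\eta$.

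\textbf{The equivalence and the example.} Put $A_t:=\int_0^t\eta_s\,ds$, nondecreasing. Each summand is the initial-condition term $e^{-2\lambda_i^2 A_t}\|Z_0^i\|^2/2$, which $\to0$ iff $A_t\to\infty$, i.e. iff $\int_0^\infty\eta_s\,ds=\infty$, plus the noise term $V_i(t):=e^{-2\lambda_i^2 A_t}\int_0^t e^{2\lambda_i^2 A_s}\eta_s^2\,ds$. Sufficiency: assuming $\int_0^\infty\eta_s\,ds=\infty$ and $\eta_t\to0$, fix $\varepsilon>0$, pick $T_\varepsilon$ with $\eta_s<\varepsilon$ for $s\ge T_\varepsilon$, and use $e^{2\lambda_i^2 A_s}\eta_s\,ds=\tfrac1{2\lambda_i^2}\,d\big(e^{2\lambda_i^2 A_s}\big)$ to get $\int_{T_\varepsilon}^t e^{2\lambda_i^2 A_s}\eta_s^2\,ds\le\varepsilon\int_{T_\varepsilon}^t e^{2\lambda_i^2 A_s}\eta_s\,ds\le\tfrac{\varepsilon}{2\lambda_i^2}e^{2\lambda_i^2 A_t}$; the $[0,T_\varepsilon]$-piece is annihilated by $e^{-2\lambda_i^2 A_t}\to0$, so $\limsup_t V_i(t)\le\varepsilon/(2\lambda_i^2)$, hence $V_i(t)\to0$ as $\varepsilon$ is arbitrary. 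Necessity: if $\int_0^\infty\eta_s\,ds<\infty$ the initial-condition term has a positive limit; if $\int_0^\infty\eta_s\,ds=\infty$ but $\eta_t\not\to0$, then from $V_i'=-2\lambda_i^2\eta_t V_i+\eta_t^2$ one sees that $V_i(t)<\eta_t/(4\lambda_i^2)$ forces $V_i'(t)>\eta_t^2/2$, which together with $\limsup_t\eta_t>0$ keeps $V_i$ bounded away from $0$ (trivial for the monotone schedulers of interest, where $\eta_t\not\to0$ means $\eta_t$ is bounded below and $V_i(t)$ hovers near $\asymp\eta_t/(2\lambda_i^2)$). Finally, $\eta_t=(t+1)^{-\gamma}$ gives $A_t=\ln(t+1)$ for $\gamma=1$ and $A_t=2(\sqrt{t+1}-1)$ for $\gamma=\tfrac12$, both divergent, with $\eta_t\to0$; so the two conditions hold and $\E\|Z_t\|^2/2\to0$.

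\textbf{Main obstacle.} The only genuinely delicate point is the necessity direction when $\eta_t\not\to0$: converting ``$V_i$ cannot relax to $0$'' into a rigorous statement without a monotonicity/regularity hypothesis on the schedule needs care, whereas the decoupling, the linear second-moment ODE, the integrating factor, and the $\varepsilon$-splitting are routine. A minor bookkeeping step is the explicit evaluation of $\tr\mathbf{\Sigma}_i$ pinning down $c_i$, which is where the $\lambda_i^2\sigma_i^2$ dependence enters.
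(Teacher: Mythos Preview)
Your proof is correct and follows the same architecture as the paper's: decouple into $d$ independent two-dimensional blocks, apply It\^o to $\lVert Z^i_t\rVert^2$, take expectations to get the scalar linear ODE, and solve by the integrating factor. The closed form you obtain matches.

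The one place your route diverges from the paper's is in the asymptotic equivalence. For the noise term the paper writes the quotient $\big(\int_0^t e^{2\lambda_i^2 A_s}\eta_s^2\,ds\big)\big/ e^{2\lambda_i^2 A_t}$, observes that both numerator and denominator diverge when $\int_0^\infty\eta_s\,ds=\infty$, and applies L'H\^opital's rule to reduce the limit to $\tfrac{\eta\sigma_i^2}{2}\lim_{t\to\infty}\eta_t$. Your $\varepsilon$-splitting argument for sufficiency is more self-contained and avoids the hypothesis (implicit in L'H\^opital) that the limit of the derivative ratio exists; conversely, the L'H\^opital computation is shorter and immediately shows that when $\eta_t$ does converge, the noise term converges to $\tfrac{\eta\sigma_i^2}{2}\lim_t\eta_t$, which also furnishes the necessity direction in one line for that class of schedulers. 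You correctly flag that the necessity direction for non-monotone $\eta_t$ with $\eta_t\not\to0$ is the delicate point; the paper's treatment of this direction is equally informal (it asserts rather than proves it). For the examples $\gamma\in\{0.5,1\}$ the paper additionally works out the closed forms explicitly, whereas you simply verify the two conditions --- both are fine.
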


\vspace{-0.2cm}

\begin{figure*}%
    \centering
    \subfloat{{\includegraphics[width=0.47\linewidth]{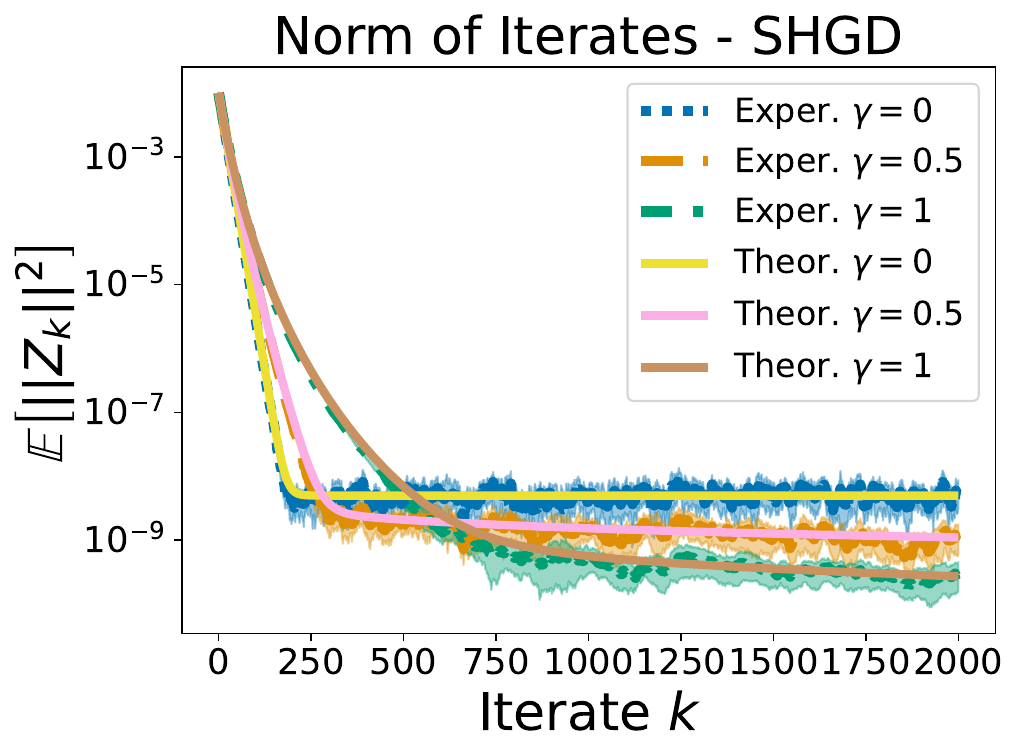} }}%
    \subfloat{{\includegraphics[width=0.47\linewidth]{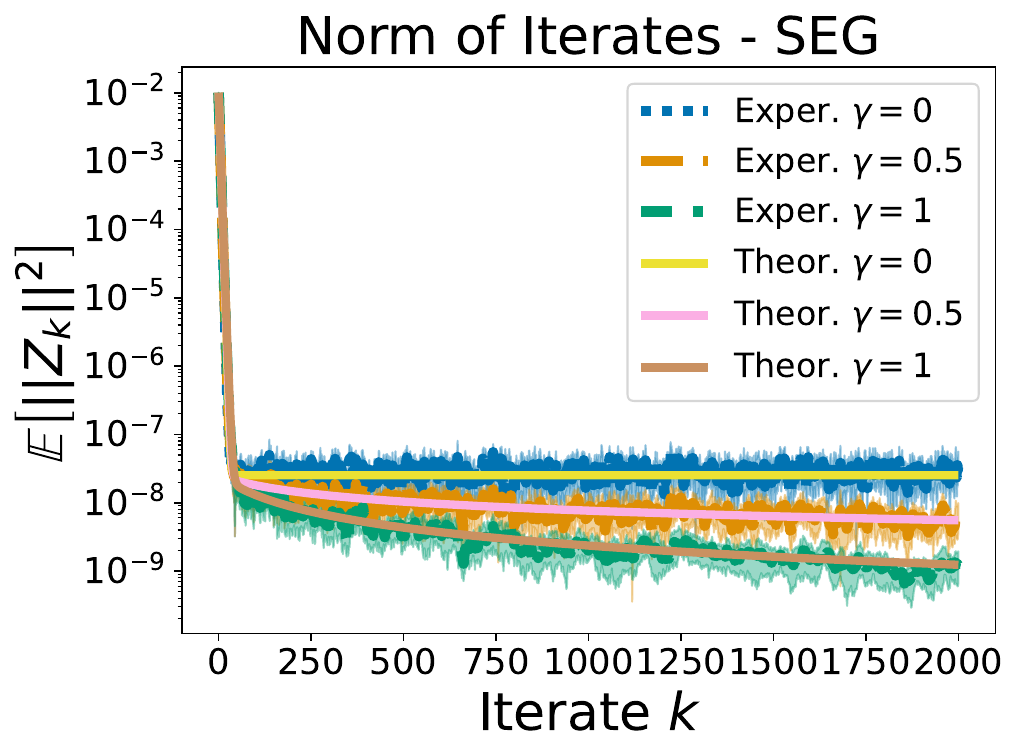} }}
    \caption{Empirical validation of Prop. \ref{prop:SHGD_Convergence_PIBG_NoG_Insights_NoSched} and Prop. \ref{prop:SHGD_Convergence_PIBG_NoG_Insights_Sched} (Left); Prop. \ref{prop:SEG_Convergence_PIBG_NoG_Insights_NoSched} and Prop. \ref{prop:SEG_Convergence_PIBG_NoG_Insights_Sched} (Right): The dynamics of $\E \left[\lVert Z_t \rVert^2 \right]$ averaged across $5$ runs perfectly matches that prescribed by our results for all schedulers. Both for SEG and SHGD, $\eta=0.01$, while $\rho=1$.}%
    \label{fig:Schedulers}%
\end{figure*}
\vspace{-0.1cm}
\subsection{SEG}
Let $Z_t$ be the solution of the SEG SDE, $\Sigma^{\seg}_t= \Sigma^{\seg}(Z_t)$, $H_t = \E_\bgamma[\mathcal{H}_{\bgamma}(Z_t)]$, and $F^{\seg}_t=F^{\seg}(Z_t)$. Then,
$$\E\left[\dot H_t\right] = - \E\left[\nabla H_t^{\top} F^{\seg}_t\right] + \frac{\eta}{2} \tr\left(\E\left[\Sigma^{\seg}_t\nabla^2 H_t\right]\right).$$
Once again, we have to study how the pulling and repulsive forces balance each other in order to dissect the convergence behavior of SEG.

\begin{theorem}[SEG General Convergence] \label{theorem:SEG_GenConv_Insights}Consider the solution $Z_t$ of the SEG SDE with $\gamma^1\ne\gamma^2$.
Let $v_t := \E \left[\E_{\bgamma}  \left[\lVert  F^{\seg}\left( Z_t \right) - F^{\seg}_{\bgamma}\left( Z_t \right) \rVert_2^2 \right]\right]$ measure the error in $ F^{\seg}$, in expectation over the whole randomness up to time $t$. Suppose that:
\begin{enumerate}
    \item The smallest eigenvalue (in absolute value) $\mu_{\rho}$ of $\mathbf{M}$ is non-zero, where $\mathbf{M} = \diag(\mathbf{M}_{1,1},\mathbf{M}_{2,2})$, with $\mathbf{M}_{1,1}:=  \nabla^2 f_{xx}+\rho \left( \nabla^2 f_{xy} \nabla^2 f_{xy}^T  - \nabla^2 f_{xx}^2  \right)$, and $\mathbf{M}_{2,2}:= -\nabla^2 f_{yy}+\rho \left( \nabla^2 f_{xy} \nabla^2 f_{xy}^T  - \nabla^2 f_{yy}^2  \right)$;
    \item $\lVert \nabla^2 H(z)\rVert_{\text{op}} < \mathcal{L}_{\mathcal{T}}$, for all $z\in\R^{2d}$.
\end{enumerate}
Then,
\begin{equation}
    \E \left[ H_t \right] \leq  e^{-2\mu_{\rho}^2t} \left[ H_0 + \frac{\eta\mathcal{L}_{\mathcal{T}}}{2} \int_0^t v_s  e^{2\mu_{\rho}^2 s} ds \right].
\end{equation}
\end{theorem}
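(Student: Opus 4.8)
The plan is to treat $H_t=\mathcal H(Z_t)$ as a one-dimensional diffusion and to integrate a linear differential inequality for $\E[H_t]$. Since $\gamma^1\ne\gamma^2$ are i.i.d., $\mathcal H(z):=\E_\bgamma[\mathcal H_\bgamma(z)]=\tfrac12 F(z)^\top F(z)$, so $\nabla H_t=\nabla\mathcal H(Z_t)=\nabla F(Z_t)^\top F(Z_t)$ and $\nabla^2 H_t=\nabla^2\mathcal H(Z_t)$. First I apply Itô's formula to $\mathcal H$ along the SEG SDE \eqref{eq:SEG_SDE_Full_Insights}, obtaining
\[
dH_t=\Big(-\nabla\mathcal H(Z_t)^\top F^{\seg}(Z_t)+\tfrac{\eta}{2}\tr\!\big(\Sigma^{\seg}(Z_t)\nabla^2\mathcal H(Z_t)\big)\Big)dt+\nabla\mathcal H(Z_t)^\top\sqrt{\eta\Sigma^{\seg}(Z_t)}\,dW_t .
\]
Taking expectations --- the stochastic integral being a genuine martingale with zero mean, since its integrand has polynomial growth by Assumption~\ref{ass:regularity_f_Insights} and $Z_t$ has finite moments of all orders --- recovers the evolution equation stated just before Theorem~\ref{theorem:SEG_GenConv_Insights}. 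It then remains to lower-bound the drift term and upper-bound the diffusion term.

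For the drift, substituting $\nabla\mathcal H=\nabla F^\top F$ and $F^{\seg}=(\mathbf{I}_{2d}-\rho\nabla F)F$ gives $\nabla\mathcal H(z)^\top F^{\seg}(z)=F(z)^\top\big(\nabla F(z)-\rho\,\nabla F(z)^2\big)F(z)$. Only the symmetric part of the inner matrix contributes to this quadratic form; a block computation shows that the two off-diagonal blocks of $\nabla F-\rho(\nabla F)^2$ are negatives of each other's transpose (hence vanish upon symmetrization) while the diagonal blocks are exactly $\mathbf M_{1,1}$ and $\mathbf M_{2,2}$. Therefore $\nabla\mathcal H(z)^\top F^{\seg}(z)=F(z)^\top\mathbf M(z)F(z)$, and Assumption~1 on the smallest-magnitude eigenvalue $\mu_\rho$ of $\mathbf M$ yields $\nabla\mathcal H(z)^\top F^{\seg}(z)\ge\mu_\rho^2\|F(z)\|^2=2\mu_\rho^2\,\mathcal H(z)$ --- a gradient-domination inequality for $H_t$.

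For the diffusion, since $\Sigma^{\seg}_t\succeq0$ and $\|\nabla^2 H_t\|_{\text{op}}<\mathcal L_{\mathcal T}$ by Assumption~2, we have $\tr(\Sigma^{\seg}_t\nabla^2 H_t)\le\mathcal L_{\mathcal T}\tr(\Sigma^{\seg}_t)$; identifying $\tr(\Sigma^{\seg}(z))$ with $\E_\bgamma[\|F^{\seg}(z)-F^{\seg}_\bgamma(z)\|^2]$ (the trace of the covariance of the SEG stochastic drift, up to the $\mathcal O(\rho^2)$ terms dropped in $\Sigma^{\seg}$) and taking expectations gives $\tfrac{\eta}{2}\tr(\E[\Sigma^{\seg}_t\nabla^2 H_t])\le\tfrac{\eta\mathcal L_{\mathcal T}}{2}v_t$. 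Combining, $\tfrac{d}{dt}\E[H_t]\le-2\mu_\rho^2\E[H_t]+\tfrac{\eta\mathcal L_{\mathcal T}}{2}v_t$; multiplying by the integrating factor $e^{2\mu_\rho^2 t}$ and integrating over $[0,t]$ gives $\E[H_t]\le e^{-2\mu_\rho^2 t}\big(H_0+\tfrac{\eta\mathcal L_{\mathcal T}}{2}\int_0^t v_s\,e^{2\mu_\rho^2 s}\,ds\big)$, which is the claim.

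I expect the drift step to be the main obstacle: proving the exact cancellation of the off-diagonal blocks after symmetrization, and, above all, upgrading ``$\mu_\rho$ is the smallest eigenvalue in absolute value of $\mathbf M$'' to the quadratic-form bound $F^\top\mathbf M F\ge\mu_\rho^2\|F\|^2$ --- this presupposes positive semidefiniteness of $\mathbf M$ on the region of interest and that the exponential rate comes out as $2\mu_\rho^2$, so the specific structure of $\mathbf M$ (built from $\nabla F$, itself a sign-flip of the Hessian $\nabla^2 f$) must be exploited carefully. The martingale/moment justification in the first step is routine by comparison, relying on the same regularity estimates used to establish the SEG SDE itself.
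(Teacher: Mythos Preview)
Your proposal is correct and follows essentially the same route as the paper: apply It\^o's Lemma to $\mathcal H$ along the SEG SDE, take expectations, rewrite the drift as $F^\top(\nabla F-\rho(\nabla F)^2)F=F^\top\mathbf M F$ (the paper states this identity without the symmetrization argument you supply), bound the quadratic form by $\mu_\rho^2\|F\|^2$, bound the diffusion term via $\tr(AB)\le\|B\|_{\text{op}}\tr(A)$ and the identification $\E[\tr\Sigma^{\seg}_t]=v_t$, and integrate the resulting linear differential inequality. The concern you raise at the end about extracting the rate $\mu_\rho^2$ from the hypothesis ``smallest eigenvalue in absolute value'' is a genuine looseness that the paper's own proof shares rather than resolves.
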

\begin{corollary}\label{cor:SEG_Conv_Insights}
   The very same result as Corollary \ref{cor:SHGD_Conv_Insights} holds where we substitute $\mu$ with $\mu_{\rho}$. 
\end{corollary}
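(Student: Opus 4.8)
The plan is to obtain Corollary~\ref{cor:SEG_Conv_Insights} from Theorem~\ref{theorem:SEG_GenConv_Insights} by replaying, essentially verbatim, the Grönwall argument that proves Corollary~\ref{cor:SHGD_Conv_Insights}. The single structural ingredient is the integral inequality that Theorem~\ref{theorem:SEG_GenConv_Insights} already supplies,
\begin{equation*}
\E[H_t]\ \le\ e^{-2\mu_\rho^2 t}\Bigl[\,H_0 + \tfrac{\eta\mathcal{L}_{\mathcal{T}}}{2}\int_0^t v_s\, e^{2\mu_\rho^2 s}\,ds\,\Bigr],
\end{equation*}
which is formally identical to the SHGD bound of Theorem~\ref{theorem:SHGD_GenConv_Insights} with $\mu$ everywhere replaced by $\mu_\rho$ and with $v_s$ now measuring the error in $F^{\seg}$ rather than in $\nabla\mathcal{H}$. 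Since assumption~1 of Theorem~\ref{theorem:SEG_GenConv_Insights} gives $\mu_\rho\neq 0$, every manipulation carried out after this inequality in the SHGD proof transports unchanged, and it remains only to specialise the hypothesis on $v_s$.

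First I would treat the scaling case $v_t\le\mathcal{L}_{\mathcal{V}}\E[H_t]$: setting $u(t):=e^{2\mu_\rho^2 t}\E[H_t]$ one gets $v_s e^{2\mu_\rho^2 s}\le\mathcal{L}_{\mathcal{V}} u(s)$, so the displayed bound becomes $u(t)\le H_0+\tfrac{\eta\mathcal{L}_{\mathcal{T}}\mathcal{L}_{\mathcal{V}}}{2}\int_0^t u(s)\,ds$, and Grönwall's lemma yields $u(t)\le H_0\,e^{c\,\eta\mathcal{L}_{\mathcal{T}}\mathcal{L}_{\mathcal{V}}t}$ for the same absolute constant $c$ as in Corollary~\ref{cor:SHGD_Conv_Insights}; dividing by $e^{2\mu_\rho^2 t}$ gives $\E[H_t]\le H_0\,e^{(-2\mu_\rho^2+c\,\eta\mathcal{L}_{\mathcal{V}}\mathcal{L}_{\mathcal{T}})t}$. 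Then the bounded case $v_t\le\mathcal{L}_{\mathcal{V}}$: here the integral is elementary, $\int_0^t\mathcal{L}_{\mathcal{V}}e^{2\mu_\rho^2 s}\,ds=\mathcal{L}_{\mathcal{V}}(e^{2\mu_\rho^2 t}-1)/(2\mu_\rho^2)$, and multiplying back by $e^{-2\mu_\rho^2 t}$ gives $\E[H_t]\le H_0 e^{-2\mu_\rho^2 t}+(1-e^{-2\mu_\rho^2 t})\,\tfrac{c'\eta\mathcal{L}_{\mathcal{V}}\mathcal{L}_{\mathcal{T}}}{\mu_\rho^2}$, i.e.\ the asymptotic suboptimality, with $c'$ the constant of Corollary~\ref{cor:SHGD_Conv_Insights}. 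Both steps use only $\mu_\rho^2>0$, so the division by $\mu_\rho^2$ (playing the role of $\mu^2$ in the SHGD statement) is legitimate; this is the whole proof of the corollary, and there is essentially no obstacle in it.

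The real content sits in Theorem~\ref{theorem:SEG_GenConv_Insights}, which I am free to invoke, but it is worth recording its skeleton because it is where $\mu_\rho$ and the matrix $\mathbf{M}$ come from. One would apply Itô's lemma to $t\mapsto H_t=\E_\bgamma[\mathcal{H}_\bgamma(Z_t)]$ along the SEG SDE of Theorem~\ref{thm:SEG_SDE_Insights} and take expectations, obtaining $\E[\dot H_t]=-\E[\nabla H_t^{\top}F^{\seg}_t]+\tfrac{\eta}{2}\tr(\E[\Sigma^{\seg}_t\nabla^2 H_t])$. The diffusion term is bounded by $\tfrac{\eta\mathcal{L}_{\mathcal{T}}}{2}v_t$ using assumption~2 (operator-norm control of $\nabla^2 H$) together with the fact that $\Sigma^{\seg}$ coincides, up to $\bigO(\rho^2)=\bigO(\eta)$, with the covariance of $F^{\seg}_{\bgamma}$ (so that $\tr\Sigma^{\seg}$ matches $v_t$ to the same order). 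For the drift term one writes $\nabla\mathcal{H}=(\nabla F)^{\top}F$ and $F^{\seg}=F-\rho\nabla F F$, so $\nabla\mathcal{H}^{\top}F^{\seg}=F^{\top}\bigl(\nabla F-\rho(\nabla F)^2\bigr)F$; passing to symmetric parts, the block-diagonal piece of $\mathrm{sym}(\nabla F)-\rho\,(\nabla F)^2$ is precisely $\mathbf{M}=\diag(\mathbf{M}_{1,1},\mathbf{M}_{2,2})$, which yields a lower bound of the drift in terms of the smallest eigenvalue $\mu_\rho$ of $\mathbf{M}$ and $H=\tfrac12\norm{F}^2$, and Grönwall on the resulting differential inequality produces the integral bound used above. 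The hard part is this curvature bookkeeping inside Theorem~\ref{theorem:SEG_GenConv_Insights}: cleanly isolating $\mathbf{M}$, controlling the off-diagonal blocks of $\mathrm{sym}(\nabla F)-\rho(\nabla F)^2$ and the $\bigO(\rho^2)$ remainders (admissible because $\rho=\bigO(\sqrt\eta)$), and verifying that the retained contraction term dominates them; once that is done, the corollary is pure Grönwall.
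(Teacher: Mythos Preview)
Your proposal is correct and matches the paper's approach exactly: the paper's own proof is the one-liner ``The proof is the same as Corollary~\ref{cor:SHGD_Conv_Insights} where we substitute $\mu$ with $\mu_\rho$,'' and what you have written is precisely that argument spelled out via the integral Gr\"onwall route (the paper phrases it equivalently through the differential inequality $d\E[H_t]\le -2\mu_\rho^2\E[H_t]\,dt+\tfrac{\eta}{2}\mathcal{L}_{\mathcal{T}}v_t\,dt$). Your extended sketch of how Theorem~\ref{theorem:SEG_GenConv_Insights} is proved is extraneous here, since the corollary only needs to invoke that theorem as a black box, but it is not incorrect; you may also simply write the explicit constants ($c=1$, $c'=\tfrac12$) rather than leaving them symbolic, since Corollary~\ref{cor:SHGD_Conv_Insights} states them explicitly.
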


\textbf{Discussion about Assumptions} Note that:

\mycoloredbox{}{
\begin{enumerate}
    \item Independent mini-batches are used in Indep.-Sample SEG \citep{du2022optimal,gorbunov2022stochastic}, and are not a necessary condition;
    \item $\kappa_1$-Lipschitzianity on $F_{\gamma^1}$, $\kappa_2$-Lipschitzianity on $\nabla F_{\gamma^1} F_{\gamma^2}$, and $\beta$-Error Bound on $F$, imply $v_t \leq \beta^2 (\kappa_1^2 + \rho^2 \kappa_2^2) \E \left[ H_t \right]$;
    \item $\sigma_1$-Bounded variance on $F_{\gamma^1}$ and $\sigma_2$-Bounded variance on $\nabla F_{\gamma^1} F_{\gamma^2}$, imply $v_t \leq \sigma_1^2 + \rho^2 \sigma_2^2$.
\end{enumerate}
}
Interestingly, we notice that increasing $\rho$ might be detrimental as it could possibly lead to divergence.
\paragraph{Concrete Examples}
\begin{proposition} \label{prop:SEG_Convergence_PIBG_NoM_Insights}
For $f(x,y) = x^{\top} \E_{\xi} \left[ \mathbf{\Lambda}_{\xi} \right] y$, we have
\begin{equation}\label{eq:SEG_StochPIBG_ExpDecay}
    \frac{\E \left[\lVert Z_t \rVert^2 \right]}{2} = \sum_{i=1}^{d} \frac{\lVert Z^i_0 \rVert^2}{2} e^{- \left(2 \rho \lambda^2_i - \eta \sigma_i^2 \left( 1 + \rho^2\left(2 \lambda_i^2 + \sigma_i^2\right)\right) \right) t}.
\end{equation}

In particular, $\frac{\E \left[\lVert Z_t \rVert^2 \right]}{2} \overset{t \rightarrow \infty}{=} 0 $ if $, \forall i \in \{i, \cdots, d\}$
\begin{equation} \label{eq:ConvConditionSEG}
    2 \rho \lambda^2_i - \eta \sigma_i^2 \left( 1 + \rho^2\left(2 \lambda_i^2 + \sigma_i^2\right)\right)>0.
\end{equation}
\end{proposition}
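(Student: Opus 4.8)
The plan is to specialize the SEG SDE (Theorem~\ref{thm:SEG_SDE}, informal Theorem~\ref{thm:SEG_SDE_Insights}) to the stochastic bilinear game $f_\xi(x,y)=x^\top\Lambda_\xi y$ and exploit that everything decouples over the $d$ coordinate pairs $(x_i,y_i)$, so the problem reduces to solving $d$ independent scalar linear ODEs for the second moments $V_i(t):=\E[(X^i_t)^2+(Y^i_t)^2]$. Write $\Lambda:=\E_\xi[\Lambda_\xi]=\diag(\lambda_1,\dots,\lambda_d)$, $\sigma_i^2:=\var(\Lambda_{\xi,ii})$, and use independent samples $\gamma^1\perp\gamma^2$. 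Then $F(z)=(\Lambda y,-\Lambda x)$ is linear, $\nabla F$ is the constant block matrix with $\Lambda$, $-\Lambda$ off the diagonal, and $\bar F_\bgamma(z)=\nabla F_{\gamma^1}(z)F_{\gamma^2}(z)=-(\Lambda_{\gamma^1}\Lambda_{\gamma^2}x,\ \Lambda_{\gamma^1}\Lambda_{\gamma^2}y)$, hence $\bar F(z)=-(\Lambda^2x,\Lambda^2y)$ and $F^{\seg}(z)=F(z)+\rho(\Lambda^2x,\Lambda^2y)$. Since the stochastic gradients, $\xi_\gamma(z)$, and $\bar\xi_\bgamma(z)$ are all block-diagonal, each planar subsystem $i$ evolves independently. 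Its diffusion covariance, in the form needed here, is $\Sigma^{\seg}_i=\Sigma_i+\rho(\bar\Sigma_i+\bar\Sigma_i^\top)+\rho^2\E[\bar\xi_{\bgamma,i}\bar\xi_{\bgamma,i}^\top]$: matching the second moment of one exact SEG step $z_{k+1}=z_k-\eta(F_{\gamma^1}(z_k)-\rho\bar F_\bgamma(z_k))$ --- exact because $\nabla^2 F_\gamma\equiv0$ for bilinear $f$ --- forces the $O(\rho^2)$ term that the interpretable Corollary~\ref{thm:SEG_SDE_Simplified_Insights} discards but which is essential here.

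Next I would apply It\^o's formula to $\|z^i\|^2$ on subsystem $i$. The drift contributes $2z^{i\top}(-F^{\seg}_i(z^i))$: the rotational part of $-F$ is skew-symmetric and contributes $0$, while $-\rho(\lambda_i^2x_i,\lambda_i^2y_i)$ contributes $-2\rho\lambda_i^2\|z^i\|^2$ (the $x_iy_i$ cross-terms cancel), so the drift of $\|Z^i_t\|^2$ is exactly $-2\rho\lambda_i^2\|Z^i_t\|^2$, closed in $V_i$. The It\^o correction is $\eta\,\tr\Sigma^{\seg}_i(Z^i_t)$, computed termwise: with $d_{\gamma,i}:=\lambda_i-\Lambda_{\gamma,ii}$ and $p_{\bgamma,i}:=\Lambda_{\gamma^1,ii}\Lambda_{\gamma^2,ii}-\lambda_i^2$ one has $\xi_{\gamma,i}(z^i)=(d_{\gamma,i}y_i,-d_{\gamma,i}x_i)$ and $\bar\xi_{\bgamma,i}(z^i)=-(p_{\bgamma,i}x_i,p_{\bgamma,i}y_i)$, giving $\tr\Sigma_i=\E[d_{\gamma^1,i}^2]\|z^i\|^2=\sigma_i^2\|z^i\|^2$; $\tr\bar\Sigma_i=0$, since the two diagonal entries of $\xi_{\gamma^1,i}\bar\xi_{\bgamma,i}^\top$ are $\mp d_{\gamma^1,i}p_{\bgamma,i}x_iy_i$ and cancel; and $\tr\E[\bar\xi_{\bgamma,i}\bar\xi_{\bgamma,i}^\top]=\E[p_{\bgamma,i}^2]\|z^i\|^2$ with $\E[p_{\bgamma,i}^2]=(\lambda_i^2+\sigma_i^2)^2-\lambda_i^4=\sigma_i^2(2\lambda_i^2+\sigma_i^2)$ by independence. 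Hence $\tr\Sigma^{\seg}_i(z^i)=\sigma_i^2\bigl(1+\rho^2(2\lambda_i^2+\sigma_i^2)\bigr)\|z^i\|^2$, again closed in $V_i$.

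Combining and taking expectations --- the martingale term vanishes under the standing linear-growth/moment regularity --- yields the scalar ODE $\dot V_i=\bigl(-2\rho\lambda_i^2+\eta\sigma_i^2(1+\rho^2(2\lambda_i^2+\sigma_i^2))\bigr)V_i$, so $V_i(t)=\|Z^i_0\|^2\exp\!\bigl(-(2\rho\lambda_i^2-\eta\sigma_i^2(1+\rho^2(2\lambda_i^2+\sigma_i^2)))t\bigr)$. Summing over $i$ and dividing by $2$ gives \eqref{eq:SEG_StochPIBG_ExpDecay}, and $\E\|Z_t\|^2/2\to0$ iff every exponent is strictly negative, i.e.\ \eqref{eq:ConvConditionSEG} holds for all $i$ (self-consistent: a positive exponent with $\lambda_i=0$ would need $\sigma_i=0$, contributing only a constant term).

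The only genuinely delicate step is setting up the diffusion coefficient: because the noise is state-dependent (multiplicative), one must work from the full SEG covariance --- in particular retain $\rho^2\E[\bar\xi_\bgamma\bar\xi_\bgamma^\top]$, which is exactly what produces the $\rho^2(2\lambda_i^2+\sigma_i^2)$ factor in the rate --- and one must verify both that the mixed $O(\rho)$ covariance has vanishing trace and that every moment of $\Lambda_{\gamma,ii}$ is evaluated with $\gamma^1\perp\gamma^2$. Once that bookkeeping is done, the decoupling, the It\^o computation, and solving a one-dimensional linear ODE are all routine.
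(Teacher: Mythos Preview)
Your proposal is correct and follows essentially the same route as the paper: specialize the SEG SDE to this stochastic bilinear game, decouple over coordinate pairs, apply It\^o's formula to $\|Z^i_t\|^2$, and solve the resulting closed linear ODE. Your explicit handling of the $\rho^2\E[\bar\xi_\bgamma\bar\xi_\bgamma^\top]$ term in the diffusion --- which the general Theorem~\ref{thm:SEG_SDE_Insights} drops as $O(\eta)$ but which produces the $\rho^2(2\lambda_i^2+\sigma_i^2)$ factor in the rate --- is a point the paper handles only implicitly by writing down the full SDE directly in the proof of Corollary~\ref{thm:SEG_Convergence_PIBG_NoM}.
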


\begin{proposition}\label{prop:SEG_Convergence_PIBG_NoG_Insights_NoSched}
For $f(x,y) = x^{\top} \mathbf{\Lambda} y$ and covariance noise $\Sigma:= \diag(\sigma_1, \cdots, \sigma_d)$, $\frac{\E \left[\lVert Z_t \rVert^2 \right]}{2}$ is equal to
\begin{equation}\label{eq:SEG_PIBG_ExpDecay}
     \sum_{i=1}^{d}  \frac{\lVert Z^i_0 \rVert^2}{2} e^{-2\rho \lambda^2_i t} + \frac{\eta \sigma_i^2}{2} \frac{1 + \rho^2 \lambda_i^2}{\rho \lambda_i^2} \left( 1- e^{-2 \rho\lambda^2_i t} \right),
\end{equation}
which implies that
\begin{equation}\label{eq:SuboptSEG}
    \frac{\E \left[\lVert Z_t \rVert^2 \right]}{2} \overset{t \rightarrow \infty}{=} \frac{\eta}{2} \sum_{i=1}^{d} \sigma_i^2 \frac{1 + \rho^2 \lambda_i^2}{\rho \lambda_i^2}>0.
\end{equation}
\end{proposition}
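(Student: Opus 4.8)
The plan is to exploit the fact that for $f(x,y)=x^\top\mathbf{\Lambda}y$ with $\mathbf{\Lambda}=\diag(\lambda_1,\dots,\lambda_d)$ the simplified SEG SDE of Corollary~\ref{thm:SEG_SDE_Simplified_Insights} (same-sample, $z$-independent noise, which is exactly the setting of a constant covariance $\Sigma$) decouples into $d$ independent two-dimensional linear SDEs, one per pair $(x_i,y_i)$, and then to track only the second moment of each. First I would assemble the vector-field data: $F(x,y)=(\mathbf{\Lambda}y,-\mathbf{\Lambda}x)$, the constant Jacobian
\begin{equation*}
\nabla F=\begin{pmatrix}0 & \mathbf{\Lambda}\\ -\mathbf{\Lambda} & 0\end{pmatrix},
\qquad\text{so that}\qquad \nabla F\,F(z)=(-\mathbf{\Lambda}^2 x,\,-\mathbf{\Lambda}^2 y),
\end{equation*}
whence the $i$-th block of $F^{\seg}=F-\rho\,\nabla F\,F$ is $(\rho\lambda_i^2 x_i+\lambda_i y_i,\ \rho\lambda_i^2 y_i-\lambda_i x_i)$. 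Writing $Z_t^i=(x_i(t),y_i(t))$, each block obeys $dZ_t^i=-A_i Z_t^i\,dt+B_i\,dW_t^i$, where $A_i$ has diagonal entries $\rho\lambda_i^2$ and off-diagonal entries $\pm\lambda_i$, and $B_i=(\mathbf{I}_2-\rho J_i)\sqrt{\eta\Sigma_i}$ with $J_i=\begin{pmatrix}0&\lambda_i\\-\lambda_i&0\end{pmatrix}$ the skew $2\times2$ block of $\nabla F$ and $\Sigma_i$ the noise covariance on coordinate $i$.

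Second, I would apply Itô's lemma to $g(z)=\lVert z\rVert^2$ block by block, obtaining $d\lVert Z_t^i\rVert^2=-2(Z_t^i)^\top A_i Z_t^i\,dt+2(Z_t^i)^\top B_i\,dW_t^i+\tr(B_iB_i^\top)\,dt$. The key simplification is that the skew part of $A_i$ leaves the quadratic form untouched, $(Z^i)^\top A_i Z^i=\rho\lambda_i^2\lVert Z^i\rVert^2$ (a rotation preserves the norm), while $\tr(B_iB_i^\top)$ is constant and, using $J_i^\top=-J_i$ and $J_i^2=-\lambda_i^2\mathbf{I}_2$, equals $2\eta\sigma_i^2(1+\rho^2\lambda_i^2)$. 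The stochastic integral is a true martingale with vanishing mean (integrability follows from the explicit Gaussian/Ornstein--Uhlenbeck form of the linear SDE, or from Assumption~\ref{ass:regularity_f_Insights} together with the moment control in Definition~\ref{def:weak_approximation}). Taking expectations leaves the scalar linear ODE $\dot m_t^i=-2\rho\lambda_i^2 m_t^i+2\eta\sigma_i^2(1+\rho^2\lambda_i^2)$ for $m_t^i:=\E\lVert Z_t^i\rVert^2$.

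Third, solving this ODE with $m_0^i=\lVert Z_0^i\rVert^2$ via the integrating factor $e^{2\rho\lambda_i^2 t}$ gives $m_t^i=\lVert Z_0^i\rVert^2 e^{-2\rho\lambda_i^2 t}+\frac{\eta\sigma_i^2(1+\rho^2\lambda_i^2)}{\rho\lambda_i^2}\bigl(1-e^{-2\rho\lambda_i^2 t}\bigr)$; summing over $i$ and dividing by $2$ yields Eq.~\eqref{eq:SEG_PIBG_ExpDecay}, and letting $t\to\infty$ — using $\rho\lambda_i^2>0$, which is where $\rho>0$ and $\lambda_i\neq0$ enter, and for $\rho\le0$ the transients would not decay — gives Eq.~\eqref{eq:SuboptSEG}.

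I expect the only genuinely delicate bookkeeping to be the diffusion coefficient: correctly pushing the hypotheses of Corollary~\ref{thm:SEG_SDE_Simplified_Insights} through $(\mathbf{I}_{2d}-\rho\nabla F)\sqrt{\eta\Sigma}$ to the per-block form and producing the factor $1+\rho^2\lambda_i^2$ (the $\rho^2\lambda_i^2$ term being precisely the curvature-induced noise enhancement discussed after that corollary); the decoupling, the disappearance of the rotational part in the norm dynamics, and the scalar ODE are routine. An equivalent route that sidesteps the moment ODE is to write $Z_t^i=e^{-A_i t}Z_0^i+\int_0^t e^{-A_i(t-s)}B_i\,dW_s^i$, note $e^{-A_i t}=e^{-\rho\lambda_i^2 t}R_i(t)$ with $R_i(t)$ orthogonal, and compute $\E\lVert Z_t^i\rVert^2$ by the Itô isometry; the rotational factors cancel in the norms and the same closed form emerges.
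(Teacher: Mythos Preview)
Your proposal is correct and follows essentially the same route as the paper: decouple into per-coordinate $2\times2$ blocks, apply It\^o's lemma to $\lVert Z_t^i\rVert^2$, observe that the skew-symmetric part of $A_i$ drops out of the quadratic form while the diffusion trace yields the constant $2\eta\sigma_i^2(1+\rho^2\lambda_i^2)$, take expectations, and solve the resulting scalar linear ODE. Your alternative via the explicit Ornstein--Uhlenbeck representation $e^{-A_i t}=e^{-\rho\lambda_i^2 t}R_i(t)$ plus It\^o isometry is also the paper's strategy in the more general exact-dynamics theorem for bilinear games, so both of your suggested paths align with what the paper does.
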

We derive necessary and sufficient conditions for stepsize schedulers to remediate the convergence deficiency.
\begin{proposition} \label{prop:SEG_Convergence_PIBG_NoG_Insights_Sched}
Under the assumptions of Prop. \ref{prop:SEG_Convergence_PIBG_NoG_Insights_NoSched}, for any stepsize scheduler $\eta_t$ and $\rho_t$, $\frac{\E \left[\lVert Z_t \rVert^2 \right]}{2}$ is equal to
\begin{align}
     & \sum_{i=1}^{d} e^{-2 \lambda^2_i \rho \int_0^t \eta_s \rho_s ds} \left(\frac{\lVert Z^i_0 \rVert^2}{2} \right. \\
     & \left.+ \eta \sigma_i^2  \int_0^t e^{2 \lambda^2_i \rho\int_0^s\eta_r \rho_r dr} \eta^2_s (1 + \lambda_i^2 \rho^2 \rho_s^2 ) ds\right). \nonumber
\end{align}
Therefore, $\frac{\E \left[\lVert Z_t \rVert^2 \right]}{2} \overset{t \rightarrow \infty}{\rightarrow}  0$ if and only if
\begin{equation}
     \int_0^\infty \eta_s \rho_s ds = \infty \text{ and } \lim_{t \rightarrow \infty} \eta_t \rho_t = \lim_{t \rightarrow \infty} \frac{\eta_t}{\rho_t} = 0.
\end{equation}
In particular, when $\rho_t=1$,
\begin{equation*}
    \eta_t = \frac{1}{(t+1)^{\gamma}} \implies \frac{\E \left[\lVert Z_t \rVert^2 \right]}{2} \rightarrow 0,  \text{ for }\gamma \in \{0.5, 1\}.
\end{equation*}
\end{proposition}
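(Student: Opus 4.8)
The plan is to exploit that for a bilinear game $f(x,y)=x^{\top}\mathbf{\Lambda}y$ with $\mathbf{\Lambda}$ diagonal the SEG SDE of Corollary~\ref{thm:SEG_SDE_Simplified_Insights} is \emph{linear}. Indeed $F$ is a linear vector field, $\nabla F$ is a constant (skew--block) matrix, and $\nabla F\,F(z)=-\mathbf{\Lambda}^{2}z$, so the SEG drift $F^{\seg}=F-\rho\rho_t\,\nabla F\,F$ is linear in $z$. Incorporating the two schedulers --- the multiplicative factor $\eta_t$ in front of the drift and of the diffusion matrix, and the extra--step multiplier $\rho_t$ --- turns the SEG SDE into a time--inhomogeneous Ornstein--Uhlenbeck process. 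Since $\mathbf{\Lambda}$ and $\Sigma$ are diagonal, it decouples into $d$ independent planar linear SDEs for the blocks $Z^{i}_{t}=(x_{i}(t),y_{i}(t))$, the $i$-th one having drift matrix $-\eta_t(K_i+\rho\rho_t\lambda_i^{2}\Im_{2})$, where $K_i$ is the $2\times2$ skew--symmetric rotation generator of magnitude $\lambda_i$, and diffusion $\eta_t(\Im_{2}-\rho\rho_t K_i)\sqrt{\eta\Sigma_i}$ with $\Sigma_i$ the $i$-th block of the noise covariance.

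Next I would apply It\^o's formula to $\lVert Z^{i}_{t}\rVert^{2}$. The one structural point is that the rotational part $K_i$ of the drift is skew--symmetric, so $\langle z,K_i z\rangle=0$ and $\tr(K_i\Sigma_i)=0$: the rotation contributes nothing to $\frac{d}{dt}\E\lVert Z^{i}_{t}\rVert^{2}$, and the only surviving terms are the contraction from $\rho\rho_t\lambda_i^{2}\Im_2$ and the trace of the diffusion covariance, which is $\eta\,\eta_t^{2}\,\tr(\Sigma_i)\,(1+\rho^{2}\rho_t^{2}\lambda_i^{2})$. Setting $Q_i(t):=\tfrac12\E\lVert Z^{i}_{t}\rVert^{2}$ this yields, exactly as in the proof of Prop.~\ref{prop:SEG_Convergence_PIBG_NoG_Insights_NoSched}, the linear scalar ODE $\dot Q_i(t)=-2\rho\lambda_i^{2}\,\eta_t\rho_t\,Q_i(t)+\eta\sigma_i^{2}\,\eta_t^{2}\,(1+\lambda_i^{2}\rho^{2}\rho_t^{2})$. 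Solving it with the integrating factor $\exp\big(2\rho\lambda_i^{2}\int_0^{t}\eta_s\rho_s\,ds\big)$ and summing $\sum_i Q_i(t)=\tfrac12\E\lVert Z_t\rVert^{2}$ gives the stated closed form.

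It then remains to characterize when this expression tends to $0$. The transient term is $e^{-2\rho\lambda_i^{2}A(t)}\tfrac12\lVert Z^{i}_{0}\rVert^{2}$ with $A(t):=\int_0^{t}\eta_s\rho_s\,ds$, and vanishes iff $A(\infty)=\infty$. For the persistent part I would rewrite $\eta_s^{2}\,ds=(\eta_s/\rho_s)\,dA(s)$ and $\eta_s^{2}\rho_s^{2}\,ds=(\eta_s\rho_s)\,dA(s)$, splitting it into two integrals of the form $R(t)=e^{-cA(t)}\int_0^{t}e^{cA(s)}g(s)\,dA(s)$ with $c=2\rho\lambda_i^{2}>0$ and $g=\eta_t/\rho_t$, respectively $g=\eta_t\rho_t$. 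The elementary lemma --- change of variables $u=A(s)$ followed by l'H\^opital --- that $R(t)\to L/c$ whenever $A(\infty)=\infty$ and $g(t)\to L$ then gives the sufficiency direction: $A(\infty)=\infty$, $\eta_t\rho_t\to0$ and $\eta_t/\rho_t\to0$ force $Q_i(t)\to0$ for every $i$. For necessity, if $A(\infty)<\infty$ the transient term stays bounded below by a positive constant, and if $\eta_t\rho_t\not\to0$ or $\eta_t/\rho_t\not\to0$ then $g(s)\ge\delta$ on a non-negligible set forces $\liminf_t R(t)\ge\delta/c>0$.

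I expect the necessity direction to be the delicate part: passing rigorously from ``$g\not\to0$'' to genuine non-vanishing of $R(t)$ (and hence of $Q_i$) needs a quantitative Ces\`aro/l'H\^opital estimate together with a mild regularity hypothesis on the schedulers --- e.g.\ eventual monotonicity of $\eta_t\rho_t$ and of $\eta_t/\rho_t$, which holds for every scheduler of practical interest --- in order to rule out pathological thin--spike behaviour; this is where the care in the appendix proof should concentrate. The polynomial example is then immediate: for $\rho_t\equiv1$ and $\eta_t=(t+1)^{-\gamma}$ one has $\int_0^{\infty}\eta_s\,ds=\infty$ precisely for $\gamma\le1$ and $\eta_t\to0$ precisely for $\gamma>0$, so the criterion is met for all $\gamma\in(0,1]$, in particular for $\gamma\in\{0.5,1\}$.
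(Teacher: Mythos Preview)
Your proposal is correct and follows essentially the same route as the paper: write the scheduled SEG SDE for the bilinear game, apply It\^o's formula coordinatewise to obtain the scalar linear ODE $\dot Q_i=-2\rho\lambda_i^{2}\eta_t\rho_t\,Q_i+\eta\sigma_i^{2}\eta_t^{2}(1+\lambda_i^{2}\rho^{2}\rho_t^{2})$, solve it with the integrating factor $\exp(2\rho\lambda_i^{2}\int_0^{t}\eta_s\rho_s\,ds)$, and then handle the asymptotics of the persistent term via L'H\^opital after the change of variable $u=A(s)$. Your presentation is in fact more explicit than the paper's in two respects: you spell out why the skew--symmetric rotation piece $K_i$ contributes nothing to $d\lVert Z^{i}_t\rVert^{2}$, and you correctly flag that the necessity direction (from $\eta_t\rho_t\not\to0$ or $\eta_t/\rho_t\not\to0$ to $Q_i\not\to0$) requires some regularity on the schedulers to rule out thin--spike counterexamples --- the paper simply asserts necessity by reference to the analogous SHGD corollary without addressing this point.
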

The left of Figure \ref{fig:Schedulers} shows the empirical validation of Prop. \ref{prop:SHGD_Convergence_PIBG_NoG_Insights_NoSched} and Prop. \ref{prop:SHGD_Convergence_PIBG_NoG_Insights_Sched} while its right side shows that of Prop. \ref{prop:SEG_Convergence_PIBG_NoG_Insights_NoSched} and Prop. \ref{prop:SEG_Convergence_PIBG_NoG_Insights_Sched}. Figure \ref{fig:NoM} shows the same for Prop. \ref{prop:SHGD_Convergence_PIBG_NoM_Insights} and Prop. \ref{prop:SEG_Convergence_PIBG_NoM_Insights}. More details are available in Appendix \ref{app:Experiments}.
\mycoloredbox{}{\textbf{Conclusion:}
\begin{enumerate}
    \item If the uncertainty $v_t$ is well behaved as in Prop. \ref{prop:SHGD_Convergence_PIBG_NoM_Insights} and Prop.  \ref{prop:SEG_Convergence_PIBG_NoM_Insights}, the Hamiltonian decays exponentially to $0$;
    \item When $v_t$ is constant as in Prop. \ref{prop:SHGD_Convergence_PIBG_NoG_Insights_NoSched} and Prop. \ref{prop:SEG_Convergence_PIBG_NoG_Insights_NoSched}, both algorithms exponentially reach a level of suboptimality that depends on the curvature of the landscape (and on $\rho$ for SEG);
    \item Prop. \ref{prop:SHGD_Convergence_PIBG_NoG_Insights_Sched} and Prop. \ref{prop:SEG_Convergence_PIBG_NoG_Insights_Sched} provide a recipe to craft schedulers that recover convergence. We provide examples of such necessary and sufficient conditions;
    \item Eq. \eqref{eq:SEG_StochPIBG_ExpDecay} and Eq. \eqref{eq:SEG_PIBG_ExpDecay} clearly show that large $\rho$ speeds up the convergence. However, this might violate Eq. \eqref{eq:ConvConditionSEG} and increase the suboptimality in Eq. \eqref{eq:SuboptSEG}.
\end{enumerate}
}

\vspace{-0.3cm}

\section{QUADRATIC GAMES: EXACT DYNAMICS EXPRESSION} \label{sec:Inter2}

\vspace{-0.2cm}

In this section, we derive the exact solution to the SDEs of SEG and SHGD for the Quadratic Games $f(x,y) = \frac{x^{\top} \mathbf{A} x}{2} + x^{\top} \mathbf{\Lambda} y - \frac{y^{\top} \mathbf{A} y}{2}$ where $\mathbf{\Lambda}$ and $\mathbf{A}$ are square, diagonal and positive semidefinite matrices. We notice that if $\mathbf{A}=\mathbf{0}$, these are classic Bilinear Games.

\vspace{-0.2cm}

\subsection{Exact Dynamics - SEG}

\vspace{-0.1cm}

\begin{theorem}[Exact Dynamics of SEG] \label{thm:SEG_Dynamic_Insights_FBG}
Under the assumptions of Corollary \ref{thm:SEG_SDE_Simplified_Insights}, we take the covariance of the noise on the gradients to be $\sigma^2 \mathbf{I}_d$ and have that
    \begin{equation}
    Z_t = \mathbf{\mathbf{\Tilde{E}}}(t) \mathbf{\mathbf{\Tilde{R}}}(t) \left( z + \sqrt{\eta} \sigma \int_{0}^t \mathbf{\mathbf{\Tilde{E}}}(-s) \mathbf{\mathbf{\Tilde{R}}}(-s)  \mathbf{M} d W_s\right),
\end{equation}
    $\mathbf{\mathbf{\Tilde{E}}}(t)= \left[\begin{array}{ll} \mathbf{E}(t) & \mathbf{0}_d  \\  \mathbf{0}_d  & \mathbf{E}(t) \end{array}\right],\mathbf{\mathbf{\Tilde{R}}}(t)=\left[\begin{array}{ll} \mathbf{C}(t) & -\mathbf{S}(t)  \\  \mathbf{S}(t)  & \mathbf{C}(t) \end{array}\right]$, and $\mathbf{M}=\left[\begin{array}{ll} \mathbf{I}_d - \rho \mathbf{A} & - \rho \mathbf{\Lambda}  \\ \rho \mathbf{\Lambda}  & \mathbf{I}_d - \rho \mathbf{A}  \end{array}\right] $,
where
\begin{equation}\label{eq:SEGExpMatrix}
    \mathbf{E}(t):=\diag{\left( e^{\rho \left(a_{1}^{2} - \lambda_1^2\right)t - a_1 t}, \cdots, e^{\rho \left(a_{d}^{2} - \lambda_d^2\right)t - a_d t} \right)},
\end{equation}
\begin{equation}
    \mathbf{C}(t):= \diag{\left( \cos{(\hat{\lambda}_1 t)}, \cdots, \cos{(\hat{\lambda}_d t)}\right)},
\end{equation} 
\begin{equation}
    \mathbf{S}(t):= \diag{\left( \sin{(\hat{\lambda}_1 t)}, \cdots, \sin{(\hat{\lambda}_d t)}\right)},
\end{equation}
and $\hat{\lambda}_i:= \lambda_i (1-2 \rho a_i)$.
If $\rho \left(a_{i}^{2} - \lambda_i^2\right) - a_i <0$:
\begin{enumerate}
    \item $\E \left[ Z_t\right] =\mathbf{\mathbf{\Tilde{E}}}(t) \mathbf{\mathbf{\Tilde{R}}}(t)  z \overset{t \rightarrow \infty}{=} 0$;
    \item The covariance matrix of $Z_t$ is equal to \begin{equation}
    \frac{\eta \sigma^2}{2}  \left[\begin{array}{ll} \mathbf{I}_d - \mathbf{E}(2t) & \mathbf{0}_d  \\  \mathbf{0}_d  &  \mathbf{I}_d - \mathbf{E}(2t) \end{array}\right] \bar{\Sigma} \overset{t \rightarrow \infty}{=} \frac{\eta \sigma^2}{2}  \bar{\Sigma}
\end{equation}
where $\bar{\Sigma}:= \diag(\mathbf{B},\mathbf{B})$ and $\mathbf{B}$ is defined as
\begin{equation}\label{eq:SEGCovMatr}
    \diag{\left(  \frac{(1-\rho a_1)^2 + \rho^2 \lambda_1^2}{a_1 + \rho( \lambda_{1}^{2}-a_1^2)}, \cdots,  \frac{(1-\rho a_d)^2 + \rho^2 \lambda_d^2}{a_d + \rho( \lambda_{d}^{2}-a_d^2)} \right)}.
\end{equation}
\end{enumerate}
\end{theorem}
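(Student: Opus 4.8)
The plan is to observe that on these quadratic games the simplified SEG SDE of Corollary~\ref{thm:SEG_SDE_Simplified_Insights} collapses to a \emph{linear} SDE with constant coefficients, to solve it by variation of constants, and then to compute the resulting matrix exponential in closed form by exploiting the $2\times 2$ block structure induced by $\mathbf{A},\mathbf{\Lambda}$ being diagonal.

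First I would write the drift field as a linear map: since $f$ is quadratic, $F(z) = \mathbf{J}z$ with $\mathbf{J} = \left(\begin{smallmatrix}\mathbf{A}&\mathbf{\Lambda}\\ -\mathbf{\Lambda}&\mathbf{A}\end{smallmatrix}\right)$, hence $\nabla F \equiv \mathbf{J}$ is constant and $\nabla F\, F = \mathbf{J}^2 z$. Plugging this and the gradient-noise covariance $\sigma^2\mathbf{I}_d$ in each block (i.e. $\Sigma = \sigma^2\mathbf{I}_{2d}$) into Eq.~\eqref{eq:SEG_SDE_Simplified_SameSample_Insights} gives the Ornstein--Uhlenbeck-type equation $dZ_t = -\mathbf{K}Z_t\,dt + \sqrt{\eta}\,\sigma\,\mathbf{M}\,dW_t$ with $\mathbf{K} := \mathbf{J}-\rho\mathbf{J}^2$ and $\mathbf{M} := \mathbf{I}_{2d}-\rho\mathbf{J}$ (one checks directly that this $\mathbf{M}$ is the matrix in the statement). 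Its solution is the standard $Z_t = e^{-\mathbf{K}t}\big(z + \sqrt{\eta}\sigma\int_0^t e^{\mathbf{K}s}\mathbf{M}\,dW_s\big)$, so what remains is to identify $e^{-\mathbf{K}t}$ with $\widetilde{\mathbf{E}}(t)\widetilde{\mathbf{R}}(t)$ and $e^{\mathbf{K}s}$ with $\widetilde{\mathbf{E}}(-s)\widetilde{\mathbf{R}}(-s)$, using that $\widetilde{\mathbf{E}},\widetilde{\mathbf{R}}$ commute and each inverts by negating time.

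The matrix exponential is the technical core. Applying the permutation that pairs the $i$-th coordinate of $x$ with the $i$-th coordinate of $y$ turns $\mathbf{J},\mathbf{J}^2$ and hence $\mathbf{K}$ into block-diagonal matrices with $2\times2$ blocks: the $i$-th block of $\mathbf{J}$ is $a_i\mathbf{I}_2 + \lambda_i\left(\begin{smallmatrix}0&1\\-1&0\end{smallmatrix}\right)$, so that of $\mathbf{J}^2$ is $(a_i^2-\lambda_i^2)\mathbf{I}_2 + 2a_i\lambda_i\left(\begin{smallmatrix}0&1\\-1&0\end{smallmatrix}\right)$ and that of $-\mathbf{K}$ is $\big(\rho(a_i^2-\lambda_i^2)-a_i\big)\mathbf{I}_2 - \hat\lambda_i\left(\begin{smallmatrix}0&1\\-1&0\end{smallmatrix}\right)$ with $\hat\lambda_i = \lambda_i(1-2\rho a_i)$. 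Since $\mathbf{I}_2$ commutes with $\left(\begin{smallmatrix}0&1\\-1&0\end{smallmatrix}\right)$, which squares to $-\mathbf{I}_2$, exponentiating each block gives the scalar $e^{(\rho(a_i^2-\lambda_i^2)-a_i)t}$ times the rotation $\left(\begin{smallmatrix}\cos(\hat\lambda_i t)&-\sin(\hat\lambda_i t)\\ \sin(\hat\lambda_i t)&\cos(\hat\lambda_i t)\end{smallmatrix}\right)$; permuting back to the stacked $(x,y)$ ordering collects the scalars into $\widetilde{\mathbf{E}}(t)=\diag(\mathbf{E}(t),\mathbf{E}(t))$ and the rotations into $\widetilde{\mathbf{R}}(t)$, as stated. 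The Itô integral being a zero-mean martingale then gives $\E[Z_t] = \widetilde{\mathbf{E}}(t)\widetilde{\mathbf{R}}(t)z$, which tends to $0$ because $\mathbf{E}(t)\to 0$ entrywise when $\rho(a_i^2-\lambda_i^2)-a_i<0$ while $\widetilde{\mathbf{R}}(t)$ stays bounded.

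For the covariance I would apply Itô's isometry to get $\cov(Z_t) = \eta\sigma^2\,\widetilde{\mathbf{E}}(t)\widetilde{\mathbf{R}}(t)\big(\int_0^t \widetilde{\mathbf{E}}(-s)\widetilde{\mathbf{R}}(-s)\mathbf{M}\mathbf{M}^\top\widetilde{\mathbf{R}}(-s)^\top\widetilde{\mathbf{E}}(-s)^\top\,ds\big)\widetilde{\mathbf{R}}(t)^\top\widetilde{\mathbf{E}}(t)^\top$. A short computation gives $\mathbf{M}\mathbf{M}^\top = \diag(\mathbf{N},\mathbf{N})$ with $\mathbf{N} := (\mathbf{I}_d-\rho\mathbf{A})^2+\rho^2\mathbf{\Lambda}^2$ (entries $(1-\rho a_i)^2+\rho^2\lambda_i^2$). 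The key simplification is that $\diag(\mathbf{N},\mathbf{N})$ commutes with every block rotation $\widetilde{\mathbf{R}}(\cdot)$ — within each $2\times2$ rotation block $\mathbf{N}$ acts as a scalar — so all rotation factors cancel pairwise, the integrand collapses to the diagonal matrix $\widetilde{\mathbf{E}}(-2s)\diag(\mathbf{N},\mathbf{N})$, and $\int_0^t e^{2(a_i+\rho(\lambda_i^2-a_i^2))s}ds$ is elementary; multiplying by the surviving outer factor $\widetilde{\mathbf{E}}(t)^2=\widetilde{\mathbf{E}}(2t)$ and simplifying the exponentials yields coordinate-wise $\tfrac{\eta\sigma^2}{2}\big(1-[\mathbf{E}(2t)]_{ii}\big)\tfrac{(1-\rho a_i)^2+\rho^2\lambda_i^2}{a_i+\rho(\lambda_i^2-a_i^2)}$, i.e. the claimed $\tfrac{\eta\sigma^2}{2}\diag(\mathbf{I}_d-\mathbf{E}(2t),\mathbf{I}_d-\mathbf{E}(2t))\bar{\Sigma}$, which tends to $\tfrac{\eta\sigma^2}{2}\bar{\Sigma}$ under the same sign condition. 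I expect the main obstacle to be purely organizational: tracking the interleaved-versus-stacked coordinate permutation carefully enough that (a) the exponential comes out in the advertised factored $\widetilde{\mathbf{E}}\widetilde{\mathbf{R}}$ form and (b) the rotation factors visibly cancel in the covariance integral; once the linearization is in place, every remaining step is an elementary matrix or scalar calculation.
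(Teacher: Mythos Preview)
Your proposal is correct and follows essentially the same route as the paper: recognize the SDE as linear with constant coefficients, solve by variation of constants, compute $e^{-\mathbf{K}t}$ via the commuting ``scalar part plus rotation generator'' decomposition, and obtain the covariance by It\^o's isometry. The only difference is bookkeeping---the paper stays in the stacked $(x,y)$ ordering and splits $\mathbf{D}=\mathbf{D}_1+\mathbf{D}_2$ directly, then expands the covariance integral coordinate by coordinate, whereas you pass to the interleaved $2\times 2$ blocks and exploit that $\mathbf{M}\mathbf{M}^\top=\diag(\mathbf{N},\mathbf{N})$ commutes with the rotations to cancel them up front; your organization is slightly cleaner but the substance is identical.
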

\vspace{-0.4cm}
\begin{proof}
    Since the SDE is linear, the closed-form formula of the solution $Z_t$ is known. We use the martingale property of Brownian motion to calculate $\E \left[ Z_t\right]$ while that of the second moment uses the Itô Isometry.
\end{proof}
\vspace{-0.4cm}
We verify Eq. \eqref{eq:SEGCovMatr} in Figure \ref{fig:Variance} in Appendix.

\vspace{3.9cm}

\textbf{On the sign of $\rho$ and its magnitude:}

\mycoloredbox{}{
If one can chose $\rho_i$ for each coordinate:
\begin{enumerate}
    \item Eq.~\eqref{eq:SEGExpMatrix} implies that SEG converges only if $\rho_i \left(a_{i}^{2} - \lambda_i^2\right) - a_i <0$, and that $\rho_i \left(a_{i}^{2} - \lambda_i^2\right)<0$ is necessary to be faster than SGDA, meaning that \textbf{negative} $\rho_i$ might be convenient if $a_{i}>\lambda_i$;
    \item If $\rho_i$ has the correct sign, a \textbf{larger} absolute value implies \textbf{faster convergence};
    \item  Eq.~\eqref{eq:SEGCovMatr} implies that the asymptotic variance along the $i$-th coordinate $\mathbf{B}_{i,i}(\rho_i)$ \textbf{explodes} if $|\rho_i|$ is too \textbf{large} or if $ \rho_i \rightarrow \frac{-a_i}{\lambda_i^2-a_i^2}$ ;
    \item $B_{i,i}(\rho_i)$ is a convex function of $\rho_i$ whose minimum is realized at $\rho^{\text{V}}_i=\frac{1}{a_i + \lambda_i}$; However, if $\rho^{\text{V}}_i$ is \textbf{small}, it \textbf{slows down} the convergence.
\end{enumerate}
If one has to choose a single value of $\rho$:
\begin{enumerate}
    \item One has to select it as it will (de)accelerate different coordinates based on its sign;
    \item The trace of $\mathbf{B}$ is a convex function of $\rho$, meaning that there is an optimal $\rho^{*}$ that minimizes it.
\end{enumerate}
}

\begin{figure*}%
    \centering
    \subfloat{{\includegraphics[width=0.47\linewidth]{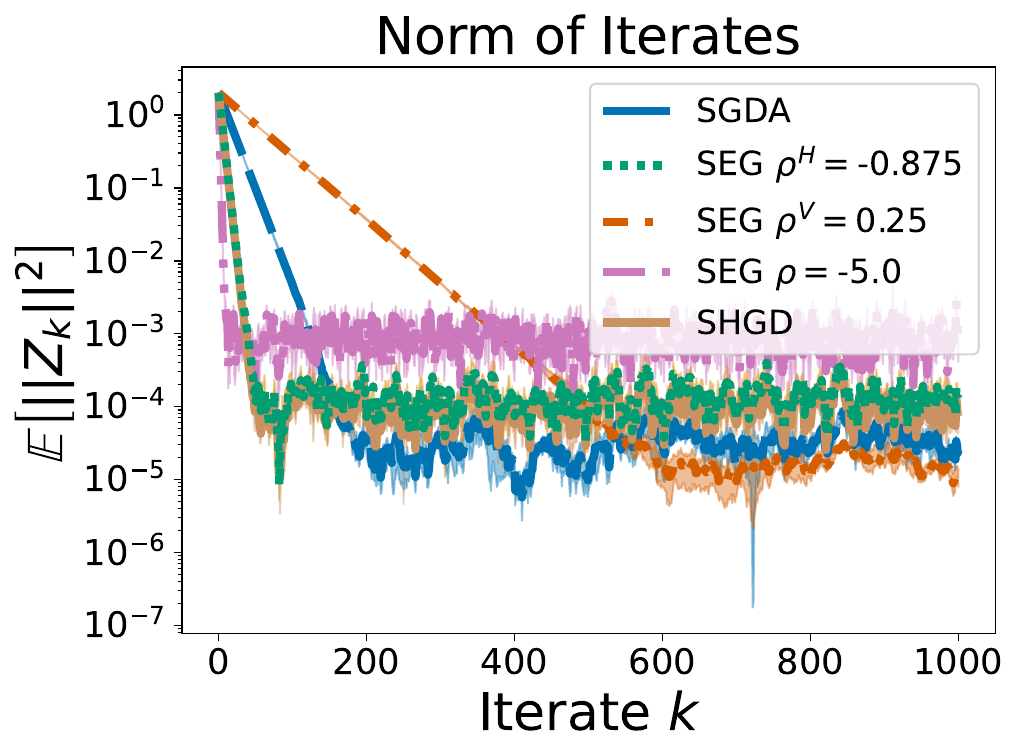} }}%
    \subfloat{{\includegraphics[width=0.47\linewidth]{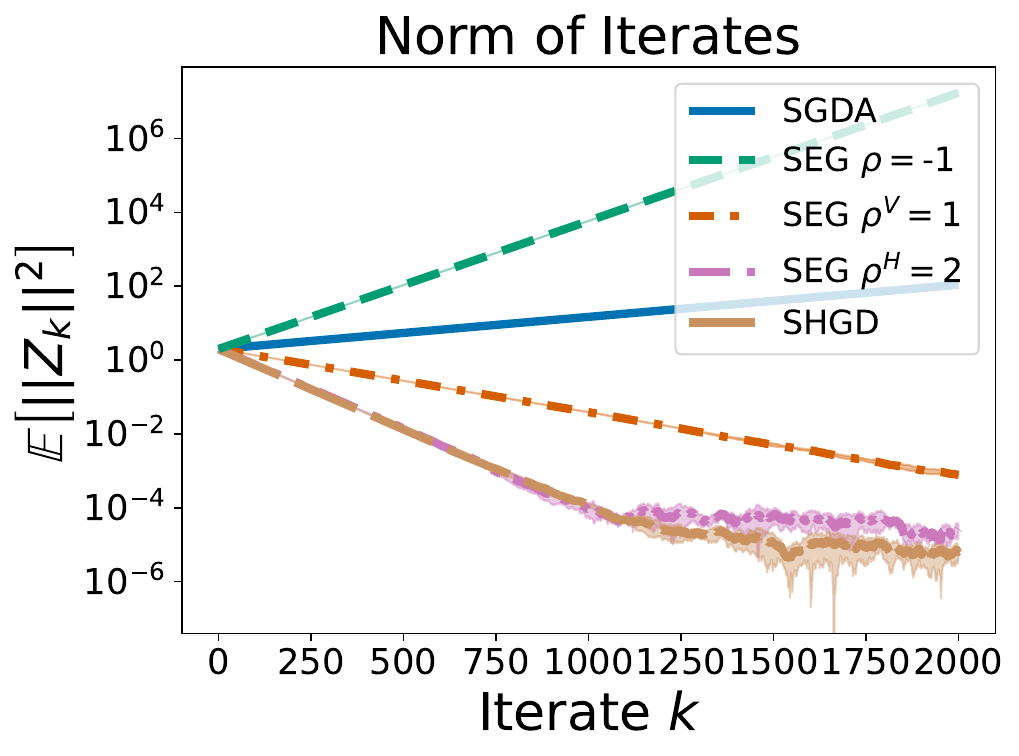} }}%
    \caption{Comparison between SEG and SHGD on Quadratic Games: (Left), $\rho^{V}$ and $\rho^{H}$ meet the designated goals, sometimes \textbf{negative} $\rho$ is desirable as positive ones \textbf{slow down} the convergence. Large $|\rho|$ induces faster convergence which in turn results in larger suboptimality. (Right), negative $\rho$ escapes the \textit{bad saddle} faster than SGDA, positive ones induce convergence, and $\rho^{H}$ matches the decay of SHGD. In both experiments, $\eta=0.01$.}%
    \label{fig:FBG_Comparison_Saddle}%
\end{figure*}

\vspace{-0.2cm}

\subsection{Exact Dynamics - SHGD}
\begin{theorem}[Exact Dynamics of SHGD] \label{thm:SHGD_Dynamic_FBG_Insights}
Under the assumptions of Corollary \ref{thm:SHGD_SDE_Simplified_Insights}, we take the covariance of the noise on the gradients to be equal to $\sigma^2 \mathbf{I}_d$ and have
    \begin{equation}
    Z_t = \mathbf{\mathbf{\Tilde{E}}}(t) \left( z + \sqrt{\eta} \sigma \int_{0}^t \mathbf{\mathbf{\Tilde{E}}}(-s) \mathbf{M} d W_s\right),
\end{equation}
    $\mathbf{\mathbf{\Tilde{E}}}(t)= \left[\begin{array}{ll} \mathbf{E}(t) & \mathbf{0}_d  \\  \mathbf{0}_d  & \mathbf{E}(t) \end{array}\right]$, $\mathbf{M}=\left[\begin{array}{ll} \mathbf{A} &  \mathbf{\Lambda}  \\ \mathbf{\Lambda}  & -\mathbf{A}  \end{array}\right] $,
where 
\begin{equation}
    \mathbf{E}(t):= \diag{\left( e^{-\left(\lambda_{1}^{2}+a_{1}^{2}  \right)t}, \cdots, e^{-\left(\lambda_{d}^{2}+a_{d}^{2}  \right)t} \right)}.
\end{equation}

\vspace{-0.2cm}

In particular, we have that

\vspace{-0.2cm}

\begin{enumerate}
    \item $\E \left[ Z_t\right] = \mathbf{\mathbf{\Tilde{E}}}(t) z \overset{t \rightarrow \infty}{=} 0$;

\vspace{-0.1cm}
    
    \item The covariance matrix of $Z_t$ is equal to \begin{equation}
    \eta \frac{\sigma^2 }{2}\left[\begin{array}{ll} \mathbf{I}_d - \mathbf{E}(2t) & \mathbf{0}_d  \\  \mathbf{0}_d  &  \mathbf{I}_d - \mathbf{E}(2t) \end{array}\right] \bar{\Sigma} \overset{t \rightarrow \infty}{=} \frac{\eta \sigma^2}{2}  \bar{\Sigma},
\end{equation}
where $\bar{\Sigma} := \diag(\mathbf{I}_d,\mathbf{I}_d)$.
\end{enumerate}
\end{theorem}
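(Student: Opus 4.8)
The plan is to specialize the simplified SHGD SDE of Corollary~\ref{thm:SHGD_SDE_Simplified_Insights} to the quadratic game, observe that it collapses to a linear, constant-coefficient (multivariate Ornstein--Uhlenbeck) SDE, and then read off its solution, mean, and covariance from the classical closed forms for such a process.

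\emph{Step 1 (reduce to a linear SDE).} For $f(x,y) = \tfrac12 x^\top\mathbf{A}x + x^\top\mathbf{\Lambda}y - \tfrac12 y^\top\mathbf{A}y$ one computes $F(z)=\mathbf{J}z$, where $\mathbf{J}$ is the block matrix with diagonal blocks $\mathbf{A},\mathbf{A}$ and off-diagonal blocks $\mathbf{\Lambda},-\mathbf{\Lambda}$, while $\nabla^2 f(z)\equiv\mathbf{M}$ exactly as in the statement. Since $\mathcal{H}(z)=\tfrac12 F(z)^\top F(z)=\tfrac12 z^\top\mathbf{J}^\top\mathbf{J}z$, we have $\nabla\mathcal{H}(z)=\mathbf{J}^\top\mathbf{J}z$; because $\mathbf{A}$ and $\mathbf{\Lambda}$ are diagonal and hence commute, the cross blocks of $\mathbf{J}^\top\mathbf{J}$ cancel and $\mathbf{J}^\top\mathbf{J}=\mathbf{K}:=\diag(\mathbf{A}^2+\mathbf{\Lambda}^2,\,\mathbf{A}^2+\mathbf{\Lambda}^2)$. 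Substituting $\Sigma=\sigma^2\mathbf{I}_{2d}$, so $\sqrt{\Sigma}=\sigma\mathbf{I}_{2d}$ and $\nabla^2 f\sqrt{\Sigma}=\sigma\mathbf{M}$, into Eq.~\eqref{eq:SHGD_SDE_Simplified_Insights} turns the SDE into the linear autonomous system $dZ_t=-\mathbf{K}Z_t\,dt+\sqrt{\eta}\,\sigma\,\mathbf{M}\,dW_t$ with $Z_0=z$.

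\emph{Step 2 (closed form).} As $\mathbf{K}$ is diagonal, $e^{-\mathbf{K}t}=\mathbf{\Tilde{E}}(t)$ with $\mathbf{E}(t)=\diag(e^{-(\lambda_1^2+a_1^2)t},\dots,e^{-(\lambda_d^2+a_d^2)t})$, matching the stated $\mathbf{E}$. The variation-of-constants formula for linear SDEs (substitute $Y_t=e^{\mathbf{K}t}Z_t$ and apply Itô) yields $Z_t=e^{-\mathbf{K}t}\bigl(z+\sqrt{\eta}\,\sigma\int_0^t e^{\mathbf{K}s}\mathbf{M}\,dW_s\bigr)$, which is exactly the claimed expression with $\mathbf{\Tilde{E}}(-s)=e^{\mathbf{K}s}$. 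The mean then follows from the martingale property of the Itô integral: $\E[Z_t]=e^{-\mathbf{K}t}z=\mathbf{\Tilde{E}}(t)z$, which tends to $0$ as $t\to\infty$ provided $\lambda_i^2+a_i^2>0$ for all $i$. For the covariance, Itô's isometry gives $\cov(Z_t)=\eta\sigma^2\int_0^t e^{-\mathbf{K}(t-s)}\mathbf{M}\mathbf{M}^\top e^{-\mathbf{K}(t-s)}\,ds$; the crucial algebraic fact is $\mathbf{M}\mathbf{M}^\top=\mathbf{M}^2=\mathbf{K}$, again by commutativity of the diagonal blocks, so the integrand becomes $\mathbf{K}e^{-2\mathbf{K}(t-s)}$ and termwise integration of this diagonal matrix gives $\tfrac{\eta\sigma^2}{2}(\mathbf{I}_{2d}-e^{-2\mathbf{K}t})$, i.e.\ the stated block formula with $\bar\Sigma=\mathbf{I}_{2d}$ and $t\to\infty$ limit $\tfrac{\eta\sigma^2}{2}\bar\Sigma$.

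\emph{Main obstacle.} There is no genuine analytic difficulty once linearity is established: everything reduces to diagonal-matrix calculus and the standard Ornstein--Uhlenbeck formulas. The single point that must go through --- and the reason the dynamics decouple coordinatewise and the answer is so clean --- is the collapse of both $\mathbf{J}^\top\mathbf{J}$ and $\mathbf{M}\mathbf{M}^\top$ to the same diagonal matrix $\mathbf{K}$, which relies on the hypothesis that $\mathbf{A}$ and $\mathbf{\Lambda}$ are (simultaneously) diagonal and therefore commute. The only bookkeeping care required is matching the dimension and normalization of $\Sigma$ for the $2d$-dimensional gradient noise $(-U^x,U^y)$.
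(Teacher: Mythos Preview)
Your proposal is correct and follows essentially the same route as the paper: both specialize the simplified SHGD SDE to the quadratic game, obtain the linear constant-coefficient SDE $dZ_t=-\mathbf{K}Z_t\,dt+\sqrt{\eta}\,\sigma\,\mathbf{M}\,dW_t$, solve it by variation of constants, and compute the mean via the martingale property and the covariance via It\^o's isometry. Your covariance computation is slightly more streamlined than the paper's --- you exploit the matrix identity $\mathbf{M}\mathbf{M}^\top=\mathbf{M}^2=\mathbf{K}$ to integrate at the block-matrix level, whereas the paper expands the stochastic integral coordinate by coordinate and sums the individual variances --- but the underlying argument is the same.
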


\vspace{3.5cm}

\textbf{SEG vs SHGD: Insights}
\mycoloredbox{}{
\begin{enumerate}
    \item The curvature influences the convergence speed of SHGD, but differently than for SEG, it does not affect the asymptotic covariance matrix;
    \item If $(\lambda_i^2-a_i^2)\rho^{H}_i > a_i^2 + \lambda_i^2 - a_i$, SEG exponentially decays \textbf{faster} than SHGD. However, this means that SEG has a \textbf{larger} asymptotic variance;
    \item If $\rho^{V}_i = \frac{1}{\lambda_i + a_i}$, SEG attains its lowest asymptotic variance $\frac{\eta \sigma^2\lambda_i}{2(a_i + \lambda_i)^2}$, which is smaller than $\frac{\eta \sigma^2}{2}$ reached by SHGD only if $a_i^2+\lambda_i^2-\lambda_i>0$;
    \item If $\lambda_{i}^{2}+a_{i}^{2} \sim 0$, SHGD is essentially stuck and SEG is intrinsically faster;
    \item If $a_i<0$, $Z=0$ is a \textit{bad saddle}. While SEG can escape it, SHGD is pulled towards it.
\end{enumerate}}

\vspace{-0.2cm}

\paragraph{Conclusion} Our results allowed us to carry out a 1-to-1 comparison of the two methods, shedding light on the role of $\rho$ in influencing the behavior of SEG w.r.t. SHGD. The interaction between the curvature and the noise implies that selecting $\rho_i$ is a trade-off between the speed of convergence and the asymptotic variance. There is no clear winner between SEG and SHGD, as they are preferable for different landscapes. Figure \ref{fig:FBG_Comparison_Saddle} shows experiments that support the latter claim.

\vspace{-0.3cm}

\section{CONCLUSIONS}
\label{sec:conclusion}

\vspace{-0.4cm}

We have presented and analyzed the first formal SDE models for SGDA, SEG, and SHGD. We have shown the \textit{implicit} regularization in SEG in contrast with the \textit{explicit} use of curvature-based information in SHGD, which leads to different noise structures and asymptotic suboptimality.

\vspace{-1mm}
Furthermore, we have used these SDEs to fully characterize the evolution of the Hamiltonian under the dynamics of these algorithms in useful scenarios. We derived convergence bounds and established conditions under which stepsize schedulers guarantee convergence.

Finally, our comparative analysis of SEG and SHGD for Quadratic Games sheds light on the role of $\rho$, revealing a trade-off between convergence speed and suboptimality. We also presented the first theoretical and experimental evidence that, depending on the curvature of the loss, the optimal $\rho$ might be negative.

\vspace{-3mm}
\paragraph{Outlook.} Our framework offers a unified and structured analytical approach rooted in Itô calculus to study minimax optimizers. Our approach not only facilitates the derivation of novel insights but also enables straightforward comparisons between discrete algorithms. We believe our findings provide a foundation for future research, which may include the analysis of momentum, adaptive methods, derivation of scaling laws, and the design of new optimizers.

\vspace{-3mm}
\paragraph{Limitations.} Modeling discrete-time algorithms using SDEs hinges on Assumption \ref{ass:regularity_f_Insights}. As documented \citep{Li2021validity}, large values of the stepsize $\eta$ or the absence of specific conditions on $\nabla f$ and the noise covariance matrix can result in an approximation failure. While these shortcomings can be mitigated by increasing the order of the weak approximation,  our perspective aligns with the idea that SDEs should primarily serve as simplification tools --- to solidify our intuition --- and might not gain substantial benefits from additional complexity.

\section{ACKNOWLEDGEMENTS}
We thank the reviewers for their feedback which greatly helped us improve this manuscript. 

Additionally, we thank Dr. Junchi Yang and Prof. Dr. Niao He for the insightful discussions.

Enea Monzio Compagnoni and Aurelien Lucchi acknowledge the financial support of the Swiss National Foundation, SNF grant No 207392.
Antonio Orvieto acknowledges the financial support of the Hector Foundation.
Frank Norbert Proske acknowledges the financial support of the Norwegian Research Council (project No 274410) and MSCA4Ukraine (project No 101101923).

\bibliography{biblio}

\begin{thebibliography}{}

\bibitem[An et~al., 2020]{an2020stochastic}
An, J., Lu, J., and Ying, L. (2020).
\newblock Stochastic modified equations for the asynchronous stochastic
  gradient descent.
\newblock {\em Information and Inference: A Journal of the IMA}, 9(4):851--873.

\bibitem[Balduzzi et~al., 2018]{balduzzi2018mechanics}
Balduzzi, D., Racaniere, S., Martens, J., Foerster, J., Tuyls, K., and Graepel,
  T. (2018).
\newblock The mechanics of n-player differentiable games.
\newblock In {\em International Conference on Machine Learning}, pages
  354--363. PMLR.

\bibitem[Chaudhari and Soatto, 2018]{chaudhari2018stochastic}
Chaudhari, P. and Soatto, S. (2018).
\newblock Stochastic gradient descent performs variational inference, converges
  to limit cycles for deep networks.
\newblock In {\em 2018 Information Theory and Applications Workshop (ITA)},
  pages 1--10. IEEE.

\bibitem[Chavdarova et~al., 2019]{chavdarova2019reducing}
Chavdarova, T., Gidel, G., Fleuret, F., and Lacoste-Julien, S. (2019).
\newblock Reducing noise in gan training with variance reduced extragradient.
\newblock {\em Advances in Neural Information Processing Systems}, 32.

\bibitem[Chavdarova et~al., 2022]{chavdarova2022continuous}
Chavdarova, T., Hsieh, Y.-P., and Jordan, M.~I. (2022).
\newblock Continuous-time analysis for variational inequalities: An overview
  and desiderata.
\newblock {\em arXiv preprint arXiv:2207.07105}.

\bibitem[Chavdarova et~al., 2023]{chavdarova2021last}
Chavdarova, T., Jordan, M.~I., and Zampetakis, M. (2023).
\newblock Last-iterate convergence of saddle point optimizers via
  high-resolution differential equations.
\newblock In {\em Minimax Theory and its Applications 08 (2023), No. 2}, pages
  333--380. Heldermann Verlag.

\bibitem[Chen et~al., 2015]{chen2015convergence}
Chen, C., Ding, N., and Carin, L. (2015).
\newblock On the convergence of stochastic gradient mcmc algorithms with
  high-order integrators.
\newblock {\em Advances in neural information processing systems}, 28.

\bibitem[Chen and Rockafellar, 1997]{chen1997convergence}
Chen, G.~H. and Rockafellar, R.~T. (1997).
\newblock Convergence rates in forward--backward splitting.
\newblock {\em SIAM Journal on Optimization}, 7(2):421--444.

\bibitem[Compagnoni et~al., 2023]{compagnoni2023sde}
Compagnoni, E.~M., Biggio, L., Orvieto, A., Proske, F.~N., Kersting, H., and
  Lucchi, A. (2023).
\newblock An sde for modeling sam: Theory and insights.
\newblock In {\em International Conference on Machine Learning}, pages
  25209--25253. PMLR.

\bibitem[Du et~al., 2022]{du2022optimal}
Du, S.~S., Gidel, G., Jordan, M.~I., and Li, C.~J. (2022).
\newblock Optimal extragradient-based bilinearly-coupled saddle-point
  optimization.
\newblock {\em arXiv preprint arXiv:2206.08573}.

\bibitem[Gidel et~al., 2019]{gidel2018variational}
Gidel, G., Berard, H., Vignoud, G., Vincent, P., and Lacoste-Julien, S. (2019).
\newblock A variational inequality perspective on generative adversarial
  networks.
\newblock {\em ICLR}.

\bibitem[Goodfellow et~al., 2016]{goodfellow2016deep}
Goodfellow, I., Bengio, Y., Courville, A., and Bengio, Y. (2016).
\newblock {\em Deep learning}, volume~1.
\newblock MIT Press.

\bibitem[Gorbunov et~al., 2022]{gorbunov2022stochastic}
Gorbunov, E., Berard, H., Gidel, G., and Loizou, N. (2022).
\newblock Stochastic extragradient: General analysis and improved rates.
\newblock In {\em International Conference on Artificial Intelligence and
  Statistics}, pages 7865--7901. PMLR.

\bibitem[He et~al., 2018]{ijcai2018p307}
He, L., Meng, Q., Chen, W., Ma, Z.-M., and Liu, T.-Y. (2018).
\newblock Differential equations for modeling asynchronous algorithms.
\newblock In {\em Proceedings of the 27th International Joint Conference on
  Artificial Intelligence}, IJCAI'18, page 2220–2226. AAAI Press.

\bibitem[Hsieh et~al., 2019]{hsieh2019convergence}
Hsieh, Y.-G., Iutzeler, F., Malick, J., and Mertikopoulos, P. (2019).
\newblock On the convergence of single-call stochastic extra-gradient methods.
\newblock {\em Advances in Neural Information Processing Systems}, 32.

\bibitem[Hsieh et~al., 2020]{hsieh2020explore}
Hsieh, Y.-G., Iutzeler, F., Malick, J., and Mertikopoulos, P. (2020).
\newblock Explore aggressively, update conservatively: Stochastic extragradient
  methods with variable stepsize scaling.
\newblock {\em Advances in Neural Information Processing Systems},
  33:16223--16234.

\bibitem[Hsieh et~al., 2021]{hsieh2021limits}
Hsieh, Y.-P., Mertikopoulos, P., and Cevher, V. (2021).
\newblock The limits of min-max optimization algorithms: Convergence to
  spurious non-critical sets.
\newblock In {\em International Conference on Machine Learning}, pages
  4337--4348. PMLR.

\bibitem[Jastrzebski et~al., 2018]{jastrzkebski2017three}
Jastrzebski, S., Kenton, Z., Arpit, D., Ballas, N., Fischer, A., Bengio, Y.,
  and Storkey, A. (2018).
\newblock Three factors influencing minima in sgd.
\newblock {\em ICANN 2018}.

\bibitem[Juditsky et~al., 2011]{juditsky2011solving}
Juditsky, A., Nemirovski, A., and Tauvel, C. (2011).
\newblock Solving variational inequalities with stochastic mirror-prox
  algorithm.
\newblock {\em Stochastic Systems}, 1(1):17--58.

\bibitem[Korpelevich, 1976]{korpelevich1976extragradient}
Korpelevich, G.~M. (1976).
\newblock The extragradient method for finding saddle points and other
  problems.
\newblock {\em Matecon}, 12:747--756.

\bibitem[Kushner and Yin, 2003]{kushner2003stochastic}
Kushner, H. and Yin, G.~G. (2003).
\newblock {\em Stochastic approximation and recursive algorithms and
  applications}, volume~35.
\newblock Springer Science \& Business Media.

\bibitem[Li et~al., 2022]{li2022convergence}
Li, C.~J., Yu, Y., Loizou, N., Gidel, G., Ma, Y., Le~Roux, N., and Jordan, M.
  (2022).
\newblock On the convergence of stochastic extragradient for bilinear games
  using restarted iteration averaging.
\newblock In {\em International Conference on Artificial Intelligence and
  Statistics}, pages 9793--9826. PMLR.

\bibitem[Li et~al., 2017]{li2017stochastic}
Li, Q., Tai, C., and Weinan, E. (2017).
\newblock Stochastic modified equations and adaptive stochastic gradient
  algorithms.
\newblock In {\em International Conference on Machine Learning}, pages
  2101--2110. PMLR.

\bibitem[Li et~al., 2019]{li2019stochastic}
Li, Q., Tai, C., and Weinan, E. (2019).
\newblock Stochastic modified equations and dynamics of stochastic gradient
  algorithms i: Mathematical foundations.
\newblock {\em The Journal of Machine Learning Research}, 20(1):1474--1520.

\bibitem[Li et~al., 2021]{Li2021validity}
Li, Z., Malladi, S., and Arora, S. (2021).
\newblock On the validity of modeling {SGD} with stochastic differential
  equations ({SDE}s).
\newblock In Beygelzimer, A., Dauphin, Y., Liang, P., and Vaughan, J.~W.,
  editors, {\em Advances in Neural Information Processing Systems}.

\bibitem[Ljung et~al., 2012]{ljung2012stochastic}
Ljung, L., Pflug, G., and Walk, H. (2012).
\newblock {\em Stochastic approximation and optimization of random systems},
  volume~17.
\newblock Birkh{\"a}user.

\bibitem[Loizou et~al., 2021]{loizou2021stochastic}
Loizou, N., Berard, H., Gidel, G., Mitliagkas, I., and Lacoste-Julien, S.
  (2021).
\newblock Stochastic gradient descent-ascent and consensus optimization for
  smooth games: Convergence analysis under expected co-coercivity.
\newblock {\em Advances in Neural Information Processing Systems},
  34:19095--19108.

\bibitem[Loizou et~al., 2020]{loizou2020stochastic}
Loizou, N., Berard, H., Jolicoeur-Martineau, A., Vincent, P., Lacoste-Julien,
  S., and Mitliagkas, I. (2020).
\newblock Stochastic hamiltonian gradient methods for smooth games.
\newblock In {\em International Conference on Machine Learning}, pages
  6370--6381. PMLR.

\bibitem[Lu, 2022]{lu2022sr}
Lu, H. (2022).
\newblock An o (sr)-resolution ode framework for understanding discrete-time
  algorithms and applications to the linear convergence of minimax problems.
\newblock {\em Mathematical Programming}, 194(1-2):1061--1112.

\bibitem[Malladi et~al., 2022]{Malladi2022AdamSDE}
Malladi, S., Lyu, K., Panigrahi, A., and Arora, S. (2022).
\newblock On the {SDEs} and scaling rules for adaptive gradient algorithms.
\newblock In {\em Advances in Neural Information Processing Systems}.

\bibitem[Mandt et~al., 2015]{mandt2015continuous}
Mandt, S., Hoffman, M.~D., Blei, D.~M., et~al. (2015).
\newblock Continuous-time limit of stochastic gradient descent revisited.
\newblock {\em NIPS-2015}.

\bibitem[Mao, 2007]{mao2007stochastic}
Mao, X. (2007).
\newblock {\em Stochastic differential equations and applications}.
\newblock Elsevier.

\bibitem[Mil’shtein, 1986]{mil1986weak}
Mil’shtein, G. (1986).
\newblock Weak approximation of solutions of systems of stochastic differential
  equations.
\newblock {\em Theory of Probability \& Its Applications}, 30(4):750--766.

\bibitem[Mishchenko et~al., 2020]{mishchenko2020revisiting}
Mishchenko, K., Kovalev, D., Shulgin, E., Richt{\'a}rik, P., and Malitsky, Y.
  (2020).
\newblock Revisiting stochastic extragradient.
\newblock In {\em International Conference on Artificial Intelligence and
  Statistics}, pages 4573--4582. PMLR.

\bibitem[Nemirovski et~al., 2009]{nemirovski2009robust}
Nemirovski, A., Juditsky, A., Lan, G., and Shapiro, A. (2009).
\newblock Robust stochastic approximation approach to stochastic programming.
\newblock {\em SIAM Journal on optimization}, 19(4):1574--1609.

\bibitem[Noor, 2003]{noor2003new}
Noor, M.~A. (2003).
\newblock New extragradient-type methods for general variational inequalities.
\newblock {\em Journal of Mathematical Analysis and Applications},
  277(2):379--394.

\bibitem[{\O}ksendal, 1990]{oksendal1990stochastic}
{\O}ksendal, B. (1990).
\newblock When is a stochastic integral a time change of a diffusion?
\newblock {\em Journal of theoretical probability}, 3(2):207--226.

\bibitem[Orvieto and Lucchi, 2019]{orvieto2019continuous}
Orvieto, A. and Lucchi, A. (2019).
\newblock Continuous-time models for stochastic optimization algorithms.
\newblock {\em Advances in Neural Information Processing Systems}, 32.

\bibitem[Ryu et~al., 2019]{ryu2019ode}
Ryu, E.~K., Yuan, K., and Yin, W. (2019).
\newblock Ode analysis of stochastic gradient methods with optimism and
  anchoring for minimax problems.
\newblock {\em arXiv preprint arXiv:1905.10899}.

\bibitem[Su et~al., 2014]{Su2014nesterov}
Su, W., Boyd, S., and Candes, E. (2014).
\newblock A differential equation for modeling {Nesterov’s} accelerated
  gradient method: Theory and insights.
\newblock In {\em Advances in Neural Information Processing Systems}.

\bibitem[Xie et~al., 2021]{xie2020diffusion}
Xie, Z., Sato, I., and Sugiyama, M. (2021).
\newblock A diffusion theory for deep learning dynamics: Stochastic gradient
  descent exponentially favors flat minima.
\newblock In {\em International Conference on Learning Representations}.

\bibitem[Xu et~al., 2022]{xu2022experimental}
Xu, M. et~al. (2022).
\newblock {\em Experimental Evaluation of Iterative Methods for Games}.
\newblock PhD thesis, Johns Hopkins University.

\bibitem[Zhao et~al., 2022]{zhao2022batch}
Zhao, J., Lucchi, A., Proske, F.~N., Orvieto, A., and Kersting, H. (2022).
\newblock Batch size selection by stochastic optimal control.
\newblock In {\em Has it Trained Yet? NeurIPS 2022 Workshop}.

\bibitem[Zhu et~al., 2019]{zhu2018anisotropic}
Zhu, Z., Wu, J., Yu, B., Wu, L., and Ma, J. (2019).
\newblock The anisotropic noise in stochastic gradient descent: Its behavior of
  escaping from sharp minima and regularization effects.
\newblock {\em ICML 2019}.

\end{thebibliography}


\clearpage
\appendix

\thispagestyle{empty}

\onecolumn

{\Large \textbf{APPENDIX}}

\section{ADDITIONAL RELATED WORKS} \label{app:AddRelWorks}

SGDA is one of the most popular algorithms for solving min-max optimization problems that arise in machine learning. Since it does not converge even on simple landscapes \citep{chen1997convergence,noor2003new,gidel2018variational,loizou2021stochastic}, researchers have derived several advanced extensions such as the Extragradient method \citep{korpelevich1976extragradient} and variants with arbitrary sampling and variance \citep{gorbunov2022stochastic}, as well as alternative optimizers such as (Stochastic) Hamiltonian Gradient Descent \citep{balduzzi2018mechanics,loizou2020stochastic}.

Stochastic ExtraGradient (SEG) is a prominent extension of SGDA that has been studied extensively in recent years. Indeed, many versions have been proposed and studied: \citep{nemirovski2009robust,juditsky2011solving} studied Independent-Samples SEG, while \cite{mishchenko2020revisiting} and \cite{li2022convergence} showed that the average iterate of Same-Sample SEG converges to a neighbor of the optimum. While \citep{chavdarova2019reducing} showed that same-stepsize SEG diverges in the unconstrained monotone case, \cite{mishchenko2020revisiting,hsieh2020explore} focused on two-scale SEG, showcasing how this design choice is crucial by deriving schedulers that guarantee convergence. \cite{hsieh2019convergence} studies the convergence of variations of SEG engineered to mitigate the cost of the extra gradient. Finally, \cite{gorbunov2022stochastic} provides a rich analysis that encompasses several variants of SEG with different choices of stepsizes and sampling techniques, and ends up designing new promising methods. The latter endeavor is key for future research: As highlighted by \citep{hsieh2019convergence}, existing min-max algorithms may be subject to inescapable convergence failures in important cases.

Among other works, we refer the reader interested in previous analyses of bilinear and quadratic games to \citep{hsieh2021limits,li2022convergence, xu2022experimental, chavdarova2021last}: These give a detailed presentation of the behavior of GDA, EG and HGD, and their stochastic versions on such tasks.

We highlight that some convergence conditions and some of the convergence bounds derived in the literature for SEG (see among others \citep{hsieh2020explore,mishchenko2020revisiting,hsieh2019convergence,gorbunov2022stochastic,lu2022sr,li2022convergence}) and SHGD (see \citep{loizou2020stochastic,loizou2021stochastic}) are somehow related to those we present in this paper.


\section{STOCHASTIC CALCULUS}
\label{sec:stoc_cal}
In this section, we summarize some important results in the analysis of Stochastic Differential Equations~\cite{mao2007stochastic,oksendal1990stochastic}. The notation and the results in this section will be used extensively in all proofs in this paper. We assume the reader to have some familiarity with Brownian motion and with the definition of stochastic integral (Ch.~1.4 and 1.5 in~\cite{mao2007stochastic}).

\subsection{Itô's Lemma}
\label{subsec:ito}

We start with some notation: Let $\left(\Omega, \mathcal{F}, \{\mathcal{F}_t\}_{t\ge0}, \mathbb{P}\right)$ be a filtered probability space. We say that an event $E\in\mathcal{F}$ holds almost surely (a.s.) in this space if $\mathbb{P}(E)=1$. We call $\mathcal{L}^p([a,b],\R^d)$, with $p>0$, the family of $\R^d$-valued $\mathcal{F}_t$-adapted processes $\{f_t\}_{a\le t\le b}$ such that $$\int_a^b\|f_t\|^p dt\le\infty.$$ Moreover, we denote by   $\mathcal{M}^p([a,b],\R^d)$, with $p>0$, the family of $\R^d$-valued processes $\{f_t\}_{a\le t\le b}$ in $\mathcal{L}([a,b],\R^d)$ such that $\E\left[\int_a^b\|f_t\|^p dt\right]\le\infty$. 
We will write $h \in \mathcal{L}^p\left(\R_{+},\R^d\right)$, with $p>0$, if $h \in \mathcal{L}^p\left([0,T],\R^d\right)$ for every $T>0$. Similar definitions hold for matrix-valued functions using the Frobenius norm $\|A\| := \sqrt{\sum_{ij}|A_{ij}|^2}$.

Let $W=\{W_t\}_{t\ge0}$ be a one-dimensional Brownian motion defined on our probability space and let $X=\{X_t\}_{t\ge0}$ be an $\mathcal{F}_t$-adapted process taking values on $\R^d$.

\begin{definition}
Let the \textit{drift} be $b \in \mathcal{L}^1\left(\R_{+},\R^d\right)$ and the diffusion term be $\sigma \in \mathcal{L}^2\left(\R_{+},\R^{d\times m}\right)$. $X_t$ is an Itô process if it takes the form
\vspace{-3mm}
$$X_t = x_0 + \int_0^t b_sds + \int_0^t \sigma_s dW_s.$$
We shall say that $X_t$ has the stochastic differential
\begin{equation}
    dX_t = b_t dt + \sigma_t dW_t.
    \label{eq:stoc_diff}
\end{equation}
\label{def:SDE_ito_process}
\end{definition}
\vspace{-5mm}
\vspace{2mm}
\begin{theorem}[Itô's Lemma]
Let $X_t$ be an Itô process with stochastic differential $dX_t = b_t dt + \sigma_t dW_t$. Let $f\left(x,t\right)$ be twice continuously differentiable in $x$ and continuously differentiable in $t$, taking values in $\R$. Then $f(X_t,t)$ is again an Itô process with stochastic differential
\begin{equation}
        df(X_t,t) = \partial_t f (X_t, t)) dt + \langle\nabla f(X_t,t), b_t \rangle dt +  \frac{1}{2}\tr\left(\sigma_t\sigma_t^{\top} \nabla^2 f(X_t,t) \right)dt + \langle\nabla f(X_t,t),\sigma_t\rangle dW_t.
\end{equation}

\label{lemma:SDE_ito}
\end{theorem}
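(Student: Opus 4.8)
The plan is to run the classical partition-and-Taylor-expansion argument in three stages.

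\emph{Reduction.} Fix $T>0$. By localization — set $\tau_n:=\inf\{t\ge0:|X_t|\ge n\}$, prove the identity for the stopped process $X^{\tau_n}$, and let $n\to\infty$ — together with a mollification of $f$, it suffices to establish the formula on $[0,T]$ when $f,\partial_t f,\nabla f,\nabla^2 f$ are bounded and uniformly continuous and when $b,\sigma$ are bounded. The general case ($b\in\mathcal{L}^1$, $\sigma\in\mathcal{L}^2$) then follows by approximating $b,\sigma$ in the relevant norms by bounded simple processes and using the continuity/isometry of the Itô integral together with dominated convergence for the Lebesgue integrals.

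\emph{Expansion.} Fix $t\le T$ and a partition $0=t_0<\dots<t_N=t$ of mesh $\delta$. Telescoping $f(X_t,t)-f(X_0,0)=\sum_k\bigl[f(X_{t_{k+1}},t_{k+1})-f(X_{t_k},t_k)\bigr]$ and Taylor-expanding each increment to first order in time and second order in space about $(X_{t_k},t_k)$ gives
\[
  f(X_{t_{k+1}},t_{k+1})-f(X_{t_k},t_k)=\partial_t f\,\Delta t_k+\langle\nabla f,\Delta X_k\rangle+\tfrac12\,\Delta X_k^{\top}\nabla^2 f\,\Delta X_k+R_k ,
\]
with $\Delta t_k=t_{k+1}-t_k$, $\Delta X_k=\int_{t_k}^{t_{k+1}}b_s\,ds+\int_{t_k}^{t_{k+1}}\sigma_s\,dW_s$, and $R_k$ the higher-order remainder. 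The first-order sums converge as expected: $\sum_k\partial_t f\,\Delta t_k\to\int_0^t\partial_t f\,ds$, the drift part of $\sum_k\langle\nabla f,\Delta X_k\rangle\to\int_0^t\langle\nabla f,b_s\rangle\,ds$, and its diffusion part $\to\int_0^t\langle\nabla f(X_s,s),\sigma_s\rangle\,dW_s$ in $L^2$ by continuity of the Itô integral. In the quadratic sum the drift$\times$drift and drift$\times$diffusion contributions are $O(\delta)$ in $L^1$ and disappear, leaving $\tfrac12\sum_k\tr\!\bigl(\sigma_{t_k}\sigma_{t_k}^{\top}\nabla^2 f(X_{t_k},t_k)\bigr)(\Delta W_k)^2$ up to negligible terms.

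\emph{The main obstacle.} The crux is to show that this last sum converges to $\tfrac12\int_0^t\tr\!\bigl(\sigma_s\sigma_s^{\top}\nabla^2 f(X_s,s)\bigr)\,ds$ — the source of the characteristic second-order correction — and that $\sum_k R_k\to0$. Writing $(\Delta W_k)^2=\Delta t_k+\bigl[(\Delta W_k)^2-\Delta t_k\bigr]$, the $\Delta t_k$ piece gives a Riemann sum with the correct limit, while for the fluctuation piece one shows $\E\bigl|\sum_k g_{t_k}[(\Delta W_k)^2-\Delta t_k]\bigr|^2\to0$, the cross terms vanishing because $g_{t_k}$ is $\mathcal{F}_{t_k}$-measurable and hence independent of $(\Delta W_k)^2-\Delta t_k$ and each diagonal term being $O(\Delta t_k^2)$. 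For the remainder, uniform continuity of $\nabla^2 f$ with modulus $\omega$ gives $|R_k|\le\omega(|\Delta X_k|)\,|\Delta X_k|^2+C(|\Delta X_k|\,\Delta t_k+\Delta t_k^2)$; since $\sum_k|\Delta X_k|^2$ is bounded in probability (it converges to the quadratic variation of $\int\sigma\,dW$) while $\max_k|\Delta X_k|\to0$ by path continuity, $\sum_k R_k\to0$ in probability. Passing to a subsequence of partitions along which all the convergences hold almost surely and reassembling the pieces yields the stated differential in the reduced setting; undoing the localization and the $b,\sigma$ approximation finishes the proof. I expect the uniform $L^2$-control of $\sum_k g_{t_k}[(\Delta W_k)^2-\Delta t_k]$ and the bound on $\sum_k R_k$ to be the main technical burden.
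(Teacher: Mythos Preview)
Your sketch is a correct outline of the classical partition-and-Taylor-expansion proof of It\^o's Lemma. However, the paper does not prove this statement at all: it is merely stated as a standard result in the background section on stochastic calculus, with references to~\cite{mao2007stochastic,oksendal1990stochastic}. So there is nothing to compare against --- your proposal supplies a proof where the paper simply cites one.
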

\subsection{Stochastic Differential Equations}
\label{subsec:SDE}
Stochastic Differential Equations (SDEs) are equations of the form
\begin{equation*}
    dX_t = b(X_t,t)dt + \sigma(X_t,t) dW_t.
\end{equation*}

First of all, we need to define what it means for a stochastic process $X=\{X_t\}_{t\ge0}$ with values in $\R^d$ to solve an SDE.

\begin{definition}
Let $X_t$ be as above with deterministic initial condition $X_0 = x_0$. Assume $b:\R^d\times[0,T]\to\R^d$ and $\sigma:\R^d\times[0,T]\to\R^{d\times m}$ are Borel measurable; $X_t$ is called a solution to the corresponding SDE if
\begin{enumerate}[leftmargin=*]
    \item $X_t$ is continuous and $\mathcal{F}_t$-adapted;
    \item $b \in \mathcal{L}^1\left([0,T],\R^d\right)$;
    \item $\sigma \in \mathcal{L}^2\left([0,T],\R^{d\times m}\right)$;
    \item For every $t \in [0,T]$
    $$X_t = x_0 + \int_0^t b(X_s,s)ds + \int_0^t\sigma(X_s,s) dW(s) \ \ \ a.s.$$
\end{enumerate}
Moreover, the solution $X_t$ is said to be unique if any other solution $X^\star_t$ is such that
$$\mathbb{P}\left\{X_t = X^\star_t, \text{ for all } 0\le t\le T\right\}=1.$$
\label{def:SDE_sol}
\end{definition}
\vspace{-5mm}
Notice that since the solution to an SDE is an Itô process, we can use Itô's Lemma. The following theorem gives a sufficient condition on $b$ and $\sigma$ for the existence of a solution to the corresponding SDE.

\begin{theorem}
Assume that there exist two positive constants $\bar K$ and $K$ such that
\begin{enumerate}
    \item (Global Lipschitz condition) for all $x,y\in \R^d$ and $t\in [0,T]$
    $$\max \{\|b(x,t) - b(y,t)\|^2,  \ \|\sigma(x,t)-\sigma(y,t)\|^2\} \le \bar K \|x-y\|^2;$$
    \item (Linear growth condition) for all $x\in \R^d$ and $t\in [0,T]$
    $$\max\{\|b(x,t)\|^2, \ \|\sigma(x,t)\|^2\}\le K(1+ \|x\|^2).$$
\end{enumerate}
Then, there exists a unique solution $X_t$ to the corresponding SDE, and $X_t\in \mathcal{M}^2([0,T],\R^d).$
\label{thm:SDE_existence_uniqueness_global}
\end{theorem}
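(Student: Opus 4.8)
The final statement to prove is Theorem~\ref{thm:SDE_existence_uniqueness_global}: under a global Lipschitz condition and a linear growth condition on $b$ and $\sigma$, the SDE $dX_t = b(X_t,t)dt + \sigma(X_t,t)dW_t$ with deterministic initial datum $x_0$ admits a unique solution $X_t \in \mathcal{M}^2([0,T],\R^d)$. The plan is the classical Picard--Lindelöf-style fixed point argument, adapted to the stochastic setting via the Itô isometry and Doob's maximal inequality. First I would set up the Picard iteration: define $X_t^{(0)} \equiv x_0$ and, for $n\ge 0$,
\begin{equation*}
X_t^{(n+1)} = x_0 + \int_0^t b(X_s^{(n)},s)\,ds + \int_0^t \sigma(X_s^{(n)},s)\,dW_s.
\end{equation*}
Using the linear growth condition together with an induction, I would first check that each $X^{(n)}$ lies in $\mathcal{M}^2([0,T],\R^d)$ and, in particular, that $\sup_n \E\big[\sup_{0\le t\le T}\|X_t^{(n)}\|^2\big] < \infty$; the key tools here are the elementary bound $\|a+b+c\|^2 \le 3(\|a\|^2+\|b\|^2+\|c\|^2)$, the Cauchy--Schwarz inequality for the drift integral, the Itô isometry (and Doob's $L^2$ maximal inequality) for the stochastic integral, and Grönwall's lemma.

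Next I would establish contraction in a suitable sense. Setting $\Delta_t^{(n)} := \E\big[\sup_{0\le s\le t}\|X_s^{(n+1)} - X_s^{(n)}\|^2\big]$, the global Lipschitz hypothesis, Cauchy--Schwarz, Itô isometry and Doob's inequality yield an estimate of the form $\Delta_t^{(n)} \le C\int_0^t \Delta_s^{(n-1)}\,ds$ for a constant $C = C(\bar K, T)$. Iterating this gives $\Delta_T^{(n)} \le (CT)^n \Delta_T^{(0)}/n!$, which is summable; hence $\{X^{(n)}\}$ is Cauchy in the Banach space of continuous $\mathcal{F}_t$-adapted processes with norm $\big(\E[\sup_{t\le T}\|\cdot\|^2]\big)^{1/2}$, so it converges to a limit $X$. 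Along a subsequence the convergence is uniform in $t$ a.s., so $X$ is continuous and $\mathcal{F}_t$-adapted, and passing to the limit inside both integrals (justified again by Lipschitz continuity and the isometry) shows $X$ solves the SDE; the $\mathcal{M}^2$ membership follows from the uniform-in-$n$ bound above via Fatou.

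For uniqueness, given two solutions $X, X^\star$ in $\mathcal{M}^2$, I would apply the same Lipschitz-plus-isometry-plus-Doob estimate to $\E[\sup_{s\le t}\|X_s - X_s^\star\|^2]$ to get $\varphi(t) \le C\int_0^t \varphi(s)\,ds$ with $\varphi(0)=0$, and conclude $\varphi \equiv 0$ by Grönwall, which gives the indistinguishability statement $\mathbb{P}\{X_t = X_t^\star \ \forall t\in[0,T]\}=1$ by continuity of paths. The main technical obstacle, and the step requiring the most care, is the a priori $\mathcal{M}^2$ bound on the Picard iterates: one must apply Grönwall correctly to a function of $t$ that is itself defined as a supremum, controlling the stochastic integral term by Doob's maximal inequality before invoking the isometry, and one should either first prove the estimate on a truncated/stopped version and then remove the stopping time by monotone convergence, or argue directly that the relevant expectations are finite at each stage of the induction so that Grönwall applies. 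Everything else is a routine, if somewhat lengthy, chain of standard inequalities.
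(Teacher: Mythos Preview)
Your proposal is correct and follows the standard Picard iteration argument found in the textbooks the paper cites (Mao~2007, {\O}ksendal~1990). Note, however, that the paper itself does not prove this theorem: it is stated in the appendix as a background result from the stochastic calculus literature, with no proof given. Your outline is precisely the classical proof one finds in those references, so there is nothing to compare against beyond saying you have reproduced the textbook argument faithfully.
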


\paragraph{Numerical approximation.} Often, SDEs are solved numerically. The simplest algorithm to provide a sample path $(\hat x_k)_{k\ge 0}$ for $X_t$, so that $X_{k\Delta t} \approxeq \hat{x}_k$ for some small $\Delta t$ and for all $k\Delta t\le M$ is called Euler-Maruyama (Algorithm~\ref{algo:EulerMaruryama_SDE}). For more details on this integration method and its approximation properties, the reader can check~\cite{mao2007stochastic}.
\begin{algorithm}
\caption{Euler-Maruyama Integration Method for SDEs}
    \label{algo:EulerMaruryama_SDE}
\begin{algorithmic}
    \INPUT{The drift $b$, the volatility $\sigma$, and the initial condition $x_0$}.
    \STATE Fix a stepsize $\Delta t$;
    \STATE Initialize $\hat x_0 = x_0$;
    \STATE $k=0$;
    \WHILE{$k \le \left\lfloor\frac{T}{\Delta t}\right\rfloor$}
        \STATE Sample some $d$-dimensional Gaussian noise $Z_k\sim\mathcal{N}(0,I_d)$;
        \STATE Compute $\hat x_{k+1} = \hat x_k + \Delta t \ b(\hat x_k,k \Delta t)+ \sqrt{\Delta t} \ \sigma(\hat x_k,k \Delta t) Z_k;$
        \STATE $k=k+1$;
    \ENDWHILE
    \OUTPUT The approximated sample path $(\hat x_k)_{0\le k\le\left\lfloor\frac{T}{\Delta t}\right\rfloor }$.
\end{algorithmic}
\end{algorithm}

\section{THEORETICAL FRAMEWORK - WEAK APPROXIMATION}\label{sec:theor}
In this section, we introduce the theoretical framework used in the paper, together with its assumptions and notations.

First of all, many proofs will use Taylor expansions in powers of $ \eta $. For ease of notation,  we introduce the shorthand that whenever we write $ \mathcal{O}\left(\eta^\alpha\right) $, we mean that there exists a function $ K(z) \in G $ such that the error terms are bounded by $ K(z) \eta^\alpha $. For example, we write
$$
b(z+\eta)=b_0(z)+\eta b_1(z)+\mathcal{O}\left(\eta^2\right)
$$
to mean: there exists $ K \in G $ such that
$$
\left|b(z+\eta)-b_0(z)-\eta b_1(z)\right| \leq K(z) \eta^2 .
$$
Additionally, we introduce the following shorthand:

\begin{itemize}
\item A multi-index is $\alpha=\left(\alpha_1, \alpha_2, \ldots, \alpha_n\right)$ such that $\alpha_j \in\{0,1,2, \ldots\}$;
\item $|\alpha|:=\alpha_1+\alpha_2+\cdots+\alpha_n$;
\item $\alpha !:=\alpha_{1} ! \alpha_{2} ! \cdots \alpha_{n} !$;
\item For $z=\left(z_1, z_2, \ldots, z_n\right) \in \mathbb{R}^n$, we define $z^\alpha:=z_1^{\alpha_1} z_2^{\alpha_2} \cdots z_n^{\alpha_n}$;
\item For a multi-index $\beta$, $\partial_{\beta}^{|\beta|}f(z) := \frac{\partial^{|\beta|}}{\partial^{\beta_1}_{z_1}\partial^{\beta_2}_{z_2} \cdots \partial^{\beta_n}_{z_n} }f(z)$;
\item We also denote the partial derivative with respect to $ z_{i} $ by $ \partial_{e_i} $. \\
\end{itemize}

\begin{definition}[G Set]
    Let $G$ denote the set of continuous functions $\mathbb{R}^{2d} \rightarrow \mathbb{R}$ of at most polynomial growth, i.e.~$g \in G$ if there exists positive integers $\nu_1, \nu_2>0$ such that $|g(z)| \leq \nu_1\left(1+|z|^{2 \nu_2}\right)$, for all $z \in \mathbb{R}^{2d}$.
\end{definition}

The next results are inspired by Theorem 1 of \cite{li2017stochastic} and are derived under some regularity assumption on the function $f$.

\begin{mybox}{gray}
\begin{assumption}
Assume that the following conditions on $f, f_i$, and their gradients are satisfied:
\begin{itemize}
\item $\nabla f, \nabla f_i $ satisfy a Lipschitz condition: There exists $ L>0 $ such that
$$
|\nabla f(u)-\nabla f(v)|+\sum_{i=1}^N\left|\nabla f_i(u)-\nabla f_i(v)\right| \leq L|u-v|;
$$
\item $ f, f_i $ and its partial derivatives up to order 7 belong to $ G $;
\item $ \nabla f, \nabla f_i $ satisfy a growth condition: There exists $ M>0 $ such that
$$
|\nabla f(z)|+\sum_{i=1}^N\left|\nabla f_i(z)\right| \leq M(1+|z|).
$$
\end{itemize}
\label{ass:regularity_f}
\end{assumption}
\end{mybox}

 \begin{mybox}{gray}
\begin{lemma}[Lemma 1 \cite{li2017stochastic}] \label{lemma:li1}
Let $ 0<\eta<1 $. Consider a stochastic process $ Z_t, t \geq 0 $ satisfying the SDE
$$
d Z_t=b\left(Z_t\right)dt+\sqrt{\eta} \sigma\left(Z_t\right) d W_t
$$
with $ Z_0=z \in \mathbb{R}^{2 d}$ and $ b, \sigma $ together with their derivatives belong to $ G $. Define the one-step difference $ \Delta=Z_\eta-z $, and indicate the $i$-th component of $\Delta$ with $\Delta_i$. Then we have

\begin{enumerate}
\item $ \E \Delta_{i}=b_{i} \eta+\frac{1}{2}\left[\sum_{j=1}^d b_{j} \partial_{e_j} b_{i}\right] \eta^2+\mathcal{O}\left(\eta^3\right) \quad \forall i = 1, \ldots, 2 d$;
\item $ \E \Delta_{i} \Delta_{j}=\left[b_{i} b_{j}+\sigma \sigma_{(i j)}^T\right] \eta^2+\mathcal{O}\left(\eta^3\right) \quad \forall i,j = 1, \ldots, 2d$;
\item $ \E \prod_{j=1}^s \Delta_{\left(i_j\right)}=\mathcal{O}\left(\eta^3\right) $ for all $ s \geq 3, i_j=1, \ldots,2  d $.
\end{enumerate}
All functions above are evaluated at $ z $.
\end{lemma}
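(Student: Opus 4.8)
The plan is to funnel all three claims through one \emph{Kolmogorov-type expansion} obtained by iterating It\^o's Lemma (Theorem~\ref{lemma:SDE_ito}). Write the (stepsize-dependent) generator of the SDE as $\mathcal{L}=\mathcal{A}+\eta\mathcal{B}$, with $\mathcal{A}g(z):=\langle b(z),\nabla g(z)\rangle$ and $\mathcal{B}g(z):=\tfrac12\tr\!\big(\sigma(z)\sigma(z)^{\top}\nabla^2 g(z)\big)$. For a sufficiently smooth $g$ with $g$ and its derivatives in $G$, It\^o's Lemma gives $g(Z_t)=g(z)+\int_0^t\mathcal{L}g(Z_s)\,ds+\sqrt{\eta}\int_0^t\langle\nabla g(Z_s),\sigma(Z_s)\,dW_s\rangle$; since $\nabla g,\sigma\in G$ and $Z$ has finite moments on $[0,\eta]$, the It\^o integral is a true martingale, so $\E g(Z_t)=g(z)+\int_0^t\E[\mathcal{L}g(Z_s)]\,ds$. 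Applying the same identity to $\mathcal{L}g$ and then to $\mathcal{L}^2g$ (both still admissible because $b,\sigma$ and their derivatives lie in $G$), and bounding $\E[\mathcal{L}^3g(Z_u)]$ by a $G$-function uniformly in $u\le\eta$, I get
\begin{equation*}
\E g(Z_\eta)=g(z)+\eta\,\mathcal{L}g(z)+\tfrac{\eta^2}{2}\,\mathcal{L}^2 g(z)+\mathcal{O}(\eta^3).
\end{equation*}
Expanding $\mathcal{L}=\mathcal{A}+\eta\mathcal{B}$ and discarding $\mathcal{O}(\eta^3)$ terms, this reads $\E g(Z_\eta)=g(z)+\eta\,\mathcal{A}g(z)+\eta^2\big(\mathcal{B}g(z)+\tfrac12\mathcal{A}^2 g(z)\big)+\mathcal{O}(\eta^3)$.

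Claims~1 and~2 then follow by specializing $g$. For claim~1, take $g(x)=x_i-z_i$: here $\mathcal{A}g(z)=b_i(z)$, $\mathcal{B}g\equiv0$ (second derivatives vanish), and $\mathcal{A}^2g(z)=\mathcal{A}b_i(z)=\sum_{j}b_j\partial_{e_j}b_i(z)$, which gives exactly $\E\Delta_i=b_i\eta+\tfrac12\big(\sum_j b_j\partial_{e_j}b_i\big)\eta^2+\mathcal{O}(\eta^3)$. For claim~2, take $g(x)=(x_i-z_i)(x_j-z_j)$: then $g(z)=0$ and $\nabla g(z)=0$, so the $\mathcal{O}(\eta)$ term drops, and an elementary computation of the Hessian and of $\mathcal{A}^2g$ yields $\mathcal{B}g(z)=(\sigma\sigma^{\top})_{ij}(z)$ and $\mathcal{A}^2g(z)=2\,b_i b_j(z)$, hence $\E[\Delta_i\Delta_j]=\big(b_i b_j+(\sigma\sigma^{\top})_{ij}\big)\eta^2+\mathcal{O}(\eta^3)$. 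All $\mathcal{A},\mathcal{B}$ evaluations here are trivial since the test functions are polynomials of degree at most $2$; importantly, choosing $g$ centered at $z$ is what kills the spurious $\mathcal{O}(\eta^{3/2})$ drift–martingale cross terms one would see from a brute-force expansion of $\Delta_i\Delta_j$.

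For claim~3 I would bypass any cancellation bookkeeping and argue by crude size estimates. Standard a priori bounds give $\sup_{0\le s\le\eta}\E\|Z_s\|^{2m}\le K(z)$ for every $m$, uniformly in $\eta\in(0,1)$, using the linear growth of $b,\sigma$. Splitting $\Delta_i=\int_0^\eta b_i(Z_s)\,ds+\sqrt{\eta}\int_0^\eta(\sigma(Z_s)\,dW_s)_i$, Minkowski's integral inequality bounds the drift part in $L^p$ by $\mathcal{O}(\eta)$, and the Burkholder--Davis--Gundy inequality bounds the $L^p$-norm of the It\^o integral by $\mathcal{O}(\sqrt{\eta})$, so $\|\Delta_i\|_{L^p}=\mathcal{O}(\eta)$ for every $p$. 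The generalized H\"older inequality then gives $\big|\E\prod_{j=1}^s\Delta_{(i_j)}\big|\le\prod_{j=1}^s\|\Delta_{(i_j)}\|_{L^s}=\mathcal{O}(\eta^s)=\mathcal{O}(\eta^3)$ whenever $s\ge3$, with a $G$-function constant.

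The main obstacle is making the $\mathcal{O}(\cdot)$ symbols legitimate in the paper's sense, i.e.\ with $G$-function constants independent of $\eta$. This rests on (i) the uniform-in-$\eta$ moment estimates for $Z_s$ on $[0,\eta]$ (via Gr\"onwall), where the Lipschitz and linear-growth parts of Assumption~\ref{ass:regularity_f} enter, and (ii) controlling the remainder through $\E[\mathcal{L}^3 g(Z_u)]$, which forces several differentiations of $b$ and $\sigma$; combined with the two derivatives consumed at each of the three applications of $\mathcal{L}$, this is exactly why Assumption~\ref{ass:regularity_f} asks for polynomial growth of $f,f_i$ and their derivatives up to order~$7$. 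A secondary, purely mechanical point is that $\mathcal{L}$ itself carries a factor $\eta$, so in expanding $\eta\mathcal{L}g+\tfrac{\eta^2}{2}\mathcal{L}^2 g$ one must retain the $\eta^2\mathcal{B}g$ contribution (which supplies the $\sigma\sigma^{\top}$ term in claim~2) while correctly dropping the genuinely higher-order pieces.
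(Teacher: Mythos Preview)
The paper does not prove this lemma at all: it is quoted as Lemma~1 of \cite{li2017stochastic} and used as a black box throughout Section~\ref{sec:theor}. Your proposal is correct and is in fact the standard argument behind the cited result --- iterate Dynkin's formula to obtain the semigroup expansion $\E g(Z_\eta)=g(z)+\eta\mathcal{L}g(z)+\tfrac{\eta^2}{2}\mathcal{L}^2g(z)+\mathcal{O}(\eta^3)$, then specialize to the monomials $g(x)=x_i-z_i$ and $g(x)=(x_i-z_i)(x_j-z_j)$, and handle the $s\ge3$ case by crude $L^p$ estimates via BDG and H\"older. Your bookkeeping of the $\eta$-dependent generator $\mathcal{L}=\mathcal{A}+\eta\mathcal{B}$ is right, and your identification of where the regularity budget (derivatives up to order~7 in $G$) is spent --- three applications of $\mathcal{L}$ on a degree-$2$ test function, plus the remainder control --- matches the hypotheses of Assumption~\ref{ass:regularity_f}.

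One cosmetic remark: the statement as printed has $\sum_{j=1}^d$ in item~1, but since $Z_t\in\R^{2d}$ the sum should run to $2d$; your computation of $\mathcal{A}^2g=\mathcal{A}b_i=\sum_j b_j\partial_{e_j}b_i$ implicitly (and correctly) sums over all $2d$ coordinates.
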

\end{mybox}

\begin{mybox}{gray}
\begin{theorem}[Theorem 2 and Lemma 5, \cite{mil1986weak}]\label{thm:mils}
Let Assumption \ref{ass:regularity_f} hold and let us define $\bar{\Delta}=z_1-z$ to be the increment in the discrete-time algorithm, and indicate the $i$-th component of $\bar{\Delta}$ with $\bar{\Delta}_i$. If in addition there exists $K_1, K_2, K_3, K_4 \in G$ so that

\begin{enumerate}
\item $\left|\E \Delta_{i}-\E \bar{\Delta}_{i}\right| \leq K_1(z) \eta^{2}, \quad \forall i = 1, \ldots,2d$;

\item $\left|\E \Delta_{i} \Delta_{j} - \E \bar{\Delta}_{i} \bar{\Delta}_{j}\right| \leq K_2(z) \eta^{2}, \quad \forall i,j = 1, \ldots, 2d$;
\item $\left|\E \prod_{j=1}^s \Delta_{i_j}-\E \prod_{j=1}^s \bar{\Delta}_{i_j}\right| \leq K_3(z) \eta^{2}, \quad \forall s \geq 3, \quad \forall i_j \in \{1, \ldots, 2d \}$;
\item $\E \prod_{j=1}^{ 3}\left|\bar{\Delta}_{i_j}\right| \leq K_4(z) \eta^{2}, \quad \forall i_j \in \{1, \ldots, 2d \}$.
\end{enumerate}
Then, there exists a constant $C$ so that for all $k=0,1, \ldots, N$ we have
$$
\left|\E g\left(Z_{k \eta}\right)-\E g\left(z_k\right)\right| \leq C \eta.
$$
\end{theorem}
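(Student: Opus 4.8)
The plan is to run the classical \emph{local-to-global} error argument from the weak numerical analysis of SDEs, i.e. to show that an order-$\eta^2$ control on the \emph{one-step} discrepancy between the algorithm and the SDE accumulates, over the $N = T/\eta$ steps, to an order-$\eta$ \emph{global} weak error. The central object is the transition semigroup of the limiting SDE: I would define $u(z,t) := \E[g(Z_t)\mid Z_0 = z]$, the expected value of $g$ along the SDE flow started at $z$ and run for time $t$, so that $u(\cdot,0) = g(\cdot)$ and $\E g(Z_{N\eta}) = u(z_0, N\eta)$. Under Assumption~\ref{ass:regularity_f} the drift $b$ and diffusion $\sigma$ and their derivatives lie in $G$, and the order-$7$ smoothness is exactly what guarantees that $u(\cdot,t)$ is, say, four times continuously differentiable in $z$ with all partial derivatives of polynomial growth, \emph{uniformly} in $t\in[0,T]$. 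Establishing this regularity of $u$ (by differentiating the SDE flow in its initial condition and invoking standard a priori $L^p$ estimates for the variational equations) is the analytic backbone of the whole proof and, in my view, the main obstacle.

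Next I would telescope the global error using the flow property. Writing $T=N\eta$ and setting $a_k := \E[u(z_k, T-k\eta)]$,
$$\E g(z_N) - \E g(Z_{N\eta}) = \E[u(z_N,0)] - u(z_0,T) = \sum_{k=0}^{N-1}\big(a_{k+1} - a_k\big).$$
The semigroup identity $u(z_k, T-k\eta) = \E\big[u(Z_\eta^{z_k}, T-(k+1)\eta)\mid z_k\big]$ — where $Z_\eta^{z_k}$ is a fresh SDE increment started at $z_k$ — converts each summand, after conditioning on the history up to step $k$, into the one-step weak error
$$e_k := \E\big[\phi_k(z_k + \bar\Delta) - \phi_k(z_k + \Delta)\mid z_k\big], \qquad \phi_k(\cdot) := u(\cdot,\, T-(k+1)\eta),$$
with $\Delta = Z_\eta^{z_k} - z_k$ the SDE increment and $\bar\Delta = z_{k+1}-z_k$ the algorithmic increment, \emph{both emanating from the common point} $z_k$.

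I would then Taylor-expand $\phi_k$ about $z_k$ to second order with a controlled third-order remainder and take the conditional expectation. The zeroth-order terms cancel exactly; the surviving contributions are $\partial_i\phi_k\,(\E\bar\Delta_i - \E\Delta_i)$, $\tfrac12\partial_{ij}\phi_k\,(\E\bar\Delta_i\bar\Delta_j - \E\Delta_i\Delta_j)$, and a remainder built from third-order increments. Hypotheses (1) and (2) make the first two differences $\mathcal{O}(\eta^2)$, hypothesis (3) handles the third-moment difference, and the remainder is bounded using the third absolute-moment controls — hypothesis (4) for $\bar\Delta$ and Lemma~\ref{lemma:li1} (part 3, which gives $\E\prod_{j=1}^s\Delta_{i_j} = \mathcal{O}(\eta^3)$ for $s\ge3$) for $\Delta$. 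Since the derivatives of $\phi_k$ are polynomially bounded uniformly in $t\le T$, this yields $|e_k| \le \E[K(z_k)]\,\eta^2$ for some $K\in G$ that does not depend on $k$.

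Finally I would close the sum. A standard discrete stability estimate — using the linear-growth part of Assumption~\ref{ass:regularity_f} together with a discrete Grönwall argument — provides uniform moment bounds on the iterates, hence $\sup_{k\le N}\E[K(z_k)]\le \bar C$ for every $K\in G$. Therefore
$$\big|\E g(z_N) - \E g(Z_{N\eta})\big| \le \sum_{k=0}^{N-1}\E[K(z_k)]\,\eta^2 \le N\bar C\,\eta^2 = \frac{T}{\eta}\,\bar C\,\eta^2 = C\eta,$$
which is the claimed order-$1$ weak bound, uniformly over $k=0,\dots,N$. Beyond the regularity of $u$, the only delicate bookkeeping is (i) that the polynomial-growth constants of the derivatives of $u$ can be chosen uniformly in the time argument $t\in[0,T]$, and (ii) the uniform-in-$k$ moment bound on the iterates; both are routine given the Lipschitz and linear-growth assumptions, so the genuine difficulty is concentrated in the smoothness and growth control of the Kolmogorov semigroup $u$.
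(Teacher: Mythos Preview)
The paper does not actually prove this theorem: it is quoted as an external result from Milstein (1986) --- Theorem~2 and Lemma~5 there --- and then used as a black box in the proofs of Theorems~\ref{thm:SGDA_SDE}, \ref{thm:SEG_SDE}, \ref{thm:SEG_SDE_small_rho} and \ref{thm:SHGD_SDE}. So there is no ``paper's own proof'' to compare against.

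That said, your sketch is the classical Milstein argument and is essentially correct. The backward-Kolmogorov function $u(z,t)=\E[g(Z_t)\mid Z_0=z]$, the telescoping $a_N-a_0=\sum_k(a_{k+1}-a_k)$ along the discrete trajectory, the second-order Taylor expansion of $\phi_k=u(\cdot,T-(k+1)\eta)$ with a third-order remainder controlled by hypotheses (3)--(4) and Lemma~\ref{lemma:li1}, and the closing discrete Gr\"onwall bound on the iterate moments --- this is exactly how the original proof proceeds. You have also correctly located the one genuinely technical step: uniform-in-$t\in[0,T]$ polynomial-growth bounds on the spatial derivatives of $u$ up to order three or four, which is precisely where the order-$7$ smoothness and polynomial-growth parts of Assumption~\ref{ass:regularity_f} are consumed (via differentiation of the flow and $L^p$ estimates on the variational processes). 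One minor bookkeeping point: your final display establishes the bound only at $k=N$, whereas the statement asks for it at every $k\le N$; the same telescoping with $N$ replaced by any intermediate $k$ gives this for free.
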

\end{mybox}

\subsection{Formal Derivation - SGDA} \label{sec:formal_SGDA}
In this subsection, we provide the first formal derivation of an SDE model for SGDA. Let us consider the stochastic process $ Z_t \in \mathbb{R}^{2d} $ defined as the solution of
\begin{equation}\label{eq:SGDA_SDE}
d Z_t = - \eta_t \circ F \left( Z_t \right) dt + \sqrt{\eta} (\eta_t 1^{\top}) \circ \sqrt{\Sigma\left( Z_t \right)}d W_{t},
\end{equation}
where
\begin{align} \label{eq:SGDA_sigma_star}
 \Sigma(z):=\E \left[ \left(F \left(z\right) - F_{\gamma}\left(z\right) \right)\left( F\left(z\right) - F_{\gamma}\left(z\right) \right)^{\top} \right],
\end{align}
and the $\circ$ symbol represents the Hadamard product.
The following theorem guarantees that such a process is a $1$-order SDE of the discrete-time algorithm of SGDA
\begin{equation}\label{eq:SGDA_Discr_Update_ext}
    z_{k+1}=z_k-\eta  \eta_k \circ  F_{\gamma_k}\left(z_k \right)
\end{equation}
with $ z_0 := z = (x,y) \in \mathbb{R}^d \times \mathbb{R}^d $, which is an extension of \ref{eq:SGDA_Discr_Update}.
\begin{mybox}{gray}
\begin{theorem}[Stochastic modified equations] \label{thm:SGDA_SDE}
Let $0<\eta<1, T>0$ and set $N=\lfloor T / \eta\rfloor$. Let $ z_k \in \mathbb{R}^{2d}, 0 \leq k \leq N$ denote a sequence of SGDA iterations defined by Eq.~\eqref{eq:SGDA_Discr_Update_ext}. Consider the stochastic process $Z_t$ defined in Eq.~\eqref{eq:SGDA_SDE} and fix some test function $g \in G$ and suppose that $g$ and its partial derivatives up to order 6 belong to $G$.

Then, under Assumption~\ref{ass:regularity_f}, there exists a constant $ C>0 $ independent of $ \eta $ such that for all $ k=0,1, \ldots, N $, we have

$$
\left|\E g\left(Z_{k \eta}\right)-\E g\left(z_k\right)\right| \leq C \eta .
$$

That is, the SDE \eqref{eq:SGDA_SDE} is an order $ 1 $ weak approximation of the SGDA iterations \eqref{eq:SGDA_Discr_Update_ext}.
\end{theorem}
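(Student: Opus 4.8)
The plan is to verify the four conditions of \Cref{thm:mils}, since once these hold the desired weak-approximation bound follows immediately. The structure of the argument is thus entirely dictated by that theorem: I need to Taylor-expand the one-step moments of both the discrete SGDA update \eqref{eq:SGDA_Discr_Update_ext} and the Euler--Maruyama-type one-step increment of the SDE \eqref{eq:SGDA_SDE}, and match them to order $\eta^2$.

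First I would compute the moments of the discrete increment $\bar\Delta = z_1 - z = -\eta\,\eta_k \circ F_{\gamma}(z)$. Taking expectation over the minibatch $\gamma$ and using $\E[F_\gamma(z)] = F(z)$ gives $\E \bar\Delta_i = -\eta (\eta_k)_i F_i(z)$ exactly, with no higher-order terms (this is the key simplification relative to SGD on a minimization objective: the discrete update is linear in the single stochastic gradient evaluation, so there is no $\eta^2$ correction coming from composition). For the second moments, $\E \bar\Delta_i \bar\Delta_j = \eta^2 (\eta_k)_i(\eta_k)_j \E[(F_\gamma)_i (F_\gamma)_j] = \eta^2 (\eta_k)_i(\eta_k)_j \big( F_i F_j + \Sigma_{ij}\big)$, again exactly, where I use the definition \eqref{eq:SGDA_sigma_star} of $\Sigma$. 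The third- and higher-order products are $\mathcal{O}(\eta^3)$ by the linear-growth/polynomial-growth assumptions (Assumption \ref{ass:regularity_f}), which control $\E|F_\gamma|^s$; and the fourth condition, $\E\prod_{j=1}^3 |\bar\Delta_{i_j}| \le K_4(z)\eta^2$, is actually $\mathcal{O}(\eta^3)$ and so holds trivially for small $\eta$.

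Next I would invoke \Cref{lemma:li1} applied to the SDE \eqref{eq:SGDA_SDE}, whose drift is $b(z) = -\eta_t \circ F(z)$ and whose diffusion coefficient is $\sigma(z) = (\eta_t 1^\top)\circ \sqrt{\eta\,\Sigma(z)}$, so that $\sigma\sigma^\top_{(ij)} = \eta (\eta_t)_i(\eta_t)_j \Sigma_{ij}(z)$. The lemma gives $\E\Delta_i = b_i \eta + \tfrac12[\sum_j b_j \partial_{e_j} b_i]\eta^2 + \mathcal{O}(\eta^3) = -\eta(\eta_t)_i F_i + \mathcal{O}(\eta^2)$ and $\E\Delta_i\Delta_j = [b_i b_j + \sigma\sigma^\top_{(ij)}]\eta^2 + \mathcal{O}(\eta^3) = \eta^2(\eta_t)_i(\eta_t)_j(F_iF_j + \Sigma_{ij}) + \mathcal{O}(\eta^3)$, and the triple products are $\mathcal{O}(\eta^3)$. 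Comparing with the discrete moments above, conditions 1--3 of \Cref{thm:mils} hold: the leading $\eta$ and $\eta^2$ terms match exactly, and all discrepancies are $\mathcal{O}(\eta^2)$ (indeed $\mathcal{O}(\eta^2)$ is all that is required; the extra $\eta^2$ drift term in $\E\Delta_i$ is itself the allowed error). I should check that the various remainder functions lie in $G$, which follows from the polynomial-growth hypotheses on $f$ and its derivatives together with the fact that $\Sigma$ and $F$ inherit polynomial growth, and from Assumption \ref{ass:regularity_f} guaranteeing the global Lipschitz/linear-growth conditions needed for the SDE to have a unique solution in $\mathcal{M}^2$ (\Cref{thm:SDE_existence_uniqueness_global}), so that \Cref{lemma:li1} applies.

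The main obstacle is not any single estimate — each is routine — but the bookkeeping: tracking the coordinate-wise stepsize $\eta_t$ (Hadamard products) through all expansions, verifying that every remainder is dominated by a function in $G$ uniformly in the relevant range (which in turn needs the moment bounds on $F_\gamma$ derived from the linear-growth assumption, and a standard moment bound $\E\sup_{s\le\eta}|Z_s - z|^{2m} = \mathcal{O}(\eta^m)$ for the SDE side), and confirming that the test function $g$ and its derivatives up to order $6$ in $G$ suffice to push the error through Milstein's theorem. A secondary subtlety worth a remark is that the SDE's second moment already captures $\Sigma$ exactly at order $\eta^2$ only because $\sigma\sigma^\top$ carries the factor $\eta$ by construction — this is precisely why the $\sqrt{\eta}$ scaling in \eqref{eq:SGDA_SDE} is the correct one and why order-$1$ (rather than order-$1/2$) weak accuracy is attainable.
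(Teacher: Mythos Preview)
Your proposal is correct and follows essentially the same route as the paper: compute the one-step moments of the discrete SGDA increment (this is the content of the paper's Lemma~\ref{lemma:SGDA_SDE}), invoke Lemma~\ref{lemma:li1} for the SDE side, and then verify the four hypotheses of Theorem~\ref{thm:mils}. One small notational slip to fix: since Lemma~\ref{lemma:li1} is stated for $dZ_t = b\,dt + \sqrt{\eta}\,\sigma\,dW_t$ with the $\sqrt{\eta}$ already factored out, you should take $\sigma(z) = (\eta_t 1^\top)\circ\sqrt{\Sigma(z)}$ rather than $\sigma(z) = (\eta_t 1^\top)\circ\sqrt{\eta\,\Sigma(z)}$; otherwise the substitution into $[b_ib_j + (\sigma\sigma^\top)_{ij}]\eta^2$ would push the $\Sigma$ term to order $\eta^3$, contradicting the (correct) final expression you wrote.
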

\end{mybox}

\begin{mybox}{gray}
\begin{lemma} \label{lemma:SGDA_SDE}
Under the assumptions of Theorem \ref{thm:SGDA_SDE}, let $ 0<\eta<1 $ and consider $ z_k, k \geq 0 $ satisfying the SGDA iterations
$$
z_{k+1}=z_k-\eta  \eta_k \circ  F_{\gamma_k}\left(z_k \right)
$$
with $ z_0 := z = (x,y) \in \mathbb{R}^d \times \mathbb{R}^d $. From the definition the one-step difference $ \bar{\Delta}=z_1-z $, then we have

\begin{enumerate}
\item $ \E \bar{\Delta}_{i}=- \eta^i_0 F_i \eta \quad \forall i = 1, \ldots,2d$;
\item $ \E \bar{\Delta}_{i} \bar{\Delta}_{j}= \eta^i_0 \eta^j_0 F_i  F_j  \eta^2 + \eta^i_0 \eta^j_0 \Sigma_{(i j)} \eta^2 \quad \forall i,j = 1, \ldots,2d$;
\item $\E \prod_{j=1}^s \Bar{\Delta}_{i_j} =\mathcal{O}\left(\eta^3\right) \quad \forall s \geq 3, \quad i_j \in \{ 1, \ldots, 2d\}$.
\end{enumerate}
All the functions above are evaluated at $ z $.
\end{lemma}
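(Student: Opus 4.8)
The plan is to read off the one-step increment directly from the SGDA update and reduce every moment of $\bar\Delta$ to a moment of the stochastic field $F_{\gamma_0}$ frozen at the starting point $z$. Since $z_1 = z - \eta\,\eta_0 \circ F_{\gamma_0}(z)$, componentwise $\bar\Delta_i = -\eta\,\eta_0^i F_{\gamma_0,i}(z)$, where $\eta_0^i$ are deterministic constants and the only randomness is the single mini-batch $\gamma_0$, drawn uniformly on $\{1,\dots,N\}$. This is the discrete-side analogue of Lemma~\ref{lemma:li1}, and it is precisely the object that Theorem~\ref{thm:mils} will compare against the SDE increments, so the form of the statement is dictated by that comparison.

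For claim (1), I would take $\E_{\gamma_0}$ and use $\E_{\gamma_0}[F_{\gamma_0}(z)] = F(z)$ (the per-sample drift is an unbiased estimator of the full drift by construction), giving $\E\bar\Delta_i = -\eta\,\eta_0^i F_i(z)$ exactly — with no $\mathcal{O}(\eta^2)$ term, in contrast to Lemma~\ref{lemma:li1}, because here we are expanding a single explicit step rather than the continuous flow. For claim (2), $\bar\Delta_i\bar\Delta_j = \eta^2\eta_0^i\eta_0^j F_{\gamma_0,i}(z)F_{\gamma_0,j}(z)$, and the key step is the bias–variance decomposition $\E_{\gamma_0}[F_{\gamma_0,i}F_{\gamma_0,j}] = F_iF_j + \Sigma_{ij}(z)$, obtained by expanding the definition of $\Sigma$ in Eq.~\eqref{eq:SGDA_sigma_star} and cancelling cross terms via $\E_{\gamma_0}[F_{\gamma_0}] = F$; this yields $\E\bar\Delta_i\bar\Delta_j = \eta^2\eta_0^i\eta_0^j(F_iF_j + \Sigma_{ij})$ as stated. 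For claim (3), the prefactor $(-\eta)^s$ with $s\ge 3$ already supplies the $\eta^3$ rate since $\eta<1$, so all that remains is to bound $\E_{\gamma_0}\big[\prod_{j=1}^s |\eta_0^{i_j}F_{\gamma_0,i_j}(z)|\big]$ by a function in $G$; this follows from the linear growth condition in Assumption~\ref{ass:regularity_f}, which passes from the $\nabla f_i$ to the averaged per-sample field $F_\gamma$, so that a product of $s$ such components has polynomial growth of degree $s$.

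The main obstacle is purely bookkeeping: keeping the Hadamard product with the coordinate-wise stepsize vector $\eta_0$ straight, and verifying carefully that $F_\gamma$ (a mini-batch average of $(\nabla_x f_i, -\nabla_y f_i)$) inherits the linear-growth and polynomial-growth bounds of Assumption~\ref{ass:regularity_f} so that every expectation lands in $G$, as required downstream by Theorem~\ref{thm:mils}. There is no genuine analytic difficulty — unlike Lemma~\ref{lemma:li1}, no Itô–Taylor expansion of an SDE solution is needed, because the discrete update is already affine in $\eta$ after conditioning on $\gamma_0$.
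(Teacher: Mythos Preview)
Your proposal is correct and follows essentially the same approach as the paper: read off $\bar\Delta_i = -\eta\,\eta_0^i F_{\gamma_0,i}(z)$ componentwise, take expectations using $\E_{\gamma_0}[F_{\gamma_0}] = F$ for claim (1), use the bias--variance identity $\E_{\gamma_0}[F_{\gamma_0,i}F_{\gamma_0,j}] = F_iF_j + \Sigma_{ij}$ for claim (2), and pull out $\eta^s$ with $s\ge 3$ for claim (3). If anything, your treatment of claim (3) is slightly more careful than the paper's, which simply asserts the bound ``by definition'' without spelling out that the remaining factor lies in $G$ via the growth conditions of Assumption~\ref{ass:regularity_f}.
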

\end{mybox}

\begin{proof}[Proof of Lemma \ref{lemma:SGDA_SDE}]

First of all, we have that by definition

\begin{equation}
    \E \left[ z_1- z \right] = - \eta \eta_0 \circ F(z),
\end{equation}
which implies 
\begin{equation}
    \E \bar{\Delta}_{i}=- \eta^i_0 F_i \left( z \right) \eta \quad \forall i = 1, \ldots,2d.
\end{equation}

Second, we have that by definition

\begin{align}
    \E \left[ \left(z_1- z\right)\left(z_1- z\right)^\top \right] & = \eta^2 \eta_0 \circ F(z) F(z)^{\top} \circ \eta_0^{\top}  + \eta^2  \E \left[ \eta_0 \circ \left( F(z) - F_{\gamma}(z)\right)\left( F(z) - F_{\gamma}(z)\right)^{\top} \circ \eta_0^{\top} \right] \nonumber \\ 
    & = \eta^2 \eta_0 \circ F(z) F(z)^{\top}  \circ \eta_0^{\top}+ \eta^2 (\eta_0 1^{\top}) \circ \Sigma(z) \circ (\eta_0 1^{\top})^{\top},
\end{align}
which implies that

\begin{equation}
    \E \bar{\Delta}_{i} \bar{\Delta}_{j}=\eta^i_0 \eta^j_0 F_i \left( z \right) F_j \left( z \right) \eta^2 + \eta^i_0 \eta^j_0  \Sigma_{(i j)}\left( z \right) \eta^2 \quad \forall i,j = 1, \ldots,2d.
\end{equation}
Finally, by definition 

\begin{equation}
    \E \prod_{j=1}^s \Bar{\Delta}_{i_j} =\mathcal{O}\left(\eta^3\right) \quad \forall s \geq 3, \quad i_j \in \{ 1, \ldots, 2d\},
\end{equation}
which concludes our proof.
\end{proof}

\begin{proof}[Proof of Theorem \ref{thm:SGDA_SDE}] 
\label{proof:SGDA_SDE}
To prove this result, all we need to do is check the conditions in Theorem \ref{thm:mils}. As we apply Lemma \ref{lemma:li1}, we make the following choices:

\begin{itemize}
\item $b(z)= - \eta_t \circ F(z)$;
\item $\sigma(z) = (\eta_t 1^{\top}) \circ \sqrt{\Sigma(z)}$.
\end{itemize}
First of all, we notice that $\forall i = 1, \ldots, 2d$, it holds that

\begin{itemize}
\item $\E \bar{\Delta}_{i} \overset{\text{1. Lemma \ref{lemma:SGDA_SDE}}}{=}- \eta^i_0 F_i \left( z \right) \eta$;
\item $ \E \Delta_{i} \overset{\text{1. Lemma \ref{lemma:li1}}}{=} - \eta^i_0 F_i \left( z \right) \eta +\mathcal{O}\left(\eta^2\right)$.
\end{itemize}
Therefore, we have that for some $K_1(z) \in G$,
\begin{equation}\label{eq:SGDA_cond1}
\left|\E \Delta_{i}-\E \bar{\Delta}_{i}\right| \leq K_1(z) \eta^{2}, \quad \forall i = 1, \ldots,2d.
\end{equation}
Additionally, we notice that $\forall i,j = 1, \ldots, d$, it holds that

\begin{itemize}
\item $ \E \bar{\Delta}_{i} \bar{\Delta}_{j} \overset{\text{2. Lemma \ref{lemma:SGDA_SDE}}}{=}\eta^i_0 \eta^j_0 F_i \left( z \right) F_j \left( z \right) \eta^2 + \eta^i_0 \eta^j_0 \Sigma_{(i j)}\left( z \right) \eta^2$;
\item $ \E \Delta_{i} \Delta_{j} \overset{\text{2. Lemma \ref{lemma:li1}}}{=}\eta^i_0 \eta^j_0 F_i \left( z \right) F_j \left( z \right) \eta^2 + \eta^i_0 \eta^j_0 \Sigma_{(i j)}\left( z \right) \eta^2 + \mathcal{O}\left(\eta^3 \right)$.
\end{itemize}
Therefore, we have that for some $K_2(z) \in G$,
\begin{equation}\label{eq:SGDA_cond2}
\left|\E \Delta_{i} \Delta_{j} - \E \bar{\Delta}_{i} \bar{\Delta}_{j}\right| \leq K_2(z) \eta^{2}, \quad \forall i,j = 1, \ldots, 2d.
\end{equation}
Additionally, we notice that $\forall s \geq 3, \forall i_j \in \{1, \ldots, 2d \}$, it holds that

\begin{itemize}
\item $ \E \prod_{j=1}^s \bar{\Delta}_{i_j}\overset{\text{3. Lemma \ref{lemma:SGDA_SDE}}}{=}\mathcal{O}\left(\eta^3\right)$;
\item $ \E \prod_{j=1}^s \Delta_{i_j}\overset{\text{3. Lemma \ref{lemma:li1}}}{=}\mathcal{O}\left(\eta^3\right)$.
\end{itemize}
Therefore, we have that for some $K_3(z) \in G$,
\begin{equation}\label{eq:SGDA_cond3}
\left|\E \prod_{j=1}^s \Delta_{i_j}-\E \prod_{j=1}^s \bar{\Delta}_{i_j}\right| \leq K_3(z) \eta^{2}.
\end{equation}
Additionally, for some $K_4(z) \in G$, $\forall i_j \in \{1, \ldots, d \}$,
\begin{equation} \label{eq:SGDA_cond4}
\E \prod_{j=1}^{ 3}\left|\bar{\Delta}_{\left(i_j\right)}\right| \overset{\text{3. Lemma \ref{lemma:SGDA_SDE}}}{\leq}K_4(z) \eta^{2}.
\end{equation}
To conclude, Eq.~\eqref{eq:SGDA_cond1}, Eq.~\eqref{eq:SGDA_cond2}, Eq.~\eqref{eq:SGDA_cond3}, and Eq.~\eqref{eq:SGDA_cond4} allow us to conclude the proof.
\end{proof}

\begin{mybox}{gray}
\begin{corollary}\label{thm:SGDA_SDE_Simplified}
Let us take the same assumptions of Theorem~\ref{thm:SGDA_SDE}. Additionally, let us assume that stochastic gradients can be written as $\nabla_x f_{\gamma}(z) = \nabla_x f(z) + U^x$ and $\nabla_y f_{\gamma}(z) = \nabla_y f(z) + U^y$ such that $U^x$ and $U^y$ are independent noises that do not depend on $z$, whose expectation is $0$, and whose covariance matrix is $\Sigma$. For $\eta_t = \mathbf{1}$, the SDE \eqref{eq:SGDA_SDE} becomes
\begin{align}\label{eq:SGDA_SDE_Simplified}
& d X_t = - \nabla_{x} f \left( Z_t \right) dt + \sqrt{\eta\Sigma}d W^{x}_{t}, \\
& d Y_t = + \nabla_{y} f \left( Z_t \right) dt + \sqrt{\eta\Sigma} d W^{y}_{t}. \nonumber 
\end{align}
\end{corollary}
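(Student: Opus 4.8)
The plan is to specialize the general SGDA SDE \eqref{eq:SGDA_SDE} to the stated hypotheses by direct substitution. First I would set $\eta_t = \mathbf{1}$: then the Hadamard product with $\eta_t$ in the drift and with $\eta_t\mathbf{1}^\top$ in the diffusion both act as the identity, so \eqref{eq:SGDA_SDE} reduces to $dZ_t = -F(Z_t)\,dt + \sqrt{\eta}\,\sqrt{\Sigma(Z_t)}\,dW_t$. Splitting $Z_t = (X_t, Y_t)$ and recalling $F(x,y) = (\nabla_x f(x,y), -\nabla_y f(x,y))$, the drift already yields the $-\nabla_x f(Z_t)$ and $+\nabla_y f(Z_t)$ terms appearing in \eqref{eq:SGDA_SDE_Simplified}.

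The second step is to evaluate $\Sigma(z) = \E[\xi_\gamma(z)\xi_\gamma(z)^\top]$ under the additive-noise model. Since $F_\gamma(z) = (\nabla_x f(z) + U^x,\, -\nabla_y f(z) - U^y)$, one gets $\xi_\gamma(z) = F(z) - F_\gamma(z) = (-U^x, U^y)$, which does not depend on $z$. Hence $\Sigma(z)$ is the constant $2d\times 2d$ matrix whose two diagonal $d\times d$ blocks are $\E[U^x (U^x)^\top] = \E[U^y (U^y)^\top] = \Sigma$ and whose off-diagonal blocks are $\mp\E[U^x (U^y)^\top] = 0$, the latter because $U^x$ and $U^y$ are independent (uncorrelatedness would already suffice). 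In short, $\Sigma(z) = \diag(\Sigma,\Sigma)$; in particular it is bounded and $z$-independent, so the regularity premises of Theorem \ref{thm:SGDA_SDE} are met.

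The third step exploits this block structure. Because $\Sigma(z) = \diag(\Sigma, \Sigma)$ is block-diagonal and positive semidefinite, its PSD square root is $\sqrt{\Sigma(z)} = \diag(\sqrt{\Sigma}, \sqrt{\Sigma})$. Writing the $2d$-dimensional driving motion as $W_t = (W^x_t, W^y_t)$ with $W^x, W^y$ independent $d$-dimensional Brownian motions, the diffusion term decouples into $\sqrt{\eta}\,\sqrt{\Sigma}\,dW^x_t$ in the $X$-equation and $\sqrt{\eta}\,\sqrt{\Sigma}\,dW^y_t$ in the $Y$-equation. Assembling drift and diffusion gives exactly \eqref{eq:SGDA_SDE_Simplified}, while the order-$1$ weak-approximation guarantee is inherited unchanged from Theorem \ref{thm:SGDA_SDE}.

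The computation is essentially mechanical, so I do not anticipate a genuine obstacle; the only point demanding a moment of care is the vanishing of the cross-covariance blocks of $\Sigma(z)$ --- this is precisely where the independence of the $x$- and $y$-noises enters --- together with the observation that their $z$-independence keeps $\Sigma(\cdot)$ within the regularity scope under which \eqref{eq:SGDA_SDE} was proven to be a weak approximation.
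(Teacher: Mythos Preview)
Your proposal is correct and follows the same approach as the paper, which simply states that the result ``follows directly by the independence of the noise, the definition of the scheduler, and the definition of the covariance matrix.'' You have spelled out precisely these three ingredients---setting $\eta_t=\mathbf{1}$, computing $\Sigma(z)=\diag(\Sigma,\Sigma)$ from the additive independent noise, and taking the block-diagonal square root---so your argument is a fully detailed version of the paper's one-line proof.
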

\end{mybox}

\begin{proof}[Proof of Corollary \ref{thm:SGDA_SDE_Simplified}]
It follows directly by the independence of the noise, the definition of the scheduler, and the definition of the covariance matrix.
\end{proof}

\subsection{Formal Derivation - SEG} \label{sec:formal_SEG}
In this subsection, we provide the first formal derivation of an SDE model for SEG.
Before presenting the proof, we introduce some notation. Let $\bgamma := (\gamma^1,\gamma^2)$, $\bar F_{\bgamma}(z) := \nabla F_{\gamma^1}(z) F_{\gamma^2}(z)$, and $\bar F(z):=\E[\bar F_{\bgamma}(z)]$ be its expectation. We denote the noise in $\bar F$ as $\bar \xi_{\bgamma}(z) := \bar F_{\bgamma}(z) -\bar F(z)$.

Let us consider the stochastic process $ Z_t \in \mathbb{R}^{2d} $ defined as the solution of
\begin{equation}\label{eq:SEG_SDE}
d Z_t = - F^{\seg}\left( Z_t \right) dt+ \sqrt{\eta\Sigma^{\seg}\left( Z_t \right)}d W_{t},
\end{equation}
with
\begin{align}
    & F^{\seg}(z):= F(z) - \rho \bar F(z),\\
    & \Sigma^{\seg}(z) := \Sigma(z) +\rho\left[\bar \Sigma(z) + \bar \Sigma(z)^\top\right],
\end{align}
where 
$\bar{\Sigma}(z)$ is defined as
\begin{align} \label{eq:SEG_sigma_star}
\E\left[\xi_{\gamma^1}(z)\bar \xi_{\bgamma}(z)^\top \right] = \E \left[ \left( F\left(z\right) - F_{\gamma^1}\left(z\right) \right) \left(\E \left[ \nabla F_{\gamma^1}(z) F_{\gamma^2}(z) \right] - \nabla F_{\gamma^1}(z) F_{\gamma^2}(z) \right)^{\top} \right].
\end{align}

\begin{mybox}{gray}
\begin{theorem}[Stochastic modified equations] \label{thm:SEG_SDE}
Let $0<\eta<1, T>0$ and set $N=\lfloor T / \eta\rfloor$. Let $ z_k \in \mathbb{R}^{2d}, 0 \leq k \leq N$ denote a sequence of SEG iterations defined by Eq.~\eqref{eq:SEG_Discr_Update}. Additionally, let us take 
\begin{equation}\label{eq:SEG_rho_theta_half}
\rho = \mathcal{O}\left(\sqrt{\eta}\right).
\end{equation}

Consider the stochastic process $Z_t$ defined in Eq.~\eqref{eq:SEG_SDE} and fix some test function $g \in G$ and suppose that $g$ and its partial derivatives up to order 6 belong to $G$.

Then, under Assumption~\ref{ass:regularity_f}, there exists a constant $ C>0 $ independent of $ \eta $ such that for all $ k=0,1, \ldots, N $, we have

$$
\left|\E g\left(Z_{k \eta}\right)-\E g\left(z_k\right)\right| \leq C \eta .
$$

That is, the SDE \eqref{eq:SEG_SDE} is an order $ 1 $ weak approximation of the SEG iterations \eqref{eq:SEG_Discr_Update}.
\end{theorem}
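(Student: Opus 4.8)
The plan is to follow the exact template of the proof of Theorem~\ref{thm:SGDA_SDE}: I will establish the four moment-matching conditions of Theorem~\ref{thm:mils} for the SEG one-step increment $\bar\Delta=z_1-z=-\eta F_{\gamma^1}\!\left(z-\rho F_{\gamma^2}(z)\right)$ against the Euler--Maruyama increment $\Delta=Z_\eta-z$ of the SDE~\eqref{eq:SEG_SDE}, whose moments are delivered by Lemma~\ref{lemma:li1} with the choices $b(z)=-F^{\seg}(z)$ and $\sigma(z)=\sqrt{\Sigma^{\seg}(z)}$ (so $\sigma\sigma^\top=\Sigma^{\seg}$). The first sub-step, playing the role of Lemma~\ref{lemma:SGDA_SDE}, is a Taylor expansion of the inner argument of the SEG update in powers of $\rho$,
\[
F_{\gamma^1}\!\left(z-\rho F_{\gamma^2}(z)\right)=F_{\gamma^1}(z)-\rho\,\nabla F_{\gamma^1}(z)F_{\gamma^2}(z)+\tfrac{\rho^2}{2}\,\nabla^2 F_{\gamma^1}(z)\!\left[F_{\gamma^2}(z),F_{\gamma^2}(z)\right]+\mathcal{O}(\rho^3),
\]
with all remainders controlled by $G$-functions via Assumption~\ref{ass:regularity_f} (the seven orders of polynomial growth are exactly what make this expansion and the one behind Lemma~\ref{lemma:li1} legitimate). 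Hence $\bar\Delta=-\eta F_{\gamma^1}(z)+\eta\rho\,\nabla F_{\gamma^1}(z)F_{\gamma^2}(z)+\mathcal{O}(\eta\rho^2)$, and since $\rho=\mathcal{O}(\sqrt\eta)$ the trailing term is $\mathcal{O}(\eta^2)$.

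Next I take expectations, using $\E[\xi_\gamma]=0$, $\E[\bar\xi_{\bgamma}]=0$, and the definitions $\bar F=\E[\nabla F_{\gamma^1}F_{\gamma^2}]$, $\Sigma=\E[\xi_\gamma\xi_\gamma^\top]$, $\bar\Sigma=\E[\xi_{\gamma^1}\bar\xi_{\bgamma}^\top]$. For the first moment this gives $\E\bar\Delta_i=-\eta\left(F_i(z)-\rho\bar F_i(z)\right)+\mathcal{O}(\eta^2)=-\eta F^{\seg}_i(z)+\mathcal{O}(\eta^2)$, matching $\E\Delta_i=-\eta F^{\seg}_i(z)+\mathcal{O}(\eta^2)$ from Lemma~\ref{lemma:li1} up to $\mathcal{O}(\eta^2)$. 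For the second moment I expand $\bar\Delta_i\bar\Delta_j$ and discard everything of order $\eta^3$ or smaller (in particular all $\eta^2\rho^2$ terms, as $\eta^2\rho^2=\mathcal{O}(\eta^3)$), so that only $\eta^2\E[F_{\gamma^1,i}F_{\gamma^1,j}]$, $-\eta^2\rho\,\E[F_{\gamma^1,i}\bar F_{\bgamma,j}]$ and $-\eta^2\rho\,\E[\bar F_{\bgamma,i}F_{\gamma^1,j}]$ survive; substituting $F_{\gamma^1}=F-\xi_{\gamma^1}$ and $\bar F_{\bgamma}=\bar F+\bar\xi_{\bgamma}$ and simplifying yields
\[
\E\bar\Delta_i\bar\Delta_j=\eta^2\,F^{\seg}_i F^{\seg}_j+\eta^2\,\Sigma^{\seg}_{ij}+\mathcal{O}(\eta^3),
\]
which agrees with $\E\Delta_i\Delta_j=\eta^2\bigl(b_ib_j+(\sigma\sigma^\top)_{ij}\bigr)+\mathcal{O}(\eta^3)=\eta^2 F^{\seg}_i F^{\seg}_j+\eta^2\Sigma^{\seg}_{ij}+\mathcal{O}(\eta^3)$ up to $\mathcal{O}(\eta^3)$. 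The third- and higher-order moments of $\bar\Delta$ are $\mathcal{O}(\eta^3)$ because $|\bar\Delta_i|\le K(z)\eta$ with $K\in G$ by the linear growth of the gradients, and the same holds on the SDE side; thus conditions~3 and~4 of Theorem~\ref{thm:mils} hold once $\eta<1$, with the constants $K_1,\dots,K_4$ in $G$ by Assumption~\ref{ass:regularity_f}. Well-posedness of \eqref{eq:SEG_SDE}, needed to invoke Lemma~\ref{lemma:li1}, and the $G$-regularity of $F^{\seg}$ and $\sqrt{\Sigma^{\seg}}$ and their derivatives follow from Theorem~\ref{thm:SDE_existence_uniqueness_global} and Assumption~\ref{ass:regularity_f}; Theorem~\ref{thm:mils} then gives $\lvert\E g(Z_{k\eta})-\E g(z_k)\rvert\le C\eta$.

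The main obstacle is the order bookkeeping, and it is precisely where $\rho=\mathcal{O}(\sqrt\eta)$ is essential. The $\rho$-linear term sits at order $\eta^{3/2}$ in $\bar\Delta$, so it cannot be hidden inside the $\mathcal{O}(\eta^2)$ slack of the first moment --- this forces the $-\rho\bar F$ correction into the drift $F^{\seg}$. Likewise the $\rho$-linear cross terms in $\bar\Delta_i\bar\Delta_j$ live at order $\eta^{5/2}$ and must be absorbed into the diffusion, producing the $\rho(\bar\Sigma+\bar\Sigma^\top)$ correction in $\Sigma^{\seg}$, whereas the $\rho^2$ contributions drop to order $\eta^3$ and are legitimately discarded (this is why no $\rho^2$ term survives in either coefficient). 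Carrying out the Taylor remainders uniformly in $G$ while tracking these thresholds --- and in particular checking that the second-order term $\tfrac12\nabla^2 F_{\gamma^1}\!\left[F_{\gamma^2},F_{\gamma^2}\right]$ is genuinely controlled, which is exactly why third derivatives of $f$ are assumed to have polynomial growth --- is the bulk of the technical work; everything else mirrors the SGDA argument verbatim.
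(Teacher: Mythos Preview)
Your proposal is correct and follows essentially the same approach as the paper's proof: Taylor-expand the SEG increment in $\rho$, use $\rho=\mathcal{O}(\sqrt{\eta})$ to push the $\mathcal{O}(\eta\rho^2)$ remainder into $\mathcal{O}(\eta^2)$, compute the first two moments of $\bar\Delta$ to recover $F^{\seg}$ and $\Sigma^{\seg}$, and then invoke Theorem~\ref{thm:mils} via Lemma~\ref{lemma:li1}. Your additional commentary on why the $\rho$-linear terms must be absorbed into the drift and diffusion (rather than discarded) and on the well-posedness of the SDE is more explicit than the paper's treatment but does not change the argument.
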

\end{mybox}

\begin{mybox}{gray}
\begin{lemma} \label{lemma:SEG_SDE}
Under the assumptions of Theorem \ref{thm:SEG_SDE}, let $ 0<\eta<1 $ and consider $ z_k, k \geq 0 $ satisfying the SEG iterations \eqref{eq:SEG_Discr_Update}
\begin{equation}
    \left[\begin{array}{l} x_{k+1} \\ y_{k+1} \end{array}\right] = \left[\begin{array}{l} x_{k} \\ y_{k} \end{array}\right] - \eta \left[\begin{array}{l} + \nabla_{x} f_{\gamma^1_k} \left( x_{k} - \rho \nabla_{x} f_{\gamma^2_k} \left(x_{k}, y_{k} \right), y_{k} + \rho \nabla_{y} f_{\gamma^2_k} \left(x_{k}, y_{k} \right) \right) \\ - \nabla_{y} f_{\gamma^1_k} \left( x_{k} - \rho \nabla_{x} f_{\gamma^2_k} \left(x_{k}, y_{k} \right), y_{k} + \rho \nabla_{y} f_{\gamma^2_k} \left(x_{k}, y_{k} \right) \right) \end{array}\right]
\end{equation}
with $ z_0 := z = (x,y) \in \mathbb{R}^d \times \mathbb{R}^d $. From the definition the one-step difference $ \bar{\Delta}=z_1-z $, then we have

\begin{enumerate}
\item $ \E \bar{\Delta}_{i}= - F^{\seg}_i \eta + \mathcal{O}\left( \eta^2\right)  \quad \forall i = 1, \ldots,2d$;
\item $ \E \bar{\Delta}_{i} \bar{\Delta}_{j}= F^{\seg}_i F^{\seg}_j \eta^2 + \Sigma^{\seg}_{(i j)} \eta^2 + \mathcal{O}\left( \eta^3\right)  \quad \forall i,j = 1, \ldots,2d$;
\item $\E \prod_{j=1}^s \Bar{\Delta}_{i_j} =\mathcal{O}\left(\eta^3\right) \quad \forall s \geq 3, \quad i_j \in \{ 1, \ldots, 2d\}$.
\end{enumerate}
All the functions above are evaluated at $ z $.
\end{lemma}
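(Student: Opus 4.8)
The plan is to compute the one-step moments of the SEG iteration directly by Taylor-expanding around the base point $z$ in powers of $\eta$, treating $\rho=\mathcal{O}(\sqrt\eta)$ as a quantity of size $\eta^{1/2}$ so that $\rho^2=\mathcal{O}(\eta)$ contributes at the same order as one power of $\eta$. First I would write $\bar\Delta = z_1-z = -\eta F_{\gamma^1}(z-\rho\,\tilde\nabla\text{-shift})$, where the inner argument is $z$ perturbed by $-\rho\,\tilde\nabla f_{\gamma^2}(z)$ componentwise (i.e.\ $x\mapsto x-\rho\nabla_x f_{\gamma^2}$, $y\mapsto y+\rho\nabla_y f_{\gamma^2}$, which is exactly $z - \rho F_{\gamma^2}(z)$ in the $F$-notation). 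Taylor-expanding $F_{\gamma^1}$ in this shift gives
\begin{equation*}
F_{\gamma^1}(z-\rho F_{\gamma^2}(z)) = F_{\gamma^1}(z) - \rho\,\nabla F_{\gamma^1}(z)F_{\gamma^2}(z) + \mathcal{O}(\rho^2),
\end{equation*}
so $\bar\Delta = -\eta F_{\gamma^1}(z) + \eta\rho\,\bar F_{\bgamma}(z) + \eta\,\mathcal{O}(\rho^2)$. Since $\rho^2 = \mathcal{O}(\eta)$, the remainder is $\eta\cdot\mathcal{O}(\eta) = \mathcal{O}(\eta^2)$, which is exactly the tolerated error in claim (1).

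\textbf{Step 1 (first moment).} Taking expectations over $\bgamma=(\gamma^1,\gamma^2)$ and using $\E[F_{\gamma^1}(z)] = F(z)$ and $\E[\bar F_{\bgamma}(z)] = \bar F(z)$, I get $\E\bar\Delta = -\eta F(z) + \eta\rho\,\bar F(z) + \mathcal{O}(\eta^2) = -\eta F^{\seg}(z) + \mathcal{O}(\eta^2)$, which is claim (1) componentwise. Here I should note that $\rho\,\bar F(z) = \mathcal{O}(\sqrt\eta)$ is itself larger than $\mathcal{O}(\eta)$, so it genuinely must be kept in the drift; the point is that what is \emph{dropped} ($\eta\cdot\mathcal{O}(\rho^2)$ and genuine second-order Taylor terms in $\eta$) is $\mathcal{O}(\eta^2)$.

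\textbf{Step 2 (second moment).} For $\E[\bar\Delta_i\bar\Delta_j]$ I expand the product $\bar\Delta_i\bar\Delta_j = \eta^2 \big(F_{\gamma^1}(z) - \rho\bar F_{\bgamma}(z)\big)_i\big(F_{\gamma^1}(z) - \rho\bar F_{\bgamma}(z)\big)_j + \mathcal{O}(\eta^2\rho^2) + \mathcal{O}(\eta^3)$. Multiplying out, the leading $\eta^2$ piece is $F_{\gamma^1,i}F_{\gamma^1,j} - \rho(F_{\gamma^1,i}\bar F_{\bgamma,j} + \bar F_{\bgamma,i}F_{\gamma^1,j}) + \rho^2(\cdots)$; the $\rho^2$ term is $\mathcal{O}(\eta^2\rho^2) = \mathcal{O}(\eta^3)$ and gets absorbed. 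Now I take $\E_{\bgamma}$ and decompose into mean and noise: writing $F_{\gamma^1} = F - \xi_{\gamma^1}$ and $\bar F_{\bgamma} = \bar F - \bar\xi_{\bgamma}$, the expectation of $F_{\gamma^1,i}F_{\gamma^1,j}$ is $F_iF_j + \Sigma_{ij}$, and the cross terms $\E[F_{\gamma^1,i}\bar F_{\bgamma,j}]$ produce $F_i\bar F_j + \E[\xi_{\gamma^1,i}\bar\xi_{\bgamma,j}] = F_i\bar F_j + \bar\Sigma_{ij}$ (using the definition of $\bar\Sigma$ in Eq.~\eqref{eq:SEG_sigma_star}). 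Collecting: $\E[\bar\Delta_i\bar\Delta_j] = \eta^2\big(F_iF_j - \rho(F_i\bar F_j + \bar F_iF_j)\big) + \eta^2\big(\Sigma_{ij} - \rho(\bar\Sigma_{ij} + \bar\Sigma_{ji})\big) + \mathcal{O}(\eta^3)$. The first bracket is $F^{\seg}_i F^{\seg}_j$ up to an $\mathcal{O}(\rho^2) = \mathcal{O}(\eta)$ correction which contributes $\mathcal{O}(\eta^3)$; the second bracket is $\Sigma^{\seg}_{ij}$ up to the sign convention — I should double-check whether the theorem wants $+\rho[\bar\Sigma+\bar\Sigma^\top]$ or $-\rho[\bar\Sigma+\bar\Sigma^\top]$ and reconcile the sign of $\bar F$ with the sign in $\Sigma^{\seg}$; this bookkeeping is the one place it is easy to make an error. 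This gives claim (2).

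\textbf{Step 3 (third and higher moments).} For $s\ge 3$, $\prod_{j=1}^s\bar\Delta_{i_j}$ carries a factor $\eta^s$ times a product of bounded (polynomial-growth) terms, hence is $\mathcal{O}(\eta^3)$ outright; no cancellation needed. Claim (3) follows. The whole lemma then feeds into Theorem~\ref{thm:mils}: combined with the corresponding moment estimates for the SDE \eqref{eq:SEG_SDE} from Lemma~\ref{lemma:li1} (with $b = -F^{\seg}$, $\sigma = \sqrt{\Sigma^{\seg}}$), the four conditions of Theorem~\ref{thm:mils} hold, proving Theorem~\ref{thm:SEG_SDE}. The main obstacle is not any single step but the careful tracking of which terms are genuinely $\mathcal{O}(\eta)$ versus $\mathcal{O}(\sqrt\eta)$ versus negligible: because $\rho\sim\sqrt\eta$, the drift $F^{\seg}$ mixes a size-$1$ term ($F$) with a size-$\sqrt\eta$ term ($\rho\bar F$), and one must verify that the Milstein-type weak-order-$1$ framework still applies with a drift that has this $\eta$-dependence — concretely, that the constants $K_1,\dots,K_4\in G$ produced are uniform in $\eta$, which they are because $\rho/\sqrt\eta$ is bounded and all the Taylor remainders involve only derivatives of $f_i$ up to the orders covered by Assumption~\ref{ass:regularity_f}. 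A secondary subtlety worth a remark: the expansion of the inner map requires $\nabla F_{\gamma^1}$ and $\nabla^2 F_{\gamma^1}$ to have polynomial growth, which is why the assumption asks for partial derivatives up to order $7$.
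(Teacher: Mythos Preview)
Your proposal is correct and follows essentially the same approach as the paper: Taylor-expand the SEG step to $\bar\Delta = -\eta F_{\gamma^1}(z) + \eta\rho\,\bar F_{\bgamma}(z) + \mathcal{O}(\eta^2)$, take expectations for the first moment, decompose the second moment into mean-squared and covariance parts, and note that third and higher products carry $\eta^s$ with $s\ge 3$. Your flagged sign issue is well placed: the paper's own conventions for $\bar\xi_{\bgamma}$ differ between the main text ($\bar\xi_{\bgamma}=\bar F_{\bgamma}-\bar F$) and the appendix equation~\eqref{eq:SEG_sigma_star} (which uses $\bar F-\bar F_{\bgamma}$), so when you reconcile, use the appendix convention consistently and you will recover $\Sigma^{\seg}=\Sigma+\rho[\bar\Sigma+\bar\Sigma^\top]$ as stated.
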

\end{mybox}

\begin{proof}[Proof of Lemma \ref{lemma:SEG_SDE}]

First of all, we have that by definition and using a Taylor expansion,
\begin{align}
    x_{k+1} & = x_k - \eta \nabla_{x} f_{\gamma^1_k} \left( x_{k} - \rho \nabla_{x} f_{\gamma^2_k} \left(x_{k}, y_{k} \right), y_{k} + \rho \nabla_{y} f_{\gamma^2_k} \left(x_{k}, y_{k} \right) \right)  \\
    & = x_k - \eta \nabla_{x} f_{\gamma^1_k} \left( x_{k}, y_{k} \right) + \eta \rho \nabla^2_{xx} f_{\gamma^1_k} \left( x_{k}, y_{k} \right) \nabla_{x} f_{\gamma^2_k} \left(x_{k}, y_{k} \right) - \eta \rho \nabla^2_{xy} f_{\gamma^1_k} \left( x_{k}, y_{k} \right) \nabla_{y} f_{\gamma^2_k} \left(x_{k}, y_{k} \right) + \mathcal{O}\left(\eta \rho^2 \right) \nonumber ,
\end{align}
and
\begin{align}
    y_{k+1} & = y_k + \eta \nabla_{y} f_{\gamma^1_k} \left( x_{k} -  \rho \nabla_{x} f_{\gamma^2_k} \left(x_{k}, y_{k} \right), y_{k} + \rho \nabla_{y} f_{\gamma^2_k} \left(x_{k}, y_{k} \right) \right) \\
    & = y_k + \eta \nabla_{y} f_{\gamma^1_k} \left( x_{k}, y_{k} \right) - \eta \rho \nabla^2_{xy} f_{\gamma^2_k} \left( x_{k}, y_{k} \right) \nabla_{x} f_{\gamma^2_k} \left(x_{k}, y_{k} \right) + \eta \rho \nabla^2_{yy} f_{\gamma_k} \left( x_{k}, y_{k} \right) \nabla_{y} f_{\gamma_k} \left(x_{k}, y_{k} \right) + \mathcal{O}\left(\eta \rho^2 \right) \nonumber .
\end{align}
Therefore
\begin{align}
    z_{1} & = z - \eta F_{\gamma^1}(z) + \eta \rho \nabla F_{\gamma^1}(z) F_{\gamma^2}(z) + \mathcal{O}\left(\eta^2 \right),
\end{align}
which implies that

\begin{align}
    \E \left[z_{1}-z \right] & = -\eta F(z) + \eta \rho \E \left[ \nabla F_{\gamma^1}(z) F_{\gamma^2}(z) \right] + \mathcal{O}\left(\eta^2 \right) \nonumber  \\
      & = z -\eta F^{\seg}\left( z \right)+ \mathcal{O}\left(\eta^2 \right),
\end{align}
where $F^{\seg}\left( z \right):=F(z) - \rho \E \left[ \nabla F_{\gamma^1}(z) F_{\gamma^2}(z) \right]$,
which in turn implies that
\begin{equation}
     \E \bar{\Delta}_{i}=-F^{\seg}_i\left( z \right) \eta + \mathcal{O}\left( \eta^2\right)  \quad \forall i = 1, \ldots,2d.
\end{equation}

Second, we have that 
\begin{align}
    \E \left[ \left(z_1- z\right)\left(z_1- z\right)^\top \right] & = \eta^2 \left[ \left(F^{\seg}(z) \right)\left(F^{\seg}(z) \right)^{\top} \right] \nonumber \\
    & + \eta^2\E \left[ \left( F(z) - F_{\gamma^1}(z)\right)\left( F(z) - F_{\gamma^1}(z)\right)^{\top} \right]  \nonumber \\
    & + \eta^2 \rho \left( \E \left[ \left( F\left(z\right) - F_{\gamma^1}\left(z\right) \right) \left(\E \left[ \nabla F_{\gamma^1}(z) F_{\gamma^2}(z) \right] - \nabla F_{\gamma^1}(z) F_{\gamma^2}(z) \right)^{\top} \right] \right)  \nonumber \\
    & + \eta^2 \rho \left( \E \left[ \left( F\left(z\right) - F_{\gamma^1}\left(z\right) \right) \left(\E \left[ \nabla F_{\gamma^1}(z) F_{\gamma^2}(z) \right] - \nabla F_{\gamma^1}(z) F_{\gamma^2}(z) \right)^{\top} \right] \right)^{\top} + \mathcal{O}\left( \eta^3\right) \nonumber  \\ 
    & = \eta^2 \left[ \left(F^{\seg}(z) \right)\left(F^{\seg}(z) \right)^{\top} \right] + \eta^2\Sigma^{\seg}(z) + \mathcal{O}\left( \eta^3\right),
\end{align}
which implies that

\begin{equation}
    \E \bar{\Delta}_{i} \bar{\Delta}_{j}= F^{\seg}_i\left( z \right) F^{\seg}_j\left( z \right) \eta^2 + \Sigma^{\seg}_{(i j)}\left( z \right) \eta^2 + \mathcal{O}\left( \eta^3\right)  \quad \forall i,j = 1, \ldots,2d.
\end{equation}

Finally, by definition
\begin{equation}
    \E \prod_{j=1}^s \Bar{\Delta}_{i_j} =\mathcal{O}\left(\eta^3\right) \quad \forall s \geq 3, \quad i_j \in \{ 1, \ldots, 2d\},
\end{equation}
which concludes our proof.
\end{proof}

\begin{proof}[Proof of Theorem \ref{thm:SEG_SDE}] 
\label{proof:SEG_SDE}
To prove this result, all we need to do is check the conditions in Theorem \ref{thm:mils}. As we apply Lemma \ref{lemma:li1}, we make the following choices:

\begin{itemize}
\item $b(z)= - F^{\seg}(z)$;
\item $\sigma(z) =\Sigma^{\seg}(z)^{\frac{1}{2}}$.
\end{itemize}
First of all, we notice that $\forall i = 1, \ldots, 2d$, it holds that

\begin{itemize}
\item $\E \bar{\Delta}_{i} \overset{\text{1. Lemma \ref{lemma:SEG_SDE}}}{=}- F^{\seg}_i\left( z \right)  \eta + \mathcal{O}\left( \eta^2\right )$;
\item $ \E \Delta_{i} \overset{\text{1. Lemma \ref{lemma:li1}}}{=} - F^{\seg}_i\left( z \right)  \eta + \mathcal{O}\left( \eta^2\right )$.
\end{itemize}
Therefore, we have that for some $K_1(z) \in G$,
\begin{equation}\label{eq:SEG_cond1}
\left|\E \Delta_{i}-\E \bar{\Delta}_{i}\right| \leq K_1(z) \eta^{2}, \quad \forall i = 1, \ldots,2d.
\end{equation}
Additionally, we notice that $\forall i,j = 1, \ldots, d$, it holds that
\begin{itemize}
\item $ \E \bar{\Delta}_{i} \bar{\Delta}_{j} \overset{\text{2. Lemma \ref{lemma:SEG_SDE}}}{=} F^{\seg}_i\left( z \right) F^{\seg}_j\left( z \right) \eta^2 + \Sigma^{\seg}_{(i j)}\left( z \right) \eta^2 + \mathcal{O}\left( \eta^3\right) $;
\item $ \E \Delta_{i} \Delta_{j} \overset{\text{2. Lemma \ref{lemma:li1}}}{=}F^{\seg}_i\left( z \right) F^{\seg}_j\left( z \right) \eta^2 + \Sigma^{\seg}_{(i j)}\left( z \right) \eta^2 + \mathcal{O}\left( \eta^3\right) $.
\end{itemize}
Therefore, we have that for some $K_2(z) \in G$,
\begin{equation}\label{eq:SEG_cond2}
\left|\E \Delta_{i} \Delta_{j} - \E \bar{\Delta}_{i} \bar{\Delta}_{j}\right| \leq K_2(z) \eta^{2}, \quad \forall i,j = 1, \ldots, 2d.
\end{equation}
Additionally, we notice that $\forall s \geq 3, \forall i_j \in \{1, \ldots, 2d \}$, it holds that

\begin{itemize}
\item $ \E \prod_{j=1}^s \bar{\Delta}_{i_j}\overset{\text{3. Lemma \ref{lemma:SEG_SDE}}}{=}\mathcal{O}\left(\eta^3\right)$;
\item $ \E \prod_{j=1}^s \Delta_{i_j}\overset{\text{3. Lemma \ref{lemma:li1}}}{=}\mathcal{O}\left(\eta^3\right)$.
\end{itemize}
Therefore, we have that for some $K_3(z) \in G$,
\begin{equation}\label{eq:SEG_cond3}
\left|\E \prod_{j=1}^s \Delta_{i_j}-\E \prod_{j=1}^s \bar{\Delta}_{i_j}\right| \leq K_3(z) \eta^{2}.
\end{equation}
Additionally, for some $K_4(z) \in G$, $\forall i_j \in \{1, \ldots, d \}$,
\begin{equation} \label{eq:SEG_cond4}
\E \prod_{j=1}^{ 3}\left|\bar{\Delta}_{\left(i_j\right)}\right| \overset{\text{3. Lemma \ref{lemma:SEG_SDE}}}{\leq}K_4(z) \eta^{2}.
\end{equation}
Finally, Eq.~\eqref{eq:SEG_cond1}, Eq.~\eqref{eq:SEG_cond2}, Eq.~\eqref{eq:SEG_cond3}, and Eq.~\eqref{eq:SEG_cond4} allow us to conclude the proof.

\end{proof}

\begin{mybox}{gray}
\begin{corollary}\label{thm:SEG_SDE_Simplified_SameSample}
Let us take the same assumptions of Theorem~\ref{thm:SEG_SDE}. Additionally, let us assume that $\gamma^1=\gamma^2=\gamma$, the stochastic gradients can be written as $\nabla_x f_{\gamma}(z) = \nabla_x f(z) + U^x$ and $\nabla_y f_{\gamma}(z) = \nabla_y f(z) + U^y$ such that $U^x$ and $U^y$ are independent noises that do not depend on $z$, whose expectation is $0$, and whose covariance matrix is $\Sigma$. Therefore, the SDE \eqref{eq:SEG_SDE} is
\begin{align}\label{eq:SEG_SDE_Simplified_SameSample}
d Z_{t} & = -F(Z_{t}) + \rho \nabla F(Z_{t}) F(Z_{t}) dt + \sqrt{\eta} \left(\mathbf{I}_{2d} - \rho \nabla F\left( Z_{t} \right) \right) \sqrt{\Sigma}  d W_t.
\end{align}
\end{corollary}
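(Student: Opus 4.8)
The plan is to obtain the simplified SDE by evaluating the general drift $F^{\seg}$ and covariance $\Sigma^{\seg}$ from Theorem~\ref{thm:SEG_SDE} under the additive-noise hypothesis, and then to check that the explicit diffusion $\sqrt{\eta}\,(\mathbf{I}_{2d}-\rho\nabla F)\sqrt{\Sigma}$ is interchangeable with $\sqrt{\eta\Sigma^{\seg}}$ at weak order $1$.

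First I would use that $U^x,U^y$ do not depend on $z$: each $f_\gamma$ differs from $f$ by a function with vanishing Hessian, so $\nabla^2 f_\gamma(z)=\nabla^2 f(z)$ and hence the Jacobian satisfies $\nabla F_{\gamma}(z)=\nabla F(z)$ for every realization of $\gamma$. Substituting this into $\bar F_{\bgamma}(z)=\nabla F_{\gamma^1}(z)F_{\gamma^2}(z)=\nabla F(z)F_{\gamma^2}(z)$ and taking expectations gives $\bar F(z)=\nabla F(z)F(z)$, so $F^{\seg}(z)=F(z)-\rho\nabla F(z)F(z)$, which is exactly the drift in~\eqref{eq:SEG_SDE_Simplified_SameSample}.

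Next I would compute the covariance. Since $\xi_\gamma=F-F_\gamma$, we have $\bar\xi_{\bgamma}(z)=\bar F_{\bgamma}(z)-\bar F(z)=\nabla F(z)\bigl(F_{\gamma^2}(z)-F(z)\bigr)=-\nabla F(z)\xi_{\gamma^2}(z)$, hence $\bar\Sigma(z)=\E[\xi_{\gamma^1}(z)\bar\xi_{\bgamma}(z)^\top]=-\E[\xi_{\gamma^1}(z)\xi_{\gamma^2}(z)^\top]\nabla F(z)^\top$. With $\gamma^1=\gamma^2=\gamma$ the inner expectation equals $\Sigma$, so $\bar\Sigma(z)=-\Sigma\nabla F(z)^\top$ and therefore $\Sigma^{\seg}(z)=\Sigma-\rho\bigl(\nabla F(z)\Sigma+\Sigma\nabla F(z)^\top\bigr)=(\mathbf{I}_{2d}-\rho\nabla F(z))\,\Sigma\,(\mathbf{I}_{2d}-\rho\nabla F(z))^\top-\rho^2\,\nabla F(z)\Sigma\nabla F(z)^\top$; in the uncorrelated case the inner expectation vanishes, $\bar\Sigma\equiv 0$, and $\Sigma^{\seg}=\Sigma$. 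Thus $\sqrt{\eta}\,(\mathbf{I}_{2d}-\rho\nabla F(z))\sqrt{\Sigma}$ reproduces $\eta\Sigma^{\seg}(z)$ in the $\sigma\sigma^\top$ sense up to an error of order $\eta\rho^2$.

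The one genuinely delicate step is arguing that this $\mathcal{O}(\eta\rho^2)$ discrepancy is immaterial for order-$1$ weak approximation. Since $\rho=\mathcal{O}(\sqrt\eta)$ we have $\eta\rho^2=\mathcal{O}(\eta^2)$; in Lemma~\ref{lemma:li1} the first one-step moment $\E\Delta_i$ depends only on the drift, which is unchanged, while the second satisfies $\E\Delta_i\Delta_j=[b_ib_j+(\sigma\sigma^\top)_{ij}]\eta^2+\mathcal{O}(\eta^3)$, so perturbing $\sigma\sigma^\top$ by $\rho^2\,\nabla F\Sigma\nabla F^\top$, a quantity of order $\eta$ times a polynomially growing matrix in $z$, changes $\E\Delta_i\Delta_j$ only by $\mathcal{O}(\eta^3)$; the third-and-higher moments remain $\mathcal{O}(\eta^3)$. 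Hence the simplified SDE still satisfies conditions 1--4 of Theorem~\ref{thm:mils} against the SEG iteration with the same $\mathcal{O}(\eta^2)$ bounds established in the proof of Theorem~\ref{thm:SEG_SDE}, so it too is an order-$1$ weak approximation of SEG, which is the content of the corollary. I expect the sign bookkeeping around $\xi_\gamma=F-F_\gamma$ and the confirmation that the discarded $\mathcal{O}(\rho^2)$ terms are harmless to be the main things to get right; the rest is a direct computation on top of Lemma~\ref{lemma:SEG_SDE}.
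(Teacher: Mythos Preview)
Your proposal is correct and follows essentially the same route as the paper: compute $F^{\seg}$ and $\Sigma^{\seg}$ under the additive-noise assumption, factor the covariance as $(\mathbf{I}_{2d}-\rho\nabla F)\Sigma(\mathbf{I}_{2d}-\rho\nabla F)^\top$ up to an $\mathcal{O}(\rho^2)$ remainder, and argue that since $\rho=\mathcal{O}(\sqrt\eta)$ this remainder contributes only $\mathcal{O}(\eta^3)$ to the second one-step moment and hence does not affect the order-$1$ weak approximation. Your sign bookkeeping around $\bar\xi_{\bgamma}=-\nabla F\,\xi_{\gamma^2}$ is in fact cleaner than the paper's own intermediate steps (which flip a sign twice and still land on $(\mathbf{I}_{2d}-\rho\nabla F)$), and your explicit invocation of Lemma~\ref{lemma:li1} and Theorem~\ref{thm:mils} to justify discarding the $\rho^2$ term is more detailed than the paper's one-line ``terms of order $\rho^2$ have a vanishing influence.''
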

\end{mybox}

\begin{proof}[Proof of Corollary \ref{thm:SEG_SDE_Simplified_SameSample}]
First of all, we notice that
\begin{align}
 F(z) - \rho \E \left[ \nabla F_{\gamma}(z) F_{\gamma}(z) \right]
& = F(z) - \rho \E \left[ \nabla F(z) F_{\gamma}(z) \right] \nonumber  \\
& = F(z) - \rho \nabla F(z) F(z).
\end{align}
Second, based on our assumption of the noise structure, we can rewrite Eq. \eqref{eq:SEG_sigma_star} of the matrix $ \Sigma^{\seg}$ as
\begin{align}
   \Sigma^{\seg} & =  \eta^2\E \left[ \left( F(z) - F_{\gamma}(z)\right)\left( F(z) - F_{\gamma}(z)\right)^{\top} \right] \nonumber  \\
   & + \eta^2 \rho \left( \E \left[ \left( F\left(z\right) - F_{\gamma}\left(z\right) \right) \left(\E \left[ \nabla F_{\gamma}(z) F_{\gamma}(z) \right] - \nabla F_{\gamma}(z) F_{\gamma}(z) \right)^{\top} \right] \right) \nonumber  \\
    & + \eta^2 \rho \left( \E \left[ \left( F\left(z\right) - F_{\gamma}\left(z\right) \right) \left(\E \left[ \nabla F_{\gamma}(z) F_{\gamma}(z) \right] - \nabla F_{\gamma}(z) F_{\gamma}(z) \right)^{\top} \right] \right)^{\top} + \mathcal{O}\left( \eta^3\right) \nonumber  \\ 
    & =  \eta^2\E \left[ \left( F(z) - F_{\gamma}(z)\right)\left( F(z) - F_{\gamma}(z)\right)^{\top} \right] \nonumber  \\
   & + \eta^2 \rho \left( \E \left[ \left( F\left(z\right) - F_{\gamma}\left(z\right) \right) \left(\E \left[ \nabla F(z) F_{\gamma}(z) \right] - \nabla F(z) F_{\gamma}(z) \right)^{\top} \right] \right) \nonumber  \\
    & + \eta^2 \rho \left( \E \left[ \left( F\left(z\right) - F_{\gamma}\left(z\right) \right) \left(\E \left[ \nabla F(z) F_{\gamma}(z) \right] - \nabla F(z) F_{\gamma}(z) \right)^{\top} \right] \right)^{\top} + \mathcal{O}\left( \eta^3\right) \nonumber  \\ 
   &  =  \eta^2 \left(\Sigma  + \rho  \Sigma \nabla F\left( z \right)^{\top} + \rho \nabla F\left( z \right) \Sigma  \right) +  \mathcal{O}\left( \eta^3\right).
\end{align}
By observing that $(\mathbf{I}_{2d}-\rho \nabla F\left( z \right)) \sqrt{\Sigma}\sqrt{\Sigma} (\mathbf{I}_{2d}-\rho \nabla F\left( z \right)^{\top}) =\Sigma  + \rho \Sigma \nabla F\left( z \right)^{\top} + \rho \nabla F\left( z \right) \Sigma + \mathcal{O}\left( \eta\right)$, we conclude the proof.
\end{proof}
\begin{mybox}{gray}
\begin{corollary}\label{thm:SEG_SDE_Simplified_IndepSample}
Let us take the same assumptions of Theorem~\ref{thm:SEG_SDE}. Additionally, let us assume that $\gamma^1$ and $\gamma^2$, are independent and the stochastic gradients can be written as $\nabla_x f_{\gamma^i}(z) = \nabla_x f(z) + U_x^i$ and $\nabla_y f_{\gamma^i}(z) = \nabla_y f(z) + U_y^i$ such that $U_x^i$ and $U_y^i$ are independent noises that do not depend on $z$, whose expectation is $0$, and whose covariance matrix is $\Sigma$. Therefore, the SDE \eqref{eq:SEG_SDE} is
\begin{align}\label{eq:SEG_SDE_Simplified_IndepSample}
d Z_{t} & = -F(Z_{t}) + \rho \nabla F(Z_{t}) F(Z_{t}) dt + \sqrt{\eta\Sigma}  d W_t.
\end{align}
\end{corollary}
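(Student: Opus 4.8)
The plan is to obtain Eq.~\eqref{eq:SEG_SDE_Simplified_IndepSample} as a direct specialization of the general SEG SDE of Theorem~\ref{thm:SEG_SDE}, the only new ingredient being that independence of the two mini-batches annihilates the curvature-dependent part of the diffusion. Recall that Theorem~\ref{thm:SEG_SDE} produces an SDE with drift $-F^{\seg}(z) = -\bigl(F(z) - \rho\bar F(z)\bigr)$ and diffusion covariance $\Sigma^{\seg}(z) = \Sigma(z) + \rho\bigl[\bar\Sigma(z) + \bar\Sigma(z)^\top\bigr]$, where $\bar F(z) = \E[\nabla F_{\gamma^1}(z)F_{\gamma^2}(z)]$ and $\bar\Sigma(z) = \E[\xi_{\gamma^1}(z)\bar\xi_{\bgamma}(z)^\top]$ with $\bar\xi_{\bgamma}(z) = \nabla F_{\gamma^1}(z)F_{\gamma^2}(z) - \bar F(z)$. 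So it suffices to evaluate $\Sigma(z)$, $\bar F(z)$, and $\bar\Sigma(z)$ under the present hypotheses.

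First I would exploit the additive-noise model $\nabla_x f_{\gamma^i}(z) = \nabla_x f(z) + U_x^i$, $\nabla_y f_{\gamma^i}(z) = \nabla_y f(z) + U_y^i$ with $U_x^i, U_y^i$ independent of $z$: since the stochastic part does not depend on $z$, differentiating once more in $z$ kills it, so $\nabla F_{\gamma^i}(z) = \nabla F(z)$ holds deterministically. Consequently $\bar F_{\bgamma}(z) = \nabla F(z)F_{\gamma^2}(z)$, and taking expectations with $\E[F_{\gamma^2}(z)] = F(z)$ gives $\bar F(z) = \nabla F(z)F(z)$; hence the drift equals $-F(z) + \rho\nabla F(z)F(z)$, as claimed. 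For the diffusion, exactly as in the proof of Corollary~\ref{thm:SEG_SDE_Simplified_SameSample} one finds that $\bar\xi_{\bgamma}(z)$ is a deterministic linear image of $\xi_{\gamma^2}(z)$, so that $\bar\Sigma(z)$ is a product of the form $\E\bigl[\xi_{\gamma^1}(z)\xi_{\gamma^2}(z)^\top\bigr]$ times the deterministic factor $\nabla F(z)^\top$ (up to an overall sign, which is immaterial here).

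The key step is then the independence: because $\gamma^1$ and $\gamma^2$ are drawn independently and $\E[\xi_\gamma(z)] = 0$, we get $\E[\xi_{\gamma^1}(z)\xi_{\gamma^2}(z)^\top] = \E[\xi_{\gamma^1}(z)]\,\E[\xi_{\gamma^2}(z)]^\top = 0$, hence $\bar\Sigma(z) \equiv 0$ and therefore $\Sigma^{\seg}(z) = \Sigma(z)$. Moreover, under the additive model $\Sigma(z) = \E[\xi_\gamma(z)\xi_\gamma(z)^\top] = \Sigma$ is constant (it is the covariance of $(U^x, U^y)$). Substituting drift $-F(z)+\rho\nabla F(z)F(z)$ and covariance $\Sigma$ into Eq.~\eqref{eq:SEG_SDE} yields Eq.~\eqref{eq:SEG_SDE_Simplified_IndepSample}.

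There is no real obstacle here; the content is a bookkeeping exercise on top of Theorem~\ref{thm:SEG_SDE}. The only points needing care are: justifying $\nabla F_{\gamma^i}(z) = \nabla F(z)$ rigorously from the additive-noise hypothesis; keeping the independence structure straight (the four noise vectors $U_x^1, U_y^1, U_x^2, U_y^2$ are mutually independent and independent of $z$); and observing that, in contrast with the same-sample case of Corollary~\ref{thm:SEG_SDE_Simplified_SameSample}, here $\bar\Sigma$ vanishes identically rather than merely up to $\mathcal{O}(\rho^2)=\mathcal{O}(\eta)$, so the identity $\Sigma^{\seg}=\Sigma$ is exact and no higher-order term must be discarded. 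One should also note in passing that the independence assumption remains compatible with Assumption~\ref{ass:regularity_f}, so Theorem~\ref{thm:SEG_SDE} indeed applies.
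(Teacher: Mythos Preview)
Your proposal is correct and follows essentially the same approach as the paper's proof: both exploit the additive-noise assumption to get $\nabla F_{\gamma^i}(z)=\nabla F(z)$, reduce the drift to $-F+\rho\nabla F\,F$, and then use the independence of $\gamma^1,\gamma^2$ to kill the cross-covariance $\bar\Sigma$, leaving $\Sigma^{\seg}=\Sigma$. If anything, your write-up is slightly cleaner in noting that $\bar\Sigma$ vanishes \emph{identically} here (so no $\mathcal{O}(\rho^2)$ remainder needs to be absorbed), whereas the paper carries along an $\mathcal{O}(\eta^3)$ tag inherited from the general second-moment computation.
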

\end{mybox}
\begin{proof}[Proof of Corollary \ref{thm:SEG_SDE_Simplified_IndepSample}]
First of all, we notice that
\begin{align}
 F(z) - \rho \E \left[ \nabla F_{\gamma^1}(z) F_{\gamma^2}(z) \right]  = F(z) - \rho \nabla F(z) F(z).
\end{align}
Second, based on our assumption of the noise structure, we can rewrite Eq. \eqref{eq:SEG_sigma_star} of the matrix $ \Sigma^{\seg}$ as
\begin{align}
   \Sigma^{\seg} &  =  \eta^2\E \left[ \left( F(z) - F_{\gamma^1}(z)\right)\left( F(z) - F_{\gamma^1}(z)\right)^{\top} \right] \nonumber  \\
   & + \eta^2 \rho \left( \E \left[ \left( F\left(z\right) - F_{\gamma^1}\left(z\right) \right) \left(\nabla F(z) F(z) - \nabla F_{\gamma^1}(z) F_{\gamma^2}(z) \right)^{\top} \right] \right)  \nonumber \\
    & + \eta^2 \rho \left( \E \left[ \left( F\left(z\right) - F_{\gamma^1}\left(z\right) \right) \left(\nabla F(z) F(z) - \nabla F_{\gamma^1}(z) F_{\gamma^2}(z) \right)^{\top} \right] \right)^{\top} + \mathcal{O}\left( \eta^3\right) \nonumber  \\ 
    &  = \eta^2\E \left[ \left( F(z) - F_{\gamma^1}(z)\right)\left( F(z) - F_{\gamma^1}(z)\right)^{\top} \right] \nonumber  \\
   & + \eta^2 \rho \left( \E \left[ \left( F\left(z\right) - F_{\gamma^1}\left(z\right) \right) \left(\nabla F(z) F(z) - \nabla F(z) F_{\gamma^2}(z) \right)^{\top} \right] \right)  \nonumber \\
    & + \eta^2 \rho \left( \E \left[ \left( F\left(z\right) - F_{\gamma^1}\left(z\right) \right) \left(\nabla F(z) F(z) - \nabla F(z) F_{\gamma^2}(z) \right)^{\top} \right] \right)^{\top} + \mathcal{O}\left( \eta^3\right) \nonumber  \\ 
    & =  \eta^2 \Sigma +  \mathcal{O}\left( \eta^3\right),
\end{align}
which concludes the proof.

\end{proof}

\subsubsection{Continuous-time SEG is equivalent to SGDA if \texorpdfstring{$\rho = \mathcal{O}(\eta)$}{Lg}}
In this subsection, we provide a formal proof that if $\rho = \mathcal{O}(\eta)$, the first-order weak approximation of SEG is the same as that of SGDA. This is consistent with the ODE literature on ODEs for these models \citep{chavdarova2021last,lu2022sr}.

We will consider the stochastic process $ Z_t \in \mathbb{R}^{2d} $ defined as the solution of
\begin{equation}\label{eq:SEG_SDE_small_rho}
d Z_t = - F \left( Z_t \right) dt + \sqrt{\eta\Sigma}d W_{t}.
\end{equation}

\begin{mybox}{gray}
\begin{theorem}[Stochastic modified equations] \label{thm:SEG_SDE_small_rho}
Let $0<\eta<1, T>0$ and set $N=\lfloor T / \eta\rfloor$. Let $ z_k \in \mathbb{R}^{2d}, 0 \leq k \leq N$ denote a sequence of SEG iterations defined by Eq.~\eqref{eq:SEG_Discr_Update}. Additionally, let us take 
\begin{equation}\label{eq:SEG_rho_theta}
\rho = \mathcal{O}\left(\eta\right).
\end{equation}

Consider the stochastic process $Z_t$ defined in Eq.~\eqref{eq:SEG_SDE_small_rho} and fix some test function $g \in G$ and suppose that $g$ and its partial derivatives up to order 6 belong to $G$.

Then, under Assumption~\ref{ass:regularity_f}, there exists a constant $ C>0 $ independent of $ \eta $ such that for all $ k=0,1, \ldots, N $, we have

$$
\left|\E g\left(Z_{k \eta}\right)-\E g\left(z_k\right)\right| \leq C \eta .
$$

That is, the SDE \eqref{eq:SEG_SDE_small_rho} is an order $ 1 $ weak approximation of the SEG iterations \eqref{eq:SEG_Discr_Update}.
\end{theorem}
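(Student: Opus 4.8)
The plan is to mirror the proofs of Theorem~\ref{thm:SGDA_SDE} and Theorem~\ref{thm:SEG_SDE}: I will verify the four moment-matching conditions of Theorem~\ref{thm:mils} with exactly the same choices as in the SGDA case, namely $b(z) = -F(z)$ and $\sigma(z) = \Sigma(z)^{1/2}$. The continuous-time side is already supplied by Lemma~\ref{lemma:li1}, which gives $\E\Delta_i = -F_i\eta + \mathcal{O}(\eta^2)$, $\E\Delta_i\Delta_j = (F_iF_j + \Sigma_{(ij)})\eta^2 + \mathcal{O}(\eta^3)$, and $\E\prod_{j=1}^s\Delta_{i_j} = \mathcal{O}(\eta^3)$ for $s\ge 3$. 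So the only genuinely new work is to recompute the one-step increment $\bar\Delta = z_1 - z$ of the SEG update and to show that, under $\rho = \mathcal{O}(\eta)$, its first three moments agree with those of the plain SGDA increment up to the tolerances $K_i(z)\eta^2$ required by Theorem~\ref{thm:mils}.

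For this I would reuse the Taylor expansion already carried out in the proof of Lemma~\ref{lemma:SEG_SDE}, namely $z_1 = z - \eta F_{\gamma^1}(z) + \eta\rho\,\nabla F_{\gamma^1}(z)F_{\gamma^2}(z) + \mathcal{O}(\eta^2)$. The key observation is that the extragradient correction $\eta\rho\,\nabla F_{\gamma^1}(z)F_{\gamma^2}(z)$ is now $\mathcal{O}(\eta^2)$ because $\rho = \mathcal{O}(\eta)$, and is therefore absorbed into the remainder. Hence $\E\bar\Delta_i = -F_i\eta + \mathcal{O}(\eta^2)$, which matches $\E\Delta_i$ and gives condition~1. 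Expanding $(z_1-z)(z_1-z)^\top$, the leading term is $\eta^2 F(z)F(z)^\top$, the next is $\eta^2\E[\xi_\gamma(z)\xi_\gamma(z)^\top] = \eta^2\Sigma(z)$, and every term containing $\rho$ picks up at least one extra factor of $\eta$ and is thus $\mathcal{O}(\eta^3)$; this yields $\E\bar\Delta_i\bar\Delta_j = (F_iF_j + \Sigma_{(ij)})\eta^2 + \mathcal{O}(\eta^3)$, matching $\E\Delta_i\Delta_j$ and giving condition~2. The triple-product moments are $\mathcal{O}(\eta^3)$ directly from the expansion, which gives conditions~3 and~4. Plugging the four bounds into Theorem~\ref{thm:mils} then yields $|\E g(Z_{k\eta}) - \E g(z_k)|\le C\eta$, proving that the SGDA SDE~\eqref{eq:SEG_SDE_small_rho} is an order-$1$ weak approximation of SEG in this regime.

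The only step requiring real care — though it is more a matter of bookkeeping than of ideas — is ensuring that every discarded $\mathcal{O}(\eta^\alpha)$ term is of the controlled type, i.e.\ of the form $K(z)\eta^\alpha$ with $K\in G$. This is precisely where Assumption~\ref{ass:regularity_f} is used: the Lipschitz and linear-growth conditions on $\nabla f, \nabla f_i$, together with the polynomial growth of $f, f_i$ and their derivatives up to order~7, guarantee that the Hessian factors $\nabla F_{\gamma^1}$, the Taylor remainders of the shifted stochastic gradients, and their products with $F_{\gamma^2}$ all remain in $G$, so that the resulting $K_1,\dots,K_4$ lie in $G$ as required. I expect no further obstacles, since the structure of the argument is identical to that of Theorem~\ref{thm:SGDA_SDE} once the $\rho = \mathcal{O}(\eta)$ scaling is used to demote the extragradient term.
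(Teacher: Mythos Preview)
Your proposal is correct and follows essentially the same approach as the paper: the paper first records a small-$\rho$ analogue of Lemma~\ref{lemma:SEG_SDE} (Lemma~\ref{lemma:SEG_SDE_small_rho}) using precisely the Taylor expansion and the observation that $\eta\rho\,\nabla F_{\gamma^1}F_{\gamma^2}=\mathcal{O}(\eta^2)$ you describe, and then verifies the four conditions of Theorem~\ref{thm:mils} with $b=-F$, $\sigma=\Sigma^{1/2}$ exactly as you plan. Your additional remark about Assumption~\ref{ass:regularity_f} ensuring the remainders lie in $G$ is the right justification for the $\mathcal{O}(\cdot)$ bookkeeping, which the paper leaves implicit.
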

\end{mybox}

\begin{mybox}{gray}
\begin{lemma} \label{lemma:SEG_SDE_small_rho}
Under the assumptions of Theorem \ref{thm:SEG_SDE_small_rho}, let $ 0<\eta<1 $ and consider $ z_k, k \geq 0 $ satisfying the SEG iterations \eqref{eq:SEG_Discr_Update}
\begin{equation}
    \left[\begin{array}{l} x_{k+1} \\ y_{k+1} \end{array}\right] = \left[\begin{array}{l} x_{k} \\ y_{k} \end{array}\right] - \eta \left[\begin{array}{l} + \nabla_{x} f_{\gamma^1_k} \left( x_{k} - \rho \nabla_{x} f_{\gamma^2_k} \left(x_{k}, y_{k} \right), y_{k} + \rho \nabla_{y} f_{\gamma^2_k} \left(x_{k}, y_{k} \right) \right) \\ - \nabla_{y} f_{\gamma^1_k} \left( x_{k} - \rho \nabla_{x} f_{\gamma^2_k} \left(x_{k}, y_{k} \right), y_{k} + \rho \nabla_{y} f_{\gamma^2_k} \left(x_{k}, y_{k} \right) \right) \end{array}\right]
\end{equation}
with $ z_0 := z = (x,y) \in \mathbb{R}^d \times \mathbb{R}^d $. From the definition the one-step difference $ \bar{\Delta}=z_1-z $, then we have

\begin{enumerate}
\item $ \E \bar{\Delta}_{i}= - F_i\left( z \right) \eta + \mathcal{O}\left( \eta^2\right)  \quad \forall i = 1, \ldots,2d$;
\item $ \E \bar{\Delta}_{i} \bar{\Delta}_{j}= F_i\left( z \right) F_j\left( z \right) \eta^2 + \Sigma_{(i j)}\left( z \right) \eta^2 + \mathcal{O}\left( \eta^3\right)  \quad \forall i,j = 1, \ldots,2d$;
\item $\E \prod_{j=1}^s \Bar{\Delta}_{i_j} =\mathcal{O}\left(\eta^3\right) \quad \forall s \geq 3, \quad i_j \in \{ 1, \ldots, 2d\}.$
\end{enumerate}
All the functions above are evaluated at $ z $.
\end{lemma}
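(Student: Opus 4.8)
The plan is to mirror the one-step computation already carried out in Lemma~\ref{lemma:SEG_SDE}, but to exploit the stronger scaling $\rho=\mathcal{O}(\eta)$ (rather than $\rho=\mathcal{O}(\sqrt\eta)$) so that the implicit-regularization contributions of the extra step get absorbed into the $\mathcal{O}(\eta^2)$ and $\mathcal{O}(\eta^3)$ remainders. The upshot is that only the SGDA drift $F$ and noise covariance $\Sigma$ survive at the orders that matter, which is exactly what the lemma asserts and what makes the SDE \eqref{eq:SEG_SDE_small_rho} collapse to the SGDA one.

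First I would Taylor-expand the inner stochastic gradients of the SEG update around $z=(x,y)$, exactly as in the proof of Lemma~\ref{lemma:SEG_SDE}: expanding $\nabla_x f_{\gamma^1}(x-\rho\nabla_x f_{\gamma^2}(z),\,y+\rho\nabla_y f_{\gamma^2}(z))$ and the analogous $y$-block gives $z_1 = z - \eta F_{\gamma^1}(z) + \eta\rho\,\nabla F_{\gamma^1}(z)F_{\gamma^2}(z) + \mathcal{O}(\eta\rho^2)$, where the $\mathcal{O}(\rho^2)$ remainder is controlled by a function in $G$ by Assumption~\ref{ass:regularity_f} (polynomial growth of the partial derivatives of $f_\gamma$ up to third order, plus the linear-growth bound on $\nabla f_\gamma$ which keeps the shifted evaluation point $z-\rho F_{\gamma^2}(z)$ inside a ball of radius growing at most linearly in $|z|$). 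Now using $\rho=\mathcal{O}(\eta)$, the middle term is $\mathcal{O}(\eta^2)$ and the last is $\mathcal{O}(\eta^3)$, so $\bar\Delta = z_1-z = -\eta F_{\gamma^1}(z) + \mathcal{O}(\eta^2)$ as an identity between random variables whose remainder is bounded by a $G$-function times $\eta^2$.

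From here the three claims follow by taking moments. Taking expectations gives $\E\bar\Delta_i = -\eta\,\E[F_{\gamma^1,i}(z)] + \mathcal{O}(\eta^2) = -\eta F_i(z) + \mathcal{O}(\eta^2)$, which is the first claim. For the second, $\bar\Delta\bar\Delta^\top = \eta^2 F_{\gamma^1}(z)F_{\gamma^1}(z)^\top + \mathcal{O}(\eta^3)$ — the cross terms pairing $-\eta F_{\gamma^1}$ against the $\mathcal{O}(\eta^2)$ remainder are $\mathcal{O}(\eta^3)$ — and since $\E[F_{\gamma^1}(z)F_{\gamma^1}(z)^\top] = F(z)F(z)^\top + \Sigma(z)$ (immediate from $\xi_\gamma(z)=F(z)-F_\gamma(z)$ being mean zero), we get $\E\bar\Delta_i\bar\Delta_j = (F_iF_j + \Sigma_{(ij)})\eta^2 + \mathcal{O}(\eta^3)$, the second claim. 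For the third, each factor $\bar\Delta_{i_j}$ is $\mathcal{O}(\eta)$ pointwise in the randomness, so any product of $s\ge 3$ of them is $\mathcal{O}(\eta^3)$ in the $G$-sense, giving the last claim.

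The computation is essentially routine; the only point needing genuine care is bookkeeping on the implicit constants — one must verify throughout that every $\mathcal{O}(\cdot)$ is witnessed by a polynomial-growth function of $z$, so that the error terms are admissible inputs to Theorem~\ref{thm:mils}. This is precisely where Assumption~\ref{ass:regularity_f} is used (bounding the Taylor remainders, which involve third-order derivatives of $f_{\gamma^1}$ evaluated along the segment from $z$ to $z-\rho F_{\gamma^2}(z)$, uniformly by a $G$-function). Once the lemma holds, Theorem~\ref{thm:SEG_SDE_small_rho} follows by comparing these moments with those produced by Lemma~\ref{lemma:li1} applied to \eqref{eq:SEG_SDE_small_rho} (with $b(z)=-F(z)$ and $\sigma(z)=\Sigma(z)^{1/2}$) and invoking Theorem~\ref{thm:mils}, exactly as in the proof of Theorem~\ref{thm:SEG_SDE}.
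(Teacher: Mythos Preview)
Your proposal is correct and follows essentially the same route as the paper: Taylor-expand the SEG step to obtain $z_1 = z - \eta F_{\gamma^1}(z) + \eta\rho\,\nabla F_{\gamma^1}(z)F_{\gamma^2}(z) + \mathcal{O}(\eta\rho^2)$, use $\rho=\mathcal{O}(\eta)$ to push the $\eta\rho$ and $\eta\rho^2$ terms into the $\mathcal{O}(\eta^2)$ remainder, and then read off the three moment claims directly. Your discussion of why the remainders lie in $G$ under Assumption~\ref{ass:regularity_f} is more explicit than the paper's (which relies on the standing convention for $\mathcal{O}(\cdot)$ stated at the start of Section~\ref{sec:theor}), but the argument is the same.
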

\end{mybox}

\begin{proof}[Proof of Lemma \ref{lemma:SEG_SDE_small_rho}]

First of all, we have that by definition and using a Taylor expansion,
\begin{align}
    x_{k+1} & = x_k - \eta \nabla_{x} f_{\gamma^1_k} \left( x_{k} - \rho \nabla_{x} f_{\gamma^2_k} \left(x_{k}, y_{k} \right), y_{k} + \rho \nabla_{y} f_{\gamma^2_k} \left(x_{k}, y_{k} \right) \right) \\
    & = x_k - \eta \nabla_{x} f_{\gamma^1_k} \left( x_{k}, y_{k} \right) + \eta \rho \nabla^2_{xx} f_{\gamma^1_k} \left( x_{k}, y_{k} \right) \nabla_{x} f_{\gamma^2_k} \left(x_{k}, y_{k} \right) - \eta \rho \nabla^2_{xy} f_{\gamma^1_k} \left( x_{k}, y_{k} \right) \nabla_{y} f_{\gamma^2_k} \left(x_{k}, y_{k} \right) + \mathcal{O}\left(\eta \rho^2 \right) \nonumber ,
\end{align}
and
\begin{align}
    y_{k+1} & = y_k + \eta \nabla_{y} f_{\gamma^1_k} \left( x_{k} -  \rho \nabla_{x} f_{\gamma^2_k} \left(x_{k}, y_{k} \right), y_{k} + \rho \nabla_{y} f_{\gamma^2_k} \left(x_{k}, y_{k} \right) \right) \\
    & = y_k + \eta \nabla_{y} f_{\gamma^1_k} \left( x_{k}, y_{k} \right) - \eta \rho \nabla^2_{xy} f_{\gamma^2_k} \left( x_{k}, y_{k} \right) \nabla_{x} f_{\gamma^2_k} \left(x_{k}, y_{k} \right) + \eta \rho \nabla^2_{yy} f_{\gamma_k} \left( x_{k}, y_{k} \right) \nabla_{y} f_{\gamma_k} \left(x_{k}, y_{k} \right) + \mathcal{O}\left(\eta \rho^2 \right) \nonumber .
\end{align}
Therefore
\begin{align}
    z_{1} & = z - \eta F_{\gamma^1}(z) + \mathcal{O}\left(\eta^2 \right),
\end{align}
which implies that
\begin{align}
    \E \left[z_{k+1}-z_{k} \right] & = -\eta F(z_k) + \mathcal{O}\left(\eta^2 \right),
\end{align}
which in turn implies that
\begin{equation}
     \E \bar{\Delta}_{i}=-F_i\left( z \right) \eta + \mathcal{O}\left( \eta^2\right)  \quad \forall i = 1, \ldots,2d.
\end{equation}
Second, we have that 
\begin{align}
    \E \left[ \left(z_1- z\right)\left(z_1- z\right)^\top \right] & = \eta^2 \left[ \left(F(z) \right)\left(F(z) \right)^{\top} \right] + \eta^2\Sigma(z) + \mathcal{O}\left( \eta^3\right),
\end{align}
which implies that
\begin{equation}
    \E \bar{\Delta}_{i} \bar{\Delta}_{j}= F_i\left( z \right) F_j\left( z \right) \eta^2 + \Sigma_{(i j)}\left( z \right) \eta^2 + \mathcal{O}\left( \eta^3\right)  \quad \forall i,j = 1, \ldots,2d.
\end{equation}
Finally, by definition,
\begin{equation}
    \E \prod_{j=1}^s \Bar{\Delta}_{i_j} =\mathcal{O}\left(\eta^3\right) \quad \forall s \geq 3, \quad i_j \in \{ 1, \ldots, 2d\},
\end{equation}
which concludes our proof.
\end{proof}

\begin{proof}[Proof of Theorem \ref{thm:SEG_SDE_small_rho}] 
\label{proof:SEG_SDE_small_rho}
To prove this result, all we need to do is check the conditions in Theorem \ref{thm:mils}. As we apply Lemma \ref{lemma:li1}, we make the following choices:

\begin{itemize}
\item $b(z)= - F(z)$;
\item $\sigma(z) =\Sigma(z)^{\frac{1}{2}}$.
\end{itemize}
First of all, we notice that $\forall i = 1, \ldots, 2d$, it holds that

\begin{itemize}
\item $\E \bar{\Delta}_{i} \overset{\text{1. Lemma \ref{lemma:SEG_SDE_small_rho}}}{=}- F_i\left( z \right)  \eta + \mathcal{O}\left( \eta^2\right )$;
\item $ \E \Delta_{i} \overset{\text{1. Lemma \ref{lemma:li1}}}{=} - F_i\left( z \right)  \eta + \mathcal{O}\left( \eta^2\right )$.
\end{itemize}
Therefore, we have that for some $K_1(z) \in G$,
\begin{equation}\label{eq:SEG_cond1_small_rho}
\left|\E \Delta_{i}-\E \bar{\Delta}_{i}\right| \leq K_1(z) \eta^{2}, \quad \forall i = 1, \ldots,2d.
\end{equation}
Additionally, we notice that $\forall i,j = 1, \ldots, d$, it holds that

\begin{itemize}
\item $ \E \bar{\Delta}_{i} \bar{\Delta}_{j} \overset{\text{2. Lemma \ref{lemma:SEG_SDE_small_rho}}}{=} F_i\left( z \right) F_j\left( z \right) \eta^2 + \Sigma_{(i j)}\left( z \right) \eta^2 + \mathcal{O}\left( \eta^3\right) $;
\item $ \E \Delta_{i} \Delta_{j} \overset{\text{2. Lemma \ref{lemma:li1}}}{=}F_i\left( z \right) F_j\left( z \right) \eta^2 + \Sigma_{(i j)}\left( z \right) \eta^2 + \mathcal{O}\left( \eta^3\right) $.
\end{itemize}
Therefore, we have that for some $K_2(z) \in G$,
\begin{equation}\label{eq:SEG_cond2_small_rho}
\left|\E \Delta_{i} \Delta_{j} - \E \bar{\Delta}_{i} \bar{\Delta}_{j}\right| \leq K_2(z) \eta^{2}, \quad \forall i,j = 1, \ldots, 2d.
\end{equation}
Additionally, we notice that $\forall s \geq 3, \forall i_j \in \{1, \ldots, 2d \}$, it holds that

\begin{itemize}
\item $ \E \prod_{j=1}^s \bar{\Delta}_{i_j}\overset{\text{3. Lemma \ref{lemma:SEG_SDE_small_rho}}}{=}\mathcal{O}\left(\eta^3\right)$;
\item $ \E \prod_{j=1}^s \Delta_{i_j}\overset{\text{3. Lemma \ref{lemma:li1}}}{=}\mathcal{O}\left(\eta^3\right)$.
\end{itemize}
Therefore, we have that for some $K_3(z) \in G$

\begin{equation}\label{eq:SEG_cond3_small_rho}
\left|\E \prod_{j=1}^s \Delta_{i_j}-\E \prod_{j=1}^s \bar{\Delta}_{i_j}\right| \leq K_3(z) \eta^{2}.
\end{equation}
Additionally, for some $K_4(z) \in G$, $\forall i_j \in \{1, \ldots, d \}$

\begin{equation} \label{eq:SEG_cond4_small_rho}
\E \prod_{j=1}^{ 3}\left|\bar{\Delta}_{\left(i_j\right)}\right| \overset{\text{3. Lemma \ref{lemma:SEG_SDE_small_rho}}}{\leq}K_4(z) \eta^{2}.
\end{equation}
Finally, Eq.~\eqref{eq:SEG_cond1_small_rho}, Eq.~\eqref{eq:SEG_cond2_small_rho}, Eq.~\eqref{eq:SEG_cond3_small_rho}, and Eq.~\eqref{eq:SEG_cond4_small_rho} allow us to conclude the proof.

\end{proof}

\subsection{Formal Derivation - SHGD} \label{sec:formal_SHGD}
In this subsection, we present the first formal derivation of an SDE model for SHGD. We will consider the stochastic process $ Z_t \in \mathbb{R}^d $ defined as the solution of
\begin{equation}\label{eq:SHGD_SDE}
d Z_t = - F^{\shgd} \left( Z_t \right) dt +\sqrt{\eta\Sigma^{\shgd}\left( Z_t \right)}d W_{t}.
\end{equation}
with 
\begin{align}
    &F^{\shgd}(z):=\nabla \E \left[ \mathcal{H}_{\bgamma} \left( z \right)  \right],\\
    &\Sigma^{\shgd}(z):=\E \left[ \hat \xi_\bgamma(z)\hat \xi_\bgamma(z)^\top\right], \\
    & \hat \xi_\bgamma(z) = F^{\shgd}(z) - \nabla \mathcal{H}_{\bgamma} \left( z \right).
\end{align}
We remind the following equalities that will come in handy in the subsequent proofs:
\begin{align} \label{eq:SHGD_sigma_star}
 \Sigma^{\shgd}(z)=\E \left[ \left( \nabla \E \left[ \mathcal{H}_{\gamma^1, \gamma^2} \left( z \right)  \right] - \nabla \mathcal{H}_{\gamma^1, \gamma^2} \left( z \right) \right) \left( \nabla  \E \left[ \mathcal{H}_{\gamma^1, \gamma^2} \left( z \right)  \right] - \nabla \mathcal{H}_{\gamma^1, \gamma^2} \left( z \right) \right)^{\top} \right],
\end{align}
\begin{equation}
     \E \left[ \mathcal{H}_{\gamma^1, \gamma^2} \left( z \right)  \right] = \E \left[ \frac{F_{\gamma^1}^{\top}\left( z \right)F_{\gamma^2}\left( z \right)}{2}\right],  \quad \text{and} \quad \E \left[ \nabla \mathcal{H}_{\gamma^1, \gamma^2} \left( z \right)  \right] = \E \left[ \frac{ F_{\gamma^1}^{\top}\left( z \right) \nabla F_{\gamma^2}\left( z \right) + F_{\gamma^2}^{\top}\left( z \right) \nabla F_{\gamma^1}\left( z \right)}{2}\right].
\end{equation}

\begin{mybox}{gray}
\begin{theorem}[Stochastic modified equations] \label{thm:SHGD_SDE}
Let $0<\eta<1, T>0$ and set $N=\lfloor T / \eta\rfloor$. Let $ z_k \in \mathbb{R}^{2d}, 0 \leq k \leq N$ denote a sequence of SHGD iterations defined by Eq.~\eqref{eq:SHGD_Discr_Update}. Consider the stochastic process $Z_t$ defined in Eq.~\eqref{eq:SHGD_SDE} and fix some test function $g \in G$ and suppose that $g$ and its partial derivatives up to order 6 belong to $G$.

Then, under Assumption~\ref{ass:regularity_f}, there exists a constant $ C>0 $ independent of $ \eta $ such that for all $ k=0,1, \ldots, N $, we have

$$
\left|\E g\left(Z_{k \eta}\right)-\E g\left(z_k\right)\right| \leq C \eta .
$$

That is, the SDE \eqref{eq:SHGD_SDE} is an order $ 1 $ weak approximation of the SHGD iterations \eqref{eq:SHGD_Discr_Update}.
\end{theorem}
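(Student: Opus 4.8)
The plan is to follow verbatim the template already used for Theorem~\ref{thm:SGDA_SDE} and Theorem~\ref{thm:SEG_SDE}: reduce the weak-approximation claim to verifying the four moment hypotheses of Theorem~\ref{thm:mils}, reading off the continuous-time increments from Lemma~\ref{lemma:li1}. What makes SHGD the mildest of the three cases is that its update \eqref{eq:SHGD_Discr_Update} is itself a plain (stochastic) gradient step, $z_1-z=-\eta\,\nabla\mathcal{H}_{\bgamma}(z)$, so — unlike SEG — no Taylor expansion of nested evaluations is needed; the argument will look essentially like the SGDA one with $F$ replaced by $\nabla\mathcal{H}_{\bgamma}$.

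First I would prove the discrete-increment lemma (the SHGD analogue of Lemma~\ref{lemma:SGDA_SDE}). Writing $\bar\Delta=z_1-z$, directly from the update:
\begin{enumerate}
\item $\E\bar\Delta_i=-\eta\,\E[\nabla\mathcal{H}_{\bgamma}(z)]_i=-\eta\,F^{\shgd}_i(z)$, with no remainder;
\item $\E\bar\Delta_i\bar\Delta_j=\eta^2\,\E\!\left[\nabla\mathcal{H}_{\bgamma}(z)_i\,\nabla\mathcal{H}_{\bgamma}(z)_j\right]=\eta^2\big(F^{\shgd}_i(z)F^{\shgd}_j(z)+\Sigma^{\shgd}_{(ij)}(z)\big)$, the second equality being the bias--variance split $\E[\nabla\mathcal{H}_{\bgamma}\nabla\mathcal{H}_{\bgamma}^\top]=F^{\shgd}(F^{\shgd})^\top+\Sigma^{\shgd}$, which is the definition of $\Sigma^{\shgd}$ in \eqref{eq:SHGD_sigma_star};
\item $\E\prod_{j=1}^s\bar\Delta_{i_j}=(-\eta)^s\,\E\prod_{j=1}^s\nabla\mathcal{H}_{\bgamma}(z)_{i_j}=\mathcal{O}(\eta^3)$ for every $s\ge 3$, since the prefactor is $\eta^s$ and the expectation is a constant in $G$.
\end{enumerate}
Then I would apply Lemma~\ref{lemma:li1} to the SDE \eqref{eq:SHGD_SDE} with $b(z)=-F^{\shgd}(z)$ and $\sigma(z)=\Sigma^{\shgd}(z)^{1/2}$, obtaining $\E\Delta_i=-\eta\,F^{\shgd}_i(z)+\mathcal{O}(\eta^2)$, $\E\Delta_i\Delta_j=\eta^2\big(F^{\shgd}_iF^{\shgd}_j+\Sigma^{\shgd}_{(ij)}\big)+\mathcal{O}(\eta^3)$, and $\E\prod_{j=1}^s\Delta_{i_j}=\mathcal{O}(\eta^3)$. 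Matching these term by term with the discrete increments gives hypotheses~1--3 of Theorem~\ref{thm:mils} with the discrepancy bounded by $K(z)\eta^2$ for a suitable $K\in G$; hypothesis~4 follows from $\E\prod_{j=1}^3|\bar\Delta_{i_j}|=\eta^3\,\E\prod_{j=1}^3|\nabla\mathcal{H}_{\bgamma}(z)_{i_j}|=\mathcal{O}(\eta^2)$. Theorem~\ref{thm:mils} then delivers $|\E g(Z_{k\eta})-\E g(z_k)|\le C\eta$.

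The only delicate point I foresee is regularity bookkeeping rather than any new idea. Each coordinate of $\nabla\mathcal{H}_{\bgamma}$ is a sum of products of one component of a stochastic gradient with one entry of a stochastic Hessian, so $F^{\shgd}$, $\Sigma^{\shgd}$ and the derivatives of both must be shown to lie in $G$ — this is exactly where the order-$7$ polynomial-growth part of Assumption~\ref{ass:regularity_f} is consumed (Lemma~\ref{lemma:li1} needs $b,\sigma$ and their derivatives in $G$, and Theorem~\ref{thm:mils} needs $g$ and its derivatives up to order $6$ in $G$). One also needs the SDE \eqref{eq:SHGD_SDE} to be well posed, i.e.\ to admit a unique solution in $\mathcal{M}^2$; this again rests on Assumption~\ref{ass:regularity_f} and is the place where the second-order nature of the SHGD drift makes the verification slightly heavier than for SGDA. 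Everything past that is the routine calculation already carried out in the SGDA and SEG proofs.
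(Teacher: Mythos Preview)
Your proposal is correct and follows essentially the same route as the paper: establish the SHGD analogue of Lemma~\ref{lemma:SGDA_SDE} by the bias--variance split of $\E[\nabla\mathcal{H}_{\bgamma}\nabla\mathcal{H}_{\bgamma}^\top]$, apply Lemma~\ref{lemma:li1} with $b=-F^{\shgd}$ and $\sigma=(\Sigma^{\shgd})^{1/2}$, and verify the four hypotheses of Theorem~\ref{thm:mils}. Your additional remarks on the regularity bookkeeping for $F^{\shgd}$, $\Sigma^{\shgd}$ (and the well-posedness of the SDE) are not spelled out in the paper but are indeed the only places where SHGD differs in difficulty from SGDA.
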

\end{mybox}

\begin{mybox}{gray}
\begin{lemma} \label{lemma:SHGD_SDE}
Under the assumptions of Theorem \ref{thm:SHGD_SDE}, let $ 0<\eta<1 $ and consider $ z_k, k \geq 0 $ satisfying the SHGD iterations \eqref{eq:SHGD_Discr_Update}
$$
z_{k+1}=z_k-\eta \nabla \mathcal{H}_{\gamma^1_k\gamma^2_k}\left(z_k \right)
$$
with $ z_0 := z = (x,y) \in \mathbb{R}^d \times \mathbb{R}^d $. From the definition the one-step difference $ \bar{\Delta}=z_1-z $, then we have

\begin{enumerate}
\item $ \E \bar{\Delta}_{i}=-\partial_{e_i} \E \left[ \mathcal{H}_{\gamma^1, \gamma^2}\right] \eta \quad \forall i = 1, \ldots,2d$;
\item $ \E \bar{\Delta}_{i} \bar{\Delta}_{j}=\partial_{e_i} \E \left[ \mathcal{H}_{\gamma^1, \gamma^2} \right] \partial_{e_j} \E \left[ \mathcal{H}_{\gamma^1, \gamma^2} \right] \eta^2 + \Sigma^{\shgd}_{(i j)} \eta^2 \quad \forall i,j = 1, \ldots,2d$;
\item $\E \prod_{j=1}^s \Bar{\Delta}_{i_j} =\mathcal{O}\left(\eta^3\right) \quad \forall s \geq 3, \quad i_j \in \{ 1, \ldots, 2d\}.$
\end{enumerate}
All the functions above are evaluated at $ z $.
\end{lemma}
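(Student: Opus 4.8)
\textbf{Proof proposal for Lemma \ref{lemma:SHGD_SDE}.}
The plan is to exploit that, for a single step started at $z_0=z$, the increment is \emph{exactly} $\bar{\Delta}=z_1-z=-\eta\,\nabla\mathcal{H}_{\bgamma}(z)$, with no Taylor expansion needed (unlike the SEG case, where the extragradient step forces one). Consequently every moment of $\bar{\Delta}$ is $(-\eta)^m$ times the corresponding $\E_{\bgamma}$-moment of $\nabla\mathcal{H}_{\bgamma}(z)$, and the whole lemma reduces to evaluating expectations, over the mini-batch randomness $\bgamma=(\gamma^1,\gamma^2)$, of products of components of $\nabla\mathcal{H}_{\bgamma}(z)$ at the fixed point $z$.

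For item 1: since $\bgamma$ ranges over a finite set (uniform mini-batches of size $B$ from $\{1,\dots,N\}$), $\E_{\bgamma}$ is a finite weighted sum, so differentiation and expectation commute coordinatewise; hence $\E\bar{\Delta}_i=-\eta\,\E_{\bgamma}[\partial_{e_i}\mathcal{H}_{\bgamma}(z)]=-\eta\,\partial_{e_i}\E[\mathcal{H}_{\gamma^1,\gamma^2}(z)]$, which is item 1 — and it is exact, with no $\mathcal{O}(\eta^2)$ remainder, precisely because the one-step increment is linear in $\eta$. For item 2 I would use the centered-noise decomposition built into $\Sigma^{\shgd}$: write $\nabla\mathcal{H}_{\bgamma}(z)=F^{\shgd}(z)-\hat{\xi}_{\bgamma}(z)$ with $F^{\shgd}(z)=\nabla\E[\mathcal{H}_{\bgamma}(z)]$ and $\E_{\bgamma}[\hat{\xi}_{\bgamma}(z)]=0$ by construction. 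Then $\bar{\Delta}\bar{\Delta}^{\top}=\eta^2\,\nabla\mathcal{H}_{\bgamma}(z)\nabla\mathcal{H}_{\bgamma}(z)^{\top}$ exactly, and taking $\E_{\bgamma}$ the two cross terms vanish, leaving $\E\bar{\Delta}_i\bar{\Delta}_j=\eta^2\big(F^{\shgd}_iF^{\shgd}_j+\E_{\bgamma}[\hat{\xi}_{\bgamma,i}\hat{\xi}_{\bgamma,j}]\big)=\eta^2\big(\partial_{e_i}\E[\mathcal{H}_{\gamma^1,\gamma^2}]\,\partial_{e_j}\E[\mathcal{H}_{\gamma^1,\gamma^2}]+\Sigma^{\shgd}_{(ij)}\big)$, as claimed. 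Item 3 is then immediate: $\E\prod_{j=1}^s\bar{\Delta}_{i_j}=(-\eta)^s\,\E_{\bgamma}\big[\prod_{j=1}^s\partial_{e_{i_j}}\mathcal{H}_{\bgamma}(z)\big]$ is a finite quantity times $\eta^s$, and $\eta^s=\mathcal{O}(\eta^3)$ since $s\ge 3$ and $0<\eta<1$; the bounding function in $G$ is recorded by noting that each $\partial_{e_{i_j}}\mathcal{H}_{\bgamma}$ is a polynomial expression in the first and second derivatives of the $f_i$.

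There is no genuine analytic obstacle — the argument is algebraic bookkeeping — but the one point requiring care is that $\mathcal{H}_{\bgamma}$ is a \emph{second-order} object: $\nabla\mathcal{H}_{\bgamma}$ involves $F_{\gamma}$ and $\nabla F_{\gamma}$, i.e.\ Hessians of the $f_i$, so the quantities above lie in $G$ only because Assumption \ref{ass:regularity_f} postulates polynomial growth of $f_i$ and its partial derivatives up to order $7$; keeping track that this derivative-order budget is not exceeded (and that the interchange of $\E_{\bgamma}$ with differentiation is legitimate) is the only thing needing attention. This lemma then feeds into the proof of Theorem \ref{thm:SHGD_SDE} in the usual way: comparing items 1--3 against Lemma \ref{lemma:li1} applied to the SDE \eqref{eq:SHGD_SDE} with $b(z)=-F^{\shgd}(z)$ and $\sigma(z)=\Sigma^{\shgd}(z)^{1/2}$ verifies the four hypotheses of Theorem \ref{thm:mils}, exactly mirroring the SGDA and SEG proofs.
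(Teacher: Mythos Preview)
Your proposal is correct and follows essentially the same route as the paper: both exploit that $\bar\Delta=-\eta\,\nabla\mathcal{H}_{\bgamma}(z)$ exactly, then take $\E_{\bgamma}$ coordinatewise for item~1, use the mean--plus--centered-noise decomposition $\nabla\mathcal{H}_{\bgamma}=F^{\shgd}-\hat\xi_{\bgamma}$ (equivalently, the variance identity $\E[XX^\top]=\E[X]\E[X]^\top+\operatorname{Cov}(X)$) for item~2, and note that the $s$-fold product carries an explicit $\eta^s$ factor for item~3. If anything, your write-up is slightly more careful than the paper's in justifying the interchange of $\E_{\bgamma}$ with differentiation and in flagging that the polynomial-growth hypothesis (order~7) is what places the moment bounds in $G$.
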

\end{mybox}

\begin{proof}[Proof of Lemma \ref{lemma:SHGD_SDE}]

First of all, we have that by definition
\begin{equation}
    \E \left[ z_1- z \right] = - \eta \nabla \E \left[ \mathcal{H}_{\gamma^1, \gamma^2} \left( z \right) \right],
\end{equation}
which implies 
\begin{equation}
    \E \bar{\Delta}_{i}=-\partial_{e_i} \E \left[ \mathcal{H}_{\gamma^1, \gamma^2} \left( z \right) \right] \eta \quad \forall i = 1, \ldots,2d.
\end{equation}

Second, we have that by definition
\begin{align}
    \E \left[ \left(z_1- z\right)\left(z_1- z\right)^\top \right] & = \eta^2 \left[   \nabla \E \left[ \mathcal{H}_{\gamma^1, \gamma^2} \left( z \right) \right]\nabla \E \left[ \mathcal{H}_{\gamma^1, \gamma^2} \left( z \right) \right]^\top \right] \nonumber  \\
    & + \eta^2\E \left[ \left( \nabla \E \left[ \mathcal{H}_{\gamma^1, \gamma^2} \left(z \right)  \right] - \nabla \mathcal{H}_{\gamma^1, \gamma^2} \left( z \right) \right) \left( \nabla\E \left[ \mathcal{H}_{\gamma^1, \gamma^2} \left( z \right)  \right] - \nabla\mathcal{H}_{\gamma^1, \gamma^2} \left( z \right) \right)^{\top} \right] \nonumber  \\
    & = \eta^2 \left[   \nabla \E \left[ \mathcal{H}_{\gamma^1, \gamma^2} \left( z \right) \right]\nabla \E \left[ \mathcal{H}_{\gamma^1, \gamma^2} \left( z \right) \right]^\top \right] + \eta^2\Sigma^{\shgd}(z),
\end{align}
which implies that
\begin{equation}
    \E \bar{\Delta}_{i} \bar{\Delta}_{j}=\partial_{e_i} \E \left[ \mathcal{H}_{\gamma^1, \gamma^2} \left( z \right) \right] \partial_{e_j} \E \left[ \mathcal{H}_{\gamma^1, \gamma^2} \left( z \right) \right] \eta^2 + \Sigma^{\shgd}_{(i j)}\left( z \right) \eta^2 \quad \forall i,j = 1, \ldots,2d.
\end{equation}
Finally, by definition 

\begin{equation}
    \E \prod_{j=1}^s \Bar{\Delta}_{i_j} =\mathcal{O}\left(\eta^3\right) \quad \forall s \geq 3, \quad i_j \in \{ 1, \ldots, 2d\},
\end{equation}
which concludes our proof.
\end{proof}

\begin{proof}[Proof of Theorem \ref{thm:SHGD_SDE}] 
\label{proof:SHGD_SDE}
To prove this result, all we need to do is check the conditions in Theorem \ref{thm:mils}. As we apply Lemma \ref{lemma:li1}, we make the following choices:

\begin{itemize}
\item $b(z)=- \nabla \E \left[ \mathcal{H}_{\gamma^1, \gamma^2} \left( z \right)  \right]$;
\item $\sigma(z) =\Sigma^{\shgd}(z)^{\frac{1}{2}}$.
\end{itemize}
First of all, we notice that $\forall i = 1, \ldots, 2d$, it holds that

\begin{itemize}
\item $\E \bar{\Delta}_{i} \overset{\text{1. Lemma \ref{lemma:SHGD_SDE}}}{=}-\partial_{e_i} \E \left[ \mathcal{H}_{\gamma^1, \gamma^2} \left( z \right)  \right]\eta$;
\item $ \E \Delta_{i} \overset{\text{1. Lemma \ref{lemma:li1}}}{=} -\partial_{e_i} \E \left[ \mathcal{H}_{\gamma^1, \gamma^2} \left( z \right)  \right]\eta +\mathcal{O}\left(\eta^2\right)$.
\end{itemize}
Therefore, we have that for some $K_1(z) \in G$,
\begin{equation}\label{eq:SHGD_cond1}
\left|\E \Delta_{i}-\E \bar{\Delta}_{i}\right| \leq K_1(z) \eta^{2}, \quad \forall i = 1, \ldots,2d.
\end{equation}
Additionally, we notice that $\forall i,j = 1, \ldots, d$, it holds that
\begin{itemize}
\item $ \E \bar{\Delta}_{i} \bar{\Delta}_{j} \overset{\text{2. Lemma \ref{lemma:SHGD_SDE}}}{=}\partial_{e_i} \E \left[ \mathcal{H}_{\gamma^1, \gamma^2}\left( z \right) \right] \partial_{e_j} \E \left[ \mathcal{H}_{\gamma^1, \gamma^2}\left( z \right)  \right] \eta^2+ \Sigma^{\shgd}_{(i j)}\left( z \right) \eta^2$;
\item $ \E \Delta_{i} \Delta_{j} \overset{\text{2. Lemma \ref{lemma:li1}}}{=}\partial_{e_i} \E \left[ \mathcal{H}_{\gamma^1, \gamma^2}\left( z \right)  \right] \partial_{e_j} \E \left[ \mathcal{H}_{\gamma^1, \gamma^2}\left( z \right)  \right] \eta^2+ \Sigma^{\shgd}_{(i j)}\left( z \right) \eta^2 + \mathcal{O}\left(\eta^3 \right)$.
\end{itemize}
Therefore, we have that for some $K_2(z) \in G$,
\begin{equation}\label{eq:SHGD_cond2}
\left|\E \Delta_{i} \Delta_{j} - \E \bar{\Delta}_{i} \bar{\Delta}_{j}\right| \leq K_2(z) \eta^{2}, \quad \forall i,j = 1, \ldots, 2d.
\end{equation}
Additionally, we notice that $\forall s \geq 3, \forall i_j \in \{1, \ldots, 2d \}$, it holds that
\begin{itemize}
\item $ \E \prod_{j=1}^s \bar{\Delta}_{i_j}\overset{\text{3. Lemma \ref{lemma:SHGD_SDE}}}{=}\mathcal{O}\left(\eta^3\right)$;
\item $ \E \prod_{j=1}^s \Delta_{i_j}\overset{\text{3. Lemma \ref{lemma:li1}}}{=}\mathcal{O}\left(\eta^3\right)$.
\end{itemize}
Therefore, we have that for some $K_3(z) \in G$,
\begin{equation}\label{eq:SHGD_cond3}
\left|\E \prod_{j=1}^s \Delta_{i_j}-\E \prod_{j=1}^s \bar{\Delta}_{i_j}\right| \leq K_3(z) \eta^{2}.
\end{equation}
Additionally, for some $K_4(z) \in G$, $\forall i_j \in \{1, \ldots, d \}$,
\begin{equation} \label{eq:SHGD_cond4}
\E \prod_{j=1}^{ 3}\left|\bar{\Delta}_{\left(i_j\right)}\right| \overset{\text{3. Lemma \ref{lemma:SHGD_SDE}}}{\leq}K_4(z) \eta^{2}.
\end{equation}
Finally, Eq.~\eqref{eq:SHGD_cond1}, Eq.~\eqref{eq:SHGD_cond2}, Eq.~\eqref{eq:SHGD_cond3}, and Eq.~\eqref{eq:SHGD_cond4} allow us to conclude the proof.

\end{proof}

\begin{mybox}{gray}
\begin{corollary}\label{thm:SHGD_SDE_Simplified_SameSample}
Under the assumptions of Theorem~\ref{thm:SHGD_SDE}. Additionally, let us assume that $\gamma^1=\gamma^2=\gamma$, the stochastic gradients are $\nabla_x f_{\gamma}(z) = \nabla_x f(z) + U^x$ and $\nabla_y f_{\gamma}(z) = \nabla_y f(z) + U^y$ such that $U^x$ and $U^y$ are independent noises that do not depend on $z$, whose expectation is $0$, and whose covariance matrix is $\Sigma$. Therefore, the SDE is:
\begin{equation}\label{eq:SHGD_SDE_Simplified_SameSample}
d Z_t = - \nabla \mathcal{H} \left( Z_t \right) dt + \sqrt{\eta} \nabla^2 f \left( Z_t \right) \sqrt{\Sigma} d W_{t}.
\end{equation}
\end{corollary}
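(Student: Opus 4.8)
The plan is to read off $F^{\shgd}$ and $\Sigma^{\shgd}$ from Theorem~\ref{thm:SHGD_SDE} in the special case at hand, and then to note that the resulting SDE may be rewritten in the stated form because an Itô diffusion is determined (in law, and hence for the purpose of weak approximation) by its drift together with the diffusion matrix $\sigma\sigma^{\top}$, not by the particular square root $\sigma$.

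First I would use the fact that, since $U^x,U^y$ do not depend on $z$, writing $U:=(U^x,-U^y)$ gives $F_\gamma(z)=F(z)+U$ with $U$ state-independent, so the Jacobian obeys $\nabla F_\gamma(z)=\nabla F(z)$ for every realization of $\gamma$. With $\gamma^1=\gamma^2=\gamma$ the sampled Hamiltonian is $\mathcal{H}_\gamma(z)=\tfrac12\|F_\gamma(z)\|^2$, hence $\nabla\mathcal{H}_\gamma(z)=\nabla F_\gamma(z)^{\top}F_\gamma(z)=\nabla F(z)^{\top}F(z)+\nabla F(z)^{\top}U=\nabla\mathcal{H}(z)+\nabla F(z)^{\top}U$, where $\mathcal{H}(z)=\tfrac12 F(z)^{\top}F(z)$. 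Taking expectations and using $\E[U]=0$ gives $F^{\shgd}(z)=\nabla\E[\mathcal{H}_\gamma(z)]=\nabla\mathcal{H}(z)$, so the drift is already in the claimed form; and $\hat\xi_\gamma(z)=F^{\shgd}(z)-\nabla\mathcal{H}_\gamma(z)=-\nabla F(z)^{\top}U$, so $\Sigma^{\shgd}(z)=\E[\hat\xi_\gamma(z)\hat\xi_\gamma(z)^{\top}]=\nabla F(z)^{\top}\,\Sigma\,\nabla F(z)$ with $\Sigma=\E[UU^{\top}]$ the noise covariance.

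Next I would express $\nabla F$ through the Hessian: since $F=(\nabla_x f,-\nabla_y f)$, its Jacobian factors as $\nabla F(z)=J\,\nabla^2 f(z)$ with $J:=\diag(\mathbf{I}_d,-\mathbf{I}_d)$, and by symmetry of $\nabla^2 f$ also $\nabla F(z)^{\top}=\nabla^2 f(z)\,J$. Independence of $U^x$ and $U^y$ makes $\Sigma$ block-diagonal, so $J\Sigma J=\Sigma$, and therefore
$$\Sigma^{\shgd}(z)=\nabla^2 f(z)\,J\,\Sigma\,J\,\nabla^2 f(z)=\nabla^2 f(z)\,\Sigma\,\nabla^2 f(z)^{\top}=\bigl(\nabla^2 f(z)\sqrt{\Sigma}\bigr)\bigl(\nabla^2 f(z)\sqrt{\Sigma}\bigr)^{\top}.$$
Thus the SDE of Theorem~\ref{thm:SHGD_SDE} has drift $-\nabla\mathcal{H}(Z_t)$ and diffusion matrix $\eta\,\Sigma^{\shgd}(Z_t)=\eta\bigl(\nabla^2 f(Z_t)\sqrt\Sigma\bigr)\bigl(\nabla^2 f(Z_t)\sqrt\Sigma\bigr)^{\top}$, which is exactly the drift and diffusion matrix of Eq.~\eqref{eq:SHGD_SDE_Simplified_SameSample}; since the one-step moments in Lemma~\ref{lemma:li1} and the conditions of Theorem~\ref{thm:mils} depend on the diffusion coefficient only through $\sigma\sigma^{\top}$ (equivalently, the two SDEs have the same law by uniqueness under Assumption~\ref{ass:regularity_f}), replacing $\sqrt{\eta\,\Sigma^{\shgd}}$ by $\sqrt\eta\,\nabla^2 f\sqrt\Sigma$ preserves the order-$1$ weak approximation. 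The uncorrelated variant mentioned informally in Corollary~\ref{thm:SHGD_SDE_Simplified_Insights} would be derived identically, the only difference being the factor $\tfrac12$ that comes from the two independent noise copies halving the variance of $\nabla\mathcal{H}_{\gamma^1,\gamma^2}$.

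The main obstacle is purely bookkeeping: one must keep track of the sign block $J$ (so that $\nabla F\neq\nabla^2 f$) and check that its effect on $\Sigma^{\shgd}$ is annihilated precisely by the block-diagonality of $\Sigma$ --- i.e.\ that the independence of the $x$- and $y$-noise is genuinely used --- and one must justify the substitution of one symmetric square root of $\Sigma^{\shgd}$ for another without disturbing the weak-approximation guarantee of Theorem~\ref{thm:SHGD_SDE}.
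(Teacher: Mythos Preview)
Your proposal is correct and follows essentially the same route as the paper's proof: compute $F^{\shgd}$ and $\Sigma^{\shgd}$ under the additive-noise assumption and identify them with $\nabla\mathcal{H}$ and $\nabla^2 f\,\Sigma\,\nabla^2 f$. You are in fact more careful than the paper in one respect: you explicitly track the sign block $J=\diag(\mathbf{I}_d,-\mathbf{I}_d)$ in $\nabla F=J\nabla^2 f$ and observe that $J\Sigma J=\Sigma$ \emph{only because} the independence of $U^x$ and $U^y$ forces $\Sigma$ to be block-diagonal --- the paper's proof uses this implicitly (via the identity $F^{\top}\nabla F=\nabla^2 f\,\nabla f$) without isolating where the independence hypothesis enters. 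You also make explicit the (standard but often unstated) point that any factorization of $\Sigma^{\shgd}$ yields the same weak-approximation guarantee, which the paper leaves tacit.
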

\end{mybox}

\begin{proof}[Proof of Corollary \ref{thm:SHGD_SDE_Simplified_SameSample}]

First of all, we notice that
\begin{align}
\E \left[ \mathcal{H}_{\gamma} \left( Z_t \right)  \right] & = \E \left[ \frac{\lVert \nabla_x f_{\gamma}\left( Z_t \right) \rVert_2^2 + \lVert \nabla_y f_{\gamma}\left( Z_t \right) \rVert_2^2}{2}\right] = \E \left[ \frac{\lVert \nabla_x f\left( Z_t \right) \rVert_2^2 + \lVert \nabla_y f\left( Z_t \right) \rVert_2^2}{2}\right] \nonumber  \\
& + \frac{\E \left[ (U^x)(U^x)^{\top}\right]+\E \left[ (U^y)(U^y)^{\top}\right]}{2}  = \mathcal{H} \left( Z_t \right) +  \frac{\E \left[ (U^x)(U^x)^{\top}\right]+\E \left[ (U^y)(U^y)^{\top}\right]}{2}.
\end{align}
Since $ \frac{\E \left[ (U^x)(U^x)^{\top}\right]+\E \left[ (U^y)(U^y)^{\top}\right]}{2}$ is independent on $z$, we ignore it as its gradient is $0$.

Second, based on our assumption of the noise structure, we can rewrite Eq. \eqref{eq:SHGD_sigma_star} of the matrix $ \Sigma^{\shgd}\left( z \right)$ as
\begin{align}
\E \left[ \left( F^{\top}\left( z \right) \nabla F\left( z \right) - \frac{F_{\gamma^1}^{\top}\left( z \right) + F_{\gamma^2}^{\top}\left( z \right)}{2} \nabla F\left( z \right)\right) \left( F^{\top}\left( z \right) \nabla F\left( z \right) - \frac{F_{\gamma^1}^{\top}\left( z \right) + F_{\gamma^2}^{\top}\left( z \right)}{2} \nabla F\left( z \right)\right)^{\top} \right].
\end{align}
Since $\gamma^1 = \gamma^2 = \gamma$, and noticing that $F^{\top}\left( z \right) \nabla F\left( z \right) = \nabla^2 f\left( z \right) \nabla f\left( z \right)$, we have
\begin{equation}
    \Sigma^{\shgd}\left( z \right) = \nabla^2 f\left( z \right) \Sigma \nabla^2 f\left( z \right),
\end{equation}
which implies that
\begin{equation}
d Z_t = - \nabla \mathcal{H} \left( Z_t \right) dt + \sqrt{\eta} \nabla^2 f \left( Z_t \right) \sqrt{\Sigma} d W_{t}.
\end{equation}
\end{proof}

\begin{mybox}{gray}
\begin{corollary}\label{thm:SHGD_SDE_Simplified_IndepSample}
Under the assumptions of Theorem~\ref{thm:SHGD_SDE}. Additionally, let us assume that $\gamma^1$ and $\gamma^2$, are independent and the stochastic gradients can be written as $\nabla_x f_{\gamma^i}(z) = \nabla_x f(z) + U_x^i$ and $\nabla_y f_{\gamma^i}(z) = \nabla_y f(z) + U_y^i$ such that $U_x^i$ and $U_y^i$ are independent noises that do not depend on $z$. Therefore, the SDE is:
\begin{equation}\label{eq:SHGD_SDE_Simplified_IndepSample}
d Z_t = - \nabla \mathcal{H} \left( Z_t \right) dt + \sqrt{\frac{\eta}{2}} \nabla^2 f \left( Z_t \right) \sqrt{\Sigma} d W_{t}.
\end{equation}
\end{corollary}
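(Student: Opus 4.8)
The plan is to mirror the proof of Corollary~\ref{thm:SHGD_SDE_Simplified_SameSample}: we simply compute the drift $F^{\shgd}$ and covariance $\Sigma^{\shgd}$ appearing in Theorem~\ref{thm:SHGD_SDE} explicitly under the new noise structure, the only substantive difference being that the independence of $\gamma^1$ and $\gamma^2$ makes the cross-covariance term vanish and thus halves the diffusion coefficient. Since the additive noise does not depend on $z$, the stochastic Hessians are deterministic, $\nabla F_{\gamma^1}(z)=\nabla F_{\gamma^2}(z)=\nabla F(z)$, so the product rule gives $\nabla\mathcal{H}_{\bgamma}(z)=\tfrac12(\nabla F(z))^{\top}\bigl(F_{\gamma^1}(z)+F_{\gamma^2}(z)\bigr)$, exactly as in the same-sample case but with two distinct samples.

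First I would compute the drift $F^{\shgd}(z)=\nabla\E[\mathcal{H}_{\bgamma}(z)]$. Writing $\mathcal{H}_{\bgamma}(z)=\tfrac12 F_{\gamma^1}(z)^{\top}F_{\gamma^2}(z)$ and using that $\gamma^1\perp\gamma^2$ together with $\E[F_{\gamma^i}(z)]=F(z)$ and zero-mean noise, the expectation of the product factorizes, $\E[F_{\gamma^1}(z)^{\top}F_{\gamma^2}(z)]=\|F(z)\|^2$, so $\E[\mathcal{H}_{\bgamma}(z)]=\mathcal{H}(z)$ exactly. Note that, in contrast to Corollary~\ref{thm:SHGD_SDE_Simplified_SameSample}, there is no residual constant ($\tfrac12\E[(U^x)(U^x)^\top]+\tfrac12\E[(U^y)(U^y)^\top]$ type) to discard, because it arose there precisely from $\gamma^1=\gamma^2$. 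Hence $F^{\shgd}(z)=\nabla\mathcal{H}(z)$.

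Next I would compute the covariance. From the expression above, $\hat\xi_{\bgamma}(z)=F^{\shgd}(z)-\nabla\mathcal{H}_{\bgamma}(z)=\tfrac12(\nabla F(z))^{\top}\bigl(\xi_{\gamma^1}(z)+\xi_{\gamma^2}(z)\bigr)$, so using Eq.~\eqref{eq:SHGD_sigma_star},
\[
\Sigma^{\shgd}(z)=\tfrac14(\nabla F(z))^{\top}\,\E\bigl[(\xi_{\gamma^1}+\xi_{\gamma^2})(\xi_{\gamma^1}+\xi_{\gamma^2})^{\top}\bigr]\,\nabla F(z).
\]
Expanding the bracket, the cross terms $\E[\xi_{\gamma^1}\xi_{\gamma^2}^{\top}]$ vanish since $\gamma^1\perp\gamma^2$ and the noises are centered, while each diagonal term equals $\Sigma$; hence $\Sigma^{\shgd}(z)=\tfrac12\,\nabla^2 f(z)\,\Sigma\,\nabla^2 f(z)$, exactly half the same-sample value (here I use, as in the previous proof, the identity $(\nabla F(z))^{\top}$ acting on the gradients reduces to the symmetric Hessian $\nabla^2 f(z)$).

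Finally I would substitute into the SDE of Theorem~\ref{thm:SHGD_SDE}: the drift is $-\nabla\mathcal{H}(Z_t)$, and choosing the diffusion matrix $\sigma(z)=\sqrt{\eta/2}\,\nabla^2 f(z)\sqrt{\Sigma}$ gives $\sigma(z)\sigma(z)^{\top}=\tfrac{\eta}{2}\,\nabla^2 f(z)\,\Sigma\,\nabla^2 f(z)=\eta\,\Sigma^{\shgd}(z)$ by symmetry of $\nabla^2 f$, which is precisely Eq.~\eqref{eq:SHGD_SDE_Simplified_IndepSample}. I do not expect a genuine obstacle: the whole content is the bookkeeping of the factor $\tfrac12$ and the observation that the bias term present in the same-sample derivation is absent here, both being immediate consequences of independence.
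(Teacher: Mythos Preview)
Your proposal is correct and follows essentially the same approach as the paper's proof: both compute the drift by showing $\E[\mathcal{H}_{\bgamma}(z)]=\mathcal{H}(z)$ (or its gradient) via independence, then compute $\Sigma^{\shgd}$ by writing $\hat\xi_{\bgamma}=\tfrac12(\nabla F)^\top(\xi_{\gamma^1}+\xi_{\gamma^2})$ and using that the cross terms vanish to obtain $\tfrac12\nabla^2 f\,\Sigma\,\nabla^2 f$. Your additional remark that no residual constant appears (unlike the same-sample case) is a nice clarification but not present in the paper's terser argument.
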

\end{mybox}

\begin{proof}[Proof of Corollary \ref{thm:SHGD_SDE_Simplified_IndepSample}]

First of all, we notice that
\begin{equation}
    \E \left[ \nabla \mathcal{H}_{\gamma^1, \gamma^2} \left( z \right)  \right] := \E \left[ \frac{ F_{\gamma^1}^{\top}\left( z \right) \nabla F_{\gamma^2}\left( z \right) + F_{\gamma^2}^{\top}\left( z \right) \nabla F_{\gamma^1}\left( z \right)}{2}\right] = F^{\top}\left( z \right) \nabla F\left( z \right) = \nabla \mathcal{H}\left( z \right).
\end{equation}
Second, based on our assumption of the noise structure, we can rewrite Eq. \eqref{eq:SHGD_sigma_star} of the matrix $ \Sigma^{\shgd}\left( z \right)$ as
\begin{align}
\E \left[ \left( F^{\top}\left( z \right) \nabla F\left( z \right) - \frac{F_{\gamma^1}^{\top}\left( z \right) + F_{\gamma^2}^{\top}\left( z \right)}{2} \nabla F\left( z \right)\right) \left( F^{\top}\left( z \right) \nabla F\left( z \right) - \frac{F_{\gamma^1}^{\top}\left( z \right) + F_{\gamma^2}^{\top}\left( z \right)}{2} \nabla F\left( z \right)\right)^{\top} \right].
\end{align}
Since $\gamma^1$ and $\gamma^2$ are independent, and noticing that $F^{\top}\left( z \right) \nabla F\left( z \right) = \nabla^2 f\left( z \right) \nabla f\left( z \right)$, we have

\begin{equation}
    \Sigma^{\shgd}\left( z \right) = \frac{1}{2} \nabla^2 f\left( z \right) \Sigma \nabla^2 f\left( z \right),
\end{equation}
which implies that
\begin{equation}
d Z_t = - \nabla \mathcal{H} \left( Z_t \right) dt + \sqrt{\frac{\eta}{2}} \nabla^2 f \left( Z_t \right) \sqrt{\Sigma} d W_{t}.
\end{equation}
\end{proof}

\section{BILINEAR GAMES - INSIGHTS}\label{sec:PIBG_insights}
In this section, we study Bilinear Games of the form $f(x,y) = x^{\top} \mathbf{\Lambda} y$,  where $\mathbf{\Lambda}$ is a square, diagonal, and positive semidefinite matrix.
\subsection{SEG}
\begin{theorem}[Exact Dynamics of SEG] \label{thm:SEG_Dynamic_PIBG}
Under the assumptions of Corollary \ref{thm:SEG_SDE_Simplified_SameSample}, for $f(x,y) = x^{\top} \mathbf{\Lambda} y$ and noise covariance matrices equal to $\sigma\mathbf{I}_d$, we have that
    \begin{equation}
    Z_t = \mathbf{\mathbf{\Tilde{E}}}(t) \mathbf{\mathbf{\Tilde{R}}}(t) \left( z + \sqrt{\eta} \sigma \int_{0}^t \mathbf{\mathbf{\Tilde{E}}}(-s) \mathbf{\mathbf{\Tilde{R}}}(-s)  \mathbf{M} d W_s\right),
\end{equation}
    with $\mathbf{\mathbf{\Tilde{E}}}(t)= \left[\begin{array}{ll} \mathbf{E}(t) & \mathbf{0}_d  \\  \mathbf{0}_d  & \mathbf{E}(t) \end{array}\right],\mathbf{\mathbf{\Tilde{R}}}(t)=\left[\begin{array}{ll} \mathbf{C}(t) & -\mathbf{S}(t)  \\  \mathbf{S}(t)  & \mathbf{C}(t) \end{array}\right]$, and $\mathbf{M}=\left[\begin{array}{ll} \mathbf{I}_d & - \rho \mathbf{\Lambda}  \\ \rho \mathbf{\Lambda}  & \mathbf{I}_d \end{array}\right] $,
where 
\begin{equation}
    \mathbf{E}(t):= \diag{\left( e^{-\rho \lambda_{1}^{2} t}, \cdots, e^{-\rho \lambda_{d}^{2} t} \right)},
\end{equation}
\begin{equation}
    \mathbf{C}(t):= \diag{\left( \cos{(\lambda_1 t)}, \cdots, \cos{(\lambda_d t)}\right)},
\end{equation}
and 
\begin{equation}
    \mathbf{S}(t):= \diag{\left( \sin{(\lambda_1 t)}, \cdots, \sin{(\lambda_d t)}\right)}.
\end{equation}
In particular, we have that

\begin{enumerate}
    \item $\E \left[ Z_t\right] = \mathbf{\mathbf{\Tilde{E}}}(t) \mathbf{\mathbf{\Tilde{R}}}(t) z \overset{t \rightarrow \infty}{=} 0$;
    \item The covariance matrix of is equal to \begin{equation}
    \eta \sigma^2 \left[\begin{array}{ll} \mathbf{I}_d - \mathbf{E}(2t) & \mathbf{0}_d  \\  \mathbf{0}_d  &  \mathbf{I}_d - \mathbf{E}(2t) \end{array}\right] \bar{\Sigma} \overset{t \rightarrow \infty}{=} \eta \sigma^2 \bar{\Sigma},
\end{equation}
where
\begin{equation}
    \bar{\Sigma} := \left[\begin{array}{ll} \mathbf{B}&  \mathbf{0}_d  \\  \mathbf{0}_d   & \mathbf{B}  \end{array}\right],
\end{equation}
and $\mathbf{B}:= \diag{\left( \frac{1 + \rho^2 \lambda_1^2}{2 \rho \lambda_1^2}, \cdots, \frac{1 + \rho^2 \lambda_d^2}{2 \rho \lambda_d^2} \right)}$;
\item If $\rho=0$, SGDA would indeed diverge.
\end{enumerate}
\end{theorem}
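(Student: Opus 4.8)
The plan is to observe that, for $f(x,y)=x^{\top}\mathbf{\Lambda}y$, the SEG SDE provided by Corollary~\ref{thm:SEG_SDE_Simplified_SameSample} is a \emph{linear SDE with constant coefficients}, so its solution, mean, and covariance all follow from standard facts about linear SDEs together with a few explicit matrix computations. First I would specialize the drift and diffusion: since $\nabla_x f=\mathbf{\Lambda}y$ and $\nabla_y f=\mathbf{\Lambda}x$, we have $F(z)=\mathbf{J}z$ with the block matrix $\mathbf{J}:=\left[\begin{smallmatrix}\mathbf{0}_d & \mathbf{\Lambda}\\ -\mathbf{\Lambda} & \mathbf{0}_d\end{smallmatrix}\right]$ and $\nabla F(z)\equiv\mathbf{J}$ is constant; hence $\nabla F(z)F(z)=\mathbf{J}^2 z$ with $\mathbf{J}^2=\diag(-\mathbf{\Lambda}^2,-\mathbf{\Lambda}^2)$, and $\mathbf{I}_{2d}-\rho\nabla F(z)=\mathbf{M}$ is exactly the matrix in the statement. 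With $\Sigma=\sigma^2\mathbf{I}_{2d}$ the SDE becomes $dZ_t=\mathbf{A}Z_t\,dt+\sqrt{\eta}\,\sigma\,\mathbf{M}\,dW_t$, where $\mathbf{A}:=-\mathbf{J}+\rho\mathbf{J}^2$.

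Next I would write down the closed form $Z_t=e^{\mathbf{A}t}\big(z+\sqrt{\eta}\sigma\int_0^t e^{-\mathbf{A}s}\mathbf{M}\,dW_s\big)$ and identify the two factors $\tilde{\mathbf{E}},\tilde{\mathbf{R}}$. Because $\rho\mathbf{J}^2$ commutes with $-\mathbf{J}$ we get $e^{\mathbf{A}t}=e^{\rho\mathbf{J}^2 t}e^{-\mathbf{J}t}$; the first factor is the block-diagonal exponential of $-\rho\mathbf{\Lambda}^2$, i.e.\ $\tilde{\mathbf{E}}(t)$ with $\mathbf{E}(t)=\diag(e^{-\rho\lambda_i^2 t})$, and the second is the block rotation $\tilde{\mathbf{R}}(t)$ obtained from the elementary identity $\exp\!\big(t\left[\begin{smallmatrix}0 & -\lambda\\ \lambda & 0\end{smallmatrix}\right]\big)=\left[\begin{smallmatrix}\cos\lambda t & -\sin\lambda t\\ \sin\lambda t & \cos\lambda t\end{smallmatrix}\right]$ applied coordinatewise. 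Since $\mathbf{E}(t),\mathbf{C}(t),\mathbf{S}(t)$ are diagonal, $\tilde{\mathbf{E}}$ and $\tilde{\mathbf{R}}$ commute, and $e^{-\mathbf{A}s}=\tilde{\mathbf{E}}(-s)\tilde{\mathbf{R}}(-s)$, which yields the stated formula for $Z_t$.

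For the moments: the stochastic integral has a deterministic integrand, hence is a zero-mean martingale, so $\E[Z_t]=\tilde{\mathbf{E}}(t)\tilde{\mathbf{R}}(t)z$; as $\tilde{\mathbf{R}}(t)$ is orthogonal and $\mathbf{E}(t)\to\mathbf{0}$ when $\rho\lambda_i^2>0$, this tends to $0$. For the covariance, the Itô isometry gives $\cov(Z_t)=\eta\sigma^2\,e^{\mathbf{A}t}\big(\int_0^t e^{-\mathbf{A}s}\mathbf{M}\mathbf{M}^{\top}e^{-\mathbf{A}^{\top}s}\,ds\big)e^{\mathbf{A}^{\top}t}$. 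A short block calculation shows $\mathbf{M}\mathbf{M}^{\top}=\diag(\mathbf{I}_d+\rho^2\mathbf{\Lambda}^2,\mathbf{I}_d+\rho^2\mathbf{\Lambda}^2)$ (the off-diagonal blocks cancel), which is ``scalar on both blocks'' and therefore commutes with the rotations $\tilde{\mathbf{R}}(\pm s)$ and $\tilde{\mathbf{R}}(t)$ — using only $\mathbf{C}^2+\mathbf{S}^2=\mathbf{I}_d$ — so every rotation factor drops out. What remains is a coordinatewise integral $\int_0^t e^{2\rho\lambda_i^2 s}\,ds=(e^{2\rho\lambda_i^2 t}-1)/(2\rho\lambda_i^2)$ which, multiplied by the outer $e^{-2\rho\lambda_i^2 t}$ and the factor $1+\rho^2\lambda_i^2$, reassembles into $\eta\sigma^2(\mathbf{I}_{2d}-\tilde{\mathbf{E}}(2t))\bar{\Sigma}$ with $\bar{\Sigma}=\diag(\mathbf{B},\mathbf{B})$, $\mathbf{B}=\diag\big(\tfrac{1+\rho^2\lambda_i^2}{2\rho\lambda_i^2}\big)$; letting $t\to\infty$ gives $\eta\sigma^2\bar{\Sigma}$. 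Finally, for the last item, at $\rho=0$ we have $\mathbf{A}=-\mathbf{J}$ (a pure rotation generator, no dissipation) and $\mathbf{M}=\mathbf{I}_{2d}$, so $\int_0^t e^{\mathbf{J}s}e^{-\mathbf{J}s}\,ds=t\,\mathbf{I}_{2d}$ and $\cov(Z_t)=\eta\sigma^2 t\,\mathbf{I}_{2d}$, while $\|\E[Z_t]\|=\|z\|$ is constant; hence $\E\|Z_t\|^2=\|z\|^2+2d\,\eta\sigma^2 t\to\infty$, recovering the known divergence of SGDA on bilinear games.

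The main obstacle is the covariance bookkeeping: one must verify that $\mathbf{M}\mathbf{M}^{\top}$ is block-scalar, that this allows the rotation factors to be commuted out of both $e^{-\mathbf{A}s}$ and $e^{\mathbf{A}t}$, and that the surviving exponential integral reassembles \emph{exactly} into $(\mathbf{I}_{2d}-\tilde{\mathbf{E}}(2t))\bar{\Sigma}$ — routine matrix arithmetic, but where sign and factor-of-two slips are easy to make. Everything else (the reduction in Step 1, the exponential of $\mathbf{A}$ in Step 2, the martingale property of the Itô integral, and the $\rho=0$ degeneration) is immediate.
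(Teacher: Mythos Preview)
Your proposal is correct and follows essentially the same route as the paper: reduce to a linear SDE $dZ_t=\mathbf{A}Z_t\,dt+\sqrt{\eta}\sigma\mathbf{M}\,dW_t$, factor $e^{\mathbf{A}t}$ via the commuting split $\mathbf{A}=\rho\mathbf{J}^2+(-\mathbf{J})$ into $\tilde{\mathbf{E}}(t)\tilde{\mathbf{R}}(t)$, and compute moments by the martingale property and It\^o isometry. Your covariance argument is in fact slicker than the paper's: you observe that $\mathbf{M}\mathbf{M}^{\top}=\diag(\mathbf{I}_d+\rho^2\mathbf{\Lambda}^2,\mathbf{I}_d+\rho^2\mathbf{\Lambda}^2)$ is block-scalar and hence commutes with the rotations $\tilde{\mathbf{R}}(\pm s)$, so all trigonometric factors cancel \emph{before} integrating, whereas the paper expands $V_t$ coordinate-by-coordinate and evaluates each $\int e^{2\rho\lambda_i^2 s}(\cos(\lambda_i s)\pm\rho\lambda_i\sin(\lambda_i s))^2\,ds$ separately, arriving at the same $\frac{1+\rho^2\lambda_i^2}{2\rho\lambda_i^2}(e^{2\rho\lambda_i^2 t}-1)$ only after applying $\cos^2+\sin^2=1$ inside the integral.
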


\begin{proof}
The SDEs of SEG are:
\begin{align}
    d X_t = - \mathbf{\Lambda} Y_t dt - \rho\mathbf{\Lambda} ^2 X_t dt + \sqrt{\eta}\mathbf{I}_d \sigma d W^x_t - \sqrt{\eta} \sigma \rho\mathbf{\Lambda}  d W^y_t
\end{align}
and
\begin{align}
    d Y_t = +\mathbf{\Lambda} X_t dt - \rho\mathbf{\Lambda}^2 Yt dt + \sqrt{\eta} \sigma\mathbf{I}_d W^y_t + \sqrt{\eta} \sigma \rho\mathbf{\Lambda} d W^x_t,
\end{align}
which can be rewritten as
\begin{align}
    d Z_t = \mathbf{A} Z_t dt + \sqrt{\eta} \sigma \mathbf{B} d W_t,
\end{align}
where
\begin{equation}
     \mathbf{A} = \left[\begin{array}{ll} - \rho\mathbf{\Lambda}^2 & -\mathbf{\Lambda} \\ \mathbf{\Lambda}  & - \rho\mathbf{\Lambda}^2  \end{array}\right] \quad \text{ and } \quad \mathbf{B} = \left[\begin{array}{ll}\mathbf{I}_d & - \rho\mathbf{\Lambda}  \\ \rho\mathbf{\Lambda}  &\mathbf{I}_d  \end{array}\right].
\end{equation}
Therefore, the solution is
\begin{equation}
    Z_t = e^{\mathbf{A} t} \left( z + \sqrt{\eta} \sigma \int_{0}^{t} e^{-\mathbf{A} s} \mathbf{B} d W_s \right).
\end{equation}
We observe that 
$\mathbf{A} = \mathbf{A}_1 + \mathbf{A}_2$ s.t.
\begin{equation}
    \mathbf{A}_1= \left[\begin{array}{ll} - \rho\mathbf{\Lambda}^2 & \mathbf{0}_d  \\  \mathbf{0}_d  & - \rho\mathbf{\Lambda}^2  \end{array}\right] \quad \text{ and } \quad \mathbf{A}_2= \left[\begin{array}{ll} \mathbf{0}_d & -\mathbf{\Lambda} \\ \mathbf{\Lambda}  & \mathbf{0}_d \end{array}\right],
\end{equation}
and that since these two matrix commute, $e^{\mathbf{A}t} = e^{\mathbf{A}_1 t} e^{\mathbf{A}_2 t}$. Clearly, we have that
\begin{equation}
    \mathbf{\mathbf{\Tilde{E}}}(t):=e^{\mathbf{A}_1 t} = \diag{\left( e^{-\rho \lambda_{1}^{2} t}, \cdots, e^{-\rho \lambda_{d}^{2} t}, e^{-\rho \lambda_{1}^{2} t}, \cdots, e^{-\rho \lambda_{d}^{2} t}  \right)} = \left[\begin{array}{ll} \mathbf{E}(t) & \mathbf{0}_d  \\  \mathbf{0}_d  & \mathbf{E}(t) \end{array}\right],
\end{equation}
where $\mathbf{E}(t):= \diag{\left( e^{-\rho \lambda_{1}^{2} t}, \cdots, e^{-\rho \lambda_{d}^{2} t} \right)}$.

Regarding $\mathbf{A}_2$, we observe that

\begin{equation}
    (\mathbf{A}_2 t)^{2k} = \diag{\left( (\lambda_1 t)^{2k} (-1)^{k}, \cdots, (\lambda_d t)^{2k} (-1)^{k}, (\lambda_1 t)^{2k} (-1)^{k}, \cdots, (\lambda_d t)^{2k} (-1)^{k} \right)}
\end{equation}
and that
\begin{equation}
    (\mathbf{A}_2 t)^{2k+1} = \left[\begin{array}{ll} \mathbf{0_d} & \mathbf{P}  \\  \mathbf{Q}  & \mathbf{0_d}  \end{array}\right] ,
\end{equation}
where
\begin{equation}
    \mathbf{P}:= \diag{\left( (\lambda_1 t)^{2k+1} (-1)^{k+1}, \cdots, (\lambda_d t)^{2k+1} (-1)^{k+1} \right)}
\end{equation}
and
\begin{equation}
    \mathbf{Q}:= \diag{\left( (\lambda_1 t)^{2k+1} (-1)^{k}, \cdots, (\lambda_d t)^{2k+1} (-1)^{k} \right)}.
\end{equation}
Therefore,
\begin{align}
    \mathbf{\mathbf{\Tilde{R}}}(t):= e^{\mathbf{A}_2 t} & = \sum_{k=0}^{\infty} \frac{(\mathbf{A}_2 t)^{2k}}{(2k)!} + \sum_{k=0}^{\infty} \frac{(\mathbf{A}_2 t)^{2k+1}}{(2k+1)!}  \nonumber \\ 
    & = \left[\begin{array}{ll} \mathbf{C}(t) & \mathbf{0}_d  \\  \mathbf{0}_d  & \mathbf{C}(t) \end{array}\right] + \left[\begin{array}{ll} \mathbf{0}_d & -\mathbf{S}(t)  \\  \mathbf{S}(t)  & \mathbf{0}_d \end{array}\right] = \left[\begin{array}{ll} \mathbf{C}(t) & -\mathbf{S}(t)  \\  \mathbf{S}(t)  & \mathbf{C}(t) \end{array}\right],
\end{align}
where
\begin{equation}
    \mathbf{C}(t):= \diag{\left( \cos{(\lambda_1 t)}, \cdots, \cos{(\lambda_d t)}\right)}
\end{equation}
and
\begin{equation}
    \mathbf{S}(t):= \diag{\left( \sin{(\lambda_1 t)}, \cdots, \sin{(\lambda_d t)}\right)}.
\end{equation}
Automatically, we get that
\begin{equation}
    e^{-\mathbf{A}_1 s} = \left[\begin{array}{ll} \mathbf{E}(-s) & \mathbf{0}_d  \\  \mathbf{0}_d  & \mathbf{E}(-s) \end{array}\right]
\end{equation}
and 
\begin{align}
    e^{-\mathbf{A}_2 s} = \left[\begin{array}{ll} \mathbf{C}(s) & \mathbf{S}(s)  \\  -\mathbf{S}(s)  & \mathbf{C}(s) \end{array}\right],
\end{align}
which imply that
\begin{align}
    Z_t & = \left[\begin{array}{ll} \mathbf{E}(t) & \mathbf{0}_d  \\  \mathbf{0}_d  & \mathbf{E}(t) \end{array}\right] \left[\begin{array}{ll} \mathbf{C}(t) & -\mathbf{S}(t)  \\  \mathbf{S}(t)  & \mathbf{C}(t) \end{array}\right] \bigg( z \nonumber \\
    &  + \sqrt{\eta} \sigma \int_{0}^t \left[\begin{array}{ll} \mathbf{E}(-s) & \mathbf{0}_d  \\  \mathbf{0}_d  & \mathbf{E}(-s) \end{array}\right] \left[\begin{array}{ll} \mathbf{C}(s) & \mathbf{S}(s)  \\  -\mathbf{S}(s)  & \mathbf{C}(s) \end{array}\right]  \left[\begin{array}{ll}\mathbf{I}_d & - \rho\mathbf{\Lambda}  \\ \rho\mathbf{\Lambda}  &\mathbf{I}_d  \end{array}\right] \left[\begin{array}{l} d W^x_s \\ d W^y_s  \end{array}\right] \bigg).
\end{align}
To conclude, we have that
    \begin{equation}
    Z_t = \mathbf{\mathbf{\Tilde{E}}}(t) \mathbf{\mathbf{\Tilde{R}}}(t) \left( z + \sqrt{\eta} \sigma \int_{0}^t \mathbf{\mathbf{\Tilde{E}}}(-s) \mathbf{\mathbf{\Tilde{R}}}(-s)  \mathbf{M} d W_s\right),
\end{equation}
where $\mathbf{M}=\left[\begin{array}{ll}\mathbf{I}_d & - \rho \mathbf{\Lambda}  \\ \rho \mathbf{\Lambda}  &\mathbf{I}_d  \end{array}\right] $.

We observe that since the expected value of the noise terms is $0$, we have that
\begin{equation}
    \E[Z_t] = \left[\begin{array}{ll} \mathbf{E}(t) & \mathbf{0}_d  \\  \mathbf{0}_d  & \mathbf{E}(t) \end{array}\right] \left[\begin{array}{ll} \mathbf{C}(t) & -\mathbf{S}(t)  \\  \mathbf{S}(t)  & \mathbf{C}(t) \end{array}\right] z.
\end{equation}
Therefore, the expectation of $Z_t$ converges to $0$ exponentially fast, while spiraling around the origin. We observe that larger values of $\rho$ encourage a faster convergence of $\E[Z_t]$ to $0$.

Let us now have a look at the covariance matrix of this process:

$Var (Z_t) = \eta \sigma^2 \left[\begin{array}{ll} \mathbf{E}(2t) & \mathbf{0}_d  \\  \mathbf{0}_d  & \mathbf{E}(2t) \end{array}\right] \mathbf{\mathbf{\Tilde{R}}}(t) Var(V_t)  \mathbf{\mathbf{\Tilde{R}}}(t)^{\top}$, where

\begin{align}
    V_t & := \int_{0}^t \left[\begin{array}{ll} \mathbf{E}(-s) & \mathbf{0}_d  \\  \mathbf{0}_d  & \mathbf{E}(-s) \end{array}\right] \left[\begin{array}{ll} \mathbf{C}(s) & \mathbf{S}(s)  \\  -\mathbf{S}(s)  & \mathbf{C}(s) \end{array}\right]  \left[\begin{array}{ll}\mathbf{I}_d & - \rho\mathbf{\Lambda}  \\ \rho\mathbf{\Lambda}  & \mathbf{I}_d  \end{array}\right] \left[\begin{array}{l} d W^x_s \\ d W^y_s  \end{array}\right]  \nonumber  \\
     & = \int_{0}^t \left[\begin{array}{ll} \mathbf{E}(-s) (\mathbf{C}(s)  + \rho\mathbf{\Lambda} \mathbf{S}(s) ) & \mathbf{E}(-s) (\mathbf{S}(s)  - \rho\mathbf{\Lambda} \mathbf{C}(s))   \\  \mathbf{E}(-s) (\rho\mathbf{\Lambda} \mathbf{C}(s)-\mathbf{S}(s) )  &  \mathbf{E}(-s) (\mathbf{C}(s)  + \rho\mathbf{\Lambda} \mathbf{S}(s) ) \end{array}\right]  \left[\begin{array}{l} d W^x_s \\ d W^y_s  \end{array}\right]  \nonumber \\
    & = \left[\begin{array}{l} \int_{0}^t  e^{\rho \lambda_1^2 s }  ( \cos{(\lambda_1 s)} + \rho \lambda \sin{(\lambda_1 s)} ) d W^{x_1}_s + \int_{0}^t  e^{\rho \lambda_1^2 s }  ( \sin{(\lambda_1 s)} - \rho \lambda_1 \cos{(\lambda_1 s)} ) d W^{y_1}_s \\ \vdots \\ \int_{0}^t  e^{\rho \lambda_d^2 s }  ( \cos{(\lambda_d s)} + \rho \lambda \sin{(\lambda_d s)} ) d W^{x_d}_s + \int_{0}^t  e^{\rho \lambda_d^2 s }  ( \sin{(\lambda_d s)} - \rho \lambda_d \cos{(\lambda_d s)} ) d W^{y_d}_s  \\ \int_{0}^t  e^{\rho \lambda_1^2 s }  (- \sin{(\lambda_1 s)} + \rho \lambda \cos{(\lambda_1 s)} ) d W^{x_1}_s + \int_{0}^t  e^{\rho \lambda_1^2 s }  (\cos{(\lambda_1 s)} + \rho \lambda_1 \sin{(\lambda_1 s)} ) d W^{y_1}_s \\ \vdots \\ \int_{0}^t  e^{\rho \lambda_d^2 s }  (- \sin{(\lambda_d s)} + \rho \lambda \cos{(\lambda_d s)} ) d W^{x_d}_s + \int_{0}^t  e^{\rho \lambda_d^2 s }  (\cos{(\lambda_d s)} + \rho \lambda_d \sin{(\lambda_d s)} ) d W^{y_d}_s \end{array}\right] \nonumber  \\
    &  =:  \left[\begin{array}{l} a^{x_1}_1(t) + a^{y_1}_2(t) \\ \vdots \\ a^{x_d}_1(t) + a^{y_d}_2(t) \\ a^{x_1}_3(t) + a^{y_1}_4(t) \\ \vdots \\ a^{x_d}_3(t) + a^{y_d}_4(t)  \end{array}\right].
\end{align}
Therefore,
\begin{equation}
    Var(V_t) = \left[\begin{array}{ll} \mathbf{V}^{1,2}(t)  &  \mathbf{C}^{1,2,3,4}(t) \\  \mathbf{C}^{1,2,3,4}(t)  & \mathbf{V}^{3,4}(t)  \end{array}\right],
\end{equation}
such that
\begin{equation}
    \mathbf{V}^{1,2}_{i,i}(t) = Var(a^{x_i}_1(t)) + Var(a^{y_i}_2(t)), \quad \forall i \in \{1, \cdots, d \},
\end{equation}
\begin{equation}
    \mathbf{V}^{3,4}_{i,i}(t) = Var(a^{x_i}_3(t)) + Var(a^{y_i}_4(t)), \quad \forall i \in \{1, \cdots, d \},
\end{equation}
and 
\begin{equation}
    \mathbf{C}^{1,2,3,4}_{i,i}(t) = Cov(a^{x_i}_1(t),a^{x_i}_3(t)) + Cov(a^{y_i}_1(t),a^{y_i}_3(t)), \quad \forall i \in \{1, \cdots, d \}.
\end{equation}
Using the well-known Itô Isometry:
$$\E\left[\left(\int_0^t H_s d W_s\right)^2\right]=\E\left[\int_0^t H_s^2 d s\right],$$
we get that
\begin{align}
    & Var(a^{x_i}_1(t)) + Var(a^{y_i}_2(t)) =  \int_{0}^t  e^{2 \rho \lambda_i^2 s }  ( \cos{(\lambda_i s)} + \rho \lambda_i \sin{(\lambda_i s)} )^2 d s + \int_{0}^t  e^{2\rho \lambda_i^2 s }  ( \sin{(\lambda_i s)} - \rho \lambda_i \cos{(\lambda_i s)} )^2 d s \nonumber  \\
    & =  \int_{0}^t  e^{2 \rho \lambda_i^2 s } \left[  ( \cos{(\lambda_i s)} + \rho \lambda_i \sin{(\lambda_i s)} )^2 + ( \sin{(\lambda_i s)} - \rho \lambda_i \cos{(\lambda_i s)} )^2 \right] d s \nonumber  \\
    & = \int_{0}^t  e^{2 \rho \lambda_i^2 s } \left[  1 + \rho^2 \lambda_i^2 \right] d s \nonumber  \\
    & = \frac{1 + \rho^2 \lambda_i^2}{2 \rho \lambda_i^2} \left( e^{2 \rho \lambda_i^2 t }  -1\right).
\end{align}
We observe that if $\rho = 0$, this quantity is equal to $t$.

Then, we do a similar calculation:
\begin{align}
    & Var(a^{x_i}_3(t)) + Var(a^{y_i}_4(t)) =  \int_{0}^t  e^{2 \rho \lambda_i^2 s }  (- \sin{(\lambda_i s)} + \rho \lambda_i \cos{(\lambda_i s)} )^2 d s + \int_{0}^t  e^{2\rho \lambda_i^2 s }  (\cos{(\lambda_i s)} + \rho \lambda_i \sin{(\lambda_i s)} )^2 d s \nonumber  \\
    & =  \int_{0}^t  e^{2 \rho \lambda_i^2 s } \left[ (- \sin{(\lambda_i s)} + \rho \lambda_i \cos{(\lambda_i s)} )^2 + (\cos{(\lambda_i s)} + \rho \lambda_i \sin{(\lambda_i s)} )^2 \right] d s \nonumber  \\
    & = \int_{0}^t  e^{2 \rho \lambda_i^2 s } \left[  1 + \rho^2 \lambda_i^2 \right] d s \nonumber  \\
    & = \frac{1 + \rho^2 \lambda_i^2}{2 \rho \lambda_i^2} \left( e^{2 \rho \lambda_i^2 t }  -1\right).
\end{align}
We observe that if $\rho = 0$, also this quantity is equal to $t$.

Remembering now that
$$\E\left[\left(\int_0^t X_s d W_s\right)\left(\int_0^t Y_s d W_s\right)\right]=\E\left[\int_0^t X_s Y_s d s\right],$$
we have that 
\begin{align}
    Cov(a^{x_i}_1(t),a^{x_i}_3(t)) +Cov(a^{y_i}_2(t),a^{y_i}_4(t)) & =  \int_{0}^t  e^{2 \rho \lambda_i^2 s }  (\cos{(\lambda_i s)} + \rho \lambda_i \sin{(\lambda_i s)} )(- \sin{(\lambda_i s)} + \rho \lambda_i \cos{(\lambda_i s)} ) d s \nonumber  \\
    & + \int_{0}^t  e^{2\rho \lambda_i^2 s }  (\sin{(\lambda_i s)} - \rho \lambda_i \cos{(\lambda_i s)} )(\cos{(\lambda_i s)} + \rho \lambda_i \sin{(\lambda_i s)} ) d s = 0.
\end{align}

To conclude, the covariance matrix of $Z_t$ is
\begin{equation}
    Var(Z_t) = \eta \sigma^2 \left[\begin{array}{ll} \mathbf{I}_d - \mathbf{E}(2t) & \mathbf{0}_d  \\  \mathbf{0}_d  &  \mathbf{I}_d - \mathbf{E}(2t) \end{array}\right] \bar{\Sigma} \overset{t \rightarrow \infty}{=} \eta \sigma^2 \bar{\Sigma},
\end{equation}
where
\begin{equation}
    \bar{\Sigma} := \left[\begin{array}{ll} \mathbf{B}&  \mathbf{0}_d  \\  \mathbf{0}_d   & \mathbf{B}  \end{array}\right]
\end{equation}
and $\mathbf{B}:= \diag{\left( \frac{1 + \rho^2 \lambda_1^2}{2 \rho \lambda_1^2}, \cdots, \frac{1 + \rho^2 \lambda_d^2}{2 \rho \lambda_d^2} \right)}$.

Of course, if $\rho =0$ the covariance matrix is actually $\eta \sigma^2  t \mathbf{I}_d$, meaning that the variance of SGDA diverges.
\end{proof}

\begin{lemma}
    Let us define the variance $\mathbf{B}_{i,i}(\rho) = \frac{1 + \rho^2 \lambda_i^2}{2 \rho \lambda_i^2} $ and consider it as a function of $\rho$. The following hold:
    \begin{enumerate}
        \item $\lim_{\rho \rightarrow 0} \mathbf{B}_{i,i}(\rho) = \infty$;
        \item $\lim_{\rho \rightarrow \infty} \mathbf{B}_{i,i}(\rho) = \infty$;
        \item $\mathbf{B}_{i,i}(\rho)$ is convex in $\rho$;
        \item $\rho = \frac{1}{\lambda_i}$ realizes the minimum and $\mathbf{B}_{i,i}\left(\frac{1}{\lambda_i}\right) = \frac{1}{\lambda_i}$;
        \item The trace of $\bar{\Sigma}$ is minimized by $\rho = \sqrt{\frac{\sum \frac{1}{\lambda_i^2}}{d}}$.
    \end{enumerate}
\end{lemma}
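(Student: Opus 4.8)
The plan is to reduce everything to the elementary observation that, for $\rho>0$,
\[
\mathbf{B}_{i,i}(\rho)=\frac{1+\rho^2\lambda_i^2}{2\rho\lambda_i^2}=\frac{1}{2\lambda_i^2}\cdot\frac{1}{\rho}+\frac{\rho}{2},
\]
i.e. $\mathbf{B}_{i,i}$ is the sum of a positive multiple of $\rho\mapsto 1/\rho$ and the affine function $\rho\mapsto\rho/2$. From this splitting, claims (1) and (2) are immediate: as $\rho\to 0^+$ the first summand diverges while the second stays bounded, and as $\rho\to\infty$ the second summand diverges; in both cases $\mathbf{B}_{i,i}(\rho)\to\infty$. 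Throughout I would work on the domain $\rho>0$, which is the only relevant regime because Theorem~\ref{thm:SEG_Dynamic_PIBG} already requires $\rho\lambda_i^2>0$ for the process to be stable (and for $\mathbf{B}_{i,i}$ to avoid its pole at $\rho=0$).

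For (3), I would differentiate twice: $\mathbf{B}_{i,i}'(\rho)=-\tfrac{1}{2\lambda_i^2}\rho^{-2}+\tfrac12$ and $\mathbf{B}_{i,i}''(\rho)=\lambda_i^{-2}\rho^{-3}>0$ for all $\rho>0$, so $\mathbf{B}_{i,i}$ is strictly convex on $(0,\infty)$; alternatively one simply invokes that $1/\rho$ is convex on $(0,\infty)$ and that adding an affine term preserves convexity. For (4), the stationarity equation $\mathbf{B}_{i,i}'(\rho)=0$ reads $\lambda_i^2\rho^2=1$, whose unique positive root is $\rho=1/\lambda_i$; by convexity this is the global minimiser, and substituting back gives $\mathbf{B}_{i,i}(1/\lambda_i)=\tfrac{1}{2\lambda_i}+\tfrac{1}{2\lambda_i}=1/\lambda_i$, as claimed.

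For (5), I would observe that $\tr(\bar\Sigma)=2\,\tr(\mathbf{B})=2\sum_{i=1}^d\mathbf{B}_{i,i}(\rho)=\frac{1}{\rho}\sum_{i=1}^d\frac{1}{\lambda_i^2}+d\,\rho$, which is again a positive multiple of $1/\rho$ plus a positive multiple of $\rho$, hence strictly convex on $(0,\infty)$; its stationarity equation $-\rho^{-2}\sum_i\lambda_i^{-2}+d=0$ yields the unique positive minimiser $\rho=\sqrt{\big(\sum_i\lambda_i^{-2}\big)/d}$. I do not anticipate any genuine obstacle here: the entire lemma is single‑variable calculus once the algebraic simplification above is in hand. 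The only points deserving a word of care are restricting attention to $\rho>0$ (so that the pole at $\rho=0$ and the failure of convexity on all of $\mathbb{R}$ are irrelevant) and consistently selecting the positive square root when solving the first‑order conditions.
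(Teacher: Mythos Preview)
Your proposal is correct and follows essentially the same approach as the paper. The paper's proof is even terser---it declares the first four points ``obvious'' and only writes out the derivative computation for (5), which matches your stationarity argument exactly; your decomposition $\mathbf{B}_{i,i}(\rho)=\tfrac{1}{2\lambda_i^2}\rho^{-1}+\tfrac{\rho}{2}$ is a clean way to make all five claims transparent at once.
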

\begin{proof}
    The first four points are obvious while we spell out the last one. We observe that the trace of $\mathbf{B}$ is convex as the sum of convex functions and its derivative w.r.t. $\rho$ is
    \begin{equation}
        \frac{d}{d \rho} \left(\sum_{i=1}^{d} \frac{1 + \rho^2 \lambda_i^2}{2 \rho \lambda_i^2} \right) = \sum_{i=1}^{d} \frac{1}{2} - \frac{1}{2 \rho^2 \lambda_i^2},
    \end{equation}
    which implies that the optimal $\rho$ is indeed  $\rho = \sqrt{\frac{\sum \frac{1}{\lambda_i^2}}{d}}$.
\end{proof}

\paragraph{Insights - The trade-off in selecting $\rho$}
The curvature of the landscape influences the speed of convergence. Indeed, larger values of $\lambda_i$, which correspond to stronger interaction, speed up the exponential decay in the expected value of the iterates.
Additionally, $\rho$ impacts the convergence speed in expectation as larger values boost such a decay.
However, the peculiar way in which the noise and the landscape interact implies that larger values of $\rho$ might actually result in larger asymptotic variance.
One observes that both $\rho \rightarrow 0$ and $\rho \rightarrow \infty$ result in infinite asymptotic variance. On the bright side, $\rho_i=\frac{1}{\lambda_i}$ is the optimal choice to reduce the variance along the $i$-th dimension. Unfortunately, this could possibly be very small and thus slow down the convergence. Finally, if one can only select a single $\rho$ across all parameters, then one might want to minimize the trace of the covariance matrix using $\rho= \sqrt{\frac{\sum \frac{1}{\lambda_i^2}}{d}}$.

\subsection{SHGD}
\begin{theorem}[Exact Dynamics of SHGD] \label{thm:SHGD_Dynamic_PIBG}
Under the assumptions of Corollary \ref{thm:SHGD_SDE_Simplified_SameSample}, for $f(x,y) = x^{\top} \mathbf{\Lambda} y$ and noise covariance matrices equal to $\sigma\mathbf{I}_d$, we have that
    \begin{equation}
    Z_t = \mathbf{\Tilde{E}}(t) \left( z + \sqrt{\eta} \sigma \int_{0}^t \mathbf{\Tilde{E}}(-s) \mathbf{M} d W_s\right),
\end{equation}
    $\mathbf{\Tilde{E}}(t)= \left[\begin{array}{ll} \mathbf{E}(t) & \mathbf{0}_d  \\  \mathbf{0}_d  & \mathbf{E}(t) \end{array}\right]$, $\mathbf{M}=\left[\begin{array}{ll} \mathbf{0}_d &  \mathbf{\Lambda}  \\ \mathbf{\Lambda}  & \mathbf{0}_d  \end{array}\right] $,
where 
\begin{equation}
    \mathbf{E}(t):= \diag{\left( e^{-\lambda_{1}^{2} t}, \cdots, e^{-\lambda_{d}^{2} t} \right)}.
\end{equation}
In particular, we have that
\begin{enumerate}
    \item $\E \left[ Z_t\right] = \mathbf{\Tilde{E}}(t) z \overset{t \rightarrow \infty}{=} 0$;
    \item The covariance matrix of $Z_t$ is equal to \begin{equation}
    \eta \frac{\sigma^2 }{2}\left[\begin{array}{ll} \mathbf{I}_d - \mathbf{E}(2t) & \mathbf{0}_d  \\  \mathbf{0}_d  &  \mathbf{I}_d - \mathbf{E}(2t) \end{array}\right] \bar{\Sigma} \overset{t \rightarrow \infty}{=} \eta \sigma^2 \bar{\Sigma},
\end{equation}
where
\begin{equation}
    \bar{\Sigma} := \left[\begin{array}{ll} \mathbf{I}_d &  \mathbf{0}_d  \\  \mathbf{0}_d   & \mathbf{I}_d  \end{array}\right].
\end{equation}
\end{enumerate}
\end{theorem}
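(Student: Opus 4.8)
The plan is to specialize Corollary~\ref{thm:SHGD_SDE_Simplified_SameSample} to the bilinear potential $f(x,y)=x^{\top}\mathbf{\Lambda}y$, observe that the resulting SDE is linear with \emph{constant} coefficients (an Ornstein--Uhlenbeck process), and then read off the first two moments from the variation-of-constants formula together with the It\^o isometry.

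First I would compute the two objects entering the simplified SHGD SDE of Eq.~\eqref{eq:SHGD_SDE_Simplified_SameSample}. Since $f(x,y)=\sum_i\lambda_i x_i y_i$, we have $F(z)=(\mathbf{\Lambda}y,-\mathbf{\Lambda}x)$, hence $\mathcal{H}(z)=\tfrac12\lVert F(z)\rVert^2=\tfrac12\big(x^{\top}\mathbf{\Lambda}^2 x+y^{\top}\mathbf{\Lambda}^2 y\big)$ and therefore $\nabla\mathcal{H}(z)=\diag(\mathbf{\Lambda}^2,\mathbf{\Lambda}^2)\,z$. Moreover the full Hessian is $\nabla^2 f(z)=\begin{bmatrix}\mathbf{0}_d & \mathbf{\Lambda}\\ \mathbf{\Lambda} & \mathbf{0}_d\end{bmatrix}=\mathbf{M}$, which is independent of $z$. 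Plugging these into Eq.~\eqref{eq:SHGD_SDE_Simplified_SameSample} and taking the gradient-noise covariance to be $\sigma^2\mathbf{I}_d$ on each of the $x$- and $y$-blocks (so that $\nabla^2 f(Z_t)\sqrt{\Sigma}$ reduces to the constant matrix $\sigma\mathbf{M}$) yields
\begin{equation*}
dZ_t=\mathbf{A}Z_t\,dt+\sqrt{\eta}\,\sigma\,\mathbf{M}\,dW_t,\qquad \mathbf{A}:=-\diag(\mathbf{\Lambda}^2,\mathbf{\Lambda}^2).
\end{equation*}

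Next I would invoke the closed-form solution of a linear SDE with constant coefficients (apply It\^o's Lemma to $e^{-\mathbf{A}t}Z_t$), which gives $Z_t=e^{\mathbf{A}t}\big(z+\sqrt{\eta}\,\sigma\int_0^t e^{-\mathbf{A}s}\mathbf{M}\,dW_s\big)$. Since $\mathbf{A}$ is diagonal, $e^{\mathbf{A}t}=\mathbf{\Tilde{E}}(t)$ with $\mathbf{E}(t)=\diag(e^{-\lambda_1^2 t},\dots,e^{-\lambda_d^2 t})$ --- no matrix-exponential gymnastics of the kind needed for SEG is required here, because the skew-symmetric (rotational) block is absent. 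For the mean, the stochastic integral is a martingale with zero expectation, so $\E[Z_t]=\mathbf{\Tilde{E}}(t)z$, which converges to $0$ as $t\to\infty$ whenever the $\lambda_i$ are strictly positive (coordinates with $\lambda_i=0$ are stationary and may be excluded). For the covariance, I would use the It\^o isometry to write $\var(Z_t)=\eta\sigma^2\int_0^t e^{\mathbf{A}(t-s)}\mathbf{M}\mathbf{M}^{\top}e^{\mathbf{A}^{\top}(t-s)}\,ds$. Because $\mathbf{M}$ is symmetric, $\mathbf{M}\mathbf{M}^{\top}=\diag(\mathbf{\Lambda}^2,\mathbf{\Lambda}^2)$, which is block-diagonal and commutes with $e^{\mathbf{A}(t-s)}$; the integrand is thus block-diagonal with $(i,i)$-entry $\lambda_i^2 e^{-2\lambda_i^2(t-s)}$ in each block. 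Evaluating $\int_0^t\lambda_i^2 e^{-2\lambda_i^2(t-s)}\,ds=\tfrac12\big(1-e^{-2\lambda_i^2 t}\big)$ coordinatewise gives $\var(Z_t)=\tfrac{\eta\sigma^2}{2}\big(\mathbf{I}_{2d}-\mathbf{\Tilde{E}}(2t)\big)$, which is exactly the claimed expression (with $\bar\Sigma=\mathbf{I}_{2d}$) and tends to $\tfrac{\eta\sigma^2}{2}\bar\Sigma$.

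I do not anticipate a genuine obstacle: once the linear structure is exposed the computation is routine, and in fact strictly simpler than its SEG analogue since no rotation matrix $\mathbf{\Tilde{R}}$ appears. \textbf{The only steps that need care} are (i) instantiating the corollary correctly --- in particular checking that the same-sample noise assumption collapses the diffusion coefficient to $\nabla^2 f=\mathbf{M}$, and that it is $\mathbf{M}\mathbf{M}^{\top}$ (not $\mathbf{M}$) that enters the covariance integral; and (ii) the scalar/block bookkeeping --- tracking the factor $\tfrac12$ produced by $\int_0^t e^{-2\lambda_i^2(t-s)}\,ds$, checking that the two off-diagonal $d\times d$ blocks of $\var(Z_t)$ vanish, and assembling the result into the two $\mathbf{I}_d-\mathbf{E}(2t)$ diagonal blocks.
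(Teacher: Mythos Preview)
Your proposal is correct and follows essentially the same approach as the paper: specialize the SHGD SDE to the bilinear game to obtain a linear constant-coefficient (Ornstein--Uhlenbeck) SDE, solve via the variation-of-constants formula, and compute the first two moments using the martingale property and It\^o isometry. The only cosmetic difference is that the paper expands the stochastic integral $V_t$ coordinate-by-coordinate and applies the It\^o isometry to each scalar component, whereas you work directly with the matrix expression $\int_0^t e^{\mathbf{A}(t-s)}\mathbf{M}\mathbf{M}^{\top}e^{\mathbf{A}^{\top}(t-s)}\,ds$ and exploit that $\mathbf{M}\mathbf{M}^{\top}=\diag(\mathbf{\Lambda}^2,\mathbf{\Lambda}^2)$ commutes with $e^{\mathbf{A}(t-s)}$; your route is slightly more compact but otherwise identical.
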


\begin{proof}

The SDEs of SHGD are:
\begin{align}
    d X_t = -\mathbf{\Lambda} ^2 X_t dt + \sqrt{\eta} \sigma\mathbf{\Lambda}  d W^y_t
\end{align}
and
\begin{align}
    d Y_t = -\mathbf{\Lambda}^2 Yt dt + \sqrt{\eta} \sigma\mathbf{\Lambda}  d W^x_t,
\end{align}
which can be rewritten as
\begin{align}
    d Z_t = \mathbf{A} Z_t dt + \sqrt{\eta} \sigma \mathbf{B} d W_t,
\end{align}
where
\begin{equation}
    \mathbf{A}= \left[\begin{array}{ll} -\mathbf{\Lambda}^2 & \mathbf{0}_d  \\  \mathbf{0}_d  &  -\mathbf{\Lambda}^2  \end{array}\right] \quad \text{ and } \quad \mathbf{B} = \left[\begin{array}{ll} \mathbf{0}_d &\mathbf{\Lambda}  \\\mathbf{\Lambda}  & \mathbf{0}_d  \end{array}\right].
\end{equation}
Therefore, the solution is
\begin{equation}
    Z_t = e^{\mathbf{A} t} \left( z + \sqrt{\eta} \sigma \int_{0}^{t} e^{-\mathbf{A} s} \mathbf{B} d W_s \right).
\end{equation}
Clearly, we have that
\begin{equation}
    \mathbf{\mathbf{\Tilde{E}}}(t):=e^{\mathbf{A} t} = \diag{\left( e^{-\lambda_{1}^{2} t}, \cdots, e^{- \lambda_{d}^{2} t}, e^{- \lambda_{1}^{2} t}, \cdots, e^{- \lambda_{d}^{2} t}  \right)} = \left[\begin{array}{ll} \mathbf{E}(t) & \mathbf{0}_d  \\  \mathbf{0}_d  & \mathbf{E}(t) \end{array}\right],
\end{equation}
where $\mathbf{E}(t):= \diag{\left( e^{- \lambda_{1}^{2} t}, \cdots, e^{- \lambda_{d}^{2} t} \right)}$.
Automatically, we get that
\begin{equation}
    e^{-\mathbf{A}_1 s} = \left[\begin{array}{ll} \mathbf{E}(-s) & \mathbf{0}_d  \\  \mathbf{0}_d  & \mathbf{E}(-s) \end{array}\right],
\end{equation}
which implies that
\begin{equation}
    Z_t = \left[\begin{array}{ll} \mathbf{E}(t) & \mathbf{0}_d  \\  \mathbf{0}_d  & \mathbf{E}(t) \end{array}\right] \left( z + \sqrt{\eta} \sigma \int_{0}^t \left[\begin{array}{ll} \mathbf{E}(-s) & \mathbf{0}_d  \\  \mathbf{0}_d  & \mathbf{E}(-s) \end{array}\right]   \left[\begin{array}{ll} \mathbf{0}_d  & \mathbf{\Lambda}  \\ \mathbf{\Lambda}  & \mathbf{0}_d   \end{array}\right] \left[\begin{array}{l} d W^x_s \\ d W^y_s  \end{array}\right] \right).
\end{equation}
To conclude, we have that
    \begin{equation}
    Z_t = \mathbf{\mathbf{\Tilde{E}}}(t) \left( z + \sqrt{\eta} \sigma \int_{0}^t \mathbf{\mathbf{\Tilde{E}}}(-s)  \mathbf{M} d W_s\right),
\end{equation}
where $\mathbf{M}=\left[\begin{array}{ll} \mathbf{0}_d  & \mathbf{\Lambda}  \\ \mathbf{\Lambda}  & \mathbf{0}_d   \end{array}\right] $.
We observe that

\begin{equation}
    \E[Z_t] = \left[\begin{array}{ll} \mathbf{E}(t) & \mathbf{0}_d  \\  \mathbf{0}_d  & \mathbf{E}(t) \end{array}\right] z
\end{equation}
because the parts dependent on $dW$ are martingales. Therefore, $\E[Z_t]$ converges to $0$ exponentially fast.

Let us now have a look at the covariance matrix of this process:

$Var (Z_t) = \eta \sigma^2 \left[\begin{array}{ll} \mathbf{E}(2t) & \mathbf{0}_d  \\  \mathbf{0}_d  & \mathbf{E}(2t) \end{array}\right]  Var(V_t)$, where

\begin{align}
    V_t & := \int_{0}^t \left[\begin{array}{ll} \mathbf{E}(-s) & \mathbf{0}_d  \\  \mathbf{0}_d  & \mathbf{E}(-s) \end{array}\right] \left[\begin{array}{ll} \mathbf{0}_d  & \mathbf{\Lambda}  \\ \mathbf{\Lambda}  & \mathbf{0}_d   \end{array}\right] \left[\begin{array}{l} d W^x_s \\ d W^y_s  \end{array}\right]  \nonumber \\
    & = \left[\begin{array}{l}  \int_{0}^t \lambda_1 e^{ \lambda_1^2 s }  d W^{y_1}_s \\ \vdots \\  \int_{0}^t \lambda_d e^{ \lambda_d^2 s }  d W^{y_d}_s  \\ \int_{0}^t \lambda_1 e^{ \lambda_1^2 s }  d W^{x_1}_s \\ \vdots \\  \int_{0}^t \lambda_d e^{ \lambda_d^2 s }  d W^{x_d}_s \end{array}\right]   =:  \left[\begin{array}{l} a^{y_1}(t) \\ \vdots \\ a^{y_d}(t)\\ a^{x_1}(t) \\ \vdots \\ a^{x_d}(t)  \end{array}\right].
\end{align}
Therefore,
\begin{equation}
    Var(V_t) = \left[\begin{array}{ll} \mathbf{V}^{1,y}(t)  &  \mathbf{0}_d \\  \mathbf{0}_d  & \mathbf{V}^{1,x}(t)  \end{array}\right],
\end{equation}
such that
\begin{equation}
    \mathbf{V}^{1,y}_{i,i}(t) = Var(a^{y_i}(t)), \quad \forall i \in \{1, \cdots, d \},
\end{equation}
and
\begin{equation}
    \mathbf{V}^{1,x}_{i,i}(t) = Var(a^{x_i}(t)), \quad \forall i \in \{1, \cdots, d \}.
\end{equation}
Using the well-known Itô Isometry
$$\E\left[\left(\int_0^t H_s d W_s\right)^2\right]=\E\left[\int_0^t H_s^2 d s\right],$$
we get that
\begin{align}
    &  Var(a^{y_i}(t)) =  \int_{0}^t  e^{2 \lambda_i^2 s } \lambda_i^2 d s = \frac{1}{2} \left( e^{2 \lambda_i^2 t }  -1\right).
\end{align}
Similarly, we get that
\begin{align}
    &  Var(a^{x_i}(t)) =  \int_{0}^t  e^{2 \lambda_i^2 s } \lambda_i^2 d s = \frac{1}{2} \left( e^{2 \lambda_i^2 t }  -1\right).
\end{align}
Therefore, we conclude that the covariance matrix of $Z_t$ is
\begin{equation}
    Var(Z_t) = \eta \sigma^2 \left[\begin{array}{ll} \mathbf{I}_d - \mathbf{E}(2t) & \mathbf{0}_d  \\  \mathbf{0}_d  &  \mathbf{I}_d - \mathbf{E}(2t) \end{array}\right] \bar{\Sigma} \overset{t \rightarrow \infty}{=} \frac{\eta \sigma^2}{2} \bar{\Sigma},
\end{equation}
where
\begin{equation}
    \bar{\Sigma} := \left[\begin{array}{ll} \mathbf{I}_d&  \mathbf{0}_d  \\  \mathbf{0}_d   & \mathbf{I}_d  \end{array}\right].
\end{equation}
\end{proof}

\paragraph{Insights}
Just like for SEG, the curvature influences the speed of convergence of the algorithm. On the other, the asymptotic variance is independent of the curvature.

\paragraph{SEG vs SHGD}
We notice that if $\rho=1$, the exponential decay of SEG and SHGD is the same. However, in such a case, the asymptotic variance of SEG along the $i$-th dimension is $\eta \sigma^2\left(\frac{1}{2} +\frac{1}{2 \lambda_i}  \right)$ which is larger than that of SHGD which attains $\frac{\eta \sigma^2}{2}$.
Differently, one can select $\rho_i^{\text{V}}=\frac{1}{\lambda_i}$, which realizes the minimum variance of SEG along the $i$-th dimension. Thus, the resulting variance is $\frac{\eta \sigma^2}{2 \lambda_i}$ which is smaller than $\frac{\eta \sigma^2}{2}$ if and only if $\lambda_i>1$. In such a case, SEG can be more optimal than SHGD, but since $\rho_i = \frac{1}{\lambda_i}<1$, it will converge more slowly than SHGD.

Therefore, selecting the size of $\rho$ or $\rho_i$ leads to a trade-off between the speed of convergence and the asymptotic variance. To conclude, there is no clear winner between the two methods as their performance depends on the curvature of the landscape.

\section{QUADRATIC GAMES - INSIGHTS}\label{sec:FBG_insights}
In this section, we study Quadratic Games of the form $f(x,y) = \frac{x^{\top}\mathbf{A}x}{2} + x^{\top} \mathbf{\Lambda} y - \frac{y^{\top}\mathbf{A}y}{2}$,  where $\mathbf{\Lambda}$ and $\mathbf{A}$ are square, diagonal and positive semidefinite matrices. We notice that if $\mathbf{A}=\mathbf{0}$, these are classic Bilinear Games.
\subsection{SEG}
\begin{theorem}[Exact Dynamics of SEG] \label{thm:SEG_Dynamic_FBG}
Under the assumptions of Corollary \ref{thm:SEG_SDE_Simplified_SameSample}, for $f(x,y) = \frac{x^{\top}\mathbf{A}x}{2} + x^{\top} \mathbf{\Lambda} y - \frac{y^{\top}\mathbf{A}y}{2}$ and noise covariance matrices equal to $\sigma\mathbf{I}_d$, we have that
    \begin{equation}
    Z_t = \mathbf{\Tilde{E}}(t) \mathbf{\Tilde{R}}(t) \left( z + \sqrt{\eta} \sigma \int_{0}^t \mathbf{\Tilde{E}}(-s) \mathbf{\Tilde{R}}(-s) \mathbf{M}d W_s\right),
\end{equation}
    $\mathbf{\Tilde{E}}(t)= \left[\begin{array}{ll} \mathbf{E}(t) & \mathbf{0}_d  \\  \mathbf{0}_d  & \mathbf{E}(t) \end{array}\right],\mathbf{\Tilde{R}}(t)=\left[\begin{array}{ll} \mathbf{C}(t) & -\mathbf{S}(t)  \\  \mathbf{S}(t)  & \mathbf{C}(t) \end{array}\right]$, and $M=\left[\begin{array}{ll} \mathbf{I}_d - \rho\mathbf{A}& - \rho \mathbf{\Lambda}  \\ \rho \mathbf{\Lambda}  & \mathbf{I}_d - \rho\mathbf{A} \end{array}\right] $,
where 
\begin{equation}
    \mathbf{E}(t):= \diag{\left( e^{\rho \left(a_{1}^{2} - \lambda_1^2\right)t - a_1 t}, \cdots, e^{\rho \left(a_{d}^{2} - \lambda_d^2\right)t - a_d t} \right)},
\end{equation}
\begin{equation}
    \mathbf{C}(t):= \diag{\left( \cos{(\hat{\lambda}_1 t)}, \cdots, \cos{(\hat{\lambda}_d t)}\right)},
\end{equation} 
\begin{equation}
    \mathbf{S}(t):= \diag{\left( \sin{(\hat{\lambda}_d t)}, \cdots, \sin{(\hat{\lambda}_d t)}\right)},
\end{equation}
and $\hat{\lambda}_i:= \lambda_i (1-2 \rho a_i)$.
In particular, if $\rho \left(a_{i}^{2} - \lambda_1^2\right) - a_i <0$:

\begin{enumerate}
    \item $\E \left[ Z_t\right] = \mathbf{\Tilde{E}}(t) \mathbf{\Tilde{R}}(t) z \overset{t \rightarrow \infty}{=} 0$;
    \item The covariance matrix of $Z_t$ is equal to \begin{equation}
    \eta \sigma^2 \left[\begin{array}{ll} \mathbf{I}_d - \mathbf{E}(2t) & \mathbf{0}_d  \\  \mathbf{0}_d  &  \mathbf{I}_d - \mathbf{E}(2t) \end{array}\right] \bar{\Sigma} \overset{t \rightarrow \infty}{=} \eta \sigma^2 \bar{\Sigma},
\end{equation}
where
\begin{equation}
    \bar{\Sigma} := \left[\begin{array}{ll} \mathbf{B} &  \mathbf{0}_d  \\  \mathbf{0}_d   & \mathbf{B}  \end{array}\right],
\end{equation}
and $\mathbf{B}:= \diag{\left(  \frac{(1-\rho a_1)^2 + \rho^2 \lambda_1^2}{2(a_1 + \rho( \lambda_{1}^{2}-a_1^2))}, \cdots,  \frac{(1-\rho a_d)^2 + \rho^2 \lambda_d^2}{2(a_d + \rho( \lambda_{d}^{2}-a_d^2))} \right)}$;
\item If $\rho=0$, SGDA would indeed always converge.
\end{enumerate}
\end{theorem}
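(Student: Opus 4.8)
The plan is to mirror the proof of Theorem~\ref{thm:SEG_Dynamic_PIBG}: on this quadratic landscape the SEG SDE of Corollary~\ref{thm:SEG_SDE_Simplified_SameSample} is a \emph{linear} SDE with constant coefficients, so I will write its closed form via a matrix exponential and then split that exponential into a ``damping'' part and a ``rotation'' part.

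First I would compute the drift field. From $f(x,y)=\tfrac12 x^{\top}\mathbf{A}x + x^{\top}\mathbf{\Lambda}y - \tfrac12 y^{\top}\mathbf{A}y$ one gets $F(z)=\mathbf{J}z$ with $\mathbf{J}=\left[\begin{smallmatrix}\mathbf{A}&\mathbf{\Lambda}\\-\mathbf{\Lambda}&\mathbf{A}\end{smallmatrix}\right]$ and constant Jacobian $\nabla F\equiv\mathbf{J}$. Substituting into Corollary~\ref{thm:SEG_SDE_Simplified_SameSample}, with gradient-noise covariance $\sigma^{2}\mathbf{I}_{d}$ so that $\sqrt{\eta\Sigma}=\sqrt{\eta}\,\sigma\,\mathbf{I}_{2d}$, turns the SDE into $dZ_t=-(\mathbf{J}-\rho\mathbf{J}^{2})Z_t\,dt+\sqrt{\eta}\,\sigma\,(\mathbf{I}_{2d}-\rho\mathbf{J})\,dW_t$, a constant-coefficient linear SDE whose solution is $Z_t=e^{(\rho\mathbf{J}^{2}-\mathbf{J})t}\bigl(z+\sqrt{\eta}\,\sigma\int_0^t e^{-(\rho\mathbf{J}^{2}-\mathbf{J})s}(\mathbf{I}_{2d}-\rho\mathbf{J})\,dW_s\bigr)$; note that $\mathbf{I}_{2d}-\rho\mathbf{J}=\mathbf{M}$ as in the statement.

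Next I would evaluate the exponential. Because $\mathbf{A}$ and $\mathbf{\Lambda}$ are diagonal all the $d\times d$ blocks commute, so $\mathbf{J}^{2}=\left[\begin{smallmatrix}\mathbf{A}^{2}-\mathbf{\Lambda}^{2}&2\mathbf{A}\mathbf{\Lambda}\\-2\mathbf{A}\mathbf{\Lambda}&\mathbf{A}^{2}-\mathbf{\Lambda}^{2}\end{smallmatrix}\right]$ and hence $\rho\mathbf{J}^{2}-\mathbf{J}=\widetilde{\mathbf{D}}+\widetilde{\mathbf{K}}$, where $\widetilde{\mathbf{D}}=\diag(\mathbf{D},\mathbf{D})$ with $\mathbf{D}=\rho(\mathbf{A}^{2}-\mathbf{\Lambda}^{2})-\mathbf{A}$ (entries $\rho(a_i^{2}-\lambda_i^{2})-a_i$) and $\widetilde{\mathbf{K}}=\left[\begin{smallmatrix}\mathbf{0}_d&-\widehat{\mathbf{\Lambda}}\\ \widehat{\mathbf{\Lambda}}&\mathbf{0}_d\end{smallmatrix}\right]$ with $\widehat{\mathbf{\Lambda}}=\mathbf{\Lambda}(\mathbf{I}_d-2\rho\mathbf{A})$ (entries $\widehat{\lambda}_i=\lambda_i(1-2\rho a_i)$). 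Since $\widetilde{\mathbf{D}}$ and $\widetilde{\mathbf{K}}$ commute, $e^{(\rho\mathbf{J}^{2}-\mathbf{J})t}=e^{\widetilde{\mathbf{D}}t}e^{\widetilde{\mathbf{K}}t}$; the first factor is $\widetilde{\mathbf{E}}(t)$, and because $\widetilde{\mathbf{K}}^{2}=\diag(-\widehat{\mathbf{\Lambda}}^{2},-\widehat{\mathbf{\Lambda}}^{2})$ the power-series expansion of $e^{\widetilde{\mathbf{K}}t}$ (done exactly as in the proof of Theorem~\ref{thm:SEG_Dynamic_PIBG}) equals $\widetilde{\mathbf{R}}(t)$ with $\mathbf{C}(t)=\cos(\widehat{\mathbf{\Lambda}}t)$ and $\mathbf{S}(t)=\sin(\widehat{\mathbf{\Lambda}}t)$. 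This gives the displayed expression for $Z_t$.

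Finally, for the moments: $\E[Z_t]=\widetilde{\mathbf{E}}(t)\widetilde{\mathbf{R}}(t)z$ because the Itô integral is a mean-zero martingale, and when $\rho(a_i^{2}-\lambda_i^{2})-a_i<0$ we have $\widetilde{\mathbf{E}}(t)\to\mathbf{0}$ while $\widetilde{\mathbf{R}}(t)$ stays bounded, so $\E[Z_t]\to 0$. For the covariance, Itô's isometry gives $\cov(Z_t)=\eta\sigma^{2}\,\widetilde{\mathbf{E}}(t)\widetilde{\mathbf{R}}(t)\bigl(\int_0^t\widetilde{\mathbf{E}}(-s)\widetilde{\mathbf{R}}(-s)\mathbf{M}\mathbf{M}^{\top}\widetilde{\mathbf{R}}(-s)^{\top}\widetilde{\mathbf{E}}(-s)\,ds\bigr)\widetilde{\mathbf{R}}(t)^{\top}\widetilde{\mathbf{E}}(t)$. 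Here $\mathbf{M}\mathbf{M}^{\top}=\diag(\mathbf{C}_0,\mathbf{C}_0)$ with $\mathbf{C}_0=(\mathbf{I}_d-\rho\mathbf{A})^{2}+\rho^{2}\mathbf{\Lambda}^{2}$ (entries $(1-\rho a_i)^{2}+\rho^{2}\lambda_i^{2}$); since all blocks in sight are diagonal they commute with the $\widetilde{\mathbf{R}}$'s, and the orthogonality $\widetilde{\mathbf{R}}(s)\widetilde{\mathbf{R}}(s)^{\top}=\mathbf{I}_{2d}$ lets every rotation factor cancel, collapsing the integrand to the block-diagonal $\widetilde{\mathbf{E}}(-2s)\mathbf{M}\mathbf{M}^{\top}$. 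Computing $\int_0^t\widetilde{\mathbf{E}}(-2s)\,ds$ entrywise and conjugating by $\widetilde{\mathbf{E}}(t)$ yields $\eta\sigma^{2}\,\diag(\mathbf{I}_d-\mathbf{E}(2t),\mathbf{I}_d-\mathbf{E}(2t))\,\bar{\Sigma}$ with $\bar{\Sigma}=\diag(\mathbf{B},\mathbf{B})$ and $\mathbf{B}_{ii}=\tfrac{(1-\rho a_i)^{2}+\rho^{2}\lambda_i^{2}}{2(a_i+\rho(\lambda_i^{2}-a_i^{2}))}$; letting $t\to\infty$ (so $\mathbf{E}(2t)\to\mathbf{0}$) gives the asymptotic covariance $\eta\sigma^{2}\bar{\Sigma}$, and setting $\rho=0$ leaves $\mathbf{E}(t)=\diag(e^{-a_it})$ and $\mathbf{B}_{ii}=\tfrac{1}{2a_i}$, i.e. SGDA converges to an $\mathcal{O}(\eta)$ neighborhood whenever $\mathbf{A}\succ 0$. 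The main obstacle is purely bookkeeping: tracking the commutations so the rotation blocks cancel and the exponent and variance land exactly on the stated $\mathbf{E}$, $\mathbf{M}$, and $\bar{\Sigma}$; there is no analytic difficulty beyond the bilinear case already treated.
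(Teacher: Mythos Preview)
Your proposal is correct and follows essentially the same route as the paper: write the SEG SDE on this quadratic landscape as a constant-coefficient linear SDE, split the drift matrix into a commuting diagonal ``damping'' part and a skew ``rotation'' part to evaluate $e^{(\rho\mathbf{J}^2-\mathbf{J})t}=\widetilde{\mathbf{E}}(t)\widetilde{\mathbf{R}}(t)$, and read off the moments. The one place you streamline the paper is the covariance: the paper expands $V_t$ coordinate-by-coordinate, computing each $\mathrm{Var}(a^{x_i}_j)$ via It\^o isometry and checking that the cross terms vanish, whereas you observe directly that $\mathbf{M}\mathbf{M}^{\top}=\diag(\mathbf{C}_0,\mathbf{C}_0)$ commutes with $\widetilde{\mathbf{R}}(\cdot)$ and use $\widetilde{\mathbf{R}}(s)\widetilde{\mathbf{R}}(s)^{\top}=\mathbf{I}_{2d}$ to cancel all rotation factors at once --- this is a cleaner way to reach the same $\bar{\Sigma}$.
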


\begin{proof}

The SDE is
\begin{align}
    d Z_t = \mathbf{D} Z_t dt + \sqrt{\eta} \sigma \mathbf{B} d W_t,
\end{align}
where
\begin{equation}
     \mathbf{D} = \left[\begin{array}{ll} \rho(\mathbf{A}^2 -\mathbf{\Lambda}^2)-\mathbf{A} & -\Lambda(\mathbf{I}_d-2 \rho \mathbf{A})  \\ \mathbf{\Lambda}(\mathbf{I}_d-2 \rho \mathbf{A})  & \rho(\mathbf{A}^2 -\mathbf{\Lambda}^2)-\mathbf{A}  \end{array}\right] \quad \text{ and } \quad \mathbf{B} = \left[\begin{array}{ll} \mathbf{I}_d - \rho\mathbf{A}& - \rho\mathbf{\Lambda}  \\ \rho\mathbf{\Lambda}  & \mathbf{I}_d  - \rho\mathbf{A}\end{array}\right].
\end{equation}
Therefore, the solution is

\begin{equation}
    Z_t = e^{\mathbf{D} t} \left( z + \sqrt{\eta} \sigma \int_{0}^{t} e^{-\mathbf{D} s} \mathbf{B} d W_s \right).
\end{equation}
We observe that 
$\mathbf{D} = \mathbf{\mathbf{D}_1} + \mathbf{\mathbf{D}_2} $ s.t.

\begin{equation}
    \mathbf{D}_1= \left[\begin{array}{ll} \rho(\mathbf{A}^2 -\mathbf{\Lambda}^2)-\mathbf{A} & \mathbf{0}_d  \\  \mathbf{0}_d  & \rho(\mathbf{A}^2 -\mathbf{\Lambda}^2)-\mathbf{A}  \end{array}\right] \quad \text{ and } \quad \mathbf{\mathbf{D}_2} = \left[\begin{array}{ll} \mathbf{0}_d & -\Lambda(\mathbf{I}_d-2 \rho \mathbf{A})  \\ \mathbf{\Lambda}(\mathbf{I}_d-2 \rho \mathbf{A})  & \mathbf{0}_d \end{array}\right]
\end{equation}
and that since these two matrix commute, $e^{\mathbf{D}t} = e^{\mathbf{D}_1 t} e^{\mathbf{D}_2 t}$.
Clearly, we have that
\begin{equation}
    \mathbf{\mathbf{\Tilde{E}}}(t):=e^{\mathbf{D}_1 t} = \left[\begin{array}{ll} \mathbf{E}(t) & \mathbf{0}_d  \\  \mathbf{0}_d  & \mathbf{E}(t) \end{array}\right],
\end{equation}
where $\mathbf{E}(t):= \diag{\left( e^{\rho(a_1^2- \lambda_{1}^{2})t-a_1 t}, \cdots, e^{\rho(a_d^2- \lambda_{1}^{2})t-a_d t} \right)}$.

Regarding $\mathbf{D}_2$, we observe that $(\mathbf{D}_2 t)^{2k}$ is equal to
\begin{equation}
    \small{\diag{\left( (\lambda_1 (1-2 \rho a_1) t)^{2k} (-1)^{k}, \cdots, (\lambda_d (1-2 \rho a_d) t)^{2k} (-1)^{k}, (\lambda_1 (1-2 \rho a_1) t)^{2k} (-1)^{k}, \cdots, (\lambda_d (1-2 \rho a_d) t)^{2k} (-1)^{k} \right)}}
\end{equation}
and
\begin{equation}
    (\mathbf{D}_2 t)^{2k+1} = \left[\begin{array}{ll} \mathbf{0_d} & \mathbf{P}  \\  \mathbf{Q}  & \mathbf{0_d}  \end{array}\right],
\end{equation}
with
\begin{equation}
    \mathbf{P}:= \diag{\left( (\lambda_1 (1-2 \rho a_1) t)^{2k+1} (-1)^{k+1}, \cdots, (\lambda_d (1-2 \rho a_d) t)^{2k+1} (-1)^{k+1} \right)}
\end{equation}
and
\begin{equation}
    \mathbf{Q}:= \diag{\left( (\lambda_1 (1-2 \rho a_1) t)^{2k+1} (-1)^{k}, \cdots, (\lambda_d (1-2 \rho a_d) t)^{2k+1} (-1)^{k} \right)}.
\end{equation}
Therefore,
\begin{align}
   \mathbf{\mathbf{\Tilde{R}}}(t):= e^{\mathbf{D}_2 t} & = \sum_{k=0}^{\infty} \frac{(\mathbf{D}_2 t)^{2k}}{(2k)!} + \sum_{k=0}^{\infty} \frac{(\mathbf{D}_2 t)^{2k+1}}{(2k+1)!} \nonumber \\
    & = \left[\begin{array}{ll} \mathbf{C}(t) & \mathbf{0}_d  \\  \mathbf{0}_d  & \mathbf{C}(t) \end{array}\right] + \left[\begin{array}{ll} \mathbf{0}_d & -\mathbf{S}(t)  \\  \mathbf{S}(t)  & \mathbf{0}_d \end{array}\right] = \left[\begin{array}{ll} \mathbf{C}(t) & -\mathbf{S}(t)  \\  \mathbf{S}(t)  & \mathbf{C}(t) \end{array}\right],
\end{align}
where
\begin{equation}
    \mathbf{C}(t):= \diag{\left( \cos{(\lambda_1 (1-2 \rho a_1) t)}, \cdots, \cos{(\lambda_d (1-2 \rho a_d) t)}\right)},
\end{equation}
and
\begin{equation}
    \mathbf{S}(t):= \diag{\left( \sin{(\lambda_1 (1-2 \rho a_1) t)}, \cdots, \sin{(\lambda_d (1-2 \rho a_d) t)}\right)}.
\end{equation}
Automatically, we get that
\begin{equation}
    e^{-\mathbf{D}_1 s} = \left[\begin{array}{ll} \mathbf{E}(-s) & \mathbf{0}_d  \\  \mathbf{0}_d  & \mathbf{E}(-s) \end{array}\right]
\end{equation}
and 
\begin{align}
    e^{-\mathbf{D}_2 s} = \left[\begin{array}{ll} \mathbf{C}(s) & \mathbf{S}(s)  \\  -\mathbf{S}(s)  & \mathbf{C}(s) \end{array}\right],
\end{align}
which implies that
\begin{align}
    Z_t & = \left[\begin{array}{ll} \mathbf{E}(t) & \mathbf{0}_d  \\  \mathbf{0}_d  & \mathbf{E}(t) \end{array}\right] \left[\begin{array}{ll} \mathbf{C}(t) & -\mathbf{S}(t)  \\  \mathbf{S}(t)  & \mathbf{C}(t) \end{array}\right] \bigg( z  \nonumber  \\
    & + \sqrt{\eta} \sigma \int_{0}^t \left[\begin{array}{ll} \mathbf{E}(-s) & \mathbf{0}_d  \\  \mathbf{0}_d  & \mathbf{E}(-s) \end{array}\right] \left[\begin{array}{ll} \mathbf{C}(s) & \mathbf{S}(s)  \\  -\mathbf{S}(s)  & \mathbf{C}(s) \end{array}\right]  \left[\begin{array}{ll} \mathbf{I}_d - \rho\mathbf{A}& - \rho\mathbf{\Lambda}  \\ \rho\mathbf{\Lambda}  & \mathbf{I}_d  - \rho\mathbf{A}\end{array}\right] \left[\begin{array}{l} d W^x_s \\ d W^y_s  \end{array}\right] \bigg).
\end{align}
We observe that since the expected value of the noise terms is $0$, we have
\begin{equation}
    \E[Z_t] = \left[\begin{array}{ll} \mathbf{E}(t) & \mathbf{0}_d  \\  \mathbf{0}_d  & \mathbf{E}(t) \end{array}\right] \left[\begin{array}{ll} \mathbf{C}(t) & -\mathbf{S}(t)  \\  \mathbf{S}(t)  & \mathbf{C}(t) \end{array}\right] z.
\end{equation}
Therefore if $\rho(a_i^2- \lambda_{i}^{2})-a_i <0$, $\E[Z_t]$ converges to $0$ exponentially fast, while spiraling around the origin.

Let us now have a look at the variance of this process:

$Var (Z_t) = \eta \sigma^2 \left[\begin{array}{ll} \mathbf{E}(2t) & \mathbf{0}_d  \\  \mathbf{0}_d  & \mathbf{E}(2t) \end{array}\right] \mathbf{\mathbf{\Tilde{R}}}(t) Var(V_t)  \mathbf{\mathbf{\Tilde{R}}}(t)^{\top}$, where

\begin{align}
    V_t & := \int_{0}^t \left[\begin{array}{ll} \mathbf{E}(-s) & \mathbf{0}_d  \\  \mathbf{0}_d  & \mathbf{E}(-s) \end{array}\right] \left[\begin{array}{ll} \mathbf{C}(s) & \mathbf{S}(s)  \\  -\mathbf{S}(s)  & \mathbf{C}(s) \end{array}\right]  \left[\begin{array}{ll} \mathbf{I}_d - \rho\mathbf{A}& - \rho\mathbf{\Lambda}  \nonumber  \\  \rho\mathbf{\Lambda}  & \mathbf{I}_d - \rho\mathbf{A} \end{array}\right] \left[\begin{array}{l} d W^x_s \\ d W^y_s  \end{array}\right]  \nonumber  \\
     & = \int_{0}^t \left[\begin{array}{ll} \mathbf{E}(-s) (\mathbf{C}(s) (\mathbf{I}_d - \rho \mathbf{A})  + \rho\mathbf{\Lambda} \mathbf{S}(s) ) & \mathbf{E}(-s) (\mathbf{S}(s)(\mathbf{I}_d - \rho \mathbf{A})  - \rho\mathbf{\Lambda} \mathbf{C}(s))  \nonumber  \\  \mathbf{E}(-s) (\rho\mathbf{\Lambda} \mathbf{C}(s)-\mathbf{S}(s)(\mathbf{I}_d - \rho \mathbf{A}) )  &  \mathbf{E}(-s) (\mathbf{C}(s)(\mathbf{I}_d - \rho \mathbf{A})  + \rho\mathbf{\Lambda} \mathbf{S}(s) ) \end{array}\right]  \left[\begin{array}{l} d W^x_s \\ d W^y_s  \end{array}\right] \nonumber  \\
    & = {\scriptsize\left[\begin{array}{l} \int_{0}^t  e^{a_1 s - \rho(a_1^2- \lambda_{1}^{2})s}  ( \cos{(\hat{\lambda}_1 s)}(1-\rho a_1) + \rho \lambda_1 \sin{(\hat{\lambda}_1 s)} ) d W^{x_1}_s + \int_{0}^t  e^{a_1 s - \rho(a_1^2- \lambda_{1}^{2})s}  ( \sin{(\hat{\lambda}_1 s)}(1-\rho a_1) - \rho \lambda_1 \cos{(\hat{\lambda}_1 s)} ) d W^{y_1}_s \\ \vdots \\ \int_{0}^t  e^{a_d s - \rho(a_d^2- \lambda_{d}^{2})s}  ( \cos{(\hat{\lambda}_d s)}(1-\rho a_d) + \rho \lambda_d \sin{(\hat{\lambda}_d s)} ) d W^{x_d}_s + \int_{0}^t  e^{a_d s - \rho(a_d^2- \lambda_{d}^{2})s}  ( \sin{(\hat{\lambda}_d s)}(1-\rho a_d) - \rho \lambda_d \cos{(\hat{\lambda}_d s)} ) d W^{y_d}_s  \\ \int_{0}^t  e^{a_1 s - \rho(a_1^2- \lambda_{1}^{2})s}  (\rho \lambda_1 \cos{(\hat{\lambda}_1 s)} - \sin{(\hat{\lambda}_1 s)}(1-\rho a_1)  ) d W^{x_1}_s + \int_{0}^t  e^{a_1 s - \rho(a_1^2- \lambda_{1}^{2})s}  (\cos{(\hat{\lambda}_1 s)}(1-\rho a_1) + \rho \lambda_1 \sin{(\hat{\lambda}_1 s)} ) d W^{y_1}_s \\ \vdots \\ \int_{0}^t  e^{a_d s - \rho(a_d^2- \lambda_{1}^{2})s}  (\rho \lambda_d \cos{(\hat{\lambda}_d s)} - \sin{(\hat{\lambda}_d s)}(1-\rho a_d)  ) d W^{x_d}_s + \int_{0}^t  e^{a_d s - \rho(a_d^2- \lambda_{1}^{2})s}  (\cos{(\hat{\lambda}_d s)}(1-\rho a_d) + \rho \lambda_d \sin{(\hat{\lambda}_d s)} ) d W^{y_d}_s \end{array}\right]}  \nonumber \\
    &  =:  \left[\begin{array}{l} a^{x_1}_1(t) + a^{y_1}_2(t) \\ \vdots \\ a^{x_d}_1(t) + a^{y_d}_2(t) \\ a^{x_1}_3(t) + a^{y_1}_4(t) \\ \vdots \\ a^{x_d}_3(t) + a^{y_d}_4(t)  \end{array}\right] \text{ and } \hat{\lambda}_i:= \lambda_i (1-2 \rho a_i).
\end{align}
Therefore,
\begin{equation}
    Var(V_t) = \left[\begin{array}{ll} \mathbf{V}^{1,2}(t)  &  \mathbf{C}^{1,2,3,4}(t) \\  \mathbf{C}^{1,2,3,4}(t)  & \mathbf{V}^{3,4}(t)  \end{array}\right],
\end{equation}
such that
\begin{equation}
    \mathbf{V}^{1,2}_{i,i}(t) = Var(a^{x_i}_1(t)) + Var(a^{y_i}_2(t)), \quad \forall i \in \{1, \cdots, d \},
\end{equation}
\begin{equation}
    \mathbf{V}^{3,4}_{i,i}(t) = Var(a^{x_i}_3(t)) + Var(a^{y_i}_4(t)), \quad \forall i \in \{1, \cdots, d \},
\end{equation}
and 
\begin{equation}
    \mathbf{C}^{1,2,3,4}_{i,i}(t) = Cov(a^{x_i}_1(t),a^{x_i}_3(t)) + Cov(a^{y_i}_1(t),a^{y_i}_3(t)), \quad \forall i \in \{1, \cdots, d \}.
\end{equation}
Using the well-known Itô Isometry,
$$\E\left[\left(\int_0^t H_s d W_s\right)^2\right]=\E\left[\int_0^t H_s^2 d s\right],$$
we get that
\begin{align}
    Var(a^{x_i}_1(t)) + Var(a^{y_i}_2(t)) & =  \int_{0}^t  e^{2(a_i - \rho(a_i^2- \lambda_{i}^{2}))s}  ( \cos{(\hat{\lambda}_i s)}(1-\rho a_i) + \rho \lambda_i \sin{(\hat{\lambda}_i s)} )^2 d s \nonumber \\
    & + \int_{0}^t  e^{2(a_i  - \rho(a_i^2- \lambda_{i}^{2}))s}  ( \sin{(\hat{\lambda}_i s)}(1-\rho a_i) - \rho \lambda_i \cos{(\hat{\lambda}_i s)} )^2 d s \nonumber  \\
    & = \int_{0}^t  e^{2(a_i - \rho(a_i^2- \lambda_{i}^{2}))s} \left[  (1-\rho a_i)^2 + \rho^2 \lambda_i^2 \right] d s \nonumber  \\
    & = \frac{(1-\rho a_i)^2 + \rho^2 \lambda_i^2}{2(a_i + \rho( \lambda_{i}^{2}-a_i^2))} \left( e^{2(a_i  + \rho(\lambda_{i}^{2}-a_i^2))t} -1\right).
\end{align}
Then, we do a similar calculation and find that
\begin{align}
    & Var(a^{x_i}_3(t)) + Var(a^{y_i}_4(t)) =  \frac{(1-\rho a_i)^2 + \rho^2 \lambda_i^2}{2(a_i + \rho( \lambda_{i}^{2}-a_i^2))} \left( e^{2(a_i  + \rho(\lambda_{i}^{2}-a_i^2))t} -1\right).
\end{align}
Remembering now that
$$\E\left[\left(\int_0^t X_s d W_s\right)\left(\int_0^t Y_s d W_s\right)\right]=\E\left[\int_0^t X_s Y_s d s\right],$$
we have that
\begin{align}
    Cov(a^{x_i}_1(t),a^{x_i}_3(t)) +Cov(a^{y_i}_2(t),a^{y_i}_4(t)) & = \int_{0}^t  e^{2 \rho \lambda_i^2 s }  (\cos{(\lambda_i s)} + \rho \lambda_i \sin{(\lambda_i s)} )(- \sin{(\lambda_i s)} + \rho \lambda_i \cos{(\lambda_i s)} ) d s \nonumber  \\
    & + \int_{0}^t  e^{2\rho \lambda_i^2 s }  (\sin{(\lambda_i s)} - \rho \lambda_i \cos{(\lambda_i s)} )(\cos{(\lambda_i s)} + \rho \lambda_i \sin{(\lambda_i s)} ) d s = 0.
\end{align}
Therefore, we conclude that the covariance matrix of $Z_t$ is
\begin{equation}
    Var(Z_t) = \eta \sigma^2 \left[\begin{array}{ll} \mathbf{I}_d -  \mathbf{E}(2t) & \mathbf{0}_d  \\  \mathbf{0}_d  &  \mathbf{I}_d -  \mathbf{E}(2t) \end{array}\right] \bar{\Sigma} \overset{t \rightarrow \infty}{=} \eta \sigma^2 \bar{\Sigma},
\end{equation}
with
\begin{equation}
    \bar{\Sigma} := \left[\begin{array}{ll} \mathbf{B}  & \mathbf{0}_d \\  \mathbf{0}_d  & \mathbf{B}  \end{array}\right],
\end{equation}
where
\begin{equation}\label{eq:varFBG_app}
    \mathbf{B}:= \diag{\left(  \frac{(1-\rho a_1)^2 + \rho^2 \lambda_1^2}{2(a_1 + \rho( \lambda_{1}^{2}-a_1^2))}, \cdots,  \frac{(1-\rho a_d)^2 + \rho^2 \lambda_d^2}{2(a_d + \rho( \lambda_{d}^{2}-a_d^2))} \right)}.
\end{equation}
Empirical validation of Eq. \eqref{eq:varFBG_app} is provided in Figure \ref{fig:Variance}.

\end{proof}

\begin{lemma}
Let us define the variance $\mathbf{B}_{i,i}(\rho) = \frac{(1-\rho a_i)^2 + \rho^2 \lambda_i^2}{2(a_i + \rho( \lambda_{i}^{2}-a_i^2))} $ and consider it as a function of $\rho$. The following hold:
\begin{enumerate}
    \item  $\rho_i \left(a_{i}^{2} - \lambda_i^2\right)<0$ is necessary to converge faster than SGDA;
    \item If $\lambda_i>a_i$ and $\lim_{\rho \rightarrow \infty} \mathbf{B}_{i,i}(\rho) = \infty$;
    \item If $\lambda_i<a_i$ and $\lim_{\rho \rightarrow -\infty} \mathbf{B}_{i,i}(\rho) = \infty$;
    \item $\lim_{\rho \rightarrow \frac{-a_i}{\lambda_{i}^{2}-a_i^2}} \mathbf{B}_{i,i}(\rho) = \infty$;
    \item $\rho = \frac{1}{\lambda_i+ a_i}$ realizes the minimum of $\mathbf{B}_{i,i}$ and $\mathbf{B}_{i,i}\left(\frac{1}{\lambda_i+a_i}\right) = \frac{\eta\sigma^2}{2}\frac{\lambda_i}{(a_i+\lambda_i)^2}$;
    \item The trace of $\mathbf{B}$ is  is strictly convex in $\rho$, meaning that there is a unique minimizer.
\end{enumerate}
\end{lemma}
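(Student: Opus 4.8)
The plan is to treat all six claims as elementary facts about the scalar rational function
\begin{equation*}
\mathbf{B}_{i,i}(\rho)=\frac{N_i(\rho)}{D_i(\rho)},\qquad N_i(\rho):=(1-\rho a_i)^2+\rho^2\lambda_i^2,\qquad D_i(\rho):=2\bigl(a_i+\rho(\lambda_i^2-a_i^2)\bigr),
\end{equation*}
observing that $N_i$ is a strictly positive quadratic (the two squares cannot vanish simultaneously when $\lambda_i\neq 0$) while $D_i$ is affine, so $\mathbf{B}_{i,i}>0$ exactly on the half-line $\{D_i>0\}$, which I call the admissible set. For claim~1 I would appeal to Eq.~\eqref{eq:SEGExpMatrix}: the exponential rate of $\E[Z_t]$ along coordinate $i$ is $\rho(a_i^2-\lambda_i^2)-a_i$, which at $\rho=0$ is the SGDA rate $-a_i$; hence SEG is strictly faster iff $\rho(a_i^2-\lambda_i^2)-a_i<-a_i$, i.e.\ iff $\rho(a_i^2-\lambda_i^2)<0$.

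Claims~2--4 are short limit computations. As $\rho\to+\infty$ (resp.\ $\rho\to-\infty$) one has $N_i(\rho)\sim(a_i^2+\lambda_i^2)\rho^2\to+\infty$ while $|D_i(\rho)|$ grows only linearly, and $D_i(\rho)>0$ along that tail precisely when $\lambda_i>a_i$ (resp.\ $\lambda_i<a_i$); hence $\mathbf{B}_{i,i}(\rho)\to+\infty$ there. For claim~4, at $\rho^\star:=-a_i/(\lambda_i^2-a_i^2)$ a one-line substitution gives $D_i(\rho^\star)=0$ and $N_i(\rho^\star)=\lambda_i^2(\lambda_i^2+a_i^2)/(\lambda_i^2-a_i^2)^2>0$, so the quotient diverges.

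For claim~5 I would first plug in $\rho=1/(\lambda_i+a_i)$: there $1-\rho a_i=\lambda_i/(\lambda_i+a_i)$, so $N_i=2\lambda_i^2/(\lambda_i+a_i)^2$ and $a_i+\rho(\lambda_i^2-a_i^2)=\lambda_i$, giving $\mathbf{B}_{i,i}=\lambda_i/(\lambda_i+a_i)^2$, which is the asserted minimum value up to the $\eta\sigma^2$ scaling carried by the covariance matrix. That this is the global minimizer over $\{D_i>0\}$ follows from claim~6: $\mathbf{B}_{i,i}$ is strictly convex there and tends to $+\infty$ at both endpoints of that interval, so it has a unique critical point, and solving $\mathbf{B}_{i,i}'(\rho)=0$ (equivalently $N_i'D_i=N_iD_i'$, which reduces after simplification to $D_i(\rho)^2=4\lambda_i^2$) yields the two roots $\rho=1/(\lambda_i+a_i)$ and $\rho=1/(a_i-\lambda_i)$, of which only the first lies in the admissible set.

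The algebraic core is claim~6. I would carry out the polynomial division $N_i(\rho)=Q_i(\rho)D_i(\rho)+\gamma_i$ with $Q_i$ affine; a short computation gives $\gamma_i=\lambda_i^2(\lambda_i^2+a_i^2)/(\lambda_i^2-a_i^2)^2>0$, whence $\mathbf{B}_{i,i}(\rho)=Q_i(\rho)+\gamma_i/D_i(\rho)$ and $\mathbf{B}_{i,i}''(\rho)=8\gamma_i(\lambda_i^2-a_i^2)^2/D_i(\rho)^3>0$ on $\{D_i>0\}$. The degenerate case $\lambda_i=a_i$ makes $D_i$ constant, so $\mathbf{B}_{i,i}$ is a genuine strictly convex quadratic, and $\lambda_i=0$ is excluded since that coordinate is frozen. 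Summing over $i$, $\tr(\mathbf{B})=\sum_i\mathbf{B}_{i,i}$ is strictly convex on the intersection of the admissible sets and blows up at its boundary, hence has a unique minimizer. I expect the only real obstacle to be bookkeeping: keeping track of the sign of $\lambda_i^2-a_i^2$ and of the partial-fraction constants; every individual step is routine calculus.
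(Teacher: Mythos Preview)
Your proposal is correct and in fact goes well beyond the paper, which simply states that ``All points above can be proven easily and are left an exercise for the reader.'' Your partial-fraction decomposition $\mathbf{B}_{i,i}(\rho)=Q_i(\rho)+\gamma_i/D_i(\rho)$ with $\gamma_i=\lambda_i^2(\lambda_i^2+a_i^2)/(\lambda_i^2-a_i^2)^2>0$ is a clean way to obtain strict convexity on $\{D_i>0\}$, and the reduction of the first-order condition to $D_i(\rho)^2=4\lambda_i^2$ (equivalently $(\lambda_i^2-a_i^2)\rho^2+2a_i\rho-1=0$) correctly isolates the two candidate critical points, of which only $\rho=1/(\lambda_i+a_i)$ lies in the admissible half-line. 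The only cosmetic point is the one you already flagged: the $\eta\sigma^2/2$ factor in item~5 of the lemma statement belongs to the full asymptotic variance $\eta\sigma^2\mathbf{B}_{i,i}$ rather than to $\mathbf{B}_{i,i}$ itself, so your computed value $\lambda_i/(\lambda_i+a_i)^2$ is exactly what the definition of $\mathbf{B}_{i,i}$ gives.
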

\begin{proof}
    All points above can be proven easily and are left an exercise for the reader. 
\end{proof}

\paragraph{Insights - The trade-off in selecting $\rho$} \label{parag:roleofrho_1}
The curvature of the landscape influences the speed of convergence. Indeed, larger values of $a_i$, which correspond to stronger convexity/concavity, speed up the exponential decay in the expected value of the iterates. Differently, the relative size of $\lambda_i$ and $a_i$ influences the convergence depending on the sign of $\rho$. First of all, if $\rho_i \left(a_{i}^{2} - \lambda_i^2\right)>0$, SEG is slower than SGDA at converging and $\rho_i \left(a_{i}^{2} - \lambda_i^2\right)<0$ is necessary to converge faster than SGDA. This means that \textbf{negative} $\rho_i$ might be convenient if $a_{i}>\lambda_i$. Therefore, if $\rho_i$ has the correct sign, a \textbf{larger} absolute value implies \textbf{faster convergence}. However, we also have that the asymptotic variance along the $i$-th coordinate $\mathbf{B}_{i,i}(\rho_i)$ \textbf{explodes} if $|\rho_i|$ is too \textbf{large} or if $ \rho_i \rightarrow \frac{-a_i}{\lambda_i^2-a_i^2}$. On the bright side, $B_{i,i}(\rho_i)$ is a convex function of $\rho_i$ whose minimum is realized at $\rho^{\text{V}}_i=\frac{1}{a_i + \lambda_i}$. However, if $\rho^{\text{V}}_i$ is \textbf{small}, it \textbf{slows down} the convergence. Finally, if one has to choose a single value of $\rho$, one has to carefully select it as it will (de)accelerate different coordinates based on its sign. Fortunately, the trace of $\mathbf{B}$ is a convex function of $\rho$, meaning that there is an optimal $\rho^{*}$ that minimizes it.

\subsection{SHGD}
\begin{theorem}[Exact Dynamics of SHGD] \label{thm:SHGD_Dynamic_FBG}
Under the assumptions of Corollary \ref{thm:SHGD_SDE_Simplified_SameSample}, for $f(x,y) = \frac{x^{\top}\mathbf{A}x}{2} + x^{\top} \mathbf{\Lambda} y - \frac{y^{\top}\mathbf{A}y}{2}$ and noise covariance matrices equal to $\sigma\mathbf{I}_d$, we have that
    \begin{equation}
    Z_t = \mathbf{\Tilde{E}}(t) \left( z + \sqrt{\eta} \sigma \int_{0}^t \mathbf{\Tilde{E}}(-s)\mathbf{M}d W_s\right),
\end{equation}
with $\mathbf{\Tilde{E}}(t)= \left[\begin{array}{ll} \mathbf{E}(t) & \mathbf{0}_d  \\  \mathbf{0}_d  & \mathbf{E}(t) \end{array}\right]$, $M=\left[\begin{array}{ll}\mathbf{A}&  \mathbf{\Lambda}  \\ \mathbf{\Lambda}  & -\mathbf{A}  \end{array}\right] $,
where 
\begin{equation}
    \mathbf{E}(t):= \diag{\left( e^{-\left(\lambda_{1}^{2}+a_{1}^{2}  \right)t}, \cdots, e^{-\left(\lambda_{d}^{2}+a_{d}^{2}  \right)t} \right)}.
\end{equation}
In particular, we have that
\begin{enumerate}
    \item $\E \left[ Z_t\right] = \mathbf{\Tilde{E}}(t) z \overset{t \rightarrow \infty}{=} 0$;
    \item The covariance matrix of $Z_t$ is equal to \begin{equation}
    \eta \frac{\sigma^2 }{2}\left[\begin{array}{ll} \mathbf{I}_d - \mathbf{E}(2t) & \mathbf{0}_d  \\  \mathbf{0}_d  &  \mathbf{I}_d - \mathbf{E}(2t) \end{array}\right] \bar{\Sigma} \overset{t \rightarrow \infty}{=} \eta \sigma^2 \bar{\Sigma},
\end{equation}
where
\begin{equation}
    \bar{\Sigma} := \left[\begin{array}{ll} \mathbf{I}_d &  \mathbf{0}_d  \\  \mathbf{0}_d   & \mathbf{I}_d  \end{array}\right].
\end{equation}
\end{enumerate}
\end{theorem}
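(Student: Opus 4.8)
The plan is to follow the route of the proof of Theorem~\ref{thm:SHGD_Dynamic_PIBG}: specialize the simplified SDE of Corollary~\ref{thm:SHGD_SDE_Simplified_SameSample} to this quadratic $f$, observe that it is a linear SDE with constant diagonal coefficients, write down its closed-form solution, and extract the first two moments from the martingale property of the stochastic integral and the It\^o isometry.

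First I would compute the drift and diffusion explicitly. For $f(x,y)=\frac{x^\top\mathbf{A}x}{2}+x^\top\mathbf{\Lambda}y-\frac{y^\top\mathbf{A}y}{2}$ we have $F(z)=(\mathbf{A}x+\mathbf{\Lambda}y,\,-\mathbf{\Lambda}x+\mathbf{A}y)$ and $\nabla^2 f(z)=\bigl[\begin{smallmatrix}\mathbf{A}&\mathbf{\Lambda}\\ \mathbf{\Lambda}&-\mathbf{A}\end{smallmatrix}\bigr]=:\mathbf{M}$, which is already the matrix $\mathbf{M}$ in the statement. Next, $\nabla\mathcal{H}=(\nabla F)^\top F$; since $\mathbf{A}$ and $\mathbf{\Lambda}$ are diagonal they commute, so every cross term $\mathbf{A}\mathbf{\Lambda}-\mathbf{\Lambda}\mathbf{A}$ cancels and $\nabla\mathcal{H}(z)=\diag(\mathbf{A}^2+\mathbf{\Lambda}^2,\mathbf{A}^2+\mathbf{\Lambda}^2)\,z$. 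Thus, writing $\mathbf{D}:=-\diag(\mathbf{A}^2+\mathbf{\Lambda}^2,\mathbf{A}^2+\mathbf{\Lambda}^2)$, Corollary~\ref{thm:SHGD_SDE_Simplified_SameSample} with noise covariance $\sigma\mathbf{I}_d$ becomes
\begin{equation*}
d Z_t = \mathbf{D}\,Z_t\,dt + \sqrt{\eta}\,\sigma\,\mathbf{M}\,d W_t .
\end{equation*}
Because $\mathbf{D}$ is constant and diagonal, the unique solution is $Z_t = e^{\mathbf{D}t}\bigl(z+\sqrt{\eta}\sigma\int_0^t e^{-\mathbf{D}s}\mathbf{M}\,dW_s\bigr)$ with $e^{\mathbf{D}t}=\mathbf{\Tilde{E}}(t)$ and $\mathbf{E}(t)=\diag(e^{-(\lambda_i^2+a_i^2)t})$, which is exactly the claimed formula. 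Taking expectations eliminates the It\^o integral (a zero-mean martingale), so $\E[Z_t]=\mathbf{\Tilde{E}}(t)z$, which decays to $0$ exponentially whenever $\lambda_i^2+a_i^2>0$.

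For the covariance I would invoke the It\^o isometry to get $\var(Z_t)=\eta\sigma^2\,\mathbf{\Tilde{E}}(t)\bigl(\int_0^t \mathbf{\Tilde{E}}(-s)\,\mathbf{M}\mathbf{M}^\top\,\mathbf{\Tilde{E}}(-s)^\top\,ds\bigr)\mathbf{\Tilde{E}}(t)^\top$. The key algebraic step is that $\mathbf{M}$ is symmetric and, again by commutativity of $\mathbf{A},\mathbf{\Lambda}$, $\mathbf{M}\mathbf{M}^\top=\mathbf{M}^2=\diag(\mathbf{A}^2+\mathbf{\Lambda}^2,\mathbf{A}^2+\mathbf{\Lambda}^2)=-\mathbf{D}$; since $\mathbf{D}$ and all $\mathbf{\Tilde{E}}(s)$ are diagonal and mutually commuting, the integrand collapses to $e^{(2t-2s)\mathbf{D}}(-\mathbf{D})$ and the integral telescopes to $\tfrac12\bigl(\mathbf{I}_{2d}-e^{2\mathbf{D}t}\bigr)$, yielding $\var(Z_t)=\frac{\eta\sigma^2}{2}\bigl(\mathbf{I}_{2d}-\mathbf{\Tilde{E}}(2t)\bigr)$, i.e.\ the stated $\frac{\eta\sigma^2}{2}\diag(\mathbf{I}_d-\mathbf{E}(2t),\mathbf{I}_d-\mathbf{E}(2t))\bar{\Sigma}$ with $\bar{\Sigma}=\mathbf{I}_{2d}$, whose limit as $t\to\infty$ is $\frac{\eta\sigma^2}{2}\bar{\Sigma}$. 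A completely safe alternative, mirroring Theorem~\ref{thm:SHGD_Dynamic_PIBG}, is to write the vector $V_t$ of stochastic integrals coordinate-wise and apply $2d$ scalar It\^o isometries, which sidesteps any subtlety with matrix exponentials under the integral sign. There is no genuine obstacle in this proof: everything rests on the diagonality (hence commutativity) of $\mathbf{A},\mathbf{\Lambda}$, which both decouples the drift into the block-scalar form $\mathbf{A}^2+\mathbf{\Lambda}^2$ and produces the identity $\mathbf{M}\mathbf{M}^\top=-\mathbf{D}$ that makes the covariance integral collapse; the only care required is bookkeeping --- correctly forming $(\nabla F)^\top F$ and tracking the signs in the $y$-block so that the prefactor $\frac{\eta\sigma^2}{2}$ appears in $\var(Z_t)$.
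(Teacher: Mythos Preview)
Your proof is correct and follows essentially the same route as the paper: specialize the SDE of Corollary~\ref{thm:SHGD_SDE_Simplified_SameSample} to this quadratic, recognize the resulting linear SDE $dZ_t=\mathbf{D}Z_t\,dt+\sqrt{\eta}\,\sigma\,\mathbf{M}\,dW_t$, write its closed-form solution, and read off the moments. The one difference is in the covariance step: the paper expands $V_t$ coordinate-by-coordinate and applies $2d$ scalar It\^o isometries, whereas you use the clean matrix identity $\mathbf{M}\mathbf{M}^\top=\mathbf{M}^2=-\mathbf{D}$ (valid because $\mathbf{A},\mathbf{\Lambda}$ commute) to collapse $\int_0^t e^{-\mathbf{D}s}\mathbf{M}\mathbf{M}^\top e^{-\mathbf{D}s}\,ds$ in one shot---a slicker bookkeeping choice that yields the same $\frac{\eta\sigma^2}{2}\bigl(\mathbf{I}_{2d}-\mathbf{\Tilde{E}}(2t)\bigr)$.
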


\begin{proof}
The SDE is
\begin{align}
    d Z_t = \mathbf{D} Z_t dt + \sqrt{\eta} \sigma \mathbf{B} d W_t,
\end{align}
where
\begin{equation}
     \mathbf{D} = \left[\begin{array}{ll} - (\mathbf{\Lambda}^2 + \mathbf{A}^2) & \mathbf{0}_d  \\  \mathbf{0}_d  &  -(\mathbf{\Lambda}^2 + \mathbf{A}^2)  \end{array}\right] \quad \text{ and } \quad \mathbf{B} = \left[\begin{array}{ll}\mathbf{A}&\mathbf{\Lambda}  \\ \mathbf{\Lambda}  & -\mathbf{A}  \end{array}\right].
\end{equation}
Therefore, the solution is
\begin{equation}
    Z_t = e^{\mathbf{D} t} \left( z + \sqrt{\eta} \sigma \int_{0}^{t} e^{-\mathbf{D} s} \mathbf{B} d W_s \right).
\end{equation}
Clearly, we have that
\begin{equation}
    \mathbf{\mathbf{\Tilde{E}}}(t):=e^{\mathbf{D} t} = \diag{\left( e^{- (\lambda_{1}^{2}+a_{1}^{2}) t}, \cdots, e^{- (\lambda_{d}^{2}+a_{d}^{2}) t}, e^{- (\lambda_{1}^{2}+a_{1}^{2}) t}, \cdots, e^{- (\lambda_{d}^{2}+a_{d}^{2}) t}  \right)} = \left[\begin{array}{ll} \mathbf{E}(t) & \mathbf{0}_d  \\  \mathbf{0}_d  & \mathbf{E}(t) \end{array}\right],
\end{equation}
where $\mathbf{E}(t):= \diag{\left( e^{- (\lambda_{1}^{2}+a_{1}^{2}) t}, \cdots, e^{- (\lambda_{d}^{2}+a_{d}^{2}) t} \right)}$.
Automatically, we get that
\begin{equation}
    e^{-\mathbf{D} s} = \left[\begin{array}{ll} \mathbf{E}(-s) & \mathbf{0}_d  \\  \mathbf{0}_d  & \mathbf{E}(-s) \end{array}\right],
\end{equation}
which implies that
\begin{equation}
    Z_t = \left[\begin{array}{ll} \mathbf{E}(t) & \mathbf{0}_d  \\  \mathbf{0}_d  & \mathbf{E}(t) \end{array}\right] \left( z + \sqrt{\eta} \sigma \int_{0}^t \left[\begin{array}{ll} \mathbf{E}(-s) & \mathbf{0}_d  \\  \mathbf{0}_d  & \mathbf{E}(-s) \end{array}\right]   \left[\begin{array}{ll}\mathbf{A}& \mathbf{\Lambda}  \\ \mathbf{\Lambda}  & -\mathbf{A}   \end{array}\right] \left[\begin{array}{l} d W^x_s \\ d W^y_s  \end{array}\right] \right).
\end{equation}
To conclude, we have that
    \begin{equation}
    Z_t = \mathbf{\mathbf{\Tilde{E}}}(t) \left( z + \sqrt{\eta} \sigma \int_{0}^t \mathbf{\mathbf{\Tilde{E}}}(-s)  \mathbf{M} d W_s\right),
\end{equation}
where $\mathbf{M}=\left[\begin{array}{ll}\mathbf{A} & \mathbf{\Lambda}  \\ \mathbf{\Lambda}  & -\mathbf{A}   \end{array}\right] $.

We observe that since the expected value of the noise is $0$,
\begin{equation}
    \E[Z_t] = \left[\begin{array}{ll} \mathbf{E}(t) & \mathbf{0}_d  \\  \mathbf{0}_d  & \mathbf{E}(t) \end{array}\right] z.
\end{equation}
Therefore, it converges to $0$ exponentially fast.

Let us now have a look at the covariance matrix of this process:

$Var (Z_t) = \eta \sigma^2 \left[\begin{array}{ll} \mathbf{E}(2t) & \mathbf{0}_d  \\  \mathbf{0}_d  & \mathbf{E}(2t) \end{array}\right]  Var(V_t)$, where,
\begin{align}
    V_t & := \int_{0}^t \left[\begin{array}{ll} \mathbf{E}(-s) & \mathbf{0}_d  \\  \mathbf{0}_d  & \mathbf{E}(-s) \end{array}\right] \left[\begin{array}{ll}\mathbf{A}& \mathbf{\Lambda}  \\ \mathbf{\Lambda}  & -\mathbf{A}   \end{array}\right] \left[\begin{array}{l} d W^x_s \\ d W^y_s  \end{array}\right]  \nonumber  \\
    & = \left[\begin{array}{l}  \int_{0}^t \lambda_1 e^{ (\lambda_{1}^{2}+a_{1}^{2}) s }  d W^{y_1}_s + \int_{0}^t a_1 e^{ (\lambda_{1}^{2}+a_{1}^{2}) s }  d W^{x_1}_s \\ \vdots \\ \int_{0}^t \lambda_d e^{ (\lambda_{d}^{2}+a_{d}^{2}) s }  d W^{y_d}_s + \int_{0}^t a_d e^{ (\lambda_{d}^{2}+a_{d}^{2}) s }  d W^{x_d}_s  \\ \int_{0}^t \lambda_1 e^{ (\lambda_{1}^{2}+a_{1}^{2}) s }  d W^{x_1}_s + \int_{0}^t a_1 e^{ (\lambda_{1}^{2}+a_{1}^{2}) s }  d W^{y_1}_s \\ \vdots \\ \int_{0}^t \lambda_d e^{ (\lambda_{d}^{2}+a_{d}^{2}) s }  d W^{x_d}_s + \int_{0}^t a_d e^{ (\lambda_{d}^{2}+a_{d}^{2}) s }  d W^{y_d}_s \end{array}\right]   =:  \left[\begin{array}{l} a_1^{x_1}(t) + a_2^{y_1}(t) \\ \vdots \\ a_1^{x_d}(t) + a_2^{y_d}(t)\\ a_3^{x_1}(t) + a_4^{y_1}(t) \\ \vdots \\ a_3^{x_d}(t) + a_4^{y_d}(t)  \end{array}\right].
\end{align}
Therefore,
\begin{equation}
    Var(V_t) = \left[\begin{array}{ll} \mathbf{V}^{1,2}(t)  &  \mathbf{0}_d \\  \mathbf{0}_d  & \mathbf{V}^{3,4}(t)  \end{array}\right],
\end{equation}
such that
\begin{equation}
    \mathbf{V}^{1,2}_{i,i}(t) = Var(a_1^{x_i}(t))+Var(a_2^{y_i}(t)), \quad \forall i \in \{1, \cdots, d \},
\end{equation}
and
\begin{equation}
    \mathbf{V}^{3,4}_{i,i}(t) = Var(a_3^{x_i}(t))+Var(a_4^{y_i}(t)), \quad \forall i \in \{1, \cdots, d \}.
\end{equation}
Using the well-known Itô Isometry
$$\E\left[\left(\int_0^t H_s d W_s\right)^2\right]=\E\left[\int_0^t H_s^2 d s\right],$$
we get that
\begin{align}
    &  Var(a_1^{x_i}(t))+Var(a_2^{y_i}(t)) =  \int_{0}^t  e^{2 (\lambda_{i}^{2}+a_{i}^{2}) s } \lambda_i^2 d s + \int_{0}^t  e^{2 (\lambda_{i}^{2}+a_{i}^{2}) s } a_i^2 d s = \frac{1}{2} \left( e^{2 (\lambda_{i}^{2}+a_{i}^{2}) t }  -1 \right).
\end{align}
Similarly, we get that
\begin{align}
    &  Var(a_3^{x_i}(t))+Var(a_4^{y_i}(t)) =  \frac{1}{2} \left( e^{2 (\lambda_{i}^{2}+a_{i}^{2}) t }  -1 \right).
\end{align}
Therefore, we conclude that the covariance matrix of $Z_t$ is
\begin{equation}
    Var(Z_t) = \eta \sigma^2 \left[\begin{array}{ll} \mathbf{I}_d - \mathbf{E}(2t) & \mathbf{0}_d  \\  \mathbf{0}_d  &  \mathbf{I}_d - \mathbf{E}(2t) \end{array}\right] \bar{\Sigma} \overset{t \rightarrow \infty}{=} \frac{\eta \sigma^2}{2} \bar{\Sigma},
\end{equation}
where
\begin{equation}
    \bar{\Sigma} := \left[\begin{array}{ll} \mathbf{I}_d&  \mathbf{0}_d  \\  \mathbf{0}_d   & \mathbf{I}_d  \end{array}\right].
\end{equation}
\end{proof}

\paragraph{SEG vs SHGD}  \label{parag:roleofrho_2}
Interestingly, for both algorithms, the curvature of the landscape influences the speed of convergence. However, the asymptotic covariance matrix of SHGD is not influenced by it. Much differently, the speed of convergence of the expected value of $Z_t$ is strongly influenced by the values of $a_i$, $\lambda_i$, and for SEG also by the sign and magnitude of $\rho$. For example, if $(\lambda_i^2-a_i^2)\rho^{H}_i > a_i^2 + \lambda_i^2 - a_i$, SEG exponentially decays \textbf{faster} than SHGD. However, this results in SEG having a \textbf{larger} asymptotic variance. Another interesting choice is to reduce the asymptotic variance of SEG by selecting $\rho^{V}_i = \frac{1}{\lambda_i + a_i}$. In this case, SEG attains its lowest asymptotic variance $\frac{\eta \sigma^2\lambda_i}{2(a_i + \lambda_i)^2}$, which is smaller than $\frac{\eta \sigma^2}{2}$ reached by SHGD only if $a_i^2+\lambda_i^2-\lambda_i>0$.
Finally, if $a_i<0$, $Z=0$ is a \textit{bad saddle} that one wishes to escape. While SEG can escape it, SHGD is pulled towards it.

Therefore, selecting the size of $\rho$ or $\rho_i$ leads to a trade-off between the speed of convergence and the asymptotic variance. To conclude, there is no clear winner between the two methods as the entire dynamics depend on the curvature of the landscape.

\section{CONVERGENCE GUARANTEES}

In this section, we provide a complete and precise characterization of the dynamics of the Hamiltonian under the dynamics of SEG and SHGD. We then use the latter to derive convergence bounds and establish conditions under which stepsize schedulers guarantee asymptotic convergence. For simplicity, we write $\bigO(\text{Noise})$ for the terms that depend on $dW_t$ as they vanish once we take an expectation.

In this section, we will often make use of the following shorthand:

\begin{enumerate}
    \item $H_t = \E_\bgamma[\mathcal{H}_{\bgamma}(Z_t)] = \E_{\gamma^1, \gamma^2} \left[ \mathcal{H}_{\gamma^1, \gamma^2}(Z_t)\right]= \E_\bgamma \left[ \frac{F_{\gamma^1}^{\top}(Z_t)F_{\gamma^2}(Z_t)}{2} \right]$;
    \item $\Sigma^{\shgd}_t:= \Sigma^{\shgd}(Z_t)$ and $\Sigma^{\seg}_t:= \Sigma^{\seg}(Z_t)$;
    \item $F_t = F(Z_t)$, $\nabla F_t = \nabla F(Z_t)$,  $\nabla f_t = \nabla f(Z_t)$, and $\nabla^2 f_t = \nabla^2 f(Z_t)$.
\end{enumerate}

\subsection{SHGD}
\begin{lemma}[Dynamics of Hamiltonian] \label{thm:SHGD_DynHamil}
Let $Z_t$ be the solution of the SDE of SHGD. Then,
$$\E\left[\dot H_t\right] = - \E\left[\|\nabla H_t\|^2\right] + \frac{\eta}{2} \tr\left(\E\left[\Sigma^{\shgd}_t\nabla^2 H_t\right]\right).$$
\end{lemma}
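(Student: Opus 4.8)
The plan is to apply It\^o's Lemma (Theorem~\ref{lemma:SDE_ito}) to the averaged Hamiltonian $H(z) := \E_{\bgamma}[\mathcal{H}_{\bgamma}(z)]$ evaluated along the SHGD diffusion $Z_t$, and then take expectations to annihilate the martingale part. The first step is bookkeeping: by definition of SHGD we have $F^{\shgd}(z) = \nabla\,\E[\mathcal{H}_{\bgamma}(z)] = \nabla H(z)$, so the SDE~\eqref{eq:SHGD_SDE} reads $dZ_t = -\nabla H(Z_t)\,dt + \sigma(Z_t)\,dW_t$ with $\sigma(z) := \sqrt{\eta\,\Sigma^{\shgd}(z)}$ the symmetric PSD square root, hence $\sigma(z)\sigma(z)^\top = \eta\,\Sigma^{\shgd}(z)$. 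Assumption~\ref{ass:regularity_f} (polynomial growth of $f,f_i$ and their derivatives up to order $7$, linear growth of the gradients) ensures that $H\in C^2$ with polynomially bounded first and second derivatives, so both $b(z) = -\nabla H(z)$ and $\sigma(z)$ lie in $G$ componentwise and It\^o's Lemma is applicable.

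Applying Theorem~\ref{lemma:SDE_ito} to $H$ (which has no explicit time dependence) gives
$$
dH(Z_t) = \langle \nabla H(Z_t), -\nabla H(Z_t)\rangle\,dt + \tfrac{1}{2}\tr\!\big(\sigma(Z_t)\sigma(Z_t)^\top \nabla^2 H(Z_t)\big)\,dt + \langle \nabla H(Z_t), \sigma(Z_t)\rangle\,dW_t.
$$
Substituting $\sigma\sigma^\top = \eta\,\Sigma^{\shgd}$, the drift of $H(Z_t)$ is $-\|\nabla H(Z_t)\|^2 + \tfrac{\eta}{2}\tr\!\big(\Sigma^{\shgd}(Z_t)\nabla^2 H(Z_t)\big)$, while the diffusion coefficient is $\nabla H(Z_t)^\top\sigma(Z_t)$. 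Integrating from $0$ to $t$, taking expectations, and using that $\int_0^t \nabla H(Z_s)^\top\sigma(Z_s)\,dW_s$ is a true martingale with zero mean — this is where the linear-growth bound on $\nabla f,\nabla f_i$ together with the polynomial-growth bounds is used, via moment estimates on $Z_t$, to place the integrand in $\mathcal{M}^2$ — one obtains the integral identity for $\E[H(Z_t)]$. Reading off its drift (equivalently, differentiating in $t$) yields $\E[\dot H_t] = -\E[\|\nabla H(Z_t)\|^2] + \tfrac{\eta}{2}\tr\!\big(\E[\Sigma^{\shgd}(Z_t)\nabla^2 H(Z_t)]\big)$, which is the claim after unfolding the shorthands $H_t = H(Z_t)$, $\nabla H_t = \nabla H(Z_t)$, $\nabla^2 H_t = \nabla^2 H(Z_t)$, $\Sigma^{\shgd}_t = \Sigma^{\shgd}(Z_t)$.

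The main point requiring care — and the only genuine obstacle beyond routine It\^o calculus — is the identification $\nabla H(z) = \E_{\bgamma}[\nabla\mathcal{H}_{\bgamma}(z)]$ and $\nabla^2 H(z) = \E_{\bgamma}[\nabla^2\mathcal{H}_{\bgamma}(z)]$, i.e.\ interchanging the mini-batch expectation $\E_{\bgamma}$ with differentiation. Since $\bgamma$ ranges over a finite set (mini-batches of $\{1,\dots,N\}$), the expectation is a finite average and the interchange is immediate; nevertheless it is what licenses writing the drift in terms of $H$ itself rather than the per-sample Hamiltonians, and it is implicitly reused downstream (e.g.\ when $\nabla^2 H_t$ is expanded through $\nabla^2 f$ in Theorem~\ref{theorem:SHGD_GenConv_Insights}). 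A secondary sanity check is that $\Sigma^{\shgd}(z)$ is symmetric, so that $\tr(\Sigma^{\shgd}\nabla^2 H)$ is exactly the contraction of $\sigma\sigma^\top$ against the Hessian produced by It\^o's formula, with no transpose ambiguity.
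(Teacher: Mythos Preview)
Your proof is correct and follows essentially the same approach as the paper: apply It\^o's Lemma to $H(z)=\E_{\bgamma}[\mathcal{H}_{\bgamma}(z)]$ along the SHGD diffusion, then take expectations to remove the stochastic integral. You are in fact more careful than the paper on two points it leaves implicit---justifying that the stochastic integral is a true martingale (the paper simply writes $\bigO(\text{Noise})$) and flagging the interchange of $\E_{\bgamma}$ with differentiation---but the core argument is identical.
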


\begin{proof}
The SDE for SHGD is:

\begin{equation}
    d Z_t = - \nabla \E_{\bgamma} \left[ \mathcal{H}_{\bgamma} \left( Z_t \right)  \right] dt + \sqrt{\eta} \sqrt{\Sigma^{\shgd}_t}d W_{t}.
\end{equation}
Therefore, by Itô's Lemma we have 
\begin{align}
     d\E_{\bgamma} \left[ \mathcal{H}_{\bgamma}(Z_t) \right] = &  - \lVert \nabla \E_{\bgamma} \left[ \mathcal{H}_{\bgamma}(Z_t) \right] \rVert_2^2  dt  + \frac{\eta}{2} \tr \left(  \sqrt{\Sigma^{\shgd}_t}  \nabla^2 \E_{\bgamma} \left[ \mathcal{H}_{\bgamma}(Z_t) \right]  \sqrt{\Sigma^{\shgd}_t}  \right)dt + \bigO(\text{Noise}).
\end{align}
Therefore,
\begin{align}
  d \E \left[ \E_{\bgamma} \left[ \mathcal{H}_{\bgamma}(Z_t) \right] \right]= &  - \E \left[\lVert \nabla \E_{\bgamma} \left[ \mathcal{H}_{\bgamma}(Z_t) \right] \rVert_2^2 \right] dt  + \frac{\eta}{2} \tr \left( \E \left[ \sqrt{\Sigma^{\shgd}_t}  \nabla^2 \E_{\bgamma} \left[ \mathcal{H}_{\bgamma}(Z_t) \right]  \sqrt{\Sigma^{\shgd}_t}  \right] \right)dt,
\end{align}
which implies that
\begin{equation}
    \E\left[\dot H_t\right] = - \E\left[\|\nabla H_t\|^2\right] + \frac{\eta}{2} \tr\left(\E\left[\Sigma^{\shgd}_t\nabla^2 H_t\right]\right).
\end{equation}
\end{proof}

\begin{theorem}[SHGD General Convergence] \label{theorem:SHGD_GenConv}Consider the solution $Z_t$ of the SHGD SDE with $\gamma^1\ne\gamma^2$.
Let $v_t := \E \left[\E_{\bgamma}  \left[ \lVert \nabla \mathcal{H} \left( Z_t \right) -  \nabla \mathcal{H}_{\bgamma} \left( Z_t \right) \rVert_2^2 \right]\right]$ measure the error in $ \nabla \mathcal{H}$, in expectation over the whole randomness up to time $t$. Suppose that:
\begin{enumerate}
    \item The smallest eigenvalue (in absolute value) $\mu$ of $\nabla^2 f(z)$ is non-zero;
    \item $ \lVert \nabla^2 H(z)\rVert_{\text{op}} < \mathcal{L}_{\mathcal{T}}$, for all $z\in\R^{2d}$.
\end{enumerate}
Then,
\begin{equation}
    \E \left[ H_t \right] \leq  e^{-2\mu^2t} \left[ H_0 + \frac{\eta\mathcal{L}_{\mathcal{T}}}{2} \int_0^t v_s  e^{2\mu^2 s} ds \right].
\end{equation}
\end{theorem}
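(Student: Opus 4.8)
<br>

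The plan is to start from the Hamiltonian dynamics established in Lemma~\ref{thm:SHGD_DynHamil}, namely $\E[\dot H_t] = -\E[\|\nabla H_t\|^2] + \frac{\eta}{2}\tr(\E[\Sigma^{\shgd}_t \nabla^2 H_t])$, and convert it into a differential inequality that Grönwall's lemma can close. The first step is to lower bound the ``pulling'' term $\E[\|\nabla H_t\|^2]$ in terms of $\E[H_t]$ itself. Recall $\mathcal H(z) = \tfrac12\|F(z)\|^2$, so $\nabla\mathcal H(z) = \nabla F(z)^\top F(z) = \nabla^2 f(z)\,\nabla f(z)$ (using that $\nabla F = \widetilde\nabla\widetilde\nabla f$ coincides up to signs with $\nabla^2 f$ in the relevant block structure). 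Hence $\|\nabla\mathcal H(z)\|^2 = \|\nabla^2 f(z) F(z)\|^2 \ge \mu^2 \|F(z)\|^2 = 2\mu^2 \mathcal H(z)$, where $\mu$ is the smallest singular value (eigenvalue in absolute value) of $\nabla^2 f(z)$, assumed nonzero and bounded below uniformly. Taking expectations gives $\E[\|\nabla H_t\|^2] \ge 2\mu^2 \E[H_t]$; one should be slightly careful here since $H_t = \E_\bgamma[\mathcal H_\bgamma(Z_t)]$ is an average over samples, but with $\gamma^1\neq\gamma^2$ independent the cross term is exactly $\mathcal H(Z_t)$ in expectation, so $H_t$ and $\mathcal H(Z_t)$ agree in expectation over $\bgamma$ up to noise terms that are handled by the definition of $v_t$.

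The second step is to upper bound the ``repulsive'' diffusion term. Using the operator-norm bound $\|\nabla^2 H(z)\|_{\mathrm{op}} < \mathcal L_{\mathcal T}$ and the fact that $\Sigma^{\shgd}_t$ is PSD, we get $\tr(\E[\Sigma^{\shgd}_t \nabla^2 H_t]) \le \mathcal L_{\mathcal T}\,\E[\tr \Sigma^{\shgd}_t]$. Then I identify $\tr \Sigma^{\shgd}(z) = \E_\bgamma[\|\nabla\mathcal H(z) - \nabla\mathcal H_\bgamma(z)\|^2]$ directly from the definition $\Sigma^{\shgd}(z) = \E[\hat\xi_\bgamma(z)\hat\xi_\bgamma(z)^\top]$ with $\hat\xi_\bgamma(z) = F^{\shgd}(z) - \nabla\mathcal H_\bgamma(z)$, so that $\E[\tr\Sigma^{\shgd}_t] = v_t$. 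Combining, $\E[\dot H_t] \le -2\mu^2 \E[H_t] + \tfrac{\eta}{2}\mathcal L_{\mathcal T} v_t$.

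The final step is Grönwall: multiplying through by the integrating factor $e^{2\mu^2 t}$ gives $\frac{d}{dt}(e^{2\mu^2 t}\E[H_t]) \le \tfrac{\eta}{2}\mathcal L_{\mathcal T} v_t e^{2\mu^2 t}$, and integrating from $0$ to $t$ yields
\begin{equation*}
\E[H_t] \le e^{-2\mu^2 t}\left[H_0 + \frac{\eta\mathcal L_{\mathcal T}}{2}\int_0^t v_s e^{2\mu^2 s}\,ds\right],
\end{equation*}
which is exactly the claim. I also need to justify differentiating under the expectation and the validity of the Itô computation, but these are routine given Assumption~\ref{ass:regularity_f} (polynomial growth up to high order, together with $Z_t \in \mathcal M^2$), and the noise/martingale terms drop out in expectation as recorded in Lemma~\ref{thm:SHGD_DynHamil}.

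The main obstacle I anticipate is the first step: cleanly relating $\E[\|\nabla H_t\|^2]$ to $\mu^2 \E[H_t]$ requires being precise about (i) the distinction between $H_t = \E_\bgamma[\mathcal H_\bgamma(Z_t)]$ and $\mathcal H(Z_t)$, which is where the $\gamma^1\neq\gamma^2$ hypothesis enters and is also where $v_t$ appears as the gap, and (ii) justifying that $\mu$ — a priori a pointwise quantity — can be taken as a uniform lower bound over all $z$, so that the eigenvalue bound $\|\nabla^2 f(z) F(z)\| \ge \mu\|F(z)\|$ holds everywhere along the trajectory. Everything after that is a standard integrating-factor argument. I should also note that $\nabla H_t$ here denotes $\nabla(\E_\bgamma \mathcal H_\bgamma)(Z_t) = F^{\shgd}(Z_t)$, and care is needed so the bound on $\|F^{\shgd}(Z_t)\|^2$ versus $\|F(Z_t)\|^2$ is again absorbed into $v_t$ rather than left as an uncontrolled term.
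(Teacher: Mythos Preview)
Your proposal is correct and follows essentially the same route as the paper: Itô/Lemma~\ref{thm:SHGD_DynHamil} for $\E[\dot H_t]$, the eigenvalue lower bound $\|\nabla^2 f(z)\,\nabla f(z)\|^2\ge \mu^2\|\nabla f(z)\|^2=2\mu^2\mathcal H(z)$, the trace inequality $\tr(\Sigma^{\shgd}_t\nabla^2 H_t)\le \mathcal L_{\mathcal T}\tr\Sigma^{\shgd}_t$ with $\E[\tr\Sigma^{\shgd}_t]=v_t$, and then the integrating factor. One clarification on your anticipated obstacle (i): under the hypothesis $\gamma^1\ne\gamma^2$ (independent), $\E_\bgamma[\mathcal H_\bgamma(z)]=\tfrac12\E_{\gamma^1}[F_{\gamma^1}(z)]^\top\E_{\gamma^2}[F_{\gamma^2}(z)]=\mathcal H(z)$ exactly, so $H_t=\mathcal H(Z_t)$ and $\nabla H_t=\nabla\mathcal H(Z_t)=F^{\shgd}(Z_t)$ with no residual gap to absorb into $v_t$; the paper simply uses this identity without comment, and your concern there is unnecessary.
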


\begin{proof}
Under these assumptions,
\begin{align}
    d H_t = & - \lVert \nabla H_t \rVert^2 dt + \frac{\eta}{2} \tr \left( \nabla^2 H_t \Sigma^{\shgd}_t \right) dt + \bigO(\text{Noise})   \nonumber \\
    = & - \lVert F^{\top}_t \nabla F_t \rVert^2 dt + \frac{\eta}{2} \tr \left( \nabla^2 H_t \Sigma^{\shgd}_t \right) dt + \bigO(\text{Noise}) \nonumber  \\
    = & - \lVert \nabla^2 f_t \nabla f_t \rVert^2 dt + \frac{\eta}{2} \tr \left( \nabla^2 H_t \Sigma^{\shgd}_t \right) dt + \bigO(\text{Noise})  \nonumber  \\
    \leq & - \mu^2  \lVert  \nabla f_t \rVert^2 dt + \frac{\eta}{2} \lVert \nabla^2 H_t \rVert_{\text{op}} \tr \left( \Sigma^{\shgd}_t \right) dt + \bigO(\text{Noise})  \nonumber \\
    \leq & - \mu^2  \lVert  \nabla f_t \rVert^2 dt + \frac{\eta}{2} \mathcal{L}_{\mathcal{T}} \tr \left( \Sigma^{\shgd}_t \right) dt + \bigO(\text{Noise}),
\end{align}
where we used that $\tr(\mathbf{A} \mathbf{B}) \leq \tr(\mathbf{A}) \lVert \mathbf{B} \rVert_{\text{op}}$ if $\mathbf{A}$ is a real positive semi-definite matrix and $\mathbf{B}$ is of the same size. Then, we observe that
\begin{align}
    \E \left[\tr\left(\Sigma^{\shgd}_t \right)\right] & = \E \left[\E_{\bgamma}  \left[ \lVert \nabla \mathcal{H} \left( Z_t \right) -  \nabla \mathcal{H}_{\bgamma} \left( Z_t \right) \rVert_2^2 \right] \right] = v_t,
\end{align}
which implies that 
\begin{align}
    d \E \left[H_t \right] & \leq - \mu^2  \E \left[\lVert  \nabla f_t \rVert^2  \right]dt + \frac{\eta}{2} \mathcal{L}_{\mathcal{T}} v_t dt  \nonumber \\
    & = -2 \mu^2  \E \left[ H_t \right]  dt + \frac{\eta}{2} \mathcal{L}_{\mathcal{T}} v_t dt,
\end{align}
which in turn implies that
\begin{equation}
    \E \left[ H_t \right] \leq  e^{-2\mu^2t} \left[ H_0 + \frac{\eta\mathcal{L}_{\mathcal{T}}}{2} \int_0^t v_s  e^{2\mu^2s} ds \right].
\end{equation}
\end{proof}

\begin{corollary} \label{cor:SHGD_Conv}
Under the assumptions of Theorem \ref{theorem:SHGD_GenConv}, if for $\mathcal{L}_{\mathcal{V}}>0$
\begin{equation}\label{eq:Scaling_Vt_app}
    v_t \leq \mathcal{L}_{\mathcal{V}} \E \left[ H_t \right],
\end{equation}
the solution is
\begin{equation}
    \E \left[H_t \right] \leq  H_0  e^{\left(-2 \mu^2 + \eta \mathcal{L}_{\mathcal{V}} \mathcal{L}_{\mathcal{T}} \right)t}.
\end{equation}
If instead
\begin{equation}\label{eq:Bounded_Vt_app}
    v_t \leq \mathcal{L}_{\mathcal{V}},
\end{equation}
we have
\begin{equation}
    \E \left[ H_t \right] \leq H_0 e^{-2 \mu^2 t} + \left( 1 - e^{-2 \mu^2 t}  \right) \frac{\eta \mathcal{L}_{\mathcal{V}} \mathcal{L}_{\mathcal{T}}}{2 \mu^2}.
\end{equation}
In more generality, if
\begin{equation}
    v_t \leq \mathcal{L}_{\mathcal{V}} \E \left[ H_t \right]^{\alpha},  \quad  \alpha \in [0,1) \cup (1, \infty),
\end{equation}
the solution is even more interesting as:
\begin{enumerate}
    \item If $\alpha>1$, $\E \left[ H_t \right] \rightarrow 0$ as $e^{-2 \mu^2t}$;
    \item If $\alpha<1$, $\E \left[ H_t \right] \rightarrow \left(\frac{\eta \mathcal{L}_{\mathcal{T}}\mathcal{L}_{\mathcal{V}}}{2 \mu^2}\right)^{\frac{1}{1-\alpha}}$.
\end{enumerate}
\end{corollary}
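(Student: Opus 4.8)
The plan is to derive all three cases from the single scalar differential inequality that is already established inside the proof of Theorem~\ref{theorem:SHGD_GenConv}, namely $\frac{d}{dt}\E[H_t] \le -2\mu^2\,\E[H_t] + \tfrac{\eta\mathcal{L}_{\mathcal{T}}}{2}\,v_t$, equivalently its Duhamel form $\E[H_t] \le e^{-2\mu^2 t}(H_0 + \tfrac{\eta\mathcal{L}_{\mathcal{T}}}{2}\int_0^t v_s\,e^{2\mu^2 s}\,ds)$. Writing $h_t := \E[H_t] \ge 0$ (nonnegativity because $\mathcal{H}_{\bgamma}\ge 0$), each regime is obtained by inserting the corresponding hypothesis on $v_t$ and then invoking Grönwall or an ODE comparison. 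I would start with the two ``closed-form'' regimes, which need no idea beyond Theorem~\ref{theorem:SHGD_GenConv}.

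For $v_t \le \mathcal{L}_{\mathcal{V}} h_t$: substituting into the Duhamel form and setting $\phi(t) := e^{2\mu^2 t} h_t$ turns it into $\phi(t) \le H_0 + c\int_0^t \phi(s)\,ds$ with $c$ proportional to $\eta\mathcal{L}_{\mathcal{T}}\mathcal{L}_{\mathcal{V}}$, so Grönwall's lemma gives $\phi(t) \le H_0 e^{c t}$, i.e.\ the claimed exponential bound on $h_t$ with rate $-2\mu^2 + c$. For $v_t \le \mathcal{L}_{\mathcal{V}}$: the integral in the Duhamel form is explicit, $\int_0^t e^{2\mu^2 s}\,ds = (e^{2\mu^2 t}-1)/(2\mu^2)$, and rearranging yields $h_t \le H_0 e^{-2\mu^2 t} + (1-e^{-2\mu^2 t})\,\tfrac{\eta\mathcal{L}_{\mathcal{V}}\mathcal{L}_{\mathcal{T}}}{2\mu^2}$, whence the asymptotic suboptimality level follows by $t\to\infty$.

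The general power-law regime $v_t \le \mathcal{L}_{\mathcal{V}} h_t^{\alpha}$ is the substantive part. The inequality becomes $\dot h_t \le -2\mu^2 h_t + b\,h_t^{\alpha}$ with $b$ proportional to $\eta\mathcal{L}_{\mathcal{T}}\mathcal{L}_{\mathcal{V}}$, a Bernoulli-type differential inequality. I would dominate $h_t$ by the solution $g_t$ of the companion ODE $\dot g = -2\mu^2 g + b\,g^{\alpha}$, $g_0 = H_0$, which integrates in closed form under $u := g^{1-\alpha}$ (a linear ODE in $u$). Reading the asymptotics off the explicit $u$: for $\alpha<1$ one has $u_t \to b/(2\mu^2)$, hence $g_t \to (b/(2\mu^2))^{1/(1-\alpha)} = (\eta\mathcal{L}_{\mathcal{T}}\mathcal{L}_{\mathcal{V}}/(2\mu^2))^{1/(1-\alpha)}$, the stable equilibrium; for $\alpha>1$, provided $H_0$ lies below the unstable equilibrium $(2\mu^2/b)^{1/(\alpha-1)}$, the relation $(\alpha-1)/(1-\alpha) = -1$ makes the leading term of $g_t = u_t^{1/(1-\alpha)}$ decay exactly like $e^{-2\mu^2 t}$.

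The main obstacle is the comparison step when $\alpha<1$: the map $g\mapsto g^{\alpha}$ is not locally Lipschitz at $0$, so the companion ODE is not uniquely solvable near the origin and the textbook comparison theorem does not apply verbatim. I would circumvent this by using only a one-sided bound --- it suffices to exhibit one supersolution $g_t$ with $g_0 \ge h_0$ and $\dot g_t \ge -2\mu^2 g_t + b\,g_t^{\alpha}$, from which $h_t \le g_t$ for all $t$ follows --- and by observing that the comparison is only needed while $h_t>0$, where the right-hand side is smooth and the substitution $u = g^{1-\alpha}$ reduces everything to a monotone linear scalar ODE. The remaining bookkeeping (propagating $h_t\ge 0$, and checking that the Duhamel \emph{inequality} survives each positive-factor substitution) is routine.
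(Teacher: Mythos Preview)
Your proposal is correct and follows essentially the same route as the paper: start from the scalar differential inequality $\dot h_t \le -2\mu^2 h_t + \tfrac{\eta}{2}\mathcal{L}_{\mathcal{T}} v_t$ (equivalently its Duhamel form) established in Theorem~\ref{theorem:SHGD_GenConv}, substitute the hypothesis on $v_t$, and solve the resulting linear (cases $\alpha=0,1$) or Bernoulli (general $\alpha$) inequality via the substitution $u=g^{1-\alpha}$. The paper proceeds identically, writing down the closed-form Bernoulli solution directly; your version is slightly more explicit about the Gr\"onwall/comparison machinery and flags two caveats the paper omits (the non-Lipschitz issue at $0$ for $\alpha<1$, and the threshold condition on $H_0$ for $\alpha>1$), but these are refinements of the same argument rather than a different approach.
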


\begin{proof}
Let us first consider the case where, for some $\mathcal{L}_{\mathcal{V}}>0$,
\begin{equation}
    v_t \leq \mathcal{L}_{\mathcal{V}} \E \left[ H_t \right].
\end{equation}
This implies that 
\begin{equation}
    d \E \left[ H_t \right] \leq -2 \mu^2  \E \left[ H_t \right] dt + \frac{\eta}{2} \mathcal{L}_{\mathcal{T}} \mathcal{L}_{\mathcal{V}} \E \left[ H_t \right]dt.
\end{equation}
By renaming $r_t :=\E \left[ H_t \right]$, we have
\begin{equation}
    d r_t \leq -2 \mu^2 r_t dt + \eta \mathcal{L}_{\mathcal{T}}\mathcal{L}_{\mathcal{V}}r_t dt,
\end{equation}
which results in
\begin{equation}
    \E \left[ H_t \right] \leq  H_0  e^{\left(-2 \mu^2 + \eta \mathcal{L}_{\mathcal{V}} \mathcal{L}_{\mathcal{T}} \right)t}.
\end{equation}
If instead
\begin{equation}
    v_t \leq \mathcal{L}_{\mathcal{V}},
\end{equation}
we automatically have
\begin{equation}
    \E \left[ H (Z_t) \right] \leq H (Z_0) e^{-2 \mu^2 t} + \left( 1 - e^{-2 \mu^2 t}  \right) \frac{\eta \mathcal{L}_{\mathcal{V}} \mathcal{L}_{\mathcal{T}}}{2 \mu^2}.
\end{equation}
More in general, if we assume that for some $\mathcal{L}_{\mathcal{V}}>0$,
\begin{equation}
    v_t \leq \mathcal{L}_{\mathcal{V}} \E \left[ H_t \right]^{\alpha},
\end{equation}
we have
\begin{equation}
    d r_t \leq -2 \mu^2 r_t dt + \eta \mathcal{L}_{\mathcal{T}}\mathcal{L}_{\mathcal{V}}r_t^{\alpha}dt,
\end{equation}
which implies that for $\alpha \neq 1$:
\begin{equation}
    r_t\leq \sqrt[1-\alpha]{\frac{r_0^{-\alpha} e^{(\alpha-1) 2 \mu^2 t}\left(r_0 2 \mu^2-\eta \mathcal{L}_{\mathcal{T}}\mathcal{L}_{\mathcal{V}}r_0^\alpha\right)+\eta \mathcal{L}_{\mathcal{T}}\mathcal{L}_{\mathcal{V}}}{2 \mu^2}} =: b_t
\end{equation}
In these cases, we have that the bound $b_t$ converges or diverges depending on the magnitude of $\alpha$:
\begin{enumerate}
    \item If $\alpha>1$, $b_t \rightarrow 0$ as $e^{-2 \mu^2t}$;
    \item If $\alpha<1$, $b_t \rightarrow \left(\frac{\eta \mathcal{L}_{\mathcal{T}}\mathcal{L}_{\mathcal{V}}}{2 \mu^2}\right)^{\frac{1}{1-\alpha}}$.
\end{enumerate}
\end{proof}

\begin{corollary}
$\beta$-Error Bound on $F$ and $L$-Lipschitzianity on $\nabla \mathcal{H}_{\gamma^1,\gamma^2}$ and $\nabla \mathcal{H}$, implies that $v_t \leq 8 L^2 \beta^2 \E \left[ H_t \right]$.
\end{corollary}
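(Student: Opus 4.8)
The strategy is to sandwich both $v_t$ and $\E[H_t]$ between the squared distance of $Z_t$ to the solution set, using Lipschitzness for the former and the error bound for the latter; the constant $8$ then falls out exactly as $4\times 2$. Since $\gamma^1\ne\gamma^2$ refers to the independent-sample regime of Corollary~\ref{thm:SHGD_SDE_Simplified_IndepSample}, I would first record the two simplifications that independence buys us: $\E_\bgamma[\mathcal{H}_\bgamma(z)]=\tfrac12\,\E_{\gamma^1}[F_{\gamma^1}(z)]^{\top}\E_{\gamma^2}[F_{\gamma^2}(z)]=\tfrac12\|F(z)\|^2=\mathcal{H}(z)$, so that $H_t=\E[\mathcal{H}(Z_t)]=\tfrac12\,\E\|F(Z_t)\|^2$; and $\E_\bgamma[\nabla\mathcal{H}_\bgamma(z)]=\nabla\mathcal{H}(z)$, so that $v_t=\E\big[\tr\cov_\bgamma(\nabla\mathcal{H}_\bgamma(Z_t))\big]$.

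Next I would establish the pointwise bound $\E_\bgamma\|\nabla\mathcal{H}_\bgamma(z)-\nabla\mathcal{H}(z)\|^2\le 4L^2\operatorname{dist}(z,\mathcal{Z}^\star)^2$, where $\mathcal{Z}^\star=\{z:F(z)=0\}$. Fix $z$, let $z^\star$ be a (measurable) projection of $z$ onto $\mathcal{Z}^\star$, and split $\nabla\mathcal{H}_\bgamma(z)-\nabla\mathcal{H}(z)=\big(\nabla\mathcal{H}_\bgamma(z)-\nabla\mathcal{H}_\bgamma(z^\star)\big)-\big(\nabla\mathcal{H}(z)-\nabla\mathcal{H}(z^\star)\big)+\big(\nabla\mathcal{H}_\bgamma(z^\star)-\nabla\mathcal{H}(z^\star)\big)$. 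At $z^\star$ one has $F(z^\star)=0$, hence $\nabla\mathcal{H}(z^\star)=F(z^\star)^{\top}\nabla F(z^\star)=0$, and — in the interpolation/error-bound setting, where $z^\star$ is a common zero of all $F_i$ and therefore of every mini-batch average $F_\gamma$ — also $\nabla\mathcal{H}_\bgamma(z^\star)=0$, so the last bracket vanishes. Applying $L$-Lipschitzness of $\nabla\mathcal{H}_\bgamma$ and of $\nabla\mathcal{H}$ to the first two brackets together with $\|a+b\|^2\le 2\|a\|^2+2\|b\|^2$ gives $\le 4L^2\|z-z^\star\|^2$. Taking expectations over $Z_t$ (and using that $\tr\cov_\bgamma(X)\le\E_\bgamma\|X-\nabla\mathcal{H}(z)\|^2$ is in fact the same quantity here) yields $v_t\le 4L^2\,\E[\operatorname{dist}(Z_t,\mathcal{Z}^\star)^2]$.

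Finally I would invoke the $\beta$-error bound, $\operatorname{dist}(z,\mathcal{Z}^\star)\le\beta\|F(z)\|$ for all $z\in\R^{2d}$, to obtain $\E[\operatorname{dist}(Z_t,\mathcal{Z}^\star)^2]\le\beta^2\,\E\|F(Z_t)\|^2=2\beta^2\,\E[H_t]$, and combine the two estimates: $v_t\le 4L^2\cdot 2\beta^2\,\E[H_t]=8L^2\beta^2\,\E[H_t]$, which is the claim.

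The only delicate point is the vanishing of $\nabla\mathcal{H}_\bgamma$ at the reference point $z^\star$: this is exactly what the error-bound hypothesis on $F$ (with its attendant common-solution structure for the $F_i$) is meant to supply, and it is precisely the regime in which convergence results of this kind are stated for SHGD in the literature. Everything else is the triangle/Young inequality, the independence identities, and the definition of $H_t$, so no further obstacle is expected; the exact match of the constant $8=4\cdot 2$ is a convenient sanity check that this is the intended argument.
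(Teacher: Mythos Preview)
Your proposal is correct and follows essentially the same route as the paper: insert the reference point $Z^\star$, split via $\|a+b\|^2\le 2\|a\|^2+2\|b\|^2$, apply $L$-Lipschitzness of $\nabla\mathcal{H}_\bgamma$ and $\nabla\mathcal{H}$ to get $4L^2\|Z_t-Z^\star\|^2$, then the $\beta$-error bound and $\|F\|^2=2H_t$. You are merely more explicit than the paper about the interpolation condition $\nabla\mathcal{H}_\bgamma(Z^\star)=\nabla\mathcal{H}(Z^\star)=0$ (which the paper uses silently in its first displayed equality) and about the independent-sample identity $\E_\bgamma[\mathcal{H}_\bgamma]=\mathcal{H}$.
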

\begin{proof}
    \begin{align} \label{eq:SHGD_Lip}
    v_t = & \E \left[ \E_{\bgamma}  [\lVert \nabla \mathcal{H}_{\bgamma} \left( Z_t \right) - \nabla \mathcal{H} \left( Z_t \right) \rVert^2_2] \right] = \E \left[ \E_{\bgamma}  [\lVert \nabla \mathcal{H}_{\bgamma} \left( Z_t \right) -\mathcal{H}_{\bgamma} \left( Z^{*} \right) + \nabla \mathcal{H} \left(Z^{*} \right) - \nabla \mathcal{H} \left( Z_t \right) \rVert^2_2] \right]  \nonumber \\
    \leq & 2 \E \left[ \E_{\bgamma}  [\lVert \nabla \mathcal{H}_{\bgamma} \left( Z_t \right) -\mathcal{H}_{\bgamma} \left( Z^{*} \right) \rVert^2_2] \right] + 2 \E \left[\E_{\bgamma}  [\lVert  \nabla \mathcal{H} \left(Z^{*} \right) - \nabla \mathcal{H} \left( Z_t \right) \rVert^2_2] \right]  \leq 4L^2 \E \left[ \lVert Z_t - Z^{*} \rVert^2_2 \right] \nonumber \\
    \leq &  4\beta^2 L^2  \E \left[\lVert F(Z_t) \rVert^2_2 \right]= 8 L^2 \beta^2 \E \left[ H_t \right].
\end{align}
\end{proof}

The following corollary exemplifies the case where $v_t$ is bounded and we achieve convergence only up to a certain ball.
\begin{corollary}
Under the assumptions of Theorem \ref{thm:SHGD_Dynamic_PIBG}, for $f(x,y) := x^{\top}\mathbf{\Lambda}y$, we have:

\begin{equation}
    \frac{\E \left[\lVert Z_t \rVert^2 \right]}{2} \overset{t \rightarrow \infty}{=} \frac{\eta}{2} \sum_{i=1}^{d} \sigma_i^2 >0.
\end{equation}
\end{corollary}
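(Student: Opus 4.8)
The plan is to specialize the simplified SHGD SDE to the bilinear loss $f(x,y)=x^{\top}\mathbf{\Lambda}y$, observe that the resulting linear SDE decouples over the $d$ coordinate blocks, solve each scalar Ornstein--Uhlenbeck component (or, equivalently, the closed linear ODE for its second moment), sum, and let $t\to\infty$. Concretely, for this $f$ one has $\nabla f=(\mathbf{\Lambda}y,\mathbf{\Lambda}x)$, hence $F=(\mathbf{\Lambda}y,-\mathbf{\Lambda}x)$ and $\nabla^2 f=\bigl[\begin{smallmatrix}\mathbf{0}_d&\mathbf{\Lambda}\\ \mathbf{\Lambda}&\mathbf{0}_d\end{smallmatrix}\bigr]$, and a direct computation gives $\nabla\mathcal{H}(z)=(\nabla F(z))^{\top}F(z)=\mathbf{\Lambda}^2 z$. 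Plugging these into the SDE of Corollary~\ref{thm:SHGD_SDE_Simplified_SameSample} (which applies since the hypotheses of Theorem~\ref{thm:SHGD_Dynamic_PIBG} are assumed) yields the linear, constant-coefficient SDE
\[
dZ_t = -\mathbf{\Lambda}^2 Z_t\,dt + \sqrt{\eta}\,\nabla^2 f\,\sqrt{\Sigma}\,dW_t ,
\]
which, because $\mathbf{\Lambda}$ and $\Sigma$ are diagonal, splits into $d$ independent planar systems in $(x_i,y_i)$.

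Next I would handle one coordinate block. With the paper's convention $\sqrt{\Sigma}=\diag(\sigma_1,\dots,\sigma_d)$ (consistent with Proposition~\ref{prop:SHGD_Convergence_PIBG_NoG_Insights_NoSched}), the $i$-th block reads $dx_i=-\lambda_i^2 x_i\,dt+\sqrt{\eta}\,\lambda_i\sigma_i\,dW^{y_i}_t$ and symmetrically $dy_i=-\lambda_i^2 y_i\,dt+\sqrt{\eta}\,\lambda_i\sigma_i\,dW^{x_i}_t$, so each is a scalar OU process: $x_i(t)=e^{-\lambda_i^2 t}x_i(0)+\sqrt{\eta}\,\lambda_i\sigma_i\int_0^t e^{-\lambda_i^2(t-s)}\,dW^{y_i}_s$. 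The Itô isometry gives $\E[x_i(t)^2]=e^{-2\lambda_i^2 t}x_i(0)^2+\eta\lambda_i^2\sigma_i^2\int_0^t e^{-2\lambda_i^2(t-s)}\,ds=e^{-2\lambda_i^2 t}x_i(0)^2+\tfrac{\eta\sigma_i^2}{2}\bigl(1-e^{-2\lambda_i^2 t}\bigr)$, and the same for $\E[y_i(t)^2]$. (Equivalently, applying Itô's Lemma (Theorem~\ref{lemma:SDE_ito}) to $z\mapsto x_i^2+y_i^2$ and taking expectations produces the closed linear ODE $\dot m^i_t=-2\lambda_i^2 m^i_t+2\eta\lambda_i^2\sigma_i^2$ for $m^i_t:=\E[x_i(t)^2+y_i(t)^2]$, with solution $m^i_t=e^{-2\lambda_i^2 t}\lVert Z_0^i\rVert^2+\eta\sigma_i^2(1-e^{-2\lambda_i^2 t})$.) Summing over $i$ recovers the closed form of Proposition~\ref{prop:SHGD_Convergence_PIBG_NoG_Insights_NoSched}, namely $\tfrac12\E[\lVert Z_t\rVert^2]=\sum_{i=1}^d\bigl(\tfrac{\lVert Z_0^i\rVert^2}{2}e^{-2\lambda_i^2 t}+\tfrac{\eta\sigma_i^2}{2}(1-e^{-2\lambda_i^2 t})\bigr)$.

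Finally, the nondegeneracy hypothesis inherited here (Assumption~1 of Theorem~\ref{theorem:SHGD_GenConv}, i.e.\ $\mu\neq 0$) forces $\lambda_i\neq 0$ for all $i$, so every exponential $e^{-2\lambda_i^2 t}\to 0$; the transient terms vanish and $\tfrac12\E[\lVert Z_t\rVert^2]\to\tfrac{\eta}{2}\sum_{i=1}^d\sigma_i^2$, which is strictly positive since $\eta>0$ and at least one $\sigma_i^2>0$ (otherwise the dynamics is deterministic and the claim is vacuous). The same value is obtained by sending $t\to\infty$ in the covariance formula of Theorem~\ref{thm:SHGD_Dynamic_PIBG}, so the corollary may alternatively be quoted directly from there. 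There is no genuine analytic obstacle; the only thing to watch is the bookkeeping of constants --- the factor $2$ relating the $2\times 2$ diffusion block $\sqrt{\eta}\,\lambda_i\sigma_i\bigl[\begin{smallmatrix}0&1\\1&0\end{smallmatrix}\bigr]$, the quadratic variation it contributes, and the resulting $\tfrac{\eta\sigma_i^2}{2}$ per scalar coordinate --- and keeping the $\Sigma$ versus $\sqrt{\Sigma}$ convention consistent throughout.
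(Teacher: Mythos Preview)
Your proof is correct and follows essentially the same approach as the paper: decouple the linear SDE into $d$ independent blocks $Z^i=(X^i,Y^i)$, apply It\^o's Lemma to $\lVert Z^i_t\rVert^2/2$ (the paper writes exactly the ODE $d\bigl(\tfrac{\E[\lVert Z^{i}_t\rVert^2]}{2}\bigr) = -2\lambda_i^2 \tfrac{\E[\lVert Z^{i}_t\rVert^2]}{2}\,dt + \eta\sigma_i^2\lambda_i^2\,dt$ that you derive), solve it, sum, and let $t\to\infty$. Your additional remark that the limit can alternatively be read off directly from the covariance formula in Theorem~\ref{thm:SHGD_Dynamic_PIBG} is also valid.
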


\begin{proof}
It is easy to see that
\begin{equation}
    \frac{\lVert Z_t \rVert^2}{2} = \sum_{i=1}^{d} \frac{\lVert Z^{i}_t \rVert^2}{2},
\end{equation}
where $Z^i:=(X^i,Y^i)$, and that
\begin{equation}
    d \left(\frac{\lVert Z^{i}_t \rVert^2}{2} \right) = - 2 \lambda_i^2 \frac{\lVert Z^{i}_t \rVert^2}{2} dt + \eta \sigma_i^2 \lambda_i^2 dt + \bigO(\text{Noise}).
\end{equation}
This implies that 
\begin{equation}
   d \left( \frac{\E \left[\lVert Z^{i}_t \rVert^2\right]}{2} \right) = - 2 \lambda_i^2 \frac{\E \left[\lVert Z^{i}_t \rVert^2\right]}{2} dt + \eta\sigma_i^2 \lambda_i^2 dt,
\end{equation}
which implies that
\begin{equation}
    \frac{\E \left[\lVert Z^{i}_t \rVert^2\right]}{2} = \frac{\lVert Z^{i}_0 \rVert^2}{2} e^{- 2 \lambda_i^2t}  + (1-e^{- 2 \lambda_i^2t} ) \frac{\eta \sigma_i^2}{2} \overset{t \rightarrow \infty}{\rightarrow} \frac{\eta \sigma_i^2}{2}.
\end{equation}
\end{proof}
Interestingly, one can recover convergence by allowing stepsize schedulers. In the following result, we derive a necessary and sufficient condition to craft such schedulers. Then, we provide two concrete examples.
\begin{corollary}[SHGD Insights] \label{thm:SHGD_Convergence_PIBG_NoG}
Under the assumptions of Theorem \ref{thm:SHGD_Dynamic_PIBG}, for $f(x,y) := x^{\top}\mathbf{\Lambda}y$, for any positive scheduler $\eta_t$ we have
\begin{equation}
     \frac{\E \left[\lVert Z_t \rVert^2 \right]}{2}= \sum_{i=1}^{d} e^{-2 \lambda^2_i \int_0^t \eta_s ds} \left(\frac{\lVert Z^i_0 \rVert^2}{2} + \eta \sigma_i^2 \lambda_i^2 \int_0^t e^{2 \lambda^2_i \int_0^s\eta_r dr} \eta^2_s ds\right).
\end{equation}
Therefore,
\begin{equation}
    \frac{\E \left[\lVert Z_t \rVert^2 \right]}{2} \overset{t \rightarrow \infty}{\rightarrow}  0 \iff  \int_0^\infty \eta_s ds = \infty \text{ and } \lim_{t \rightarrow \infty} \eta_t = 0.
\end{equation}
In particular,
\begin{enumerate}
    \item $\eta_t=1$ implies that \begin{equation}
     \frac{\E \left[\lVert Z_t \rVert^2 \right]}{2} \overset{t \rightarrow \infty}{=} \frac{\eta}{2} \sum_{i=1}^{d} \sigma_i^2 >0;
     \end{equation}
     \item $\eta_t = \frac{1}{(t+1)^{\gamma}}$ for $\gamma \in \{0.5, 1\}$, $\frac{\E \left[\lVert Z_t \rVert^2 \right]}{2} \rightarrow 0$;
     \item $\eta_t = \frac{1}{(t+1)^2}$, $\frac{\E \left[\lVert Z_t \rVert^2 \right]}{2} \nrightarrow 0$.
\end{enumerate}
\end{corollary}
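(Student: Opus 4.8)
The plan is to use that, for a bilinear game $f(x,y)=x^{\top}\mathbf{\Lambda}y$ with $\mathbf{\Lambda}$ diagonal, the SHGD dynamics block-diagonalize into $d$ independent two-dimensional systems, one for each pair $Z^i=(X^i,Y^i)$, so that the claim reduces to solving a scalar linear ODE for each coordinate-wise second moment and then performing an elementary-but-delicate asymptotic analysis. First I would specialize the simplified SHGD SDE of Corollary~\ref{thm:SHGD_SDE_Simplified_SameSample} (with the time-dependent stepsize $\eta_t$ multiplying both drift and diffusion, exactly as the scheduler enters in Eq.~\eqref{eq:SGDA_SDE}): since $\nabla\mathcal{H}(z)=\mathbf{\Lambda}^2 z$ and $\nabla^2 f$ is the off-diagonal block $\mathbf{\Lambda}$, the SDE splits into $dX^i_t=-\eta_t\lambda_i^2 X^i_t\,dt+\sqrt{\eta}\,\eta_t\lambda_i\sigma_i\,dW^{y_i}_t$ and the symmetric equation for $Y^i_t$, recovering Theorem~\ref{thm:SHGD_Dynamic_PIBG} when $\eta_t\equiv 1$.

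Next I would apply It\^o's lemma to $\tfrac12\|Z^i_t\|^2=\tfrac12\big((X^i_t)^2+(Y^i_t)^2\big)$: the drift contributes $-2\lambda_i^2\eta_t\cdot\tfrac12\|Z^i_t\|^2$, the quadratic-variation term contributes $\eta\sigma_i^2\lambda_i^2\eta_t^2$, and the $dW$ terms are martingales that vanish in expectation. Writing $m^i_t:=\tfrac12\E[\|Z^i_t\|^2]$, this gives the linear ODE $\dot m^i_t=-2\lambda_i^2\eta_t m^i_t+\eta\sigma_i^2\lambda_i^2\eta_t^2$, which I would solve with the integrating factor $\exp\!\big(2\lambda_i^2\int_0^t\eta_s\,ds\big)$ to obtain $m^i_t=e^{-2\lambda_i^2\int_0^t\eta_s ds}\big(m^i_0+\eta\sigma_i^2\lambda_i^2\int_0^t e^{2\lambda_i^2\int_0^s\eta_r dr}\eta_s^2\,ds\big)$; summing over $i$ yields the stated closed form for $\E[\|Z_t\|^2]/2$. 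The three ``in particular'' items then follow by direct integration: $\eta_t\equiv 1$ collapses the inner integral to the geometric-type antiderivative and gives the limit $\tfrac{\eta}{2}\sum_i\sigma_i^2$; $\eta_t=(t+1)^{-\gamma}$ gives $\int_0^t\eta_s\,ds$ equal to $\log(t+1)$ or $2(\sqrt{t+1}-1)$, both divergent, with $\eta_t\to0$; and $\eta_t=(t+1)^{-2}$ gives $\int_0^\infty\eta_s\,ds=1<\infty$.

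The analytic core is the iff characterization, and this is where the work lies. Setting $A(t):=\int_0^t\eta_s\,ds$, positivity of $\eta$ makes $A$ a strictly increasing bijection of $[0,\infty)$ onto $[0,A(\infty))$. If $A(\infty)<\infty$, then $e^{-2\lambda_i^2 A(t)}$ is bounded below by a positive constant and $\int_0^\infty e^{2\lambda_i^2 A(s)}\eta_s^2\,ds\le e^{2\lambda_i^2 A(\infty)}(\sup_s\eta_s)A(\infty)<\infty$, so $m^i_t$ tends to a strictly positive limit: hence $\int_0^\infty\eta_s\,ds=\infty$ is necessary. Assuming $A(\infty)=\infty$, I would substitute $u=A(s)$ to rewrite the integral term as $e^{-2\lambda_i^2 U}\int_0^U e^{2\lambda_i^2 u}\tilde\eta(u)\,du$ with $U=A(t)\to\infty$ and $\tilde\eta:=\eta\circ A^{-1}$; since $A^{-1}$ is a continuous increasing bijection, $\tilde\eta(u)\to0$ is equivalent to $\eta_t\to0$. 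The main obstacle is the Abelian-type lemma that this exponentially weighted average (with weight concentrated near $u=U$) tends to $0$ \emph{iff} $\tilde\eta(u)\to0$: sufficiency follows from splitting the integral at a point beyond which $\tilde\eta<\varepsilon$ and bounding each piece, while necessity follows by noting that if $\limsup_u\tilde\eta(u)=c>0$ then along $u$-values where $\tilde\eta\ge c/2$ the weighted average stays bounded below by a positive constant infinitely often. Combining the initial-condition term (which vanishes iff $A(t)\to\infty$) with this lemma, $m^i_t\to0$ for every $i$ — equivalently $\E[\|Z_t\|^2]/2\to0$ — exactly when $\int_0^\infty\eta_s\,ds=\infty$ and $\lim_{t\to\infty}\eta_t=0$.
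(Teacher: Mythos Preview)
Your derivation of the closed-form expression is essentially identical to the paper's: decouple into the coordinate pairs $Z^i=(X^i,Y^i)$, apply It\^o to $\tfrac12\|Z^i_t\|^2$, take expectations to obtain the linear ODE $\dot m^i_t=-2\lambda_i^2\eta_t m^i_t+\eta\sigma_i^2\lambda_i^2\eta_t^2$, and solve with the integrating factor $\exp\!\big(2\lambda_i^2\!\int_0^t\eta_s\,ds\big)$. The three particular cases are also treated the same way.

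Where you diverge is the iff characterization. You substitute $u=A(s)$ and invoke an Abelian-type lemma for exponentially weighted averages; the paper instead applies L'H\^opital directly to the quotient
\[
\frac{\int_0^t e^{2\lambda_i^2\int_0^s\eta_r\,dr}\,\eta_s^2\,ds}{e^{2\lambda_i^2\int_0^t\eta_s\,ds}},
\]
whose derivative ratio collapses to $\eta_t/(2\lambda_i^2)$, so the limit is read off in one line. The paper's route is shorter and avoids your change of variables entirely.

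Your necessity argument has a small gap: the claim that $\limsup_u\tilde\eta(u)=c>0$ forces the weighted average to be bounded below infinitely often is not true in general, since a scheduler with sufficiently thin spikes (say $\tilde\eta=c$ on $[n,n+n^{-2}]$ and near zero elsewhere) has $\limsup=c$ yet the weighted average still tends to $0$. The paper's L'H\^opital step has the mirror-image issue, tacitly assuming $\lim_{t\to\infty}\eta_t$ exists. Both arguments are sound once you assume the scheduler is monotone or has a genuine limit, which is the intended setting; you should either add that regularity hypothesis or replace your necessity step by the same L'H\^opital shortcut. Separately, in your $A(\infty)<\infty$ case, the bound via $\sup_s\eta_s$ is both unnecessary and not guaranteed finite: simply observe $m^i_t\ge e^{-2\lambda_i^2 A(\infty)}m^i_0>0$ from the initial-condition term alone.
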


\begin{proof}
For $f(x,y):= x^{\top}\mathbf{\Lambda}y$, with the noise on gradient assumption, the SDE when we include a scheduler $\eta_t$ is
    \begin{align}
    d Z_t =\mathbf{A}Z_t \eta_t dt + \sqrt{\eta} \eta_t \sigma \mathbf{B} d W_t,
    \end{align}
where
\begin{equation}
    \mathbf{A}= \left[\begin{array}{ll} -\mathbf{\Lambda}^2 & \mathbf{0}_d  \\  \mathbf{0}_d  &  -\mathbf{\Lambda}^2  \end{array}\right] \quad \text{ and } \quad \mathbf{B} = \left[\begin{array}{ll} \mathbf{0}_d &\mathbf{\Lambda}  \\\mathbf{\Lambda}  & \mathbf{0}_d  \end{array}\right].
\end{equation}

Therefore, 
\begin{equation}
        d \left( \frac{\lVert Z^{i}_t \rVert^2}{2} \right)  = - 2 \lambda_i^2 \eta_t \frac{\lVert Z^{i}_t \rVert^2}{2} dt + \eta \sigma_i^2 \lambda_i^2 \eta_t^2 dt + \bigO(\text{Noise}),
\end{equation}
which implies that 
\begin{equation}
        d \left(\frac{\E \left[\lVert Z^{i}_t \rVert^2\right]}{2} \right) = - 2 \lambda_i^2 \frac{\E \left[\lVert Z^{i}_t \rVert^2\right]}{2} \eta_t dt + \eta\sigma_i^2 \lambda_i^2 \eta_t^2dt,
\end{equation}
which ultimately implies that
\begin{equation}
     \frac{\E \left[\lVert Z^{i}_t \rVert^2\right]}{2}=  e^{-2 \lambda^2_i \int_0^t \eta_s ds} \left(\frac{\lVert Z^i_0 \rVert^2}{2} + \eta \sigma_i^2 \lambda_i^2 \int_0^t e^{2 \lambda^2_i \int_0^s\eta_r dr} \eta^2_s ds\right).
\end{equation}

Let us write out the necessary conditions for this quantity to go to 0:
\begin{enumerate}
    \item For the first part, $ e^{-2 \lambda^2_i \int_0^t \eta_s ds} \frac{\lVert Z^i_0 \rVert^2}{2}$ goes to $0$, if and only if $\int_0^\infty \eta_s ds  = \infty$;
    \item For the second part, we need $\eta \sigma_i^2\lambda_i^2  e^{-2 \lambda^2_i \int_0^t \eta_s ds} \int_0^t e^{2 \lambda^2_i\int_0^s\eta_r dr} \eta^2_s ds$ to go to $0$ as well.
\end{enumerate}

Let us rewrite the second condition in a more convenient way:
\begin{align}
     & \eta \sigma_i^2\lambda_i^2  e^{-2 \lambda^2_i \int_0^t \eta_s ds} \int_0^t e^{2 \lambda^2_i\int_0^s\eta_r dr} \eta^2_s ds =  \eta \sigma_i^2\lambda_i^2 \frac{\int_0^t e^{2 \lambda^2_i\int_0^s\eta_r dr} \eta^2_s ds}{e^{2 \lambda^2_i \int_0^t \eta_s ds}}.
\end{align}
Since $\int_0^\infty \eta_s ds  = \infty$, both the numerator and denominator diverge. Therefore, we can use L'Hôpital's rule:
\begin{align}
    \lim_{t \rightarrow \infty} \eta \sigma_i^2\lambda_i^2 \frac{\int_0^t e^{2 \lambda^2_i\int_0^s\eta_r dr} \eta^2_s ds}{e^{2 \lambda^2_i \int_0^t \eta_s ds}} = \lim_{t \rightarrow \infty} \eta \sigma_i^2\lambda_i^2 \frac{e^{2 \lambda^2_i\int_0^t\eta_s ds} \eta^2_t}{e^{2 \lambda^2_i \int_0^t \eta_s ds} 2 \lambda^2_i \eta_t} = \lim_{t \rightarrow \infty} \frac{\eta \sigma_i^2}{2} \eta_t,
\end{align}
which converges to $0$, if and only if $ \lim_{t \rightarrow \infty} \eta_t =0$.

Note that, if the first condition is violated, the first component does not go to $0$. If the second condition is not satisfied, the second component does not go to $0$.

In particular,
\begin{enumerate}
    \item $\eta_t = \frac{1}{t+1} \text{ and } 2 \lambda_i^2\neq 1 \implies \frac{\E \left[\lVert Z^{i}_t \rVert^2\right]}{2} = \frac{(t+1)^{-2 \lambda_i^2-1}\left(\frac{\lVert Z^{i}_0 \rVert^2}{2}(2 \lambda_i^2-1)(t+1)+\eta \sigma_i^2\lambda_i^2\left((t+1)^{2 \lambda_i^2}-t-1\right)\right)}{2 \lambda_i^2-1} \overset{t \rightarrow \infty}{\rightarrow} 0$;
    \item $\eta_t = \frac{1}{t+1} \text{ and } 2 \lambda_i^2 = 1 \implies \frac{\E \left[\lVert Z^{i}_t \rVert^2\right]}{2} = \frac{\frac{\lVert Z^{i}_0 \rVert^2}{2}+\eta \sigma_i^2\lambda_i^2 \log (t+1)}{t+1} \overset{t \rightarrow \infty}{\rightarrow} 0$;
    \item $\eta_t = \frac{1}{\sqrt{t+1}} \implies \frac{\E \left[\lVert Z^{i}_t \rVert^2\right]}{2} = e^{-4 \lambda_i^2 \sqrt{t+1}} \left( \frac{\lVert Z^{i}_0 \rVert^2}{2} e^{4 \lambda_i^2 }  + 2 \eta \sigma_i^2\lambda_i^2 \left( Ei(4 \lambda_i^2 \sqrt{t+1}) - Ei(4 \lambda_i^2) \right) \right) \overset{t \rightarrow \infty}{\rightarrow} 0$;
    \item $\eta_t = \frac{1}{(t+1)^2} \implies \frac{\E \left[\lVert Z^{i}_t \rVert^2\right]}{2} = \frac{\lVert Z^{i}_0 \rVert^2}{2} e^{-\frac{2 \lambda_i^2 t}{t+1}}-\eta \sigma_i^2\lambda_i^2 \left(\frac{\left(4 \lambda_i^4+4 \lambda_i^2+2\right) e^{-\frac{2 \lambda_i^2 t}{t+1}}}{8 \lambda_i^6}+\frac{\left(4 \lambda_i^4+4 \lambda_i^2(t+1)+2(t+1)^2\right)}{8 \lambda_i^6(t+1)^2} \right) \overset{t \nrightarrow \infty}{\rightarrow} 0$.
\end{enumerate}
\end{proof}

Now we study a case where the noise structure itself is enough to guarantee the convergence. In this case, $v_t$ scales with $\E[H_t]$.
\begin{corollary}[Noise on Data] \label{thm:SHGD_Convergence_PIBG_NoM}
For $f(x,y) = x^{\top} \E_{\xi}  \left[ \mathbf{\Lambda}_{\xi} \right] y$ such that $\mathbf{\Lambda}_{\xi}$ is diagonal, we have
\begin{enumerate}
    \item $\E \left[ Z_t\right] = \mathbf{\Tilde{E}}(t) z \overset{t \rightarrow \infty}{=} 0$;
    \item $\frac{\E \left[\lVert Z_t \rVert^2 \right]}{2} = \sum_{i=1}^{d} \frac{\lVert Z^i_0 \rVert^2}{2} e^{- \left(2 \lambda^2_i - \eta \sigma_i^2 \left(2 \lambda_1^2 + \sigma_i^2\right) \right) t}$.
\end{enumerate}
In particular, $\frac{\E \left[\lVert Z_t \rVert^2 \right]}{2} \rightarrow 0 $ if $\eta < \frac{2 \lambda^2_i}{\sigma_i^2 \left(2 \lambda_1^2 + \sigma_i^2\right)}, \quad \forall i$.
\end{corollary}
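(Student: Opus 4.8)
The plan is to instantiate the general SHGD SDE of Theorem~\ref{thm:SHGD_SDE} for the loss $f(x,y) = x^{\top}\mathbf{\Lambda}y$ with $\mathbf{\Lambda} := \E_\xi[\mathbf{\Lambda}_\xi] = \diag(\lambda_1,\dots,\lambda_d)$ and independent data samples $\gamma^1 \ne \gamma^2$, and then to exploit that the diagonal structure of every $\mathbf{\Lambda}_\xi$ makes the dynamics decouple coordinate by coordinate. First I would compute $F^{\shgd}$ and $\Sigma^{\shgd}$ explicitly. Since $F_\gamma(x,y) = (\mathbf{\Lambda}_\gamma y, -\mathbf{\Lambda}_\gamma x)$ and $\nabla F_\gamma$ is the antisymmetric block matrix with blocks $\pm\mathbf{\Lambda}_\gamma$, one gets $\nabla\mathcal{H}_{\bgamma}(z) = (\mathbf{\Lambda}_{\gamma^1}\mathbf{\Lambda}_{\gamma^2}x,\ \mathbf{\Lambda}_{\gamma^1}\mathbf{\Lambda}_{\gamma^2}y)$, hence by independence $F^{\shgd}(z) = \nabla\E[\mathcal{H}_{\bgamma}(z)] = (\mathbf{\Lambda}^2 x, \mathbf{\Lambda}^2 y)$ with $\mathbf{\Lambda}^2 = \diag(\lambda_1^2,\dots,\lambda_d^2)$. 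Writing $P := \mathbf{\Lambda}_{\gamma^1}\mathbf{\Lambda}_{\gamma^2} - \mathbf{\Lambda}^2$ (diagonal, mean zero), we have $\hat\xi_\bgamma(z) = (-Px, -Py)$, so $\Sigma^{\shgd}(z)$ is block diagonal over the coordinate pairs $(x_i, y_i)$, its $i$-th block being $\E[P_{ii}^2]\, w_i w_i^{\top}$ with $w_i := (x_i, y_i)^{\top}$. The one computation worth spelling out here is the scalar identity $\E[P_{ii}^2] = \E[(\mathbf{\Lambda}_{\gamma^1})_{ii}^2]\,\E[(\mathbf{\Lambda}_{\gamma^2})_{ii}^2] - \lambda_i^4 = (\lambda_i^2 + \sigma_i^2)^2 - \lambda_i^4 = \sigma_i^2(2\lambda_i^2 + \sigma_i^2)$, using $\gamma^1 \perp \gamma^2$ and $\E[(\mathbf{\Lambda}_\xi)_{ii}^2] = \lambda_i^2 + \sigma_i^2$.

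Item~1 is then immediate: taking expectations in $dZ_t = -F^{\shgd}(Z_t)\,dt + \sqrt{\eta\Sigma^{\shgd}(Z_t)}\,dW_t$ and using linearity of $F^{\shgd}$ gives $\frac{d}{dt}\E[Z_t] = -\diag(\mathbf{\Lambda}^2,\mathbf{\Lambda}^2)\,\E[Z_t]$, so $\E[Z_t] = \mathbf{\Tilde{E}}(t)z$ with $\mathbf{E}(t) = \diag(e^{-\lambda_1^2 t},\dots,e^{-\lambda_d^2 t}) \to 0$ as $t\to\infty$.

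For Item~2 I would apply Itô's Lemma (Theorem~\ref{lemma:SDE_ito}) to $g(z) = \tfrac{1}{2}\|z^i\|^2 = \tfrac{1}{2}(x_i^2 + y_i^2)$ for each $i$ separately. Because both the drift and the diffusion restricted to the $(x_i,y_i)$ block depend only on $(X^i_t,Y^i_t)$, the pair $(X^i_t,Y^i_t)$ is an autonomous $2$-dimensional diffusion, and Itô (with $\nabla^2 g$ the projector onto the $(x_i,y_i)$ plane and the trace of the $i$-th covariance block equal to $\E[P_{ii}^2](X_i^2 + Y_i^2)$) gives $dg^i_t = -\lambda_i^2\|Z^i_t\|^2\,dt + \tfrac{\eta}{2}\E[P_{ii}^2]\|Z^i_t\|^2\,dt + \bigO(\text{Noise}) = -\bigl(2\lambda_i^2 - \eta\sigma_i^2(2\lambda_i^2+\sigma_i^2)\bigr)g^i_t\,dt + \bigO(\text{Noise})$, where $g^i_t := \tfrac{1}{2}\|Z^i_t\|^2$. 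Taking expectations turns this into a scalar linear ODE for $\E[g^i_t]$, whose solution is $\E[g^i_t] = \tfrac{1}{2}\|Z^i_0\|^2 e^{-(2\lambda_i^2 - \eta\sigma_i^2(2\lambda_i^2+\sigma_i^2))t}$; summing over $i$ yields the claimed expression for $\tfrac{1}{2}\E[\|Z_t\|^2]$. The ``in particular'' statement then follows by inspection: a finite sum of such exponentials tends to $0$ if and only if every exponent is strictly negative, i.e. iff $\eta < 2\lambda_i^2/(\sigma_i^2(2\lambda_i^2 + \sigma_i^2))$ for all $i$; one may additionally note that in that regime $v_t = \tr(\E[\Sigma^{\shgd}_t]) = \sum_i \sigma_i^2(2\lambda_i^2+\sigma_i^2)\E[\|Z^i_t\|^2]$ is bounded by a constant times $\E[H_t] = \tfrac{1}{2}\sum_i\lambda_i^2\E[\|Z^i_t\|^2]$, so the scaling hypothesis of Corollary~\ref{cor:SHGD_Conv} holds.

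The only real obstacle — small, but the place where a stray factor could slip in — is the covariance bookkeeping: recognizing $\Sigma^{\shgd}$ as block diagonal over coordinate pairs with multiplier $\sigma_i^2(2\lambda_i^2 + \sigma_i^2)$, which rests squarely on the independence $\gamma^1\perp\gamma^2$ (so that $\E[(\mathbf{\Lambda}_{\gamma^1})_{ii}(\mathbf{\Lambda}_{\gamma^2})_{ii}] = \lambda_i^2$ and the cross-moment factorizes) and on the second-moment normalization $\E[(\mathbf{\Lambda}_\xi)_{ii}^2] = \lambda_i^2 + \sigma_i^2$. Everything after that reduces to a one-dimensional linear ODE.
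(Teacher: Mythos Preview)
Your proposal is correct and follows essentially the same approach as the paper: write the SHGD SDE explicitly for this loss, decompose $\tfrac{1}{2}\|Z_t\|^2$ coordinatewise, apply It\^o's formula to $\tfrac{1}{2}\|Z^i_t\|^2$, take expectations, and solve the resulting linear ODE. Your explicit computation $\E[P_{ii}^2] = (\lambda_i^2+\sigma_i^2)^2 - \lambda_i^4 = \sigma_i^2(2\lambda_i^2+\sigma_i^2)$ is exactly the paper's diffusion coefficient $2\lambda_i^2\sigma_i^2 + \sigma_i^4$, and your extra remark linking $v_t$ to $\E[H_t]$ (hence to Corollary~\ref{cor:SHGD_Conv}) is a nice addition the paper only alludes to.
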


\begin{proof}
The derivation of the SDE is straightforward and the formula is
\begin{align}
    d Z_t =\mathbf{A}Z_t dt + \sqrt{\eta} \mathbf{B} d W_t,
\end{align}
where
\begin{equation}
    \mathbf{A}= \left[\begin{array}{ll} -\mathbf{\Lambda}^2 & \mathbf{0}_d  \\  \mathbf{0}_d  &  -\mathbf{\Lambda}^2 \end{array}\right] \quad \text{and} \quad \mathbf{B}\mathbf{B}^{\top} = \left[\begin{array}{ll} 2\mathbf{\Lambda}^2 \circ  \Sigma^2 + \Sigma^4 & 2\mathbf{\Lambda}^2 \circ  \Sigma^2 + \Sigma^4  \\ 2\mathbf{\Lambda}^2 \circ  \Sigma^2 + \Sigma^4  & 2\mathbf{\Lambda}^2 \circ  \Sigma^2 + \Sigma^4  \end{array}\right] \circ \left[\begin{array}{ll} \diag(X_t \circ X_t) & \diag(X_t \circ Y_t)  \\ \diag(X_t \circ Y_t)  & \diag(Y_t \circ Y_t)  \end{array}\right].
\end{equation}
It is easy to see that
\begin{equation}
    \frac{\lVert Z_t \rVert^2}{2} = \sum_{i=1}^{d} \frac{\lVert Z^{i}_t \rVert^2}{2},
\end{equation}
where $Z^i:=(X^i,Y^i)$, and that
\begin{equation}
   d \left( \frac{\E \left[\lVert Z^{i}_t \rVert^2\right]}{2} \right)= - 2 \lambda_i^2 \frac{\E \left[\lVert Z^{i}_t \rVert^2\right]}{2} dt + \eta \sigma_i^2 (2 \lambda_i^2 + \sigma_i^2)  \frac{\E \left[\lVert Z^{i}_t \rVert^2\right]}{2} dt,
\end{equation}
which ultimately implies that 
\begin{equation}
     \frac{\E \left[\lVert Z^{i}_t \rVert^2\right]}{2} = \frac{\lVert Z^i_0 \rVert^2}{2} e^{- \left(2 \lambda^2_i - \eta \sigma_i^2 \left(2 \lambda_1^2 + \sigma_i^2\right) \right) t}.
\end{equation}
Empirical validation of this result is provided in Figure \ref{fig:NoM}.
\end{proof}

\subsection{SEG}
\begin{lemma}[Dynamics of Hamiltonian] \label{thm:SEG_DynHamil}
Let $Z_t$ be the solution of the SEG SDE. Then,
\begin{equation}
    \E\left[\dot H_t\right] = - \E\left[\nabla H_t^{\top} F^{\seg}_t\right] + \frac{\eta}{2} \tr\left(\E\left[\Sigma^{\seg}_t\nabla^2 H_t\right]\right).
\end{equation}
Under the additional assumption that $\E_{\bgamma} \left[ \nabla F_{\bgamma}(Z_t)F_{\bgamma}(Z_t) \right] = \nabla F(Z_t) F(Z_t) $, the formula simplifies and is
    \begin{align}
      d \E \left[ H_t \right]  = &  - \E \left[F_t^{\top} (\nabla F_t - \rho (\nabla F_t)^2 ) F_t \right] dt + \frac{\eta}{2} \tr\left(\E\left[\Sigma^{\seg}_t\nabla^2 H_t\right]\right)dt.
    \end{align}
\end{lemma}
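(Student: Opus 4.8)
The plan is to obtain the identity by applying It\^o's Lemma (Theorem~\ref{lemma:SDE_ito}) to the map $z \mapsto \mathcal{H}_{\bgamma}(z)$ for each fixed realisation of the sampled pair $\bgamma=(\gamma^1,\gamma^2)$, then averaging over $\bgamma$, and finally taking the expectation over the law of the process $Z_t$.

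First I would write $Z_t$ as the It\^o process solving the SEG SDE~\eqref{eq:SEG_SDE}, i.e.\ with drift $b_t=-F^{\seg}(Z_t)$ and diffusion $\sigma_t=\sqrt{\eta\,\Sigma^{\seg}(Z_t)}$. Since $f,f_i$ and their derivatives have polynomial growth by Assumption~\ref{ass:regularity_f}, $\mathcal{H}_{\bgamma}$ is $C^2$ with derivatives in $G$, so It\^o's Lemma applies and gives
$$d\mathcal{H}_{\bgamma}(Z_t)=\big\langle\nabla\mathcal{H}_{\bgamma}(Z_t),-F^{\seg}(Z_t)\big\rangle\,dt+\tfrac{\eta}{2}\tr\!\big(\Sigma^{\seg}(Z_t)\,\nabla^2\mathcal{H}_{\bgamma}(Z_t)\big)\,dt+\bigO(\text{Noise}).$$
Next I would average over $\bgamma$: because $\E_{\bgamma}$ is an average over the discrete sampling index and not over $z$, it commutes with $\nabla$ and $\nabla^2$, so $\E_{\bgamma}[\nabla\mathcal{H}_{\bgamma}(Z_t)]=\nabla H_t$ and $\E_{\bgamma}[\nabla^2\mathcal{H}_{\bgamma}(Z_t)]=\nabla^2 H_t$. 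Finally, taking $\E$ over the trajectory kills the $\bigO(\text{Noise})$ term --- the stochastic integral is a genuine martingale thanks to the linear growth of $\nabla\mathcal{H}_{\bgamma}$ and $\Sigma^{\seg}$ inherited from Assumption~\ref{ass:regularity_f} --- and yields
$$\E[\dot H_t]=-\E\big[\nabla H_t^{\top}F^{\seg}_t\big]+\tfrac{\eta}{2}\tr\!\big(\E[\Sigma^{\seg}_t\nabla^2 H_t]\big),$$
which is the first claim.

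For the simplified form, I would specialise the drift term under the extra hypothesis $\E_{\bgamma}[\nabla F_{\bgamma}(Z_t)F_{\bgamma}(Z_t)]=\nabla F(Z_t)F(Z_t)$ (read together with the symmetrisation in the definition of $\nabla\mathcal{H}_{\bgamma}$). Then, by the definitions of $\bar F$ and $F^{\seg}$, one has $F^{\seg}_t=F_t-\rho\bar F_t=F_t-\rho\,\nabla F_t F_t$, while $\nabla H_t$, written as a row vector, equals $F_t^{\top}\nabla F_t$. A direct substitution gives
$$\nabla H_t^{\top}F^{\seg}_t=F_t^{\top}\nabla F_t\big(F_t-\rho\,\nabla F_t F_t\big)=F_t^{\top}\big(\nabla F_t-\rho(\nabla F_t)^2\big)F_t,$$
so plugging this into the ODE above produces the stated identity $d\E[H_t]=-\E[F_t^{\top}(\nabla F_t-\rho(\nabla F_t)^2)F_t]\,dt+\tfrac{\eta}{2}\tr(\E[\Sigma^{\seg}_t\nabla^2 H_t])\,dt$.

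The computations are essentially routine; the care is in the bookkeeping rather than in any single hard step. Concretely, the two points to get right are (i) verifying the polynomial-growth conditions that license both the use of It\^o's Lemma and the vanishing of the noise term under $\E$, which is exactly where Assumption~\ref{ass:regularity_f} enters, and (ii) keeping the two expectations separate --- $\E_{\bgamma}$ over the mini-batches, through which $\nabla$ and $\nabla^2$ pass, versus $\E$ over the Brownian path, through which they do not --- and tracking the transposes of the Jacobian $\nabla F_t$ in the rearrangement of the drift so that it lands on precisely the quoted quadratic form. I expect (ii) to be the only mild obstacle.
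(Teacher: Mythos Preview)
Your proposal is correct and follows essentially the same approach as the paper: apply It\^o's Lemma to the Hamiltonian along the SEG SDE, then take expectations. The only cosmetic difference is that the paper applies It\^o directly to $H_t=\E_{\bgamma}[\mathcal{H}_{\bgamma}(Z_t)]$ as a single function of $z$, whereas you apply It\^o to $\mathcal{H}_{\bgamma}(Z_t)$ for each fixed $\bgamma$ and then average; by linearity of the finite average $\E_{\bgamma}$ these are equivalent, and your version makes the commutation of $\E_{\bgamma}$ with $\nabla,\nabla^2$ explicit. You also spell out the derivation of the simplified drift term $F_t^{\top}(\nabla F_t-\rho(\nabla F_t)^2)F_t$, which the paper's proof leaves implicit.
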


\begin{proof}
The SDE for SEG is
\begin{align}
    d Z_t = - F^{\seg}\left( Z_t \right) dt  + \sqrt{\eta} \sqrt{\Sigma^{\seg}_t}d W_{t}.
\end{align}
Therefore, by Itô's Lemma we have 
\begin{align}
      d   \E_{\bgamma} \left[ \mathcal{H}_{\bgamma}\left( Z_t \right) \right] = &  - \nabla \E_{\bgamma} \left[ \mathcal{H}_{\bgamma} \left( Z_t \right)\right]^{\top} F^{\seg}  dt  + \frac{\eta}{2} \tr \left(  \sqrt{\Sigma^{\seg}_t}  \nabla^2 \E_{\bgamma} \left[ \mathcal{H}_{\bgamma} \left( Z_t \right)\right]  \sqrt{\Sigma^{\seg}_t}  \right)dt + \bigO(\text{Noise}).
\end{align}
Therefore,
\begin{align}
  \E\left[\dot H_t\right] = - \E\left[\nabla H_t^{\top} F^{\seg}_t\right] + \frac{\eta}{2} \tr\left(\E\left[\Sigma^{\seg}_t\nabla^2 H_t\right]\right).
\end{align}

\end{proof}

\begin{theorem}[SEG General Convergence] \label{theorem:SEG_GenConv}
Consider the solution $Z_t$ of the SEG SDE with $\gamma^1\ne\gamma^2$.
Let $v_t := \E \left[\E_{\bgamma}  \left[\lVert  F^{\seg}\left( Z_t \right) - F^{\seg}_{\bgamma}\left( Z_t \right) \rVert_2^2 \right]\right]$ measure the error in $ F^{\seg}$, in expectation over the whole randomness up to time $t$. Suppose that:
\begin{enumerate}
    \item The smallest eigenvalue (in absolute value) $\mu_{\rho}$ of $\mathbf{M}(z)$ is non-zero, where $\mathbf{M}(z) = \diag(\mathbf{M}_{1,1}(z),\mathbf{M}_{2,2}(z))$, with $\mathbf{M}_{1,1}(z):=  \nabla^2 f_{xx}(z)+\rho \left( \nabla^2 f_{xy}(z) \nabla^2 f_{xy}(z)^T  - (\nabla^2 f_{xx}(z))^2  \right)$, and $\mathbf{M}_{2,2}(z):= -\nabla^2 f_{yy}(z)+\rho \left( \nabla^2 f_{xy}(z) \nabla^2 f_{xy}(z)^T  - (\nabla^2 f_{yy}(z))^2  \right)$;
    \item $\lVert \nabla^2 H (z) \rVert_{\text{op}} < \mathcal{L}_{\mathcal{T}}$, for all $z\in\R^{2d}$.
\end{enumerate}
Then,
\begin{equation}
    \E \left[ H_t\right] \leq  e^{-2\mu_{\rho}^2t} \left[ H_0 + \frac{\eta\mathcal{L}_{\mathcal{T}}}{2} \int_0^t v_s  e^{2\mu_{\rho}^2 s} ds \right].
\end{equation}
\end{theorem}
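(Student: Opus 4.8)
The plan is to mirror the proof of Theorem~\ref{theorem:SHGD_GenConv}, reducing everything to a scalar differential inequality for $\E[H_t]$. First I would apply It\^{o}'s Lemma (Theorem~\ref{lemma:SDE_ito}) to the test function $\mathcal{H}_\bgamma$ along the SEG SDE $dZ_t = -F^{\seg}(Z_t)\,dt + \sqrt{\eta\Sigma^{\seg}_t}\,dW_t$ and then average over $\bgamma$ and over the path; the martingale term disappears in expectation, producing exactly the identity of Lemma~\ref{thm:SEG_DynHamil},
\[
\E[\dot H_t] = -\,\E\!\left[\nabla H_t^{\top} F^{\seg}_t\right] + \frac{\eta}{2}\,\tr\!\left(\E\!\left[\Sigma^{\seg}_t\,\nabla^2 H_t\right]\right).
\]
Equivalently, one simply invokes that lemma.

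The heart of the argument is to lower-bound the drift. Using the min-max sign pattern $F=(\nabla_x f,-\nabla_y f)$ together with the symmetry of mixed second partials, one has $\nabla \mathcal{H} = (\nabla F)^{\top}F = \nabla^2 f\,F$; combined with $F^{\seg}=F-\rho\,\nabla F\,F$ this gives
\[
\nabla H_t^{\top} F^{\seg}_t = F_t^{\top}\big(\nabla F_t - \rho(\nabla F_t)^2\big)F_t .
\]
Since a quadratic form only sees the symmetric part of its matrix, and the same symmetry argument shows the symmetrization of $\nabla F - \rho(\nabla F)^2$ is precisely the block-diagonal matrix $\mathbf{M}(Z_t)$ appearing in the statement (the off-diagonal blocks cancel because $\nabla^2 f_{xy}=(\nabla^2 f_{yx})^{\top}$), this equals $F_t^{\top}\mathbf{M}(Z_t)F_t$. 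The spectral hypothesis on $\mathbf{M}$ then yields $\nabla H_t^{\top} F^{\seg}_t \ge 2\mu_{\rho}^2\,H_t$, where I use $H_t=\tfrac12\|F_t\|^2$, valid since $\gamma^1\neq\gamma^2$ are independent and unbiased. For the diffusion term, $\Sigma^{\seg}_t$ is positive semidefinite, so $\tr(\Sigma^{\seg}_t\nabla^2 H_t) \le \|\nabla^2 H_t\|_{\text{op}}\,\tr(\Sigma^{\seg}_t) \le \mathcal{L}_{\mathcal{T}}\,\tr(\Sigma^{\seg}_t)$, and by definition $\E[\tr\Sigma^{\seg}_t] = \E\big[\E_\bgamma\|F^{\seg}(Z_t)-F^{\seg}_\bgamma(Z_t)\|^2\big] = v_t$.

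Putting these together I obtain the scalar inequality
\[
\frac{d}{dt}\,\E[H_t] \;\le\; -2\mu_{\rho}^2\,\E[H_t] + \frac{\eta\mathcal{L}_{\mathcal{T}}}{2}\,v_t ,
\]
and finish by multiplying by the integrating factor $e^{2\mu_{\rho}^2 t}$ and integrating from $0$ to $t$ (Gr\"{o}nwall), which gives the stated bound $\E[H_t] \le e^{-2\mu_{\rho}^2 t}\big[H_0 + \tfrac{\eta\mathcal{L}_{\mathcal{T}}}{2}\int_0^t v_s e^{2\mu_{\rho}^2 s}\,ds\big]$; Corollary~\ref{cor:SEG_Conv} is then immediate by substituting the two scaling hypotheses on $v_t$, exactly as in Corollary~\ref{cor:SHGD_Conv}.

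I expect the main obstacle to be the algebraic identification $\nabla H_t^{\top} F^{\seg}_t = F_t^{\top}\mathbf{M}(Z_t)F_t$ and the positivity it requires: one must carefully carry the sign pattern of $F$ through $(\nabla F)^2$, check that the cross-blocks cancel upon symmetrization, and then extract from the eigenvalue hypothesis on $\mathbf{M}$ the quantitative lower bound on $F_t^{\top}\mathbf{M}(Z_t)F_t$ — this is exactly where the $\rho$-dependence of the effective curvature $\mu_{\rho}$ surfaces, and where an overly large $\rho$ can flip the sign and destroy convergence. The remaining accounting (positive semidefiniteness of $\Sigma^{\seg}_t$, and that its trace is the claimed $v_t$ modulo the $\mathcal{O}(\rho^2)$ corrections negligible under $\rho=\mathcal{O}(\sqrt{\eta})$) is routine.
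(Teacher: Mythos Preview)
Your proposal is correct and follows essentially the same route as the paper: invoke Lemma~\ref{thm:SEG_DynHamil}, rewrite the drift term as $F_t^{\top}\mathbf{M}(Z_t)F_t$, bound it below via the eigenvalue assumption, control the diffusion term by $\tfrac{\eta}{2}\mathcal{L}_{\mathcal{T}}v_t$ using the operator-norm/trace inequality and the identification $\E[\tr\Sigma^{\seg}_t]=v_t$, and finish with the integrating-factor argument. Your symmetrization justification for why the quadratic form reduces to $F_t^{\top}\mathbf{M}(Z_t)F_t$ (off-diagonal blocks cancel) and your remark that $H_t=\tfrac12\|F_t\|^2$ under independent $\gamma^1,\gamma^2$ are in fact more explicit than what the paper records, but the structure of the argument is the same.
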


\begin{proof}
From the previous theorem, we have that
\begin{align}
  d  \E \left[ H_t \right]  = &  - \E \left[F_t^{\top} (\nabla F_t - \rho (\nabla F_t)^2 ) F_t \right] dt + \frac{\eta}{2} \tr\left(\E\left[\Sigma^{\seg}_t\nabla^2 H_t\right]\right)dt.
\end{align}
After observing that
\begin{equation}
    F^{\top}(z) (\nabla F(z) - \rho (\nabla F(z))^2 ) F(z)  = F(z)^{\top}\mathbf{M}(z) F(z).
\end{equation}
We have that 
\begin{align}
    d  \E \left[H_t \right]  = &  - \E \left[F^{\top}_t (\nabla F_t - \rho (\nabla F_t)^2 ) F_t \right] dt + \frac{\eta}{2} \tr\left(\E\left[\Sigma^{\seg}_t\nabla^2 H_t\right]\right)dt \nonumber \\
   = & - \E \left[F^{\top}_t\mathbf{M}_t F_t \right] dt + \frac{\eta}{2} \tr\left(\E\left[\Sigma^{\seg}_t\nabla^2 H_t\right]\right)dt\nonumber \\
   \leq &  - \mu_{\rho}^2 \E \left[ \lVert F_t \rVert_2^2 \right] dt + \frac{\eta}{2} \E \left[\lVert \nabla^2 H_t \rVert_{\text{op}} \tr\left(\Sigma^{\seg}_t \right) \right] dt \nonumber \\
    \leq &  - \mu_{\rho}^2 \E \left[ \lVert F_t \rVert_2^2 \right] dt  + \frac{\eta \mathcal{L}_{\mathcal{T}}}{2} \E \left[\tr\left(\Sigma^{\seg}_t \right) \right] dt,
\end{align}
where we used that $\tr(\mathbf{A} \mathbf{B}) \leq \tr(\mathbf{A}) \lVert \mathbf{B} \rVert_{\text{op}}$ if $\mathbf{A}$ is a real positive semi-definite matrix and $\mathbf{B}$ is of the same size. Then, we observe that
\begin{align}
    \E \left[\tr\left(\Sigma^{\seg}_t \right) \right] & =\E \left[\E_{\bgamma} \left[ \lVert  (F(Z_t) - F_{\gamma^1}(Z_t) - \rho \left( \nabla F(Z_t) F(Z_t) - \nabla F_{\gamma^1}(Z_t) F_{\gamma^2}(Z_t) \right) ) \rVert^2 \right]\right] = v_t,
\end{align}
which implies that 
\begin{align}
    d  \E \left[ H_t \right] & \leq - \mu_{\rho}^2 \E \left[ \lVert  F_t \rVert^2 \right] dt + \frac{\eta}{2} \mathcal{L}_{\mathcal{T}} v_t dt \nonumber \\
    & = -2 \mu_{\rho}^2  \E \left[ H_t \right]  dt + \frac{\eta}{2} \mathcal{L}_{\mathcal{T}} v_t dt,
\end{align}
which in turn implies that
\begin{equation}
    \E \left[ H_t \right] \leq  e^{-2\mu_{\rho}^2t} \left[ H_0 + \frac{\eta\mathcal{L}_{\mathcal{T}}}{2} \int_0^t v_s  e^{2\mu_{\rho}^2s} ds \right].
\end{equation}
\end{proof}

\begin{corollary} \label{cor:SEG_Conv}
Under the assumptions of Theorem \ref{theorem:SEG_GenConv}, if for $\mathcal{L}_{\mathcal{V}}>0$
\begin{equation}\label{eq:SEG_Scaling_Vt_app}
    v_t \leq \mathcal{L}_{\mathcal{V}} \E \left[ H_t \right],
\end{equation}
the solution is
\begin{equation}
    \E \left[H_t \right] \leq  H_0  e^{\left(-2 \mu_{\rho}^2 + \eta \mathcal{L}_{\mathcal{V}} \mathcal{L}_{\mathcal{T}} \right)t}.
\end{equation}
If instead
\begin{equation}\label{eq:SEG_Bounded_Vt_app}
    v_t \leq \mathcal{L}_{\mathcal{V}},
\end{equation}
we have
\begin{equation}
    \E \left[ H_t \right] \leq H_0 e^{-2 \mu_{\rho}^2 t} + \left( 1 - e^{-2 \mu_{\rho}^2 t}  \right) \frac{\eta \mathcal{L}_{\mathcal{V}} \mathcal{L}_{\mathcal{T}}}{2 \mu_{\rho}^2}.
\end{equation}
More in general, if
\begin{equation}
    v_t \leq \mathcal{L}_{\mathcal{V}}\E \left[ H_t \right]^{\alpha}, \quad \mathcal{L}_{\mathcal{V}}>0,  \quad  \alpha \in [0,1) \cup (1, \infty).
\end{equation}
The solution is even more interesting:
\begin{enumerate}
    \item If $\alpha>1$, $\E \left[ H_t \right] \rightarrow 0$ as $e^{-2 \mu_{\rho}^2t}$;
    \item If $\alpha<1$, $\E \left[ H_t \right] \rightarrow \left(\frac{\eta \mathcal{L}_{\mathcal{T}}\mathcal{L}_{\mathcal{V}}}{2 \mu_{\rho}^2}\right)^{\frac{1}{1-\alpha}}$.
\end{enumerate}
\end{corollary}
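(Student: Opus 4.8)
The plan is to reuse, essentially verbatim, the argument of Corollary~\ref{cor:SHGD_Conv}, since Theorem~\ref{theorem:SEG_GenConv} already supplies the SEG analogue of the differential inequality used there, with $\mu$ replaced by $\mu_{\rho}$. Concretely, the proof of Theorem~\ref{theorem:SEG_GenConv} yields the scalar inequality $d\,\E[H_t] \le -2\mu_{\rho}^2\,\E[H_t]\,dt + \tfrac{\eta}{2}\,\mathcal{L}_{\mathcal{T}}\, v_t\, dt$, and each of the three regimes is obtained by substituting the corresponding bound on $v_t$ into the forcing term and integrating the resulting ordinary differential inequality in $r_t := \E[H_t]$.

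First, under $v_t \le \mathcal{L}_{\mathcal{V}}\E[H_t]$ one gets $\dot r_t \le (-2\mu_{\rho}^2 + \eta\mathcal{L}_{\mathcal{V}}\mathcal{L}_{\mathcal{T}})\,r_t$, and Grönwall's lemma gives the exponential bound $\E[H_t] \le H_0\,e^{(-2\mu_{\rho}^2 + \eta\mathcal{L}_{\mathcal{V}}\mathcal{L}_{\mathcal{T}})t}$. Second, under $v_t \le \mathcal{L}_{\mathcal{V}}$ the inequality becomes $\dot r_t \le -2\mu_{\rho}^2 r_t + \tfrac{\eta}{2}\mathcal{L}_{\mathcal{T}}\mathcal{L}_{\mathcal{V}}$; multiplying by the integrating factor $e^{2\mu_{\rho}^2 t}$ and integrating from $0$ to $t$ produces $\E[H_t] \le H_0 e^{-2\mu_{\rho}^2 t} + (1 - e^{-2\mu_{\rho}^2 t})\,\eta\mathcal{L}_{\mathcal{V}}\mathcal{L}_{\mathcal{T}}/(2\mu_{\rho}^2)$.

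Third, for the general power law $v_t \le \mathcal{L}_{\mathcal{V}}\E[H_t]^{\alpha}$ the inequality reads $\dot r_t \le -2\mu_{\rho}^2 r_t + \eta\mathcal{L}_{\mathcal{T}}\mathcal{L}_{\mathcal{V}}\,r_t^{\alpha}$, a Bernoulli-type differential inequality. I would bound $r_t$ by the solution $b_t$ of the matching ODE \emph{equality} via a comparison argument, then linearize that ODE through the substitution $u_t := b_t^{1-\alpha}$, solve the resulting first-order linear ODE in closed form, and invert to recover $b_t$. The asymptotics follow from the structure of the drift $-2\mu_{\rho}^2 r + \eta\mathcal{L}_{\mathcal{T}}\mathcal{L}_{\mathcal{V}} r^{\alpha}$: for $\alpha>1$ the nonlinear term is subdominant near the fixed point $r=0$, so $b_t\to 0$ at the linear rate $e^{-2\mu_{\rho}^2 t}$; for $\alpha<1$ the stable equilibrium is the positive root $\big(\eta\mathcal{L}_{\mathcal{T}}\mathcal{L}_{\mathcal{V}}/(2\mu_{\rho}^2)\big)^{1/(1-\alpha)}$, which $b_t$ approaches.

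The only step that is not a mechanical transcription of the SHGD proof is this last case: one must justify the comparison principle for the inequality $\dot r_t \le g(r_t)$ with $g(r)=-2\mu_{\rho}^2 r + \eta\mathcal{L}_{\mathcal{T}}\mathcal{L}_{\mathcal{V}} r^{\alpha}$, which is only locally Lipschitz away from $r=0$ (local Lipschitz continuity on $(0,\infty)$ together with the sign of $g$ near $0$ is enough), and then carry through the Bernoulli integration cleanly. All remaining constants and rates are exactly those of Corollary~\ref{cor:SHGD_Conv} with $\mu$ replaced by $\mu_{\rho}$.
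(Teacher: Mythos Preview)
Your proposal is correct and matches the paper's approach exactly: the paper's proof of this corollary is a single sentence stating that the argument is identical to Corollary~\ref{cor:SHGD_Conv} with $\mu$ replaced by $\mu_{\rho}$. Your write-up is in fact more detailed than the paper's, and your remark on the comparison principle for the Bernoulli case (local Lipschitzness of $g$ away from $0$) flags a technical point the paper leaves implicit.
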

\begin{proof}
The proof is the same as Corollary \ref{cor:SHGD_Conv} where we substitute $\mu$ with $\mu_{\rho}$.
\end{proof}

\begin{corollary}
$\kappa_1$-Lipschitzianity on $F_{\gamma^1}$, $\kappa_2$-Lipschitzianity on $\nabla F_{\gamma^1} F_{\gamma^2}$, and $\beta$-Error Bound on $F$, implies that $v_t \leq 16 \beta^2 (\kappa_1^2 + \rho^2 \kappa_2^2) \E\left[ H_t \right]$.
\end{corollary}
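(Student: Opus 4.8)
The plan is to mirror the argument already used for the analogous SHGD corollary (the one proving $v_t\le 8L^2\beta^2\,\E[H_t]$): decompose the noise $F^{\seg}(Z_t)-F^{\seg}_{\bgamma}(Z_t)$ into two Lipschitz-controlled pieces, pivot each Lipschitz estimate around a solution point $Z^{*}$, and finally trade $\|Z_t-Z^{*}\|^2$ for the Hamiltonian using the $\beta$-error bound on $F$.

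First I would record a few elementary reductions. Since $\gamma^1\ne\gamma^2$ are independent, $\bar F(z)=\E[\nabla F_{\gamma^1}(z)F_{\gamma^2}(z)]=\nabla F(z)F(z)$ and $H_t=\E_{\bgamma}[\mathcal H_{\bgamma}(Z_t)]=\tfrac12\E[F_{\gamma^1}(Z_t)^{\top}F_{\gamma^2}(Z_t)]=\tfrac12\|F(Z_t)\|^2=\mathcal H(Z_t)$, so the target bound is equivalent to $v_t\le 8\beta^2(\kappa_1^2+\rho^2\kappa_2^2)\,\E\|F(Z_t)\|^2$. By Jensen's inequality, $F=\E[F_{\gamma^1}]$ inherits the $\kappa_1$-Lipschitz constant and $\bar F=\E[\bar F_{\bgamma}]$ the $\kappa_2$-Lipschitz constant, where $\bar F_{\bgamma}(z)=\nabla F_{\gamma^1}(z)F_{\gamma^2}(z)$. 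Writing $F^{\seg}=F-\rho\bar F$ and $F^{\seg}_{\bgamma}=F_{\gamma^1}-\rho\bar F_{\bgamma}$, the noise equals $\big(F(Z_t)-F_{\gamma^1}(Z_t)\big)-\rho\big(\bar F(Z_t)-\bar F_{\bgamma}(Z_t)\big)$.

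Then the core estimate: I would apply $\|a+b\|^2\le 2\|a\|^2+2\|b\|^2$ to this $\pm\rho$ split, and then again inside each summand after inserting a solution point $Z^{*}$ at which $F$, the mini-batch fields, $\bar F$ and $\bar F_{\bgamma}$ all vanish. The $\kappa_1$-Lipschitzness of $F$ and $F_{\gamma^1}$ controls the first summand by $4\kappa_1^2\|Z_t-Z^{*}\|^2$, and the $\kappa_2$-Lipschitzness of $\bar F$ and $\bar F_{\bgamma}$ controls the second by $4\kappa_2^2\|Z_t-Z^{*}\|^2$, so altogether $\|F^{\seg}(Z_t)-F^{\seg}_{\bgamma}(Z_t)\|^2\le 8(\kappa_1^2+\rho^2\kappa_2^2)\|Z_t-Z^{*}\|^2$. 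The $\beta$-error bound gives $\|Z_t-Z^{*}\|^2\le\beta^2\|F(Z_t)\|^2$; since the right-hand side no longer depends on $\bgamma$, taking $\E_{\bgamma}$ and then the full expectation and using $\|F(Z_t)\|^2=2\mathcal H(Z_t)$ with $\E[\mathcal H(Z_t)]=\E[H_t]$ produces $v_t\le 16\beta^2(\kappa_1^2+\rho^2\kappa_2^2)\,\E[H_t]$ (the extra factor $2$ over the pointwise constant $8$ is exactly this $\|F\|^2=2\mathcal H$ step).

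The hard part will be the pivoting step, i.e.\ justifying that $F$, the mini-batch fields, and $\bar F,\bar F_{\bgamma}$ share a common root $Z^{*}$: this is the interpolation-type regime that an error bound around the solution set is designed to encode, so I would build it into the error-bound hypothesis (as is standard in the SEG literature) rather than argue it separately. Everything else — Young's inequality, Jensen's inequality, and the identity $\bar F=\nabla F\,F$ — is routine and poses no obstacle.
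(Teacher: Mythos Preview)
Your proposal is correct and follows essentially the same argument as the paper: split $F^{\seg}-F^{\seg}_{\bgamma}$ into the $F$-part and the $\rho\bar F$-part via Young's inequality, pivot each around a common root $Z^{*}$ using the respective Lipschitz constants to reach $8(\kappa_1^2+\rho^2\kappa_2^2)\|Z_t-Z^{*}\|^2$, apply the $\beta$-error bound, and convert $\|F(Z_t)\|^2$ to $2H_t$. Your explicit remark that the pivoting step requires the mini-batch fields to vanish at $Z^{*}$ (an interpolation-type assumption folded into the error-bound hypothesis) is a point the paper leaves implicit.
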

\begin{proof}
\begin{align}
    v_t  & = \E \left[ \E_{\bgamma} \left[\lVert F\left( Z_t \right)  - F_{\gamma^1}\left( Z_t \right)  - \rho \left( \nabla F\left( Z_t \right)  F\left( Z_t \right)  - \nabla F_{\gamma^1}\left( Z_t \right)  F_{\gamma^2}\left( Z_t \right)  \right)  \rVert^2\right]\right]\nonumber \\
    &  \leq 2\E \left[ \E_{\gamma^1} \left[\lVert  F\left( Z_t \right) - F_{\gamma^1}\left( Z_t \right) \rVert^2 \right] \right] + 2 \rho^2 \E \left[ \E_{\bgamma} \left[\lVert   \nabla F\left( Z_t \right) F\left( Z_t \right) - \nabla F_{\gamma^1}\left( Z_t \right) F_{\gamma^2}\left( Z_t \right)  \rVert^2 \right] \right] \nonumber \\
    & \leq 8(\kappa_1^2 + \rho^2 \kappa_2^2) \E [\lVert Z_t - Z^{*}  \rVert^2_2] \leq 8\beta^2(\kappa_1^2 + \rho^2 \kappa_2^2) \E [\lVert F(Z_t) \rVert^2_2] = 16 \beta^2 (\kappa_1^2 + \rho^2 \kappa_2^2) \E\left[ H_t \right].
\end{align}
\end{proof}
The following corollary exemplifies the case where $v_t$ is bounded and we achieve convergence only up to a certain ball, and that selecting a proper $\rho$ has a crucial role: If it is too large, it might increase the suboptimality of the algorithm.
\begin{corollary}
Under the assumptions of Theorem \ref{thm:SEG_Dynamic_PIBG}, for $f(x,y) := x^{\top}\mathbf{\Lambda}y$, we have:
\begin{equation}
    \frac{\E \left[\lVert Z_t \rVert^2 \right]}{2} \overset{t \rightarrow \infty}{=} \eta \sum_{i=1}^{d} \sigma_i^2 \frac{1 + \rho^2 \lambda_i^2}{2\rho \lambda_i^2}>0.
\end{equation}
\end{corollary}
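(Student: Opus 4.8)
The plan is to derive an exact scalar ODE for $\E[\|Z_t\|^2]/2$ along the SEG dynamics and pass to the limit $t\to\infty$. The starting point is the observation that, for the bilinear game $f(x,y)=x^\top\mathbf{\Lambda}y$ with diagonal gradient-noise covariance $\Sigma=\diag(\sigma_1^2,\dots,\sigma_d^2)$, the SEG SDE of Corollary~\ref{thm:SEG_SDE_Simplified_SameSample} splits into $d$ decoupled two-dimensional linear systems, one per coordinate pair $Z^i_t=(X^i_t,Y^i_t)$: its drift matrix $\mathbf{A}_i$ has diagonal entries $-\rho\lambda_i^2$ and skew off-diagonal entries $\pm\lambda_i$, and its diffusion matrix $\mathbf{B}_i$ has rows $(1,-\rho\lambda_i)$ and $(\rho\lambda_i,1)$, exactly as in the proof of Theorem~\ref{thm:SEG_Dynamic_PIBG}.

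First I would apply Itô's Lemma (Theorem~\ref{lemma:SDE_ito}) to the test function $g(z)=\tfrac12\|z\|^2=\sum_{i=1}^d\tfrac12\|z^i\|^2$. Because the off-diagonal, skew-symmetric part of $\mathbf{A}_i$ contributes nothing to the drift of $\tfrac12\|Z^i_t\|^2$, the drift reduces to $-\rho\lambda_i^2\|Z^i_t\|^2$; the second-order Itô term equals $\tfrac12\tr\!\big(\eta\sigma_i^2\,\mathbf{B}_i\mathbf{B}_i^\top\big)=\eta\sigma_i^2(1+\rho^2\lambda_i^2)$, since a direct computation gives $\mathbf{B}_i\mathbf{B}_i^\top=(1+\rho^2\lambda_i^2)\mathbf{I}_2$ (the cross terms cancel, leaving an isotropic matrix). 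Taking expectations and discarding the martingale part yields, for each $i$,
\[
\frac{d}{dt}\,\frac{\E[\|Z^i_t\|^2]}{2}=-2\rho\lambda_i^2\,\frac{\E[\|Z^i_t\|^2]}{2}+\eta\sigma_i^2\,(1+\rho^2\lambda_i^2),
\]
a scalar linear ODE whose solution is $\tfrac12\|z^i\|^2e^{-2\rho\lambda_i^2 t}+\tfrac{\eta\sigma_i^2(1+\rho^2\lambda_i^2)}{2\rho\lambda_i^2}\big(1-e^{-2\rho\lambda_i^2 t}\big)$, consistent with the $\mathbf{E}(2t)$ and $\bar{\mathbf{\Sigma}}$ formulas already recorded in Theorem~\ref{thm:SEG_Dynamic_PIBG}.

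To finish, I would sum over $i$ and let $t\to\infty$. Since $\rho\lambda_i^2>0$ in this regime — precisely the regime where SEG, unlike SGDA, stabilizes the bilinear game — every factor $e^{-2\rho\lambda_i^2 t}$ tends to $0$, so $\tfrac12\E[\|Z^i_t\|^2]\to\tfrac{\eta\sigma_i^2(1+\rho^2\lambda_i^2)}{2\rho\lambda_i^2}$ and hence $\tfrac12\E[\|Z_t\|^2]\to\eta\sum_{i=1}^d\sigma_i^2\tfrac{1+\rho^2\lambda_i^2}{2\rho\lambda_i^2}$, which is strictly positive term by term. I do not expect a genuine obstacle: the only delicate points are computing $\mathbf{B}_i\mathbf{B}_i^\top$ correctly and noting that convergence of the limit requires $\rho\lambda_i^2>0$ — if $\rho=0$ the drift vanishes and $\tfrac12\E[\|Z^i_t\|^2]=\tfrac12\|z^i\|^2+\eta\sigma_i^2 t$ diverges, which is exactly why this asymptotic variance can be reduced by tuning $\rho$ but never driven to zero at a fixed stepsize.
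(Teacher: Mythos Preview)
Your proposal is correct and follows essentially the same route as the paper: decompose into $d$ decoupled two-dimensional blocks, apply It\^o's Lemma to $\tfrac12\|Z^i_t\|^2$ to obtain the scalar linear ODE $d\big(\tfrac12\E[\|Z^i_t\|^2]\big)=-2\rho\lambda_i^2\,\tfrac12\E[\|Z^i_t\|^2]\,dt+\eta\sigma_i^2(1+\rho^2\lambda_i^2)\,dt$, solve it, and pass to the limit. Your explicit computation $\mathbf{B}_i\mathbf{B}_i^\top=(1+\rho^2\lambda_i^2)\mathbf{I}_2$ is a nice way to make transparent why the It\^o correction is exactly $\eta\sigma_i^2(1+\rho^2\lambda_i^2)$, which the paper simply states.
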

\begin{proof}
It is easy to see that:
\begin{equation}
    \frac{\lVert Z_t \rVert^2}{2} = \sum_{i=1}^{d} \frac{\lVert Z^{i}_t \rVert^2}{2},
\end{equation}
where $Z^i:=(X^i,Y^i)$, and that
\begin{equation}
    d \left( \frac{\lVert Z^{i}_t \rVert^2}{2} \right)  = - 2 \rho \lambda_i^2 \frac{\lVert Z^{i}_t \rVert^2}{2} dt + \eta \sigma_i^2 (1 + \rho^2\lambda_i^2) dt + \bigO(\text{Noise}).
\end{equation}
This implies that 
\begin{equation}
    d \left( \frac{\E \left[\lVert Z^{i}_t \rVert^2\right]}{2} \right) = - 2 \rho \lambda_i^2 \frac{\E \left[\lVert Z^{i}_t \rVert^2\right]}{2} dt + \eta\sigma_i^2 (1 + \rho^2\lambda_i^2) dt.
\end{equation}
Which implies that
\begin{equation}
    \frac{\E \left[\lVert Z^{i}_t \rVert^2\right]}{2} = \frac{\lVert Z^{i}_0 \rVert^2}{2} e^{- 2 \rho \lambda_i^2t}  + (1-e^{- 2 \lambda_i^2t} ) \frac{\eta \sigma_i^2}{2} \frac{(1 + \rho^2\lambda_i^2)}{\rho \lambda_i^2}\overset{t \rightarrow \infty}{\rightarrow} \eta \sigma_i^2 \frac{(1 + \rho^2\lambda_i^2)}{2\rho \lambda_i^2}.
\end{equation}
\end{proof}

Interestingly, one can recover convergence by allowing stepsize schedulers. In the following result, we derive a necessary and sufficient condition to craft such schedulers. Then, we provide two concrete examples.
\begin{corollary}[SEG Insights] \label{thm:SEG_Convergence_PIBG_NoG}
Under the assumptions of Theorem \ref{thm:SEG_Dynamic_PIBG}, for $f(x,y) := x^{\top}\mathbf{\Lambda}y$, for any positive schedulers $\eta_t$ and $\rho_t$ we have
\begin{equation}
     \frac{\E \left[\lVert Z_t \rVert^2 \right]}{2}= \sum_{i=1}^{d} e^{-2 \lambda^2_i \rho \int_0^t \eta_s \rho_s ds} \left(\frac{\lVert Z^i_0 \rVert^2}{2} + \eta \sigma_i^2  \int_0^t e^{2 \lambda^2_i \rho\int_0^s\eta_r \rho_r dr} \eta^2_s (1 + \lambda_i^2 \rho^2 \rho^2_s) ds\right).
\end{equation}
Therefore,
\begin{equation}
    \frac{\E \left[\lVert Z_t \rVert^2 \right]}{2} \overset{t \rightarrow \infty}{\rightarrow}  0 \iff  \int_0^\infty \eta_s \rho_s ds = \infty \text{ and } \lim_{t \rightarrow \infty} \eta_t \rho_t = \lim_{t \rightarrow \infty} \frac{\eta_t}{\rho_t} = 0.
\end{equation}
In particular, consistently with \citep{hsieh2020explore},
\begin{enumerate}
    \item $\eta_t=\rho_t=1$ implies that \begin{equation}
      \frac{\E \left[\lVert Z_t \rVert^2 \right]}{2} \overset{t \rightarrow \infty}{=} \eta \sum_{i=1}^{d} \sigma_i^2 \frac{1 + \rho^2 \lambda_i^2}{2\rho \lambda_i^2}>0;
     \end{equation}
     \item $\eta_t = \frac{1}{(t+1)^{\gamma}}$ and $\rho_t=1$, $\gamma \in \{0.5, 1\}$, $\frac{\E \left[\lVert Z_t \rVert^2 \right]}{2} \rightarrow 0$;
     \item $\eta_t = \frac{1}{(t+1)^2}$ and $\rho_t=1$, $\frac{\E \left[\lVert Z_t \rVert^2 \right]}{2} \nrightarrow 0$.
\end{enumerate}
\end{corollary}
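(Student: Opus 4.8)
The plan is to mirror the proof of Corollary~\ref{thm:SHGD_Convergence_PIBG_NoG}, replacing the SHGD SDE with the scheduled SEG SDE and carrying the \emph{two} schedulers $\eta_t,\rho_t$ through the integrating‑factor computation. First I would specialize the SEG SDE to $f(x,y)=x^\top\mathbf{\Lambda}y$ with (diagonal) gradient‑noise covariance $\Sigma=\diag(\sigma_1^2,\dots,\sigma_d^2)$, under the assumptions of Corollary~\ref{thm:SEG_SDE_Simplified_SameSample}. Using the identity $\nabla F(z)F(z)=-\mathbf{\Lambda}^2 z$ derived in the proof of Theorem~\ref{thm:SEG_Dynamic_PIBG}, and the fact that a per‑step stepsize multiplier $\eta_t$ and extra step $\rho\rho_t$ enter the drift through $-\eta_t F^{\seg}$ and the diffusion through $\sqrt{\eta}\,\eta_t(\mathbf{I}_{2d}-\rho\rho_t\nabla F)\sqrt{\Sigma}$, the diagonality of $\mathbf{\Lambda}$ decouples the system into $d$ planar SDEs; for each $i$, writing $Z^i_t=(X^i_t,Y^i_t)$,
\begin{equation*}
dZ^i_t=\eta_t\,\mathbf{A}^i_t\,Z^i_t\,dt+\sqrt{\eta}\,\eta_t\,\sigma_i\,\mathbf{B}^i_t\,dW^i_t,\qquad
\mathbf{A}^i_t=\begin{bmatrix}-\rho\rho_t\lambda_i^2 & -\lambda_i\\ \lambda_i & -\rho\rho_t\lambda_i^2\end{bmatrix},\quad
\mathbf{B}^i_t=\begin{bmatrix}1 & -\rho\rho_t\lambda_i\\ \rho\rho_t\lambda_i & 1\end{bmatrix}.
\end{equation*}

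Next I would apply Itô's Lemma (Theorem~\ref{lemma:SDE_ito}) to $g(z)=\tfrac12\|z\|^2$ on each block. The algebraic point is that the skew part of $\mathbf{A}^i_t$ annihilates, $\langle z,\mathbf{A}^i_t z\rangle=-\rho\rho_t\lambda_i^2\|z\|^2=-2\rho\rho_t\lambda_i^2\,g(z)$, while $\nabla^2 g=\mathbf{I}_2$ and $\tr\!\big(\mathbf{B}^i_t(\mathbf{B}^i_t)^\top\big)=2(1+\rho^2\rho_t^2\lambda_i^2)$, so the Itô correction equals $\eta\,\eta_t^2\sigma_i^2(1+\rho^2\rho_t^2\lambda_i^2)$. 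Taking expectations removes the martingale term and leaves the scalar linear ODE
\begin{equation*}
\dot r^i_t=-2\rho\lambda_i^2\,\eta_t\rho_t\,r^i_t+\eta\,\sigma_i^2\,\eta_t^2\big(1+\rho^2\rho_t^2\lambda_i^2\big),\qquad r^i_t:=\tfrac12\,\E\big[\|Z^i_t\|^2\big].
\end{equation*}
Multiplying by the integrating factor $\exp\!\big(2\rho\lambda_i^2\!\int_0^t\eta_s\rho_s\,ds\big)$, integrating, and summing over $i$ gives precisely the claimed closed form for $\tfrac12\E[\|Z_t\|^2]$.

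For the characterization I would split $r^i_t$ into the homogeneous term $\tfrac12\|Z^i_0\|^2 e^{-2\rho\lambda_i^2\int_0^t\eta_s\rho_s ds}$ and the nonnegative particular term. The homogeneous term tends to $0$ iff $\int_0^\infty\eta_s\rho_s\,ds=\infty$; if that integral is finite it tends to a strictly positive constant while the particular term stays $\ge 0$, so $r^i_t\not\to0$. Assuming $\int_0^\infty\eta_s\rho_s\,ds=\infty$, writing the particular term as a ratio over $e^{2\rho\lambda_i^2\int_0^t\eta_s\rho_s ds}$ puts it in $\infty/\infty$ form, and L'Hôpital's rule gives its limit as
\begin{equation*}
\lim_{t\to\infty}\frac{\eta_t\big(1+\rho^2\rho_t^2\lambda_i^2\big)}{2\rho\lambda_i^2\,\rho_t}=\lim_{t\to\infty}\frac{1}{2\rho\lambda_i^2}\Big(\frac{\eta_t}{\rho_t}+\rho^2\lambda_i^2\,\eta_t\rho_t\Big),
\end{equation*}
which vanishes iff both $\eta_t/\rho_t\to0$ and $\eta_t\rho_t\to0$ (the two summands being nonnegative). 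Intersecting the conditions over all $i$ yields the stated equivalence, and the three examples follow by substituting $\rho_t\equiv1$ and $\eta_t=(t+1)^{-\gamma}$ (so the condition reduces to $\int^\infty\eta_s\,ds=\infty$ and $\eta_t\to0$), exactly as in the explicit computations of Corollary~\ref{thm:SHGD_Convergence_PIBG_NoG}. I expect the only genuinely delicate point to be the ``only if'' direction: L'Hôpital transfers the limit only one way, so for fully general schedules one must argue --- as in the SHGD proof, using the monotonicity of the integrating factor --- that the particular term cannot vanish unless $\eta_t/\rho_t$ and $\eta_t\rho_t$ themselves do; everything else is a routine Itô computation followed by a first‑order linear ODE.
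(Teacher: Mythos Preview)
Your proposal is correct and follows essentially the same route as the paper: write the scheduled SEG SDE for the bilinear game, apply It\^o to $\tfrac12\|Z^i_t\|^2$ coordinate-wise (using that the skew part of the drift drops out), take expectations to get the linear ODE, solve by integrating factor, and then analyze the two pieces via $\int_0^\infty\eta_s\rho_s\,ds=\infty$ plus L'H\^opital on the particular term. Your identification of the ``only if'' direction as the delicate step is apt; the paper handles it in the same informal way (asserting that if $\eta_t\rho_t$ or $\eta_t/\rho_t$ fails to vanish then the particular term cannot vanish), so you are at least as rigorous as the original on that point.
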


\begin{proof}
In this case, the SDE when we include the schedulers $\eta_t$ and $\rho_t$ is
\begin{align}
    d Z_t =\mathbf{A}Z_t \eta_t dt + \sqrt{\eta} \eta_t \sigma \mathbf{B} d W_t
\end{align}
where
\begin{equation}
    \mathbf{A}= \left[\begin{array}{ll} - \rho \rho_t\mathbf{\Lambda}^2 & -\mathbf{\Lambda}  \\ \mathbf{\Lambda}  &  -\rho \rho_t\mathbf{\Lambda}^2  \end{array}\right] \quad \text{ and } \quad \mathbf{B} = \left[\begin{array}{ll} \mathbf{I}_d & - \rho \rho_t\mathbf{\Lambda}  \\ \rho \rho_t\mathbf{\Lambda}  & \mathbf{I}_d  \end{array}\right].
\end{equation}
Therefore, 
\begin{equation}
    d \left( \frac{\lVert Z^{i}_t \rVert^2}{2}\right)  = - 2 \rho \lambda_i^2 \rho_t \eta_t \frac{\lVert Z^{i}_t \rVert^2}{2} dt + \eta \sigma_i^2 \left( 1 + \lambda_i^2 \rho^2 \rho_t^2 \right) \eta_t^2 dt + \bigO(\text{Noise}),
\end{equation}
which implies that 
\begin{equation}
    d \left( \frac{\E \left[\lVert Z^{i}_t \rVert^2\right]}{2} \right) = - 2 \rho \lambda_i^2 \rho_t \eta_t  \frac{\E \left[\lVert Z^{i}_t \rVert^2\right]}{2} dt + \eta \sigma_i^2 \left( 1 + \lambda_i^2 \rho^2 \rho_t^2 \right) \eta_t^2 dt,
\end{equation}
which implies that
\begin{equation}
     \frac{\E \left[\lVert Z^{i}_t \rVert^2\right]}{2}=  e^{-2 \lambda^2_i \rho \int_0^t \eta_s \rho_s ds} \left(\frac{\lVert Z^i_0 \rVert^2}{2} + \eta \sigma_i^2  \int_0^t e^{2 \lambda^2_i \rho\int_0^s\eta_r \rho_r dr} \eta^2_s (1 + \lambda_i^2 \rho^2 \rho^2_s) ds\right).
\end{equation}
With arguments similar to Corollary \ref{thm:SHGD_Convergence_PIBG_NoG}, the necessary and sufficient conditions for convergence are the following:
\begin{enumerate}
    \item For the first part $ e^{-2 \lambda^2_i \int_0^t \rho_s \eta_s ds} \frac{\lVert Z^i_0 \rVert^2}{2}$ to go to $0$, we need $\int_0^\infty \eta_s \rho_s ds  =\infty$;
    \item For the second part to go to $0$, we need both $\frac{\eta_t}{\rho_t}$ and $\eta_t \rho_t$ to go to 0.
\end{enumerate}
For the schedulers above, the proofs of their convergence or divergence are the same as Corollary \ref{thm:SHGD_Convergence_PIBG_NoG} with different constants.
\end{proof}

Now we study a case where the noise structure itself is enough to guarantee the convergence. In this case, $v_t$ scales with $H_t$.
\begin{corollary}[SEG Insights] \label{thm:SEG_Convergence_PIBG_NoM}
For $f(x,y) = x^{\top} \E_{\xi}  \left[ \mathbf{\Lambda}_{\xi} \right] y$ such that $\mathbf{\Lambda}_{\xi}$ is diagonal, we have
\begin{enumerate}
    \item $\E \left[ Z_t\right] = \mathbf{\Tilde{E}}(t) \mathbf{\Tilde{R}}(t) z \overset{t \rightarrow \infty}{=} 0$;
    \item $\frac{\E \left[\lVert Z_t \rVert^2 \right]}{2} = \sum_{i=1}^{d} \frac{\lVert Z^i_0 \rVert^2}{2} e^{- \left(2 \rho \lambda^2_i - \eta \sigma_i^2 \left( 1 + \rho^2\left(2 \lambda_1^2 + \sigma_i^2\right)\right) \right) t}$.
\end{enumerate}
In particular, $\frac{\E \left[\lVert Z_t \rVert^2 \right]}{2} \rightarrow 0 $ if $2 \rho \lambda^2_i - \eta \sigma_i^2 \left( 1 + \rho^2\left(2 \lambda_1^2 + \sigma_i^2\right)\right)>0, \forall i$.
\end{corollary}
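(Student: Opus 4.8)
The plan is to follow the template of the SHGD version, Corollary~\ref{thm:SHGD_Convergence_PIBG_NoM}, exploiting that for $f(x,y)=x^{\top}\E_\xi[\mathbf{\Lambda}_\xi]y$ the SEG SDE of Corollary~\ref{thm:SEG_SDE_Simplified_SameSample} specialises to a \emph{linear} SDE with \emph{multiplicative} (state-dependent) noise,
\[
dZ_t=\mathbf{A}Z_t\,dt+\sqrt{\eta}\,\mathbf{B}(Z_t)\,dW_t,
\]
where $\mathbf{A}$ carries a diagonal decay block of the form $-\rho\,\E[\mathbf{\Lambda}_\xi^2]$ together with the usual antisymmetric rotation blocks $\pm\E[\mathbf{\Lambda}_\xi]$, and where $\mathbf{B}(Z_t)\mathbf{B}(Z_t)^{\top}$ is proportional to block-diagonal pieces $\diag(X_t\circ X_t)$, $\diag(X_t\circ Y_t)$, $\diag(Y_t\circ Y_t)$ because the randomness enters only through $\mathbf{\Lambda}_\xi$ multiplying the state. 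First I would write out $F_\gamma$, $\nabla F_\gamma$, $\bar F_\gamma=\nabla F_{\gamma^1}F_{\gamma^2}$, and hence $F^{\seg}$ and $\Sigma^{\seg}=\Sigma+\rho(\bar\Sigma+\bar\Sigma^{\top})$, explicitly for this landscape, using that all the $\mathbf{\Lambda}_\xi$ are diagonal so everything decouples coordinatewise; this identifies $\mathbf{A}$, $\mathbf{B}$, and the matrices $\mathbf{\Tilde{E}}(t),\mathbf{\Tilde{R}}(t)$ appearing in the statement.

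Next, since all matrices are block-diagonal, I would split $\tfrac12\|Z_t\|^2=\sum_{i=1}^d\tfrac12\|Z_t^i\|^2$ with $Z_t^i=(X_t^i,Y_t^i)$ and apply Itô's lemma to each $\tfrac12\|Z_t^i\|^2$. The drift contribution is $\langle Z_t^i,\mathbf{A}_iZ_t^i\rangle$; the rotation part is antisymmetric and cancels, leaving $-2\rho\lambda_i^2\cdot\tfrac12\|Z_t^i\|^2$. The Itô correction is $\tfrac{\eta}{2}\tr\!\big(\mathbf{B}_i(Z_t)\mathbf{B}_i(Z_t)^{\top}\big)$, and the key point is that multiplicativity of the noise makes this equal to $\eta\,\sigma_i^2\big(1+\rho^2(2\lambda_i^2+\sigma_i^2)\big)\cdot\tfrac12\|Z_t^i\|^2$, the $\rho^2(2\lambda_i^2+\sigma_i^2)$ piece coming from the $\rho(\bar\Sigma+\bar\Sigma^{\top})$ correction in $\Sigma^{\seg}$. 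Taking expectations, and using that the $dW_t$-term is a martingale so $\E[\bigO(\text{Noise})]=0$, I obtain the scalar linear ODE
\[
\frac{d}{dt}\,\frac{\E\|Z_t^i\|^2}{2}=-\Big(2\rho\lambda_i^2-\eta\sigma_i^2\big(1+\rho^2(2\lambda_i^2+\sigma_i^2)\big)\Big)\frac{\E\|Z_t^i\|^2}{2},
\]
whose solution is the claimed exponential; summing over $i$ gives the stated formula for $\tfrac12\E\|Z_t\|^2$, and the convergence criterion is simply the requirement that each exponent be negative. This is also exactly the regime of Corollary~\ref{cor:SEG_Conv} in which $v_t\le\mathcal{L}_{\mathcal{V}}\E[H_t]$, so the general bound already predicts exponential decay; here we pin down the exact rate. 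For the first claim, linearity of $F^{\seg}$ makes $\E[Z_t]$ solve $\dot m_t=\mathbf{A}m_t$, so $\E[Z_t]=e^{\mathbf{A}t}z$; decomposing $\mathbf{A}$ into its commuting diagonal-decay and rotation parts exactly as in the proof of Theorem~\ref{thm:SEG_Dynamic_PIBG} yields $e^{\mathbf{A}t}=\mathbf{\Tilde{E}}(t)\mathbf{\Tilde{R}}(t)$, and $\mathbf{\Tilde{E}}(t)\to 0$ since the decay exponent $\rho\lambda_i^2>0$.

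The main obstacle is the explicit bookkeeping of $\Sigma^{\seg}(z)$ in the multiplicative-noise setting: unlike Corollary~\ref{thm:SEG_SDE_Simplified_SameSample}, the noise now depends on $z$, so one must carry the state-dependent covariance $\Sigma(z)=\E[\xi_\gamma(z)\xi_\gamma(z)^{\top}]$ and the mixed term $\E[\xi_{\gamma^1}(z)\bar\xi_{\bgamma}(z)^{\top}]$ through the Taylor/Itô computation underlying Theorem~\ref{thm:SEG_SDE}, and then verify that tracing $\mathbf{B}_i\mathbf{B}_i^{\top}$ against each coordinate block reproduces exactly the coefficient $\sigma_i^2(1+\rho^2(2\lambda_i^2+\sigma_i^2))$ (with $\lambda_i,\sigma_i$ the relevant mean/variance parameters of the diagonal entries of $\mathbf{\Lambda}_\xi$). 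Everything downstream of that identity is routine linear-ODE manipulation.
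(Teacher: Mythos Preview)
Your plan is essentially the paper's proof: write the SEG SDE for this landscape as a linear SDE with state-dependent diffusion, decompose $\tfrac12\|Z_t\|^2$ coordinatewise, apply It\^o's lemma to obtain a scalar linear ODE for $\tfrac12\E\|Z_t^i\|^2$, and solve it. The treatment of $\E[Z_t]$ via $e^{\mathbf{A}t}=\mathbf{\Tilde{E}}(t)\mathbf{\Tilde{R}}(t)$ is likewise exactly what Theorem~\ref{thm:SEG_Dynamic_PIBG} does.

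There is one point where your bookkeeping would go wrong if followed literally. You attribute the $\rho^2(2\lambda_i^2+\sigma_i^2)$ piece of the It\^o correction to ``the $\rho(\bar\Sigma+\bar\Sigma^{\top})$ correction in $\Sigma^{\seg}$''. That term is linear in $\rho$, and for this landscape a direct computation shows its contribution to $\tr(\Sigma^{\seg}_i)$ actually \emph{cancels}: the $(i,i)$ diagonal entry contributes $+4\rho\lambda_i\sigma_i^2\,x_iy_i$ and the $(d{+}i,d{+}i)$ entry contributes $-4\rho\lambda_i\sigma_i^2\,x_iy_i$. Using only the truncated formula $\Sigma^{\seg}=\Sigma+\rho(\bar\Sigma+\bar\Sigma^{\top})$ from Theorem~\ref{thm:SEG_SDE} therefore yields the coefficient $\eta\sigma_i^2\cdot 1$ and misses the $\rho^2$ part entirely. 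The paper instead writes down the \emph{exact} diffusion covariance $\mathbf{B}\mathbf{B}^{\top}$ of the SEG increment---which is available in closed form here because $f$ is bilinear in $(x,y)$ so the extragradient step involves no Taylor remainder---and that exact covariance contains the $\rho^2$ blocks (e.g.\ the $(Y_t+\rho\mathbf{\Lambda}X_t)^{\circ 2}$ and $\rho^2\Sigma^2\circ(\mathbf{\Lambda}^2+\Sigma^2)$ pieces in $D_{1,1}$) that produce $\sigma_i^2\rho^2(2\lambda_i^2+\sigma_i^2)$ after tracing. So when you ``carry the state-dependent covariance through the computation underlying Theorem~\ref{thm:SEG_SDE}'', make sure you keep the full second moment of $F_{\gamma}-\rho\,\nabla F_{\gamma}F_{\gamma}$ rather than its $O(\rho)$ truncation; otherwise your ODE will have the wrong rate.
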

\begin{proof}
The derivation of the SDE is straightforward and is
\begin{align}
    d Z_t =\mathbf{A}Z_t dt + \sqrt{\eta} \mathbf{B} d W_t,
\end{align}
where
\begin{equation}
    \mathbf{A}= \left[\begin{array}{ll} - \rho\mathbf{\Lambda}^2 & -\mathbf{\Lambda} \\ \mathbf{\Lambda}  & - \rho\mathbf{\Lambda}^2  \end{array}\right] \text{ and } \mathbf{B}\mathbf{B}^{\top} = \left[\begin{array}{ll}  \Sigma^2  &  \Sigma^2  \\  \Sigma^2  &  \Sigma^2  \end{array}\right] \circ \left[\begin{array}{ll} D_{1,1}  & D_{1,2}  \\ D_{2,1}  & D_{2,2}  \end{array}\right],
\end{equation}
where
\begin{align}
    & D_{1,1} := \diag((Y_t + \rho\mathbf{\Lambda} X_t) \circ (Y_t + \rho\mathbf{\Lambda} X_t) + \rho^2 \Sigma^2 \circ (\mathbf{\Lambda}^2 + \Sigma^2) \circ Y_t\circ Y_t), \\
    &  D_{2,2} := \diag((X_t - \rho\mathbf{\Lambda} Y_t) \circ (X_t - \rho\mathbf{\Lambda} Y_t) + \rho^2 \Sigma^2 \circ (\mathbf{\Lambda}^2 + \Sigma^2) \circ X_t\circ X_t),
\end{align}
and $D_{1,2}$ and $D_{2,1}$ do not matter for this calculation.

It is easy to see that
\begin{equation}
    \frac{\lVert Z_t \rVert^2}{2} = \sum_{i=1}^{d} \frac{\lVert Z^{i}_t \rVert^2}{2},
\end{equation}
where $Z^i:=(X^i,Y^i)$, and that
\begin{equation}
    d \left( \frac{\E \left[\lVert Z^{i}_t \rVert^2\right]}{2} \right)  = - 2 \rho \lambda_i^2 \frac{\E \left[\lVert Z^{i}_t \rVert^2\right]}{2} dt + \eta \sigma_i^2 ( 1 + \rho(2 \lambda_i^2 + \sigma_i^2))  \frac{\E \left[\lVert Z^{i}_t \rVert^2\right]}{2} dt,
\end{equation}
which ultimately implies that 
\begin{equation}
     \frac{\E \left[\lVert Z^{i}_t \rVert^2\right]}{2} = \frac{\lVert Z^i_0 \rVert^2}{2} e^{- \left(2 \rho \lambda^2_i - \eta \sigma_i^2 \left( 1 + \rho^2\left(2 \lambda_1^2 + \sigma_i^2\right)\right) \right) t}.
\end{equation}
Empirical validation of this result is provided in Figure \ref{fig:NoM}.
\end{proof}

\section{EXPERIMENTS} \label{app:Experiments}

In this section, we provide the details of the experiments we carried out to validate the theoretical results derived in the paper. We highlight that since we always use diagonal matrices, it is enough to validate our results in two dimensions. In the following, the choice of the initialization points does not have a special reason. When there is one, it is explained in the respective paragraphs.

\paragraph{Computational Infrastructure}
All experiments have been performed on Google Colaboratory without any premium subscription. The code to replicate the experiments is available at \url{https://github.com/eneamc/MinimaxSDEs}

\subsection{SDE Validation: Figure \ref{fig:SDEs}}
In this subsection, we provide the details to replicate the experiments shown in Figure \ref{fig:SDEs}. The objective is to provide empirical validation to Theorem \ref{thm:SEG_SDE_Insights} and \ref{thm:SHGD_SDE_Insights}: The trajectories of the simulated SDEs match that of the respective algorithms averaged over $5$ runs.

\paragraph{SGDA}
This paragraph refers to the \textit{top left} of Figure \ref{fig:SDEs}. Inspired by \textbf{Example 5.2} in \citep{hsieh2019convergence}, we study \textbf{Nonbilinear Game \# 1} $f(x,y) := x(y-0.45) + \phi(x) - \phi(y)$ where $\phi(z):=\frac{1}{4}z^2 - \frac{1}{2}z^4  + \frac{1}{6}z^6$. In the figure, we show the comparison between the average of $5$ realizations of the trajectories of SGDA with the average of $5$ simulations of the trajectories of the SDE of SGDA.

For each of the $5$ trajectories of SGDA, we initialize each trajectory at $(x_0,y_0)=(2.0,2.0)$ because this point is outside of the limit cycle that surrounds the optimal saddle point. We use a stepsize $\eta=0.01$, and run the optimizer for $N=10000$ iterations. The noise used to perturb the gradients is $Z \sim \mathcal{N}(0, \sigma^2 \mathbf{I}_{2})$ and $\sigma = 1.0$. Each trajectory is run with a different random seed.

For each of the $5$ trajectories of the SDE of SGDA, we initialize each trajectory at $(x_0,y_0)=(2.0,2.0)$, use a discretization step $dt= \frac{\eta}{10} = 0.001$ for the Euler–Maruyama method, and integrate the system for $N=100000$ iterations. The noise used to perturb the gradients is $Z \sim \mathcal{N}(0, \sigma^2 \mathbf{I}_{2})$ and $\sigma = 1.0$. Each trajectory is run with a different random seed.

\paragraph{SHGD}
This paragraph refers to the \textit{bottom left} of Figure \ref{fig:SDEs}. As a variation on \textbf{Example 5.1} and \textbf{Example 5.2} in \citep{hsieh2019convergence}, we study \textbf{Nonbilinear Game \# 3} $f(x,y) := xy + \phi(x) - \phi(y)$ where $\phi(z):=\frac{1}{2}z^2 - \frac{1}{4}z^4  + \frac{1}{6}z^6 - \frac{1}{8}z^8$. In the figure, we show the comparison between the average of $5$ realizations of the trajectories of SHGD with the average of $5$ simulations of the trajectories of the SDE of SHGD.

For each of the $5$ trajectories of SHGD, we initialize each trajectory at $(x_0,y_0)=(0.7,0.7)$. Given the extreme nonlinearity of this landscape, this initial point allows the use of sizeable stepsizes: we use a stepsize $\eta=0.0001$. Going further away from $(0,0)$ would require extremely smaller stepsizes to avoid numerical instabilities. We run the optimizer for $N=100000$ iterations. The noise used to perturb the gradients is $Z \sim \mathcal{N}(0, \sigma^2 \mathbf{I}_{2})$ and $\sigma = 1.0$. Each trajectory is run with a different random seed.

For each of the $5$ trajectories of the SDE of SHGD, we initialize each trajectory at $(x_0,y_0)=(0.7,0.7)$, use a discretization step $dt= \frac{\eta}{10} = 0.00001$ for the Euler–Maruyama method, and integrate the system for $N=1000000$ iterations. The noise used to perturb the gradients is $Z \sim \mathcal{N}(0, \sigma^2 \mathbf{I}_{2})$ and $\sigma = 1.0$. Each trajectory is run with a different random seed.

\paragraph{SEG}
This paragraph refers to the \textit{top right} and \textit{bottom right} of Figure \ref{fig:SDEs}. Inspired by \textbf{Example 5.1} in \citep{hsieh2019convergence}, we study \textbf{Nonbilinear Game \# 2} $f(x,y) := xy - \epsilon \phi(y)$ where $\phi(z):= \frac{1}{2}z^2 - \frac{1}{4}z^4$ and $\epsilon = 0.01$. For two different values of $\rho$, we show the comparison between the average of $5$ realizations of the trajectories of SEG with the average of $5$ simulations of the trajectories of the SDE of SEG. Additionally, we show the comparison with the average of $5$ simulations of the trajectories of the SDE of SGDA.

For each $ \rho \in \{0.1, 1 \}$, we repeat the following procedure: For each of the $5$ trajectories of SEG, we initialize each trajectory at $(x_0,y_0)=(1.0,1.0)$, use a stepsize $\eta=0.01$, and run the optimizer for $N=10000$ iterations. The noise used to perturb the gradients is $Z \sim \mathcal{N}(0, \sigma^2 \mathbf{I}_{2})$ and $\sigma = 1.0$. Each trajectory is run with a different random seed.

For each $ \rho \in \{0.1, 1 \}$, we repeat the following procedure: For each of the $5$ trajectories of the SDE of SEG, we initialize each trajectory at $(x_0,y_0)=(1.0,1.0)$, use a discretization step $dt= \frac{\eta}{10} = 0.001$ for the Euler–Maruyama method, and integrate the system for $N=100000$ iterations. The noise used to perturb the gradients is $Z \sim \mathcal{N}(0,\sigma^2 \mathbf{I}_{2})$ and $\sigma = 1.0$. Each trajectory is run with a different random seed.

For each of the $5$ trajectories of the SDE of SGDA, we initialize each trajectory at $(x_0,y_0)=(1.0,1.0)$, use a discretization step $dt= \frac{\eta}{10} = 0.001$ for the Euler–Maruyama method, and integrate the system for $N=100000$ iterations. The noise used to perturb the gradients is $Z \sim \mathcal{N}(0, \sigma^2 \mathbf{I}_{2})$ and $\sigma = 1.0$. Each trajectory is run with a different random seed.

For the \textit{top right} figure, we plot the average of the trajectories. For the \textit{bottom right} figure, we plot the average norm of the iterates $\mathbb{E}\left[\|x_k\|^2 + \|y_k\|^2\right]$ for the SEG and $\mathbb{E}\left[\|X_{\eta k}\|^2 + \|Y_{\eta k}\|^2\right]$ for the SDEs. We did not report the average norm of the SDE of SGDA as it diverges and would spoil the informativeness of the figure.

\subsection{Schedulers Validation: Figure \ref{fig:Schedulers}}
In this subsection, we provide the details to replicate the experiments shown in Figure \ref{fig:Schedulers}. The objective is to provide empirical validation to Prop. \ref{prop:SHGD_Convergence_PIBG_NoG_Insights_NoSched}, Prop. \ref{prop:SHGD_Convergence_PIBG_NoG_Insights_Sched}, Prop. \ref{prop:SEG_Convergence_PIBG_NoG_Insights_NoSched}, and Prop. \ref{prop:SEG_Convergence_PIBG_NoG_Insights_Sched}. Consistently with the assumptions of these theorems, the landscape is that of the Bilinear Game $f(x,y) = 2 x y$.

\paragraph{SHGD}
This paragraph refers to the \textit{left} of Figure \ref{fig:Schedulers}. As we use the stepsize scheduler $\eta_t:= \frac{1}{\left( t+1\right)^{\gamma}}$, for $\gamma \in \{0, 0.5, 1.0\}$, we compare the average norm of the iterates across $5$ realizations of the trajectories of SHGD with the exact dynamics of such a quantity prescribed in Prop. \ref{prop:SHGD_Convergence_PIBG_NoG_Insights_NoSched} and Prop. \ref{prop:SHGD_Convergence_PIBG_NoG_Insights_Sched}. For each of the $5$ trajectories of SHGD, we initialize each trajectory at $(x_0,y_0)=(0.1,0.1)$, use a stepsize $\eta=0.01$, and run the optimizer for $N=2000$ iterations. The noise used to perturb the gradients is $Z \sim \mathcal{N}(0, \sigma^2 \mathbf{I}_{2})$ and $\sigma = 0.001$. Each trajectory is run with a different random seed.

The left of Figure \ref{fig:Schedulers_New} reports an additional experiment with the same setup apart from $f(x,y)=xy$, $\sigma=0.01$, and $\rho=2$.

\paragraph{SEG}
This paragraph refers to the \textit{right} of Figure \ref{fig:Schedulers}. As we use the stepsize scheduler $\eta_t:= \frac{1}{\left( t+1\right)^{\gamma}}$, for $\gamma \in \{0, 0.5, 1.0\}$, we compare the average norm of the iterates across $5$ realizations of the trajectories of SEG with the exact dynamics of such a quantity prescribed in Prop. \ref{prop:SHGD_Convergence_PIBG_NoG_Insights_NoSched} and Prop. \ref{prop:SHGD_Convergence_PIBG_NoG_Insights_Sched}. For each of the $5$ trajectories of SEG, we initialize each trajectory at $(x_0,y_0)=(0.1,0.1)$, use a stepsize $\eta=0.01$, extra stepsize $\rho=1.0$, and run the optimizer for $N=2000$ iterations. The noise used to perturb the gradients is $Z \sim \mathcal{N}(0, \sigma^2 \mathbf{I}_{2})$ and $\sigma = 0.001$. Each trajectory is run with a different random seed.

The right of Figure \ref{fig:Schedulers_New} reports an additional experiment with the same setup apart from $f(x,y)=xy$, $\sigma=0.01$, and $\rho=2$.

\begin{figure}%
    \centering
    \subfloat{{\includegraphics[width=0.49\linewidth]{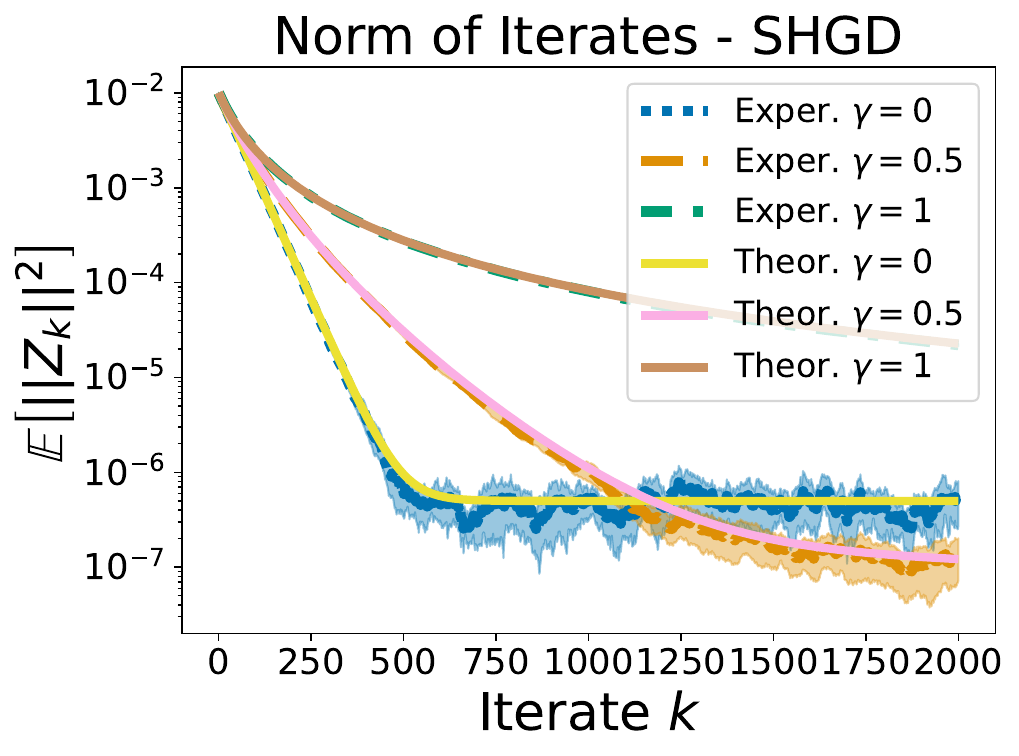} }}%
    \subfloat{{\includegraphics[width=0.49\linewidth]{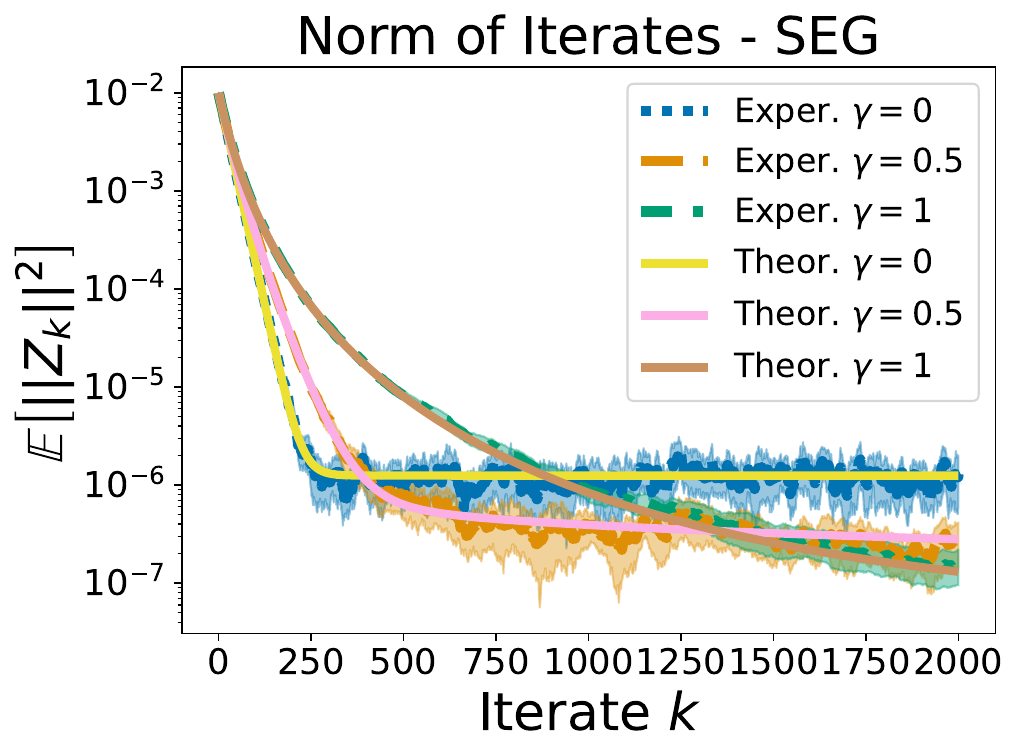} }}
    \caption{Empirical validation of Prop. \ref{prop:SHGD_Convergence_PIBG_NoG_Insights_NoSched} and Prop. \ref{prop:SHGD_Convergence_PIBG_NoG_Insights_Sched} (Left); Prop. \ref{prop:SEG_Convergence_PIBG_NoG_Insights_NoSched} and Prop. \ref{prop:SEG_Convergence_PIBG_NoG_Insights_Sched} (Right): The dynamics of $\E \left[\lVert Z_t \rVert^2 \right]$ averaged across $5$ runs perfectly matches that prescribed by our results for all schedulers. Both for SEG and SHGD, $\eta=0.01$, while $\rho=2$.}%
    \label{fig:Schedulers_New}%
\end{figure}

\subsection{Role of \texorpdfstring{$\rho$}{Lg}: \textit{Left} of  Figure \ref{fig:FBG_Comparison_Saddle}}
In this subsection, we provide the details to replicate the experiments shown on the \textit{left} of Figure \ref{fig:FBG_Comparison_Saddle}. The objective is to provide empirical validation to the insights derived in Paragraph \ref{parag:roleofrho_1} and Paragraph \ref{parag:roleofrho_2}. Consistently with the assumptions, the landscape is that of the Quadratic Game $f(x,y) = \frac{3}{2}x^2 + x y - \frac{3}{2}y^2$.

Let us remember that $\rho^{\text{H}}$ is meant to replicate the speed of the exponential decay of SHGD while $\rho^{\text{V}}$ is meant to achieve the lowest possible asymptotic variance of SEG. Finally, this is a case where \textbf{negative} $\rho$ converges to the optimum faster than SGDA and than any positive $\rho$. Of course, this particular choice confirms that large (absolute) values of $\rho$ result in larger suboptimality. It is key to notice that we indeed verify all these insights clearly in this Figure.

The left of Figure \ref{fig:FBG_Comparison_Saddle_New} reports an additional experiment with the same setting but $f(x,y) = x^2 + 3 x y - y^2$. In these cases, positive $\rho$ are the ones inducing fast convergence.

\begin{figure}%
    \centering
    \subfloat{{\includegraphics[width=0.49\linewidth]{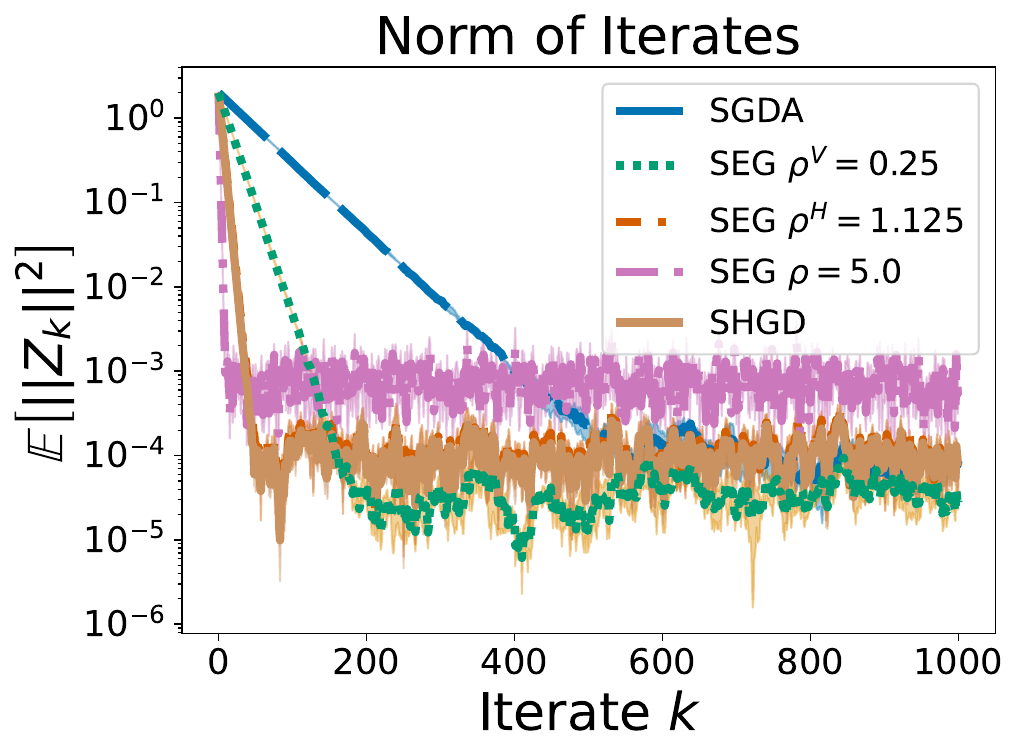} }}%
    \subfloat{{\includegraphics[width=0.49\linewidth]{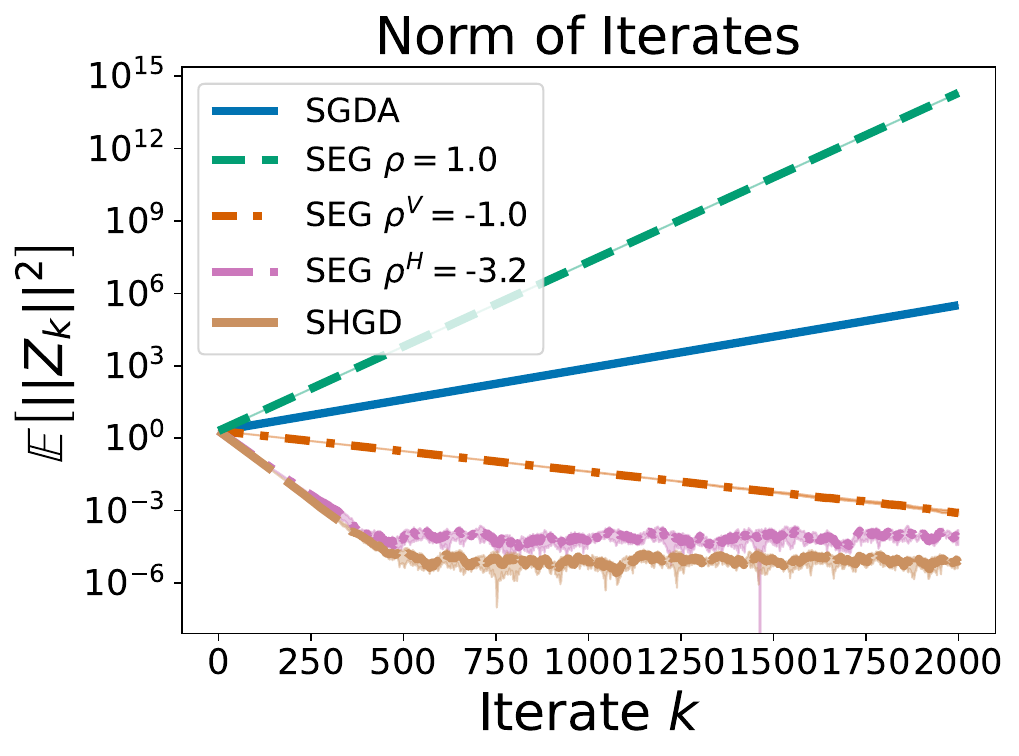} }}%
    \caption{Empirical validation of the comparison between SEG and SHGD on Quadratic Games: (Left), $\rho^{V}$ and $\rho^{H}$ clearly meet the designated goals. Large $|\rho|$ induces faster convergence which in turn results in larger suboptimality. (Right), positive $\rho$ escapes the \textit{bad saddle} faster than SGDA, negative ones induce convergence, and $\rho^{H}$ even matches the decay of SHGD. In both experiments, $\eta=0.01$.}%
    \label{fig:FBG_Comparison_Saddle_New}%
\end{figure}

\paragraph{SHGD}
We plot the average norm of the iterates across $5$ realizations of the trajectories of SHGD. For each of the $5$ trajectories of SHGD, we initialize each trajectory at $(x_0,y_0)=(1.0,1.0)$, use a stepsize $\eta=0.01$, and run the optimizer for $N=1000$ iterations. The noise used to perturb the gradients is $Z \sim \mathcal{N}(0, \sigma^2 \mathbf{I}_{2})$ and $\sigma = 0.1$. Each trajectory is run with a different random seed.

\paragraph{SEG}
For different values of $\rho$, we repeat the following procedure: We plot the average norm of the iterates across $5$ realizations of the trajectories of SEG. For each of the $5$ trajectories of SEG, we initialize each trajectory at $(x_0,y_0)=(1.0,1.0)$, use a stepsize $\eta=0.01$, and run the optimizer for $N=1000$ iterations. The noise used to perturb the gradients is $Z \sim \mathcal{N}(0, \sigma^2 \mathbf{I}_{2})$ and $\sigma = 0.1$. Each trajectory is run with a different random seed.
The values of $\rho$ we used are $\rho=-5$, $\rho^{\text{H}} = \frac{a^2 + \lambda^2 - a}{\lambda^2-a^2} = - 0.875$, $\rho^{\text{V}}=\frac{1}{a + \lambda} = \frac{1}{4}$, and $\rho=0$ which corresponds to SGDA in the Figure. 

\subsection{Escape from \textit{Bad Saddles}: \textit{Right} of  Figure \ref{fig:FBG_Comparison_Saddle}}
In this subsection, we provide the details to replicate the experiments shown on the \textit{right} of Figure \ref{fig:FBG_Comparison_Saddle}. The objective is to provide empirical validation to the insights derived in Paragraph \ref{parag:roleofrho_1} and Paragraph \ref{parag:roleofrho_2}, with a special focus on the ability of SEG to escape \textit{bad saddles} compared to SHGD that gets trapped. Consistently with the assumptions, the landscape is that of the Quadratic Game $f(x,y) = -\frac{1}{2}x^2 + 2 x y + \frac{1}{2}y^2$. We indeed observe that consistently with the theory, SHGD is attracted by such undesirable saddle points while suitable choices of $\rho$ allow SEG to escape the saddle. Interestingly, unfortunate choices of $\rho$ replicate the regrettable behavior of SHGD.

The right of Figure \ref{fig:FBG_Comparison_Saddle_New} reports an additional experiment with the same setting but $f(x,y) = -3x^2 + 2 x y +3y^2$. In these cases, positive $\rho$ are the ones inducing fast divergence while negative ones induce (undesirable) convergence to the saddle.

\paragraph{SHGD}
We plot the average norm of the iterates across $5$ realizations of the trajectories of SHGD. For each of the $5$ trajectories of SHGD, we initialize each trajectory at $(x_0,y_0)=(1.0,1.0)$, use a stepsize $\eta=0.001$, and run the optimizer for $N=2000$ iterations. The noise used to perturb the gradients is $Z \sim \mathcal{N}(0, \sigma^2 \mathbf{I}_{2})$ and $\sigma = 0.1$. Each trajectory is run with a different random seed.

\paragraph{SEG}
For different values of $\rho$, we repeat the following procedure: We plot the average norm of the iterates across $5$ realizations of the trajectories of SEG. For each of the $5$ trajectories of SEG, we initialize each trajectory at $(x_0,y_0)=(1.0,1.0)$, use a stepsize $\eta=0.001$, and run the optimizer for $N=2000$ iterations. The noise used to perturb the gradients is $Z \sim \mathcal{N}(0, \sigma^2 \mathbf{I}_{2})$ and $\sigma = 0.1$. Each trajectory is run with a different random seed.
The values of $\rho$ we used are $\rho=-1$, $\rho^{\text{H}} = \frac{a^2 + \lambda^2 - a}{\lambda^2-a^2} = 2$, $\rho^{\text{V}}=\frac{1}{a + \lambda} = 1$, and $\rho=0$ which corresponds to SGDA in the Figure.

\subsection{Empirical Validation of Figure \ref{fig:NoM}}
In this subsection, we provide the details to replicate the experiments shown in Figure \ref{fig:NoM}. The objective is to provide empirical validation to Corollary \ref{thm:SHGD_Convergence_PIBG_NoM} and Corollary \ref{thm:SEG_Convergence_PIBG_NoM}. Consistently with the assumptions, the landscape is that of the Stochastic Bilinear Game $f(x,y) = x^{\top} \E_{\xi}  \left[ \mathbf{\Lambda}_{\xi} \right] y$ such that $\mathbf{\Lambda}_{\xi}$, where $\mathbf{\Lambda} = 2 \mathbf{I}_{2}$ and $\xi \sim \mathcal{N}(0, \sigma^2 \mathbf{I}_{2})$, and $\sigma = 1.0$. We indeed observe that the average behavior of the norm of the iterates of SEG and SHGD matches that prescribed by Corollary \ref{thm:SHGD_Convergence_PIBG_NoM} and Corollary \ref{thm:SEG_Convergence_PIBG_NoM}.

\paragraph{SHGD}
We compare the average norm of the iterates across $5$ realizations of the trajectories of SHGD with the exact dynamics prescribed by Corollary \ref{thm:SHGD_Convergence_PIBG_NoM}. For each of the $5$ trajectories of SHGD, we initialize each trajectory at $(x_0,y_0)=(0.1,0.1)$, use a stepsize $\eta=0.01$, and run the optimizer for $N=200$ iterations. Each trajectory is run with a different random seed.

\paragraph{SEG}
We compare the average norm of the iterates across $5$ realizations of the trajectories of SEG with the exact dynamics prescribed by Corollary \ref{thm:SHGD_Convergence_PIBG_NoM}. For each of the $5$ trajectories of SHGD, we initialize each trajectory at $(x_0,y_0)=(0.1,0.1)$, use a stepsize $\eta=0.01$, extra stepsize $\rho \in \{0.5, 1, 2 \}$, and run the optimizer for $N=200$ iterations. Each trajectory is run with a different random seed.

\subsection{Empirical Validation of Figure \ref{fig:Variance}: The asymptotic variance of SEG is influenced by \texorpdfstring{$\rho$}{Lg}}
In this subsection, we provide the details to replicate the experiments shown in Figure \ref{fig:Variance}. The objective is to provide empirical validation to Eq. \ref{eq:varFBG_app}. Consistently with the assumptions, the landscape is that of the Quadratic Game $f(x,y) = x^2 + x y - y^2$. We indeed observe that the experimental average asymptotic variance of SEG matches the one prescribed by Eq. \ref{eq:varFBG_app}.

For different values of $\rho$, we repeat the following procedure: We compare the average asymptotic variance of the iterates across $5$ realizations of the trajectories of SEG with the exact formula prescribed by Eq. \ref{eq:varFBG_app}. For each of the $5$ trajectories of SEG, we initialize each trajectory at $(x_0,y_0)=(0.01,0.01)$, use a stepsize $\eta=0.01$, and run the optimizer for $N=200000$ iterations. Each trajectory is run with a different random seed. The values of $\rho$ used are $\rho \in \left\{-\frac{1}{6}, 0, \frac{1}{6}, \frac{1}{3}, \frac{2}{5}, \frac{1}{2}\right\}$.

\begin{figure}%
\centering

\subfloat{{\includegraphics[width=0.49\linewidth]{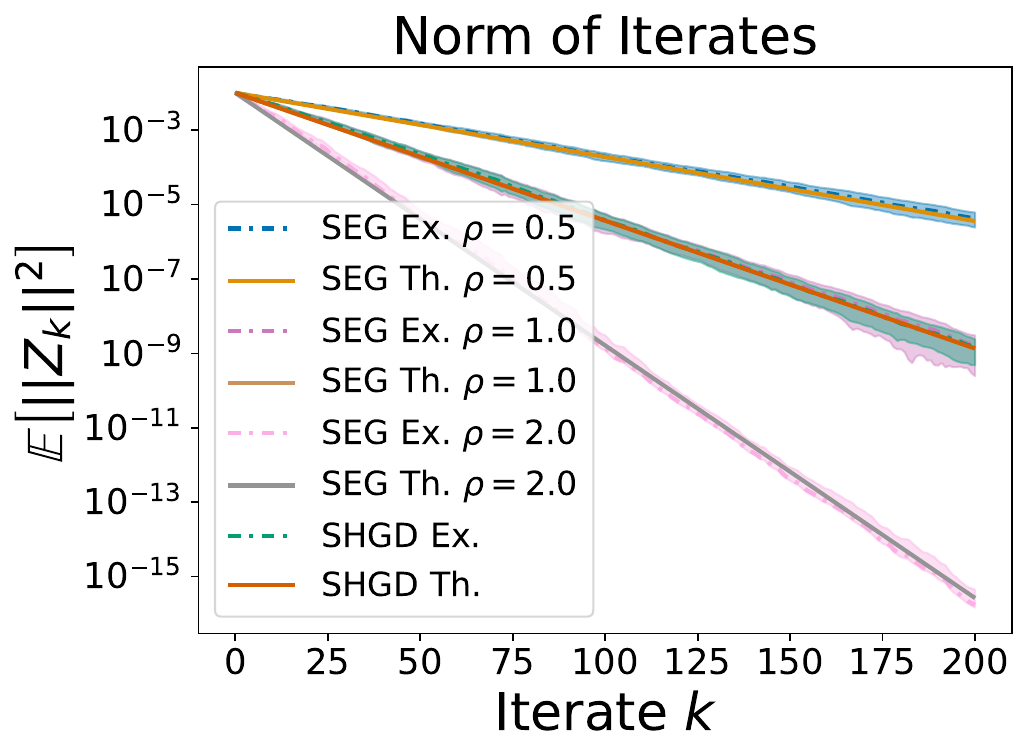} }}%
\subfloat{{\includegraphics[width=0.49\linewidth]{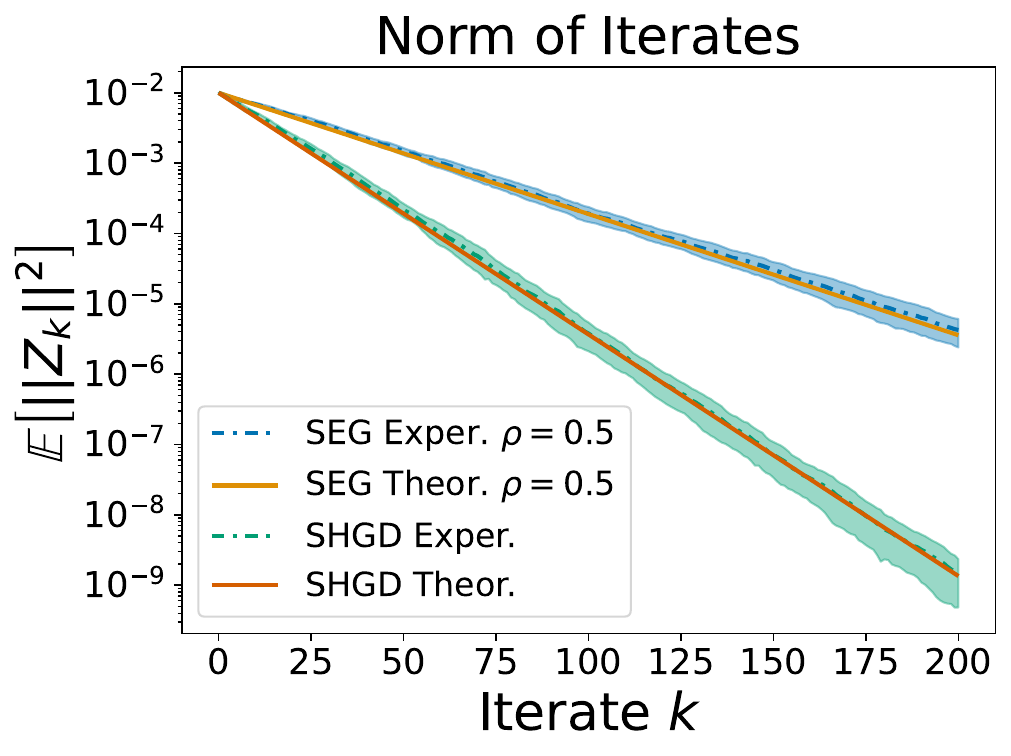} }}%

\caption{Empirical validation: Corollary \ref{thm:SHGD_Convergence_PIBG_NoM} and Corollary \ref{thm:SEG_Convergence_PIBG_NoM} (Left) and detail of SEG vs SHGD when their convergence speed matches (Right).}%
\label{fig:NoM}%
\end{figure}

\begin{figure}%
\centering
{\includegraphics[width=0.39\linewidth]{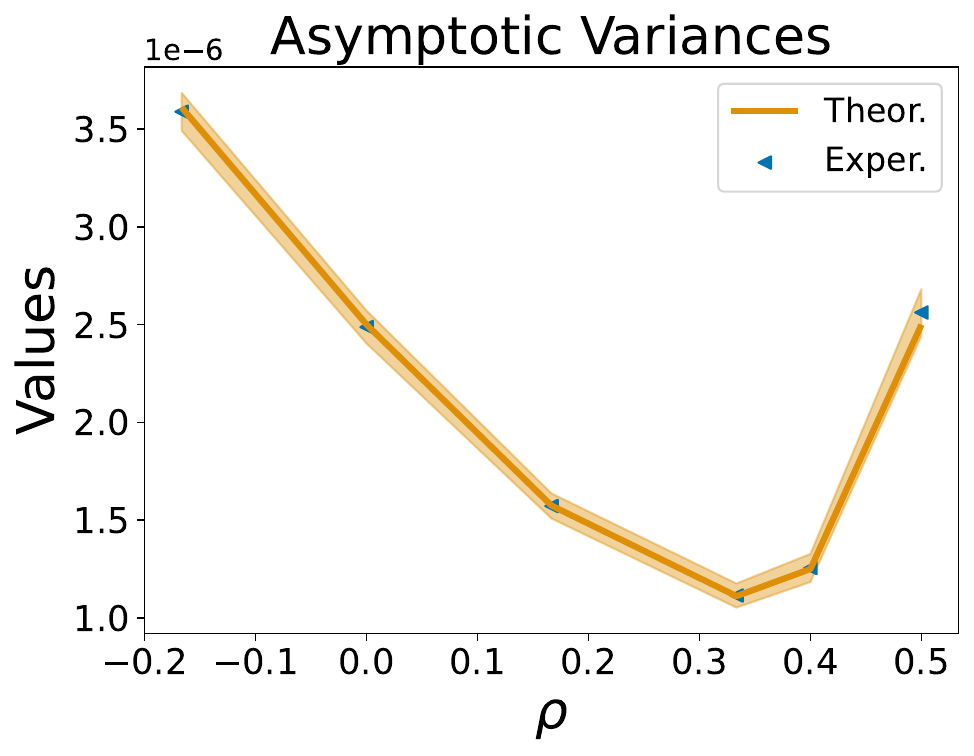} }%
    \caption{Empirical validation of Equation \eqref{eq:varFBG_app}.}%
    \label{fig:Variance}%
\end{figure}

\subsection{Empirical Validation of SDEs: Figure \ref{fig:SDE_Validation}}

In this subsection, we provide the details to replicate the experiments shown in Figure \ref{fig:SDE_Validation}. The objective is to show that if $\rho = \mathcal{O}(\eta)$ or even smaller, the SDE of SGDA models the dynamics of SEG accurately. However, once $\rho = \mathcal{O}(\sqrt{\eta})$ or even larger, the SDE of SGDA no longer models the dynamics of SEG correctly while the SDE of SEG does so. To simulate the SDEs, we use Algorithm \ref{algo:EulerMaruryama_SDE}.

\paragraph{Nonbilinear Game \# 1}
In this paragraph, we provide the details of the Nonbilinear Game \# 1 experiment. We optimize the loss function $f(x,y) := x(y-0.45) + \phi(x) - \phi(y)$ where $\phi(z):=\frac{1}{4}z^2 - \frac{1}{2}z^4  + \frac{1}{6}z^6$. The noise used to perturb the gradients is $Z \sim \mathcal{N}(0, \sigma^2 \mathbf{I}_{2})$ and $\sigma = 1.00$. We use $\eta = 0.001$, $\rho \in \{0.0001, 0.001, 0.0316, 0.1\}$. The results are averaged over $5$ experiments.

\paragraph{Nonbilinear Game \# 2}
In this paragraph, we provide the details of the Nonbilinear Game \# 2 experiment. We optimize the loss function $f(x,y) := xy - \epsilon \phi(y)$ where $\phi(z):= \frac{1}{2}z^2 - \frac{1}{4}z^4$ and $\epsilon = 0.01$. The noise used to perturb the gradients is $Z \sim \mathcal{N}(0, \sigma^2 \mathbf{I}_{2})$ and $\sigma = 1.00$. We use $\eta = 0.01$, $\rho \in \{0.001, 0.01, 0.01, 0.3\}$. The results are averaged over $5$ experiments.

\paragraph{Nonbilinear Game \# 3}
In this paragraph, we provide the details of the Nonbilinear Game \# 3 experiment. We optimize the loss function $f(x,y) := xy + \phi(x) - \phi(y)$ where $\phi(z):=\frac{1}{2}z^2 - \frac{1}{4}z^4  + \frac{1}{6}z^6 - \frac{1}{8}z^8$. The noise used to perturb the gradients is $Z \sim \mathcal{N}(0, \sigma^2 \mathbf{I}_{2})$ and $\sigma = 1.00$. We use $\eta = 0.0001$, $\rho \in \{0.00001, 0.0001, 0.01, 0.1\}$. The results are averaged over $5$ experiments.

\paragraph{Quadratic Game}
In this paragraph, we provide the details of the Quadratic Game experiment. We optimize the loss function $f(x,y) := x^2 + 2xy -y^2$. The noise used to perturb the gradients is $Z \sim \mathcal{N}(0, \sigma^2 \mathbf{I}_{2})$ and $\sigma = 1.00$. We use $\eta = 0.01$, $\rho \in \{0.001, 0.01, 0.1, 0.5\}$. The results are averaged over $5$ experiments.

\begin{figure}%
    \centering
    \subfloat{{\includegraphics[width=0.25\linewidth]{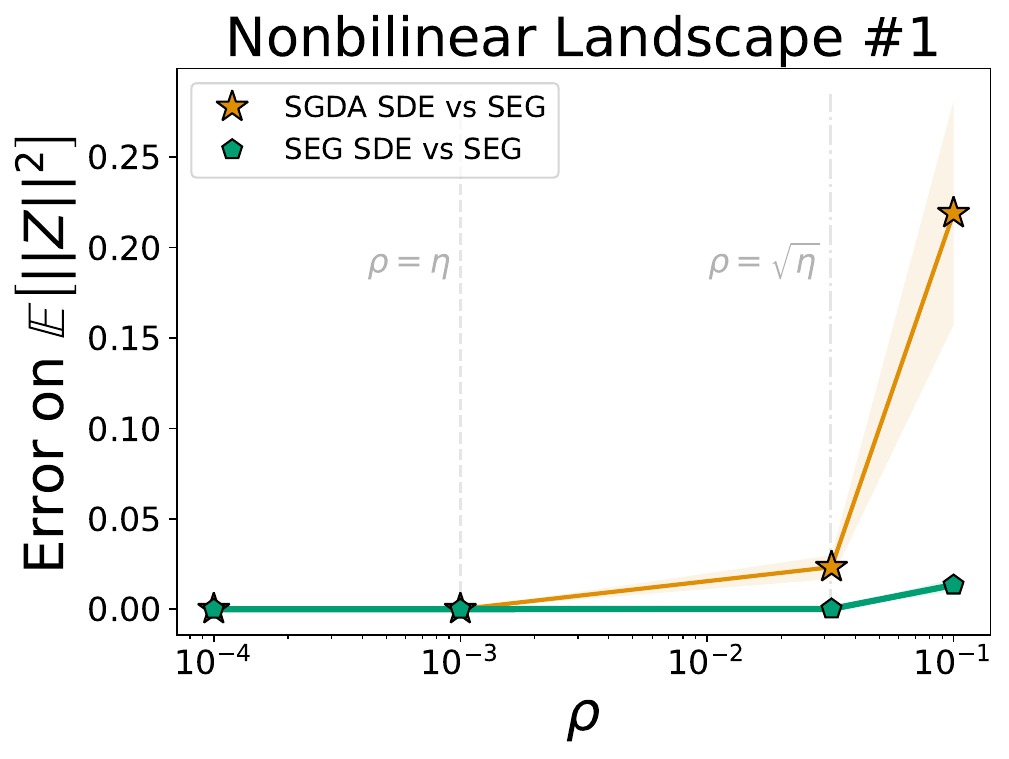} }}%
    \subfloat{{\includegraphics[width=0.24\linewidth]{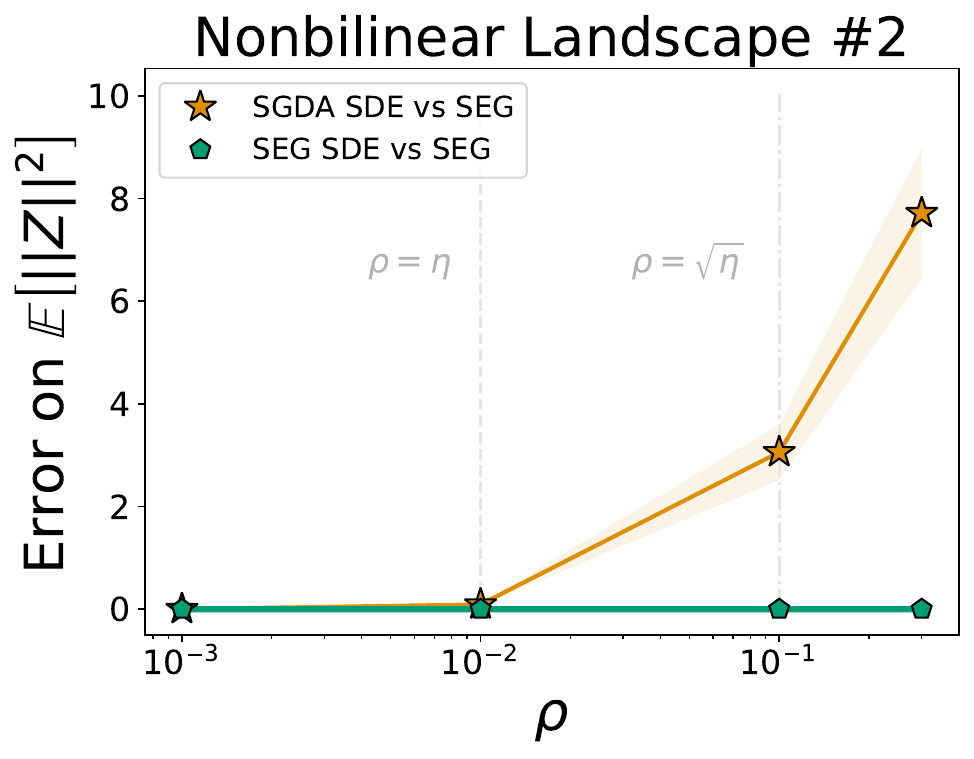} }}    \subfloat{{\includegraphics[width=0.25\linewidth]{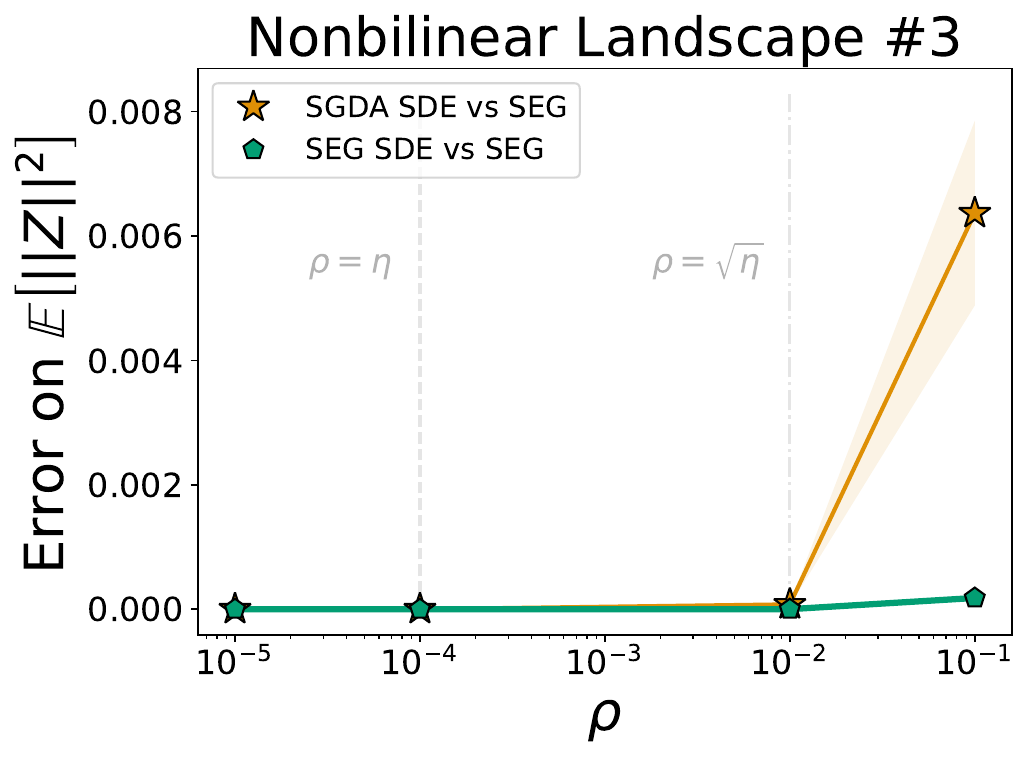} }}%
    \subfloat{{\includegraphics[width=0.24\linewidth]{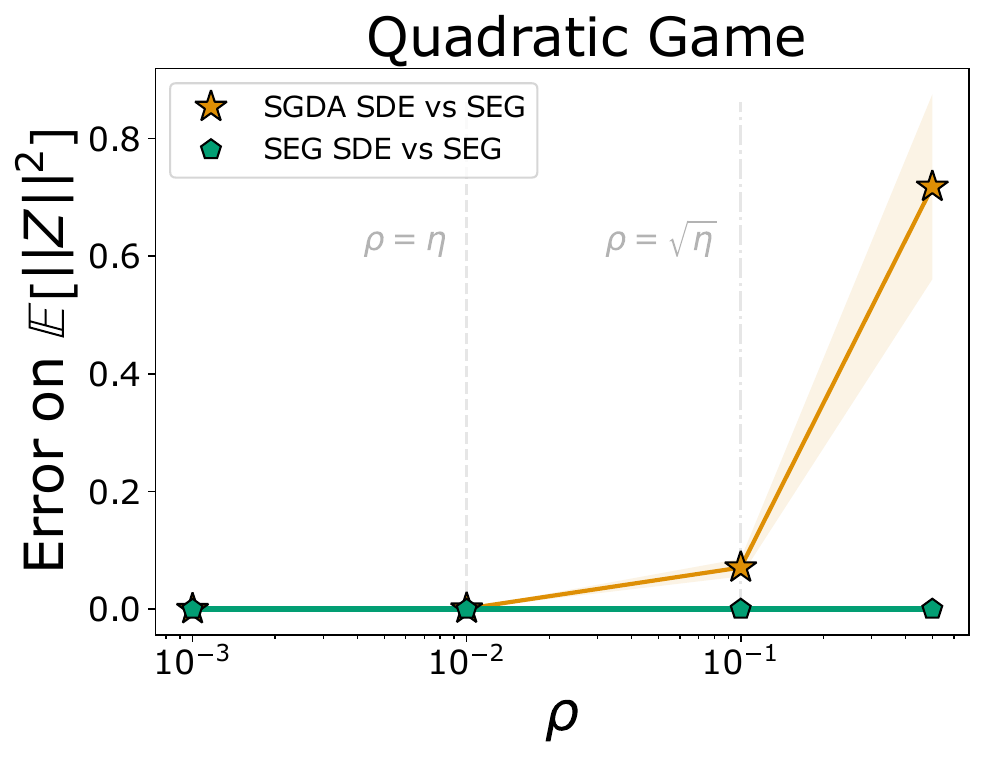} }}%
    \vspace{-2mm}
    \caption{Comparison in terms of $\E \left[ \lVert z \rVert^2 \right]$ with respect to $\rho$ - Nonbilinear Game \# 1 (Left); Nonbilinear Game \# 2 (Center Left); Nonbilinear Game \# 3 (Center Right); Quadratic Game (Right).}%
    \label{fig:SDE_Validation}%
\end{figure}

\end{document}